\providecommand*{\boxast}{%
  \mathbin{% as \boxplus and \boxtimes
    \mathpalette\@boxit{*}%
  }%
}
\newcommand*{\@boxit}[2]{%
  % #1: math style (\displaystyle, \textstyle, ...)
  % #2: symbol to be boxed that is centered around the math axis
  \sbox0{$\m@th#1\Box$}%
  % Manual correction for font bounding boxes:
  \ifx#1\displaystyle \ht0=\dimexpr\ht0+.05ex\relax \fi
  \ifx#1\textstyle \ht0=\dimexpr\ht0+.05ex\relax \fi
  \ifx#1\scriptstyle \ht0=\dimexpr\ht0+.04ex\relax \fi
  \ifx#1\scriptscriptstyle \ht0=\dimexpr\ht0+.065ex\relax \fi
  \sbox2{$#1\vcenter{}$}% \ht2 is positionn of math axis
  \rlap{%
    \hbox to \wd0{%
      \hfill
      \raisebox{%
        \dimexpr.5\dimexpr\ht0+\dp0\relax-\ht2\relax
      }{$\m@th#1#2$}%
      \hfill
    }%
  }%
  \Box
}
\def\BState{\State\hskip-\ALG@thistlm}
\newcommand{\tsn}[1]{{\left\vert\kern-0.25ex\left\vert\kern-0.25ex\left\vert #1 
    \right\vert\kern-0.25ex\right\vert\kern-0.25ex\right\vert}}
\definecolor{darkred}{RGB}{150,0,0}
\definecolor{darkgreen}{RGB}{0,150,0}
\definecolor{darkblue}{RGB}{0,0,200}
\newtheorem{theorem}{Theorem}[section]
\newtheorem{assumption}{Assumption}
\newtheorem{lemma}[theorem]{Lemma}
\newtheorem{corollary}[theorem]{Corollary}
\newtheorem{propo}[theorem]{Proposition}
\newtheorem{definition}[theorem]{Definition}
\newtheorem{remark}[subsection]{Remark}
\newcommand{\eps}{\varepsilon}
\newcommand{\beq}{\begin{equation}}
\newcommand{\eeq}{\end{equation}}
\newcommand{\nn}{\nonumber}
\newcommand{\diag}[1]{\text{diag}(#1)}
\newcommand{\bmu}{{\boldsymbol{{\mu}}}}
\newcommand{\Iden}{{\mtx{I}}}
\newcommand{\z}{{\vct{z}}}
\newcommand{\opnorm}[1]{\left\|#1\right\|}
\newcommand{\onenorm}[1]{\left\|#1\right\|_{\ell_1}}
\newcommand{\twonorm}[1]{\left\|#1\right\|_{\ell_2}}
\newcommand{\Sigmanorm}[1]{\left\|#1\right\|_{\mtx{\Sigma}}}
\newcommand{\pnorm}[1]{\left\|#1\right\|_{\ell_p}}
\newcommand{\qnorm}[1]{\left\|#1\right\|_{\ell_q}}
\newcommand{\infnorm}[1]{\left\|#1\right\|_{\ell_\infty}}
\newcommand{\abs}[1]{\left|#1\right|}
\newcommand{\x}{\vct{x}}
\definecolor{emmanuel}{RGB}{255,127,0}
\newcommand{\R}{\mathbb{R}}
\newcommand{\<}{\langle}
\renewcommand{\>}{\rangle}
\newcommand{\Var}{\textrm{Var}}
\newcommand{\sgn}[1]{\textrm{sgn}(#1)}
\newcommand{\E}{\operatorname{\mathbb{E}}}
\newcommand{\vct}[1]{\bm{#1}}
\newcommand{\mtx}[1]{\bm{#1}}
\numberwithin{equation}{section} 
\def \endprf{\hfill {\vrule height6pt width6pt depth0pt}\medskip}
\newenvironment{proof}{\noindent {\bf Proof} }{\endprf\par}
\newcommand{\hth}{{\widehat{\boldsymbol{\theta}}}}
\newcommand{\tth}{{\widetilde{\boldsymbol{\theta}}}}
\newcommand{\bth}{{\boldsymbol{\theta}}}
\newcommand\cL{\mathcal{L}}
\newcommand\reals{\mathbb{R}}
\newcommand\bSigma{\boldsymbol{\Sigma}}
\newcommand\normal{{\sf N}}
\newcommand\bdelta{\boldsymbol{\delta}}
\newcommand\sign{{\rm sign}}
\newcommand\sT{{\sf T}}
\newcommand\bv{\mtx{v}}
\newcommand\bz{\boldsymbol{z}}
\def\de{{\rm d}}
\def\bu{\boldsymbol{u}}
\def\bw{\boldsymbol{w}}
\def\bX{\boldsymbol{X}}
\def\tbth{\tilde{\boldsymbol{\theta}}}
\def\ind{\mathbb{1}}
\def\cS{\mathcal{S}}
\def\prob{\mathbb{P}}
\def\tbth{\vct{\theta}_\perp}
\def\reals{\mathbb{R}}
\def\bx{{\vct{x}}}
\def\ST{{\sf ST}}
\def\SR{{\sf SA}}
\def\AR{{\sf RA}}
\def\tw{\tilde{\vct{w}}}
\def\proj{{\sf P}}
\def\pproj{{\sf P}^\perp}
\def\sE{{\sf{E}}}
\def\sF{{\sf{F}}}
\def\naturals{\mathbb{N}}
\def\tmu{\widetilde{\vct{\mu}}}
\def\erf{{\rm erf}}
\def\EJ{\mathcal{J}}
\def\teta{\tilde{\eta}}
\def\LDA{\widehat{\bmu}^{{\rm LDA}}}
\def\bW{\mtx{W}}
\author{Adel Javanmard\thanks{Data Science and Operations Department, Marshall School of Business, University of Southern California}   \, and \,  Mahdi Soltanolkotabi\thanks{Ming Hsieh Department of Electrical and Computer Engineering, University of Southern California} }
\begin{document}
%\title{Sharp analysis of adversarial training for binary classification with $\ell_p$ perturbation}
%\title{Precise statistical analysis of standard \\and robust accuracies for binary classification\\}
\title{Precise Statistical Analysis of Classification\\ Accuracies for Adversarial Training}
\maketitle

\begin{abstract}
Despite the wide empirical success of modern machine learning algorithms and models in a multitude of applications, they are known to be highly susceptible to seemingly small indiscernible perturbations to the input data known as \emph{adversarial attacks}. A variety of recent adversarial training procedures have been proposed to remedy this issue. Despite the success of such procedures at increasing accuracy on adversarially perturbed inputs or \emph{robust accuracy}, these techniques often reduce accuracy on natural unperturbed inputs or \emph{standard accuracy}. Complicating matters further, the effect and trend of adversarial training procedures on standard and robust accuracy is rather counter intuitive and radically dependent on a variety of factors including the perceived form of the perturbation during training, size/quality of data, model overparameterization, etc. In this paper we focus on binary classification problems where the data is generated according to the mixture of two Gaussians with general anisotropic covariance matrices and derive a precise characterization of the standard and robust accuracy for a class of minimax adversarially trained models. We consider a general norm-based adversarial model, where the adversary can add perturbations of bounded $\ell_p$ norm to each input data, for an arbitrary $p\ge 1$. 
Our comprehensive analysis allows us to theoretically explain several intriguing empirical phenomena and provide a precise understanding of the role of different problem parameters on standard and robust accuracies.

\end{abstract}

\section{Introduction}
Over the past decade there has been a tremendous increase in the use of machine learning models, and deep learning in particular, in a myriad of domains spanning computer vision and speech recognition, to robotics, healthcare and e-commerce. Despite wide empirical success in these and related domains, these modern learning models are known to be highly fragile and susceptible to \emph{adversarial attacks}; even seemingly small imperceptible perturbations to the input data can significantly compromise their performance. As machine learning systems are increasingly being used in applications involving human subjects including healthcare and autonomous driving, such vulnerability can have catastrophic consequences. As a result there has been significant research over the past few years focused on proposing various \emph{adversarial training} methods aimed at mitigating the effect of adversarial perturbations~\cite{DBLP:journals/corr/GoodfellowSS14, kurakin2016adversarial, DBLP:conf/iclr/MadryMSTV18, DBLP:conf/iclr/RaghunathanSL18, DBLP:conf/icml/WongK18}.

%using machine learning algorithms and in particular neural networks in a myriad of application domains, ranging from computer vision and robotics, to speech recognition, to healthcare, to e-commerce.
%Despite their remarkable empirical success and broad applicability these models are known to be highly fragile to adversarial attacks, in the sense that small intangible perturbations in the input data can significantly deteriorate their performance. 

%These observations are very alarming as the society today increasingly relies on machine learning algorithms to    support human decisions. In image processing, as an example, modern learning systems are about to be largely deployed in applications such recognition systems. For instance, the department of motor vehicles (DMV) uses facial recognition tools to detect fraudsters or people with serious violations, by comparing driver’s license and ID photos with the DMV image dataset. However, it is known that even imperceptible pixel-wise perturbations to image data can severely delude the state-of-the-art classifiers~\cite{szegedy2014intriguing}. Alarmed by such vulnerability of learning system, the research community has put a significant effort in proposing various \emph{adversarial training}  methods to alleviate the adverse effect of norm-bounded adversarial perturbations~\cite{DBLP:journals/corr/GoodfellowSS14, kurakin2016adversarial, DBLP:conf/iclr/MadryMSTV18, DBLP:conf/iclr/RaghunathanSL18, DBLP:conf/icml/WongK18}.

While adversarial training procedures have been successful in making machine learning models robust to adversarial attacks, their full effect on machine learning systems is not understood. Indeed, adversarial training procedures often behave in mysterious and somewhat counter intuitive ways. For instance, while they improve performance on adversarially perturbed inputs, this benefit often comes at the cost of decreasing accuracy on natural unperturbed inputs. This suggests that the two performance measures,  \emph{robust accuracy} --the accuracy on adversarially
perturbed inputs--  and the \emph{standard accuracy} --accuracy on benign unperturbed inputs-- may be fundamentally at conflict. Even more surprising, the performance of adversarial training procedure varies significantly in different settings. For instance, while adversarial trained models yield lower standard accuracy in comparison with non-adversarially trained counterparts, this behavior is completely reversed when there are very few training data with the standard accuracy of adversarially trained models outperforming that of non-adversarial models ~\cite{tsipras2018robustness}. We refer the reader to Section \ref{related} for a through discussion of recent empirical results that demonstrate how a variety of factors such as the adversary's power, the size of training data, and model over-parameterization affect the performance of adversarially trained models.

To clearly demonstrate the surprising and counterintuitive behavior of adversarially trained models, we plot the behavior of such an approach in Figure \ref{fig:SA-RA-pinf-intro}. We consider a simple binary classification problem with the data generated according to a mixture of two isotropic Gaussians and depict the performance of a commonly used adversarial training procedure. In particular, in this figure, we plot the standard and robust accuracy of an adversarially trained linear classifier for different values of the adversary's perceived power (measured in $\ell_\infty$ perturbations) and different sampling ratios (size of the training data divided by the number of parameters denoted by $\delta$). We would like to highlight the highly non-trivial behavior of the standard and robust accuracy curves with respect to the adversary's power and the sampling ratio. For instance, the standard accuracy first decreases, then increases and again decreases as a function of the adversary's power. Furthermore, the exact nature of this curve is highly reliant on the sampling ratio $\delta$. Similarly, for robust accuracy, we first observe a decreasing trend for all $\delta$, but after some threshold depending on $\delta$, robust accuracy increases and then decreases or stays constant. Even more surprising, as we will see in the forth-coming sections the behavior of these curves vary drastically for different forms of $\ell_p$ perturbations. This simple experiment clearly demonstrates the importance of having a precise theory for characterizing the rather nuanced performance of adversarial training procedures and demystify their behavior. Developing such a precise theoretical analysis is exactly the goal of this paper. Indeed, the solid curves in Figure \ref{fig:SA-RA-pinf-intro} are based on our theoretical predictions!

\begin{figure}[]
\centering
\begin{minipage}{.485\textwidth}
  \centering
    \includegraphics[scale=0.7]{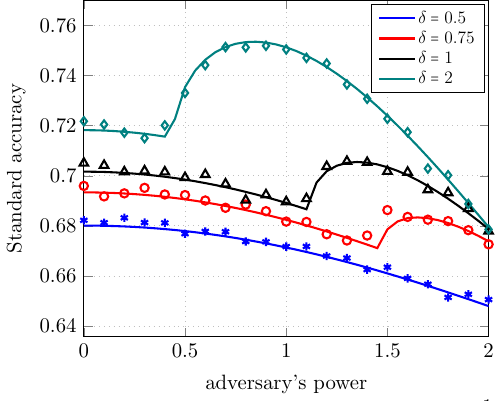}
   \caption*{(a) Standard accuracy}
\end{minipage}
\begin{minipage}{.485\textwidth}
  \centering
  \includegraphics[scale=0.7]{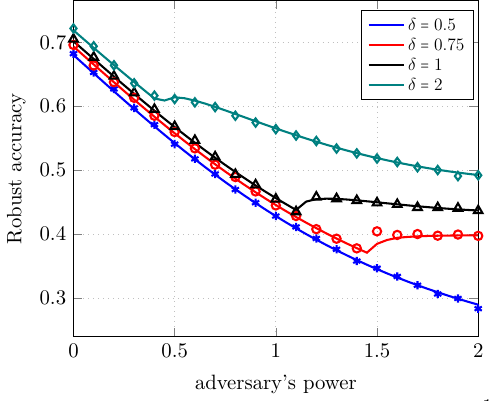}
   \caption*{(b) Robust accuracy}
\end{minipage}
\caption{Depiction of standard and robust accuracies as a function of the adversary's  power with $\ell_\infty$ ($p=\infty$) perturbation for different values of $\delta$ (ratio of the size of the training data to the number of parameters in the model). Solid curves are theoretical predictions and dots are the empirical results. We refer to Figure~\ref{fig:SA-RA-pinf} and Section~\ref{secinf} for further details.}
\label{fig:SA-RA-pinf-intro}
\end{figure}

\subsection{Contributions}
In this paper we focus on binary classification problems where the data is generated according to the mixture of two Gaussians with general anisotropic covariance matrices and derive a precise characterization of the standard and robust accuracy for a class of minimax adversarially trained models. We consider a general norm-based adversarial model, where the adversary can add perturbations of bounded $\ell_p$ norm to each input data, for an arbitrary $p\ge 1$. We would like to emphasize that our theory provides a \emph{precise characterization} of the performance of this class of adversarially trained models, rather than just upper bounds on the standard and robust accuracies.
Our analysis for such a broad setting allows us to capture several intriguing phenomena that we discuss next.
%In Section~\ref{sec:formulation}, we formally define the binary classification problem under the Gaussian mixture model with general covariance $\mtx{\Sigma}$ for input features $x_i\in \reals^d$, and bounded $\ell_p$ norm adversarial perturbations in the input features. 
%\begin{itemize}
%\item 

We show and theoretically prove an interesting phase transition phenomena holds for adversarial classification applied to the Gaussian mixture model. Specifically, we characterize a threshold $\delta_*$ for the ratio of size of training data to feature dimension, $\delta$ so that when $\delta< \delta_*$, the data is \emph{robustly separable} with high probability, and for $\delta>\delta_*$ it is non-separable, with high probability. Here, \emph{robust separability} is a generalization of the classical linear separability condition for data and roughly speaking means that there is  a linear separator that correctly separates the two label classes with a positive margin that depends on the adversary's power. We precisely characterize the threshold $\delta_*$ in terms of various problem parameters including the mean and covariance of the mixture components, the adversary's power, and the $\ell_p$ perturbation norm. Interestingly, $\delta_*$ is related to the spherical width of a set defined in terms of the dual $\ell_q$ norm ($1/p+1/q =1$) conforming with classical notions of prior knowledge and complexity used in the compressive sensing literature.

%\item
% In this work, we focus on a class of modern adversarial training algorithms and provide a precise characterization of their standard and adversarial accuracies in both separable and non-separable data regimes. This class of algorithms takes a minimax formulation and perceives the model estimation task as a game between learner and the adversary. Our analysis is carried out in a high-dimensional regime where the size of the training data $n$ and the number of parameters $d$ grows at the same rate with their ratio $n/d \to \delta$ for fixed $\delta \in(0,+\infty)$. A key ingredient of our analysis is a powerful extension of Gordon's Gaussian process inequality \cite{gordon1988milman} known as the Convex Gaussian Minimax Theorem developed in \cite{thrampoulidis2015regularized} and further extended in \cite{thrampoulidis2018precise,deng2019model}. 
%For the sake of presentation, we first state our results for the isotropic Gaussian model ($\mtx{\Sigma} = \mtx{I}$) and general $\ell_p$ perturbation in Section~\ref{sec:main-isotorpic}. We next in Section~\ref{specialcase} discuss a few special cases, namely $p=1, 2, \infty$ along with numerical experiments. Finally, we extend our results to anisotropic model in Section~\ref{sec:extension-anisotropic}.
%\item 
Our precise theoretical characterization of standard and robust accuracies provides a precise understanding of the role that different problem parameters such as size/quality of the training data, feature covariates and means, 
model overparameterization $(1/\delta)$, and the adversary's perceived power have during training on these performance measures. Surprisingly, our analysis reveals that the effects of these factors very much depend on the choice of perturbations norm $\ell_p$. For example, in the robustly separable regime, we observe that for $p=2$ adversarial training has no effect on standard accuracy, while for $p= 1$ and $p=\infty$ it hurts the standard accuracy. In the non-separable regime, we observe that for $p=2$ adversarial training helps with improving the standard accuracy. However, for $p=\infty$ the adversarial training first improves the standard accuracy but as the training procedure hedges against stronger adversary, after some threshold on the adversary's power, we start to see a decrease in the standard accuracy of the resulting model. Interestingly, this threshold on the adversary's power varies with model overparameterization.  

Lastly, a key ingredient of our analysis is a powerful extension of Gordon's Gaussian process inequality \cite{gordon1988milman} known as the Convex Gaussian Minimax Theorem (CGMT) developed in \cite{thrampoulidis2015regularized} and further extended in \cite{thrampoulidis2018precise,deng2019model} for various learning settings. Using this technique we provide a precise prediction of the performance of adversarial training in terms of the optimal solutions to a convex-concave problem with a small number of scalar variables that can be easily solved by a low-dimensional gradient descent/ascent rather fast and accurately. In addition, this low-dimensional optimization problem can be significantly simplified for special cases of $p$ (see Section~\ref{specialcase} for details). While  CGMT has been used to study the behavior of  regularized M-estimators, using this framework for the broad class of minimax adversarially trained models studied in this paper (including general anisotropic covariance matrices and general choice of $\ell_p$ norm for adversarial perturbations)
poses significant technical challenges. Specifically, the intrinsic differences between $\ell_p$ geometries and the interaction between the class means the feature covariance matrix in the model requires a rather intricate and technical analysis.  

%\end{itemize}         

\subsection{Related work}
\label{related}
We briefly discuss the related literature along two lines.
\vspace{-0.1cm}

\noindent{\bf Other models of adversarial perturbations.} 
Another popular model for adversarial attacks on the models is the so-called distribution shifts, wherein the adversary can shift
the test data distribution, making it different from the
training distribution. The adversary is assumed to have limited manipulative power in terms of the Wasserstein distance between the test and the training distributions~\cite{staib2017distributionally,pydi2020adversarial,Mehrabi-fund}. The articles~\cite{bartl2020robust,Mehrabi-fund} study the robust loss $L(\bth;\eps) = \sup_{\nu\in B_{\eps}(\mu)} \E_{\nu}[\ell(\bz,\bth)]$, where $B_\eps(\mu)$ is the $\eps$ ball around $\mu$ in the Wasserstein ($W_p$) distance for some $p\in[1,\infty)$, and the data $\bz = (\bx,y)\sim \mu$. A first order approximation of the robust loss $L(\bth;\eps)$ is given for small $\eps$, in terms of a variation measure of the original loss $\ell$. Such characterization is used in~\cite{Mehrabi-fund} to investigate the tradeoff between the standard and robust accuracies for various learning problems. Note that these work are focused on the population loss ($n\to\infty$, with $d$ fixed). In comparison, in this paper we study norm bounded adversarial perturbations and work with empirical loss in asymptotic regime ($n,d\to\infty$, with $n/d = \delta$ fixed).    

In adversarial training it is assumed that the modeler has access to clean (unperturbed) data and strives to construct a model that is resilient to potential adversarial perturbations of the test data. The article~\cite{lai2020adversarial} considers a different adversarial setup in which an attacker can
observe and modify all training data samples in an
adversarial manner so as to maximize the estimation error caused
by his attack. This work introduces the notion of adversarial influence function (AIF) to quantify the sensitivity of estimators to such adversarial attacks, and further derive the optimal estimator, among a certain class of estimator, that minimizes AIF.

\smallskip

\noindent{\bf Standard accuracy and robust accuracy tradeoffs.} 
%As discussed in the introduction, although adversarial training improves the robustness of models against adversarial perturbations in the input, the resulting models often experience subpar generalization on benign inputs compared to their non-adversarial counterparts. 
 Several recent papers contain empirical results suggesting a potential trade-off between standard accuracy and robust accuracy. A few papers have started to shed light on the theoretical foundations of such tradeoffs~\cite{DBLP:conf/iclr/MadryMSTV18, DBLP:conf/nips/SchmidtSTTM18,tsipras2018robustness,raghunathan2019adversarial, DBLP:conf/icml/ZhangYJXGJ19, pmlr-v125-javanmard20a, min2020curious,dobriban2020provable} often focusing on very specific models or settings. However, a comprehensive quantitative understanding of such tradeoffs is largely underdeveloped. 

%tailored to specific model or setting assumption

A central question we wish to address in this paper is whether there exists a fundamental conflict between robust accuracy and standard accuracy. We briefly mention a few papers that take a step towards addressing this question. In \cite{tsipras2018robustness,DBLP:conf/icml/ZhangYJXGJ19}, the authors provide examples of learning problems where no predictor can achieve both optimal standard accuracy and robust accuracy in the infinite data limit, pointing to such fundamental tradeoff. By contrast, \cite{raghunathan2019adversarial} provides examples where there is no such tradeoff in the infinite data limit, in the sense that the optimal predictor performs well on both objectives, however a tradeoff is still observed with finite data. Despite this interesting progress a quantitive understanding of fundamental and algorithmic tradeoffs between standard and robust accuracies and how they are affected by various factors, such as overparameterization, adversary's power and the data model is still missing. Such a result requires novel perspectives and analytical tools to precisely characterize the behavior of robust and standard accuracies, which is one of the motivating factors behind our current paper.% motivates our present work. 

More closely related to this paper, in~\cite{pmlr-v125-javanmard20a} the current authors used the convex Gaussian minimax framework to provide a precise characterization of standard and robust accuracies for linear regression, studying the fundamental conflict between these objectives along with algorithmic tradeoffs for specific minimax estimators. For classification problems, a recent paper~\cite{dobriban2020provable} focuses on characterizing the optimal $\ell_2$ and $\ell_\infty$ robust linear classifiers assuming access to the class means. This paper also studies some tradeoffs between standard and robust accuracies by contrasting this optimal robust classifier with the 
 Bayes optimal classifier in a non-adversarial setting. This paper however does not directly study the tradeoffs of adversarial training procedures except for linear losses. A related publication~\cite{min2020curious} studies the generalization property of an adversarially trained model for classification on a Gaussian mixture model with a diagonal covariance matrix and a linear loss. In this setting, this work discusses the different effects that more training data can have on generalization based on the strength of the adversary. Using a linear loss in the above two classification papers is convenient as in this case the adversarially trained model admits a simple closed form representation. We also note that these two papers do not seem to focus on the high-dimensional regime where the number of training data grow in proportion to the number of parameters. In contrast, in this paper we focus on developing a comprehensive theory that provides a precise characterization of standard and robust accuracies and their tradeoffs in the high dimensional regime for a broad class of loss functions and covariance matrices. Such a comprehensive analysis allows us to better understand the role of the loss function in adversarial training. Indeed, as we demonstrate, the behavior of standard and robust accuracy for nonlinear loss functions can be very different from linear losses. We also note that such a theoretical result requires much more intricate techniques as the adversarially trained model does not admit a simple closed form. Finally, we would like to note while in this paper we provide a precise understanding of the tradeoffs between standard and robust accuracies for commonly used adversarial training algorithms our work still does not address two tantalizing open questions: What is the optimal standard-robust accuracy tradeoff for a fixed ratio of sample size to dimension? Are there adversarial training approaches that achieve the optimal tradeoff between standard and robust accuracies universally over the range of adversary's power.

\section{Problem formulation}\label{sec:formulation}
%\aj{Write things in terms of accuracy instead of risk}
%\begin{itemize}
%%\item Model: GM + logistic
%\item Definition of asymptotic regimes 
%\item Defining SR and AR risks
%\item (Robust) Separable data
%\end{itemize}
In this section we discuss the problem setting and formulation of this paper in greater detail. After adopting some notations, we describe the adversarial training for binary classification in Section \ref{sec:adversarial}. Next, we discuss the data model and asymptotic setting studied in this paper in Section \ref{sec:data}. Finally, in Section \ref{perf} we formally define the standard and robust classification accuracies in this model. 
\smallskip
 
\noindent{\bf Notations.} For a vector $\bv\in\reals^d$, we write $\pnorm{\bv}$ for the standard 
 $\ell_p$ norm of $\bv$, i.e., $\pnorm{\bv} = (\sum_i |v_i|^p)^{1/p}$. For a matrix $\bSigma$, $\|\bSigma\|$ indicates the spectrum norm of $\bSigma$. Throughout, we say a probabilistic event holds `with high probability', when its probability converges to one as $n\to\infty$. In addition, for a sequence of random variables $\{X_n\}_{n\in\naturals}$ and a constant $c$ (independent of $n$) we write $\lim_{n\to\infty} X_n = c$, `in probability' if $\forall \eps>0$ we have $\lim_{n\to\infty} \prob(|X_n-c|>\eps) = 0$.

\subsection{Adversarial training for binary classification} \label{sec:adversarial}
 In binary classification we have access to a training data set of $n$ input-output pairs $\{(\vct{x}_i,y_i)\}_{i=1}^n$ with $\vct{x}_i\in\R^d$ representing the input features and $y_i\in\{-1,+1\}$ representing the binary class label associated to each data point. Throughout we assume the data points $(\vct{x}_i, y_i)$ are generated i.i.d.~according to a distribution $\mathbb{P}$. To find a classifier that predicts the labels, one typically fits a function $f_{\vct{\theta}}$, parameterized by $\vct{\theta}\in\R^d$ to the training data via empirical risk minimization. In this paper we focus on linear classifiers of the form $f_{\vct{\theta}}(\vct{x})=\langle \vct{x},\vct{\theta}\rangle$ in which case the training problem takes the form 
 \begin{align}\label{eq:EMP2}
\widehat{\bth} :=\arg\min_{\bth\in \reals^d} \frac{1}{n}\sum_{i=1}^n \ell(y_if_{\vct{\theta}}(\vct{x}_i))= \arg\min_{\bth^\in \reals^d} \frac{1}{n}\sum_{i=1}^n \ell(y_i\<\x_i,\bth\>)\,.
\end{align}
Here, $\ell$ is a loss and $\ell(y_i\<\x_i,\bth\>)$ approximately measuring the missclassification between the labels $y_i$ and the output of the model $\<\x_i,\bth\>$. Some common choices include logistic loss $ \ell(t) = \log(1+e^{-t})$, exponential loss $ \ell(t) = e^{-t}$, and hinge loss $\ell(t) = \max\left(0,1-t\right)$. Once the parameter $\widehat{\bth}$ is estimated one can find the predicted label by simply calculating the sign of the model output $\widehat{y}=\sgn{f_{\widehat{\bth}}(\vct{x})}=\sgn{\<\vct{x},\widehat{\bth}\>}$.

Despite the widespread of empirical risk minimizers in supervised learning, these estimators are known to be highly vulnerable to even minute perturbations in the input features $\x_i$. In particular, it is known that even small, norm-bounded perturbations to the features that are imperceptible to the human eye, can lead to surprising miss-classification errors. These observations have spurred a surge of interest in adversarial training where the goal is to learn models that are robust against such adversarial perturbation. In this paper we focus on an adversarial training approach that is based on using a robust minimax loss \cite{tsipras2018robustness,DBLP:conf/iclr/MadryMSTV18}. In our linear binary classification setting the robust minimax estimator takes the form
\begin{align}\label{eq:minimaxEst}
\hth^\eps := \arg\min_{\bth\in \reals^d} \frac{1}{n} \sum_{i=1}^n \underset{\pnorm{\bdelta_i}\le\eps}{\max} \ell\left(y_i \<\x_i+\bdelta_i,\bth\>\right).  
\end{align}
The main intuition behind such an estimator is that although the learner has access to unperturbed training data, instead of fitting to that data she imitates potential adversarial perturbations to test data in the training data and aims to learn a model that performs well in the presence of such perturbations. One can also view this adversarial training approach as an implicit smoothing that tries to fit the same label $y_i$ to all the features in the $\eps$-neighborhood of $\x_i$ simultaneously. %Due to this smoothing, we also expect the performance of the model to deteriorate, compared to empirical risk minimizer~\eqref{eq:EMP2}, when it is tested on unperturbed data. 

In this paper we focus on convex and decreasing losses such as the aforementioned logistic, exponential, and hinge losses. In such cases the inner maximization in~\eqref{eq:minimaxEst} can be solved in closed form. In particular, the worst perturbation $\bdelta_i$ in terms of loss value is given by
$\bdelta^*_i = \arg\min\{y_i\<\bdelta_i,\vct{\theta}\>:\, \pnorm{\bdelta_i}\le \eps\}$, which by using Holder's inequality results in $y_i\<\bdelta^*_i,\bth\> = -\eps\qnorm{\bth}$. Therefore the adversarially trained model $\hth^\eps$ can be equivalently written as
\begin{align}\label{eq:inner}
\hth^\eps := \arg\min_{\bth\in \reals^d} \frac{1}{n} \sum_{i=1}^n  \ell\left(y_i \<\x_i,\bth\> - \eps \qnorm{\bth}\right) \,. 
\end{align} 
%\noindent{\bf .} 

\subsection{Data model and asymptotic setting}
\label{sec:data}
We consider supervised binary classification under a Gaussian Mixture data Model (GMM). Concretely, each data point belongs to one of two classes $\{\pm 1\}$ with corresponding probabilities $\pi_+$, $\pi_-$, so that $\pi_+ + \pi_- = 1$. Given the label $y_i\in \{-1,+1\}$ for data point $i$, the associated input/feature vectors $\bx_i\in \reals^d$ are generated independently according to the distribution $\x_i\sim \normal(y_i\vct{\mu}, \mtx{\Sigma})$, conditioned on $y_i$, where $\vct{\mu}\in \reals^d$ and $\mtx{\Sigma}\in \reals^{d\times d}$. In other words the mean of feature vectors are $\pm \vct{\mu}$ depending on its class, and $\mtx{\Sigma}$ is the covariance of features. We depict this mixture model in Figure \ref{GMMfig}.
\begin{figure}
\begin{center}
\includegraphics[scale=0.45]{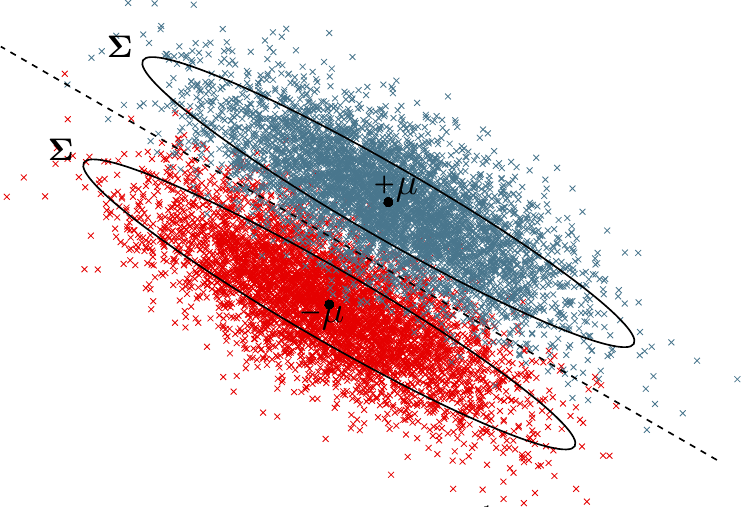}
\end{center}
\caption{Depiction of the Mixture of Gaussian data model.}
\label{GMMfig}
\end{figure}
We next describe the asymptotic regime of interest and our assumptions in this paper.
\begin{assumption}[Asymptotic Setting]\label{ass:asymptotic} We focus on the following asymptotic regime:
\begin{itemize}
\item[(a)] (Scaling of dimensions) $n\to \infty$ and $\frac{n}{d}\to\delta\in(0,\infty)$.
\item[(b)] (Scaling of signal to noise ratio) We have $C_{\min} \le \frac{\twonorm{\vct{\mu}}}{\opnorm{\mtx{\Sigma}}}\le C_{\max}$ for some positive constants $C_{\min}$ and $C_{\max}$, which are independent of $n$ and $d$.
\item[(c)] (Scaling of adversary's power) We have $\eps = \eps_0 \pnorm{\vct{\mu}}$ for a constant $\eps_0\ge0$ which we refer to as adversary's normalized power.
%\item[(d)] 
\end{itemize}
\end{assumption}
%Note that for a data point $\x\sim\normal(y\vct{\mu},\mtx{\Sigma})$, the projected noise in a direction $\vct{v}$ has standard deviation $\twonorm{\mtx{\Sigma}^{1/2}\vct{v}}$. 
Assumption \ref{ass:asymptotic} (a) details our high-dimensional regime where the size of the training data $n$ and the dimension of the features $d$ grow proportionally with their ratio fixed at $\delta$. We would like to note that while we focus on this asymptotic regime our theoretical technique can also demonstrate very accurate concentration around this asymptotic behavior. Assumption \ref{ass:asymptotic} (b) demonstrates the scaling of the signal to noise ratio and ensures that the distance between the centers of the two components $2\twonorm{\vct{\mu}}$ (`signal') is comparable to the projection of noise in any direction (noise). Finally, Assumption \ref{ass:asymptotic} details our scaling of the adversary's power. This scaling is justified as if the adversary could perturb data points $\x_i$ by $2\vct{\mu}$, she can flip the label of every data point, so that the leaner cannot do better than random guessing. Since the perturbations can be chosen arbitrary from an $\ell_p$ ball of radius $\eps$, we require $\eps$ to be comparable to $\pnorm{\vct{\mu}}$.

\subsection{Standard and robust accuracies}
\label{perf}
Our goal is this paper is to precisely characterize performance of the estimator $\hth^\eps$ in terms of two accuracies and understand the interplay between them. The two accuracies are \emph{standard accuracy} which is the accuracy on unperturbed test data, and \emph{robust accuracy} which is the accuracy on adversarially perturbed test data. More formally \emph{standard accuracy} quantifies the accuracy of an estimator on an unperturbed test data that is generated from the same distribution as the training data:
\begin{align}\label{eq:SRdef}
\SR(\hth) &:= \mathbb{P}\{\widehat{y}= y\}\,, \quad\text{where}\quad (\x,y)\sim \mathbb{P}
\end{align}
Our second accuracy, called \emph{robust accuracy} quantifies robustness of an estimator to adversarial perturbations in the test data. Specifically,
\begin{align}\label{eq:ARdef}
\AR(\hth) &:= \E\Big[\min_{\pnorm{\bdelta}\le\eps} {\mathlarger{\ind}}_{\big\{y \<\x+\bdelta,\hth\> \ge0\big\}}\Big]\,, \quad\text{where}\quad (\x,y)\sim \mathbb{P}.
\end{align}
We end this section by stating a lemma that characterizes $\SR(\hth)$ and $\AR(\hth)$ %for an estimator $\hth$ with the training/test data generated according to a 
under the Gaussian mixture model. We defer the proof to Appendix~\ref{proof:SR-AR}.
\begin{lemma}\label{lem:SR-AR}
Consider mixtures of Gaussian data model where $y_i\in \{-1,+1\}$ with corresponding probabilities $\pi_-, \pi_+$ and the feature vector distributed as $\x\sim \normal(y\vct{\mu}, \mtx{\Sigma})$, conditioned on $y$, where $\vct{\mu}\in \reals^d$ and $\mtx{\Sigma}\in \reals^{d\times d}$.  Then,
\begin{align}
\SR(\hth)&:= \Phi\left(\frac{\<\vct{\mu},\hth\>}{\twonorm{\mtx{\Sigma}^{1/2}\hth}}\right)\,,\\
\AR(\hth)&:= \Phi\left(\frac{\<\vct{\mu},\hth\>-\eps \qnorm{\hth}}{\twonorm{\mtx{\Sigma}^{1/2}\hth}}\right).
\end{align}
Here, $\Phi(x) = \frac{1}{\sqrt{2\pi}} \int_{-\infty}^x e^{-\frac{t^2}{2}} \de t$ is the cdf of a standard Gaussian distribution and q is such that $\frac{1}{p}+\frac{1}{q} = 1$.%, and $\ell_q$ norm representing the dual norm of $\ell_p$ \emph{(}i.e., $\frac{1}{p}+\frac{1}{q} = 1$\emph{)}.
\end{lemma}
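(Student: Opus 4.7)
The plan is to reduce both quantities to Gaussian tail probabilities by (i) conditioning on the class label $y$, (ii) using the symmetry of the feature distribution around $y\vct{\mu}$, and (iii) solving the inner minimization in $\AR$ in closed form via Hölder's inequality.

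For the standard accuracy, I would start by writing $\hat{y}=\sgn{\<\x,\hth\>}$, so that $\{\hat{y}=y\}=\{y\<\x,\hth\>\ge 0\}$ (treating the zero-probability tie as irrelevant since the relevant random variable is continuous). Conditioning on the class $y$, write $\x = y\vct{\mu}+\mtx{\Sigma}^{1/2}\vct{z}$ with $\vct{z}\sim\normal(\zero,\I)$ independent of $y$. Then
\begin{equation*}
y\<\x,\hth\> \;=\; \<\vct{\mu},\hth\> \;+\; y\<\mtx{\Sigma}^{1/2}\vct{z},\hth\>,
\end{equation*}
and since $y\in\{\pm 1\}$ while $\vct{z}$ is centered Gaussian, $y\<\mtx{\Sigma}^{1/2}\vct{z},\hth\>$ has the same distribution as $\<\mtx{\Sigma}^{1/2}\vct{z},\hth\>\sim\normal(0,\twonorm{\mtx{\Sigma}^{1/2}\hth}^2)$. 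This holds for both values of $y$, so the law of $y\<\x,\hth\>$ is $\normal(\<\vct{\mu},\hth\>,\twonorm{\mtx{\Sigma}^{1/2}\hth}^2)$ unconditionally. Its nonnegativity probability is exactly $\Phi\bigl(\<\vct{\mu},\hth\>/\twonorm{\mtx{\Sigma}^{1/2}\hth}\bigr)$, which yields the first identity.

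For the adversarial accuracy, I would first handle the inner minimization. Since the indicator $\mathbb{1}_{\{y\<\x+\bdelta,\hth\>\ge 0\}}$ is monotone in the scalar $y\<\bdelta,\hth\>$, minimizing it over $\pnorm{\bdelta}\le\eps$ is equivalent to minimizing $y\<\bdelta,\hth\>$ itself. By Hölder's inequality with $1/p+1/q=1$, the minimum equals $-\eps\qnorm{\hth}$ and is attained (e.g.\ at the $\bdelta$ aligned with $-y\,\mathrm{sign}(\hth)|\hth|^{q-1}/\|\hth\|_q^{q-1}$ for $p\in(1,\infty)$, with standard extensions to $p=1,\infty$). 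Thus
\begin{equation*}
\min_{\pnorm{\bdelta}\le\eps}\mathbb{1}_{\{y\<\x+\bdelta,\hth\>\ge 0\}} \;=\; \mathbb{1}_{\{y\<\x,\hth\>-\eps\qnorm{\hth}\ge 0\}},
\end{equation*}
so $\AR(\hth)=\P\{y\<\x,\hth\>\ge\eps\qnorm{\hth}\}$. Applying exactly the same conditioning/symmetry argument as above, this probability equals $\Phi\bigl((\<\vct{\mu},\hth\>-\eps\qnorm{\hth})/\twonorm{\mtx{\Sigma}^{1/2}\hth}\bigr)$.

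There is no serious obstacle here; the only point requiring mild care is exchanging the minimum with the expectation in the definition of $\AR$ (justified because the minimization is deterministic and pointwise in $(\x,y)$) and verifying that the Hölder-optimal $\bdelta$ is measurable in $\hth$, both of which are standard. The assumption $\hth\neq 0$ (so that $\twonorm{\mtx{\Sigma}^{1/2}\hth}>0$ under $\mtx{\Sigma}\succ 0$) can be stated explicitly or handled by convention.
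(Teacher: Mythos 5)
Your proof is correct and follows essentially the same route as the paper's: substitute $\x=y\vct{\mu}+\mtx{\Sigma}^{1/2}\vct{z}$, use $y^2=1$ and Gaussian symmetry to reduce both accuracies to a one-dimensional Gaussian probability, and resolve the inner minimization in $\AR$ via H\"older's inequality to get the $-\eps\qnorm{\hth}$ shift. The extra remarks on measurability and the sign convention at zero are fine but not needed beyond what the paper already does.
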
 
By Lemma~\ref{lem:SR-AR}, characterizing $\SR(\hth^\eps)$ and $\AR(\hth^\eps)$ amounts to characterizing $\<\vct{\mu}, \hth^\eps\>$, $\twonorm{\mtx{\Sigma}^{1/2}\hth}$, $\qnorm{\hth^\eps}$, which constitutes the bulk of our analysis.

%============================================
\section{Prelude: two regimes for adversarial training}
Similar to normal classification, an interesting phenomena that arises in adversarial classification is that depending on the size of the training data there are two different regimes of operation: Robustly separable and non-separable. In the \emph{robustly separable} regime there is a robust classifier that perfectly separates the training data, with a positive margin that depends on the adversary's power, while this is not possible in the non-separable case. We formally define this notion of robust separability below.
\begin{definition}[Robust linear separability]
Given $\eps>0$ and $q\ge 1$, we call a training data $\{(\x_i,y_i)\}_{i=1}^n$, $(\eps,q)$-separable if
\begin{align}\label{eq:serp-def}
\exists \vct{\theta}\in\reals^d:\quad \forall i\in[n],\;\; y_i\<\vct{x}_i,\vct{\theta}\> -\eps \qnorm{\vct{\theta}} > 0. 
\end{align}
\end{definition}
We note that our notion of robust separability is closely related to the standard notion of separability by a linear classifier. In particular, using a simple rescaling argument\footnote{\eqref{eq:serp-def}$\Rightarrow$\eqref{eq:serp-def2}: Scaling by $\frac{1}{\eps\qnorm{\bth}}$ we see that $y_i\<x_i,\tilde{\bth}\> >1$ for $\tilde{\bth} = \frac{\bth}{\eps\qnorm{\bth}}$, and $\qnorm{\tilde{\bth}}  = \frac{1}{\eps}$ by definition. \eqref{eq:serp-def2}$\Rightarrow$\eqref{eq:serp-def}: Letting $c=\frac{1}{\eps\qnorm{\bth}}\ge 1$, we also have $y_i\<x_i, c\bth\>>1$. Substituting for $c$ and rearranging the terms we get \eqref{eq:serp-def}.} one can rewrite condition \ref{eq:serp-def} as follows
\begin{align}\label{eq:serp-def2}
\exists \vct{\theta}, \;\;\qnorm{\bth} \le \frac{1}{\eps}:\quad \forall i\in[n],\;\; y_i\<\vct{x}_i,\vct{\theta}\> > 1. 
\end{align}
Therefore, robust separability is akin to linear separability of the data but with a budget constraint on the $\ell_q$ norm of the coefficients of the classifier.

When the training data is $(\eps, q)$-separable (with $\ell_q$ the dual norm of $\ell_p$), then the minimax estimator $\hth^\eps$ becomes unbounded and achieves zero adversarial training loss in~\eqref{eq:inner}. In other words, one can completely interpolate the data. This is due to the fact that if $\bth$ is an ($\epsilon,q$)-separator, then $c\bth$ with $c\to \infty$ leads to zero adversarial training loss and since the loss is nonnegative it is optimal. 
% The other direction is also true which is formalized in the next lemma. Therefore data separability corresponds to the unboundedness of the estimator $\hth^\eps$ which necessitates different analysis in the separable and non-separable regimes. 
%\begin{lemma}\label{lem:sep-B}
%For any fixed $n$, data $\{(\x_i,y_i)\}_{i=1}^n$ are $(\eps,q)$-separable if and only if the estimator $\hth^\eps$ given by~\eqref{eq:inner}
%is unbounded achieving zero optimal adversarial training loss. 
%\end{lemma}
%Proof of Lemma~\ref{lem:sep-B} is given in Appendix~\ref{proof:lem:sep-B}. 
Although the norm of $\hth^\eps$ tends to infinity in the separable regime, what matters for {our linear classifier} is the direction of $\hth^\eps$. However, in this separable regime even the direction of the optimal solution ($\frac{\hth^\eps}{\twonorm{\hth^\eps}}$) may not be unique. Even though there may be multiple optimal directions it is possible to show that the direction that gradient descent converges to is a specific maximum margin classifier. We formally state this result which is essentially a direct consequence of~\cite{lyu2019gradient, ji2018risk} below. 
%By an application of results of~\cite{lyu2019gradient} for homogeneous neural network, we know that the normalized iterations of gradient descent for the logistic loss~\eqref{eq:inner} converge to a specific maximum margin classifier.
\begin{propo}\label{pro:SVM}
Consider the adversarial training loss
\[
\cL(\bth) := \frac{1}{n} \sum_{i=1}^n  \ell\left(y_i \<\x_i,\bth\> - \eps \qnorm{\bth}\right)\,,
\]
with the loss $\ell(t)$ obeying certain technical assumptions\footnote{See \cite[Assumption S3 in Appendix F]{lyu2019gradient}. We list these assumptions in Appendix~\ref{app:pro:SVM} for readers' convenience.} which are satisfied for common classification losses such as logistic, exponential, and hinge losses. Then, the gradient descent iterates 
\begin{align*}
\bth_{\tau+1}=\bth_{\tau}-\mu\nabla\mathcal{L}(\bth_{\tau})
\end{align*}
with a sufficiently small step size $\mu$ obey
%with an initialization such that $\cL(\bth_0)<1$ (e.g., $\bth_0 = \mathbf{0}$) and step size $\eta(t)$ of at most $O(\cL(\bth_t)^{-1} {\rm polylog} \frac{1}{\cL(\bth_t)})$, we have 
\begin{align}
\lim_{t\to\infty} \twonorm{\frac{\bth_t}{\twonorm{\bth_t}} - \frac{\tth^\eps}{\twonorm{\tth^\eps}}} = 0\,,
\end{align}
where $\tth^\eps$ is the solution to the following max-margin problem
\begin{align}
\tth^\eps = &\arg\min_{\bth\in\reals^d} \quad  \twonorm{\bth}^2\nn  \\
&{\rm subject}\;{\rm to}\;\; y_i\<\vct{x}_i,\vct{\theta}\> - \eps \qnorm{\bth} \ge1\,.\label{eq:MM}
\end{align}  
\end{propo}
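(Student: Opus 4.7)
The plan is to reduce the statement to the directional convergence theorem of Lyu--Li by exhibiting the adversarial loss as an instance of a (1-homogeneous) margin-based loss, and then use convexity of the resulting max-margin program to identify the KKT point uniquely with $\tth^\eps$.

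First, I would define the per-sample margin
\[
m_i(\bth):=y_i\langle \vct{x}_i,\bth\rangle-\eps\qnorm{\bth},
\]
and note that $\cL(\bth)=\tfrac{1}{n}\sum_{i=1}^n\ell(m_i(\bth))$. The crucial structural observation is that $m_i$ is positively 1-homogeneous, since $m_i(c\bth)=c\,m_i(\bth)$ for all $c>0$, and it is globally Lipschitz (being the sum of a linear functional and a norm). Consequently $\cL$ has exactly the form $\tfrac{1}{n}\sum_i\ell(\Phi_i(\bth))$ for locally Lipschitz, positively homogeneous ``margin functions'' $\Phi_i$, which is precisely the setting covered by the homogeneous-model framework of Lyu--Li (and Ji--Telgarsky in the linear case).

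Next, I would verify the standing hypotheses of that framework for our specific $\ell$ (logistic, exponential, or hinge): decreasing, convex, with an exponential-type tail and the usual smoothness conditions on $\ell'$ stated in the cited assumption. Given these, one also needs a single ``separability'' feasibility condition, namely $\inf_\bth \cL(\bth)=0$, which holds exactly on the $(\eps,q)$-separable regime assumed implicitly by the statement (otherwise the minimizer $\hth^\eps$ is bounded and the proposition's direction is undefined). With these inputs in place, Lyu--Li's theorem directly yields that, for a sufficiently small constant step size, $\bth_t/\twonorm{\bth_t}$ converges to $\tilde{\bth}/\twonorm{\tilde{\bth}}$ where $\tilde{\bth}$ is a (Clarke) KKT point of
\[
\min_{\bth\in\reals^d}\ \tfrac12\twonorm{\bth}^2\quad\text{subject to}\quad m_i(\bth)\ge 1,\ i=1,\ldots,n.
\]

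It then remains to show that this program coincides with \eqref{eq:MM} and admits a unique optimum, so that the KKT point provided by Lyu--Li is exactly $\tth^\eps$. Here I would argue the key convexity fact: for each $i$, the constraint set $\{\bth : m_i(\bth)\ge 1\}$ is convex. Indeed, by the triangle inequality $\qnorm{\cdot}$ is convex, so $-\eps\qnorm{\bth}$ is concave, whence $m_i(\bth)$ is concave in $\bth$; together with linearity of the first term, a standard two-line convex combination check shows that if $m_i(\bth_1),m_i(\bth_2)\ge 1$ then $m_i(\lambda\bth_1+(1-\lambda)\bth_2)\ge 1$. Since the objective $\tfrac12\twonorm{\bth}^2$ is strictly convex and the feasible set is convex and nonempty (by $(\eps,q)$-separability), the program has a unique minimizer, which by first-order optimality is the only KKT point and must therefore equal the limiting direction obtained from Lyu--Li. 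This gives the claimed directional convergence to $\tth^\eps$.

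The main obstacle I anticipate is not the convexity step, which is straightforward, but the careful verification that the hypotheses of the homogeneous-margin result apply to the non-smooth composite $m_i$—in particular dealing with the Clarke subdifferential of $\qnorm{\bth}$ at points where it is non-differentiable (e.g., coordinate-axis points for $q=\infty$), and confirming that the smoothed surrogate analysis of Lyu--Li passes through for the hinge loss despite its kink. Both issues are handled in the cited references but need to be checked for the specific $\Phi_i$ at hand.
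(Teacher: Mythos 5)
Your proposal is correct and follows exactly the route the paper intends: the paper gives no standalone proof of Proposition~\ref{pro:SVM}, stating only that it is ``essentially a direct consequence of'' Lyu--Li and Ji--Telgarsky, and your argument is a faithful elaboration of that reduction — $1$-homogeneity of the margin $m_i(\bth)=y_i\langle\vct{x}_i,\bth\rangle-\eps\qnorm{\bth}$, directional convergence to a KKT point of the normalized max-margin program, and identification of that KKT point with the unique minimizer $\tth^\eps$ via concavity of $m_i$ and strict convexity of $\twonorm{\bth}^2$. The caveats you flag (Clarke subdifferential of $\qnorm{\cdot}$ at non-smooth points, the hinge loss tail, and the implicit restriction to the $(\eps,q)$-separable regime) are exactly the right ones to check against the cited assumptions.
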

%We refer to~\cite{lyu2019gradient} for the proof of Proposition~\ref{pro:SVM}; See Theorem 4.4 therein for the exponential loss, and Section 4.4 and Appendix F therein for its extension to other loss functions including logistic loss which is of interest in our paper. The results of~\cite{lyu2019gradient} applies to general homogeneous neural network and Proposition~\ref{pro:SVM} follows as an application to the special case  of $y_i\<\x_i,\bth\>-\eps\qnorm{\bth}$, which is homogeneous in $\bth$.

In the non-separable regime, as we show in the proof of Theorem~\ref{thm:isotropic-nonseparable} the minimizer $\hth^\eps$ is bounded. Moreover, the loss~\eqref{eq:inner} is convex as it is pointwise maximum of a set of convex functions (see~\eqref{eq:minimaxEst} and recall convexity of loss $\ell$). Therefore, a variety of  iterative methods (including gradient descent) can be used to converge to a global minimizer of \eqref{eq:minimaxEst}. Theorem \ref{thm:isotropic-nonseparable}  also shows that all global minimizers of ~\eqref{eq:minimaxEst} have the same standard and robust accuracy.
\section{Main results for isotropic features}\label{sec:main-isotorpic}
In this section we present our main results. For the sake of exposition, in this section we state our results for the case where the features are isotropic (i.e.~$\mtx{\Sigma}=\mtx{I}$). We discuss our more general results with anisotropic features in Section~\ref{sec:extension-anisotropic}. In this paper, we establish a sharp phase-transition characterizing the separability of the training data generated according to a Gaussian mixture model. Specifically, in our asymptotic regime (see Section~\ref{sec:adversarial}) we characterize a threshold $\delta_*$ such that for $\delta<\delta_*$ the data is $(\eps,q)$-separable, with high probability, and for $\delta>\delta_*$ it is non-separable, with high probability. This phase transition for robust separability is discussed in Section \ref{PTsec}. We also precisely characterize the standard accuracy $\SR(\hth^\eps)$ and the robust accuracy $\AR(\hth^\eps)$ of the point that gradient descent converges to in both the separable and non-separable data  regimes which are the subject of Sections \ref{sepsec} and \ref{nonsepsec}, respectively. We then discuss the implications of our main results for the special cases of  $\ell_p$ perturbations with $p=1$, $p=2$, and $p=\infty$ in Section \ref{specialcase}.

%and the $\ell_p$ perturbations with $p>1$. We discuss our results for isotropic features and $\ell_1$ perturbations in Section \ref{extentsec}. 

\subsection{Phase transition for robust data separability}
\label{PTsec}
In this section we discuss our results for characterizing the phase transition for $(\eps,q)$-separability under the Gaussian mixtures model. As detailed earlier in Section \ref{sec:data}, in our asymptotic setting the dimension of the mean vector $\vct{\mu}$ ($d$) as well as the size of the training data ($n$) grow to infinity in proportion with each other $n/d=\delta$. To state our main result we need a few technical assumptions on the limiting behavior of the mean vector. We begin with a simple assumption on the convergence of the Euclidean norm of the mean vector.

%Consider a sequence of vectors $\{\vct{\mu}_n\}_{n\in\naturals}$, with $\vct{\mu}\in\reals^d$, indexed by sample size $n$. Note that these vectors are of growing dimension as $n\to\infty$ and by Assumption~\ref{ass:asymptotic}(a), $n/d \to \delta$.
%Before proceeding with the statement, we need to adopt some notations and assumptions. 
%
\begin{assumption}[Convergence of Euclidean norm of $\vct{\mu}$]\label{ass:norm-mu} We assume the Euclidean norm of the mean vector converges to a bounded quantity, that is $\twonorm{\vct{\mu}} \to V <\infty$, as $n\to\infty$ and $n/d\to\delta$.
\end{assumption}
We note that for the isotropic case, the boundedness condition in Assumption~\ref{ass:norm-mu} is already implied by Assumption~\ref{ass:asymptotic}(b).% and therefore this assumption so that this assumption i. %Also, Definition~\ref{ass:asymptotic}(c) is superfluous in the isotropic case.

Naturally, the separability threshold depends on the mean vector and the adversary's power. For instance, intuitively, one expects the separability threshold to decrease as the adversary's power or the length of the mean vector increases. We also expect the direction of the mean vector $\frac{\vct{\mu}}{\twonorm{\vct{\mu}}}$ to play a role. We capture these effects via the spherical width of a suitable set. Recall that the \emph{spherical width} of a set $\mathcal{S}\subset\R^d$ is a measure of its complexity and is defined as 
$$\omega_s\left(\mathcal{S}\right)=\E\Big[\sup_{\vct{z}\in\mathcal{S}} \vct{z}^T\vct{u}\Big]\,,$$ 
where $\vct{u}\in\mathcal{S}^{d-1}$ is a vector chosen uniformly at random from the unit sphere. In particular, the appropriate set for characterizing the separability threshold takes the form
\begin{align}\label{eq:S}
\cS(\alpha, \theta, \eps_0, \vct{\mu}) : = \left\{\vct{z}\in \reals^d :\quad  \bz^T\vct{\mu} =0,\; \twonorm{\vct{z}} \le \alpha,\; \qnorm{\vct{z} +\theta \frac{\vct{\mu}}{\twonorm{\vct{\mu}}}} \le \frac{1}{\eps_0\pnorm{\vct{\mu}}} \right\}\,,
\end{align}
where $\eps_0$ is the adversary's scaled power per Assumption~\ref{ass:asymptotic}(c). Next assumption focuses on   the spherical width convergence in our asymptotic regime.
\begin{assumption}[Convergence of spherical width]\label{ass:omegas} We assume the following limit exists
\begin{align}
 \omega\left(\alpha, \theta, \eps_0\right):=  \lim_{n\to\infty} \omega_s\left(\cS(\alpha, \theta, \eps_0, \vct{\mu})\right)\,.
\end{align}
\end{assumption}
%%%%
As it will become clear later on in this section Assumptions~\ref{ass:norm-mu} and \ref{ass:omegas} are trivially satisfied in various settings. With these assumptions  in place we are ready to state our result precisely characterizing the separability threshold.
\begin{theorem}\label{thm:sep-thresh}
Consider a data set generated i.i.d.~according to an isotropic Gaussian mixture data model per Section \ref{sec:data} and suppose the mean vector $\vct{\mu}$ obeys Assumptions \ref{ass:norm-mu} and \ref{ass:omegas}. Also define
\begin{align}\label{eq:threshold}
\delta_* :=  \min_{\alpha\ge0, \theta} \frac{ \omega\left(\alpha, \theta, \eps_0\right)^2}{\E\left[\left(1-V\theta+\sqrt{\alpha^2+\theta^2}g\right)_{+}^2\right]}\,,
\end{align}
where the expectation is taken with respect to $g\sim \normal(0,1)$. Then, under the asymptotic setting of Assumption \ref{ass:asymptotic}, for $\delta < \delta_*$ the data are $(\eps,q)$-separable with high probability and for $\delta>\delta_*$, the data are non-separable, with high probability. Namely,
\begin{align*}
\delta < \delta_* &\Rightarrow  \lim_{n\to\infty} \prob(\text{data is $(\eps,q)$-separable}) =1\,,\\
\delta > \delta_* &\Rightarrow  \lim_{n\to\infty} \prob(\text{data is $(\eps,q)$-separable}) = 0\,.
\end{align*}
\end{theorem}

Theorem \ref{thm:sep-thresh} above precisely characterizes the separability threshold as a function of the adversary's power as well as properties of the mean vector. In particular since $\omega$ decreases with the increase in $\eps_0$, this theorem indicates that the separability threshold decreases as the adversary's power  increases. This of course conforms with our natural intuition and is consistent with characterization~\eqref{eq:serp-def2}. To better understand the implications of Theorem~\ref{thm:sep-thresh} we now consider some special cases.
\begin{itemize}[leftmargin=*]
\item {\bf Example 1 (Non-adversarial setting).} Our first example focuses on the non-adversarial setting where $\eps_0=0$. In this case the $\ell_q$ constraint in definition of $\cS$, given by~\eqref{eq:S}, is void and the set $\cS$ becomes the intersection of $\ell_2$ ball of radius $\alpha$ with the hyperplane of dimension $d-1$ that is orthogonal to $\vct{\mu}$. Therefore $\omega(\alpha,\theta,\eps_0) = \omega_s(\cS) = \alpha$ and the separability threshold reduces to 
\[
\delta_* :=  \max_{\alpha\ge0, \theta} \frac{\alpha^2}{\E\left[\left(1-V\theta+\sqrt{\alpha^2+\theta^2}g\right)_{+}^2\right]}\,.
\]
By the change of variables $(\alpha, \frac{\theta}{\alpha}) \to (\alpha, \theta)$, it is straightforward to see that optimal $\alpha$ is at $+\infty$ and the separability condition reduces to
\[
\delta_* :=  \left(\min_{\theta} {\E\left[\left(-V\theta+\sqrt{1+\theta^2}g\right)_{+}^2\right]}\right)^{-1}\,.
\]
\item {\bf Example 2 ($\ell_2$ perturbation).} When $p=q=2$, the set $\cS$ becomes the intersection of $\ell_2$ ball of radius $$R:=\min\left(\alpha,\sqrt{\tfrac{1}{\eps_0^2\twonorm{\vct{\mu}}^2} - \theta^2} \right)\,,$$ with the hyperplane of dimension $d-1$ that is orthogonal to $\vct{\mu}$. Therefore $\omega(\alpha,\theta,\eps_0) = \omega_s(\cS) = \min\left(\alpha,\sqrt{\tfrac{1}{\eps_0^2V^2} - \theta^2} \right)$ and the separability threshold reduces to
\[
 \delta_* = \max_{\alpha\ge0,\text{ }\theta \le \frac{1}{\eps_0V}}\;\;  \frac{\min\left(\alpha^2,\tfrac{1}{\eps_0^2V^2} - \theta^2 \right)}{\E\left[\left(1-V\theta+\sqrt{\alpha^2+\theta^2}g\right)_{+}^2\right]}\,.
\]
Note that the above ratio is decreasing in $\alpha$ over the range of $\alpha\ge \sqrt{\tfrac{1}{\eps_0^2\twonorm{\vct{\mu}}^2} - \theta^2}$. Therefore, the maximizer $\alpha$ should satisfy $\alpha \le \sqrt{\tfrac{1}{\eps_0^2\twonorm{\vct{\mu}}^2} - \theta^2}$ and this further simplifies the expression for $\delta_*$ as follows
\[
 \delta_* = \max_{\alpha\ge0,\, \alpha^2+\theta^2 \le \frac{1}{\eps_0^2V^2}}\;\;  \frac{\alpha^2}{\E\left[\left(1-V\theta+\sqrt{\alpha^2+\theta^2}g\right)_{+}^2\right]}\,.
\]
By the change of variable $(\alpha, \frac{\theta}{\alpha}) \to (\alpha, \theta)$, this can be written as:
\[
 \delta_* = \left(\min_{\alpha\ge0, \theta, \alpha^2(1+\theta^2)\le \frac{1}{\eps_0^2V^2}}\;\;  {\E\left[\left(\frac{1}{\alpha}-V\theta+\sqrt{1+\theta^2}g\right)_{+}^2\right]}\right)^{-1}\,.
\] 
Since the inner function is decreasing in $\alpha$ it is minimized at $\alpha_* = \frac{1}{\eps_0V\sqrt{1+\theta^2}}$ which simplifies the separability threshold to the following:
\begin{align}\label{eq:thresh-L2}
 \delta_* = \left(\min_{\theta}\;\;  {\E\left[\left((\eps_0\sqrt{1+\theta^2}-\theta)V+\sqrt{1+\theta^2}g\right)_{+}^2\right]}\right)^{-1}\,.
\end{align}
\end{itemize}

To the best of our knowledge, our paper is the first work that shows such a phase transition for robust separability in the adversarial setting. In the non-adversarial case, similar phase transitions have been shown for data separability (a.k.a interpolation threshold) \cite{candes2020phase, montanari2019generalization, deng2019model}. More specifically, \cite{candes2020phase} derived separability threshold for a logistic link regression model. Similar phenomenon extends to other link functions, as characterized by \cite{montanari2019generalization}, and also to Gaussian mixtures model \cite{deng2019model}. Interestingly, our result specialized to the case where the adversary has no power (cf. Example 1) recovers the existing thresholds for Gaussian mixtures model.

We end this section by demonstrating that in addition to the examples above Assumption \ref{ass:omegas} holds for a fairly broad family of mean vectors. This is the subject of the next lemma. We defer the proof of this lemma to Appendix~\ref{proof:lem:justify}.
\begin{assumption}\label{ass:alternative}
Suppose that the empirical distribution of the entries of $\sqrt{d} \vct{\mu}$ converges weakly to a distribution $\prob_{M}$ on real line, with bounded $2^{nd}$ and $p^{th}$ moment ($\int x^2 \de\prob_M(x) = \sigma_{M,2}^2<\infty$, $\int |x|^p \de\prob_M(x) = \sigma_{M,p}^p<\infty$).
\end{assumption}
\begin{lemma}\label{lem:justify}
Consider the asymptotic regime of $n\to\infty$ and $n/d\to\delta$, for some $\delta\in(0,\infty)$.
%Suppose that the empirical distribution of the entries of $\sqrt{d} \vct{\mu}$ converges weakly to a distribution $\prob_{M}$ on real line, with bounded $2^{nd}$ and $p^{th}$ moment ($\int x^2 \de\prob_M(x) = \sigma_{M,2}^2<\infty$, $\int |x|^p \de\prob_M(x) = \sigma_{M,p}^p<\infty$). 
Also, consider the function $J_q(\cdot;\cdot):\reals\times\reals_{\ge0}\mapsto \reals_{\ge0}$ defined by
\begin{align}\label{eq:Jq}
J_q(x;\lambda) = \min_{u}\; \frac{1}{2}(x-u)^2+\lambda |u|^q\,.
\end{align}
Then Assumption~\ref{ass:alternative} implies Assumption~\ref{ass:omegas} with
\begin{align}
 \omega\left(\alpha, \theta, \eps_0\right) = \min_{\lambda_0,\eta\ge0, \nu}  \;\; &\sqrt{\delta} \left\{\frac{\nu^2}{2\eta}+\frac{1}{2\eta \delta} + \frac{\eta}{2}\alpha^2 + {\lambda_0}(\eps_0\sigma_{M,p})^{-q}\right\}\nn\\
 &-\eta\sqrt{\delta} \E\bigg[J_q\bigg(\frac{h}{\eta\sqrt{\delta}}
 - \left(\frac{\nu}{\eta}-\theta\right)\frac{M}{\sigma_{M,2}};\frac{\lambda_0}{\eta}\bigg) \bigg] \,,
\end{align}
where the expectation in the last line is taken with respect to the independent random variables $h\sim\normal(0,1)$ and $M\sim\prob_M$.
\end{lemma} 

\subsection{Precise characterization of SA and RA in the separable regime}
\label{sepsec}

In this section we precisely characterize the SA and RA of the classifier obtained as the limiting point of gradient descent on the loss \eqref{eq:inner} in the separable regime. As discussed in Proposition~\ref{pro:SVM}, the normalized  iterations of gradient descent for the loss~\eqref{eq:inner} converge to the max-margin classifier~\eqref{eq:MM}. Since $\SR(\bth)$ and $\AR(\bth)$ are only functions of the direction $\frac{\bth}{\twonorm{\bth}}$, instead of studying the classifier obtained via GD iterations directly, we study the classification performance of the max-margin classifier. 
%Before discussing our main result we need to state an assumption and establish some notations.

%The \emph{Moreau envelope}  or \emph{Moreau-Yosida regularization} of a function $f$ is given by
%\begin{align}
%e_f(x;\mu) := \min_{t} \frac{1}{2\mu} (x-t)^2 + f(t)\,. 
%\end{align}
%We write $e_\ell$ to denote the Moreau envelope of the logistic function ${\ell}(t) = \log(1+e^{-t})$, and define the \emph{expected} Moreau envelope
%\begin{align}
%L(a,b,\mu) = \E[e_{\ell}(ag+b;\mu)]
%\end{align}
%We define the expected Moreau envelope $L(a,b,\mu) = \E[e_{\ell}(ag+b;\mu)]$, where the expectation is taken with respect to $g\sim\normal(0,1)$. 
Recall the function $J_q$ is given by~\eqref{eq:Jq}, and define
\begin{align}\label{eq:EJ}
\EJ(c_0,c_1;\lambda_0) = \E\left[J_q\left(\frac{c_0}{\sqrt{\delta}} h - c_1 \frac{M}{\sigma_{M,2}}; \lambda_0 \sigma_{M,p}^q\right)\right]\,,
\end{align}
where the expectation in the last line is taken with respect to the independent random variables $h\sim\normal(0,1)$ and $M\sim\prob_M$, per the setting of Assumption~\ref{ass:alternative}. Our characterization of $\SR$ and $\AR$ will be in terms of the function $\EJ$ as formalized in the next theorem.

\begin{theorem}\label{thm:isotropic-separable}
Consider a data set generated i.i.d.~according to an isotropic Gaussian mixture data model per Section \ref{sec:data} and suppose the mean vector $\vct{\mu}$ obeys Assumptions~\ref{ass:asymptotic} and \ref{ass:alternative}. Also let $\tth^\eps$ be the max margin solution per~\eqref{eq:MM}. If $\delta<\delta_*$, with $\delta_*$ given by~\eqref{eq:threshold}, then in the asymptotic setting of Assumption \ref{ass:asymptotic} we have:
\begin{itemize}
\item[(a)] The following convex-concave minimax scalar optimization has a  bounded solution $(\alpha_*,\gamma_{0*},\theta_*,\beta_*,\lambda_{0*},\eta_*,\tilde{\eta}_*)$ with the minimization components $(\alpha_*,\gamma_{0*},\theta_*)$ unique:
\begin{align}\label{eq:sep-AO9}
\min_{\alpha,\gamma_0\ge 0,\theta} \max_{\beta,\lambda_0,\eta\ge0, \tilde{\eta}} \quad &D_{\rm s}(\alpha,\gamma_{0},\theta,\beta,\lambda_{0},\eta,\tilde{\eta}), \quad \text{where}\nn
\end{align}
\begin{align}
D_{\rm s}(\alpha,\gamma_{0},\theta,\beta,\lambda_{0},\eta,\tilde{\eta}) \;=\; &2  \left(1+\frac{\eta}{2\alpha}\right)^{-1} 
%\E\left[J_q\left(\sigma_{M,p}^{\frac{-q}{2-q}} \left\{\frac{\beta h}{2\sqrt{\delta}}\; -  \frac{\tilde{\eta} M}{2\sigma_{M,2}}\right\};\frac{\lambda_0}{q\gamma_0^{q-1}}\left(1+\frac{\eta}{2\alpha}\right)^{1-q}\right)\right]\nn\\
\EJ\left(\frac{\beta}{2}, \frac{\tilde{\eta}}{2}; \frac{\lambda_0}{q\gamma_0^{q-1}}\left(1+\frac{\eta}{2\alpha}\right)^{1-q}\right)\nn\\
&-\left(\frac{\beta^2}{\delta}+\tilde{\eta}^2\right) \frac{1}{4(1+\frac{\eta}{2\alpha})}-\frac{2\lambda_0}{q}  \gamma_0 -\frac{\eta\alpha}{2} -\tilde{\eta}\theta \nn\\
&  +\beta\sqrt{\E\Bigg[\left(\left(1+\eps_0\gamma_0-\theta\sigma_{M,2}\right)+\alpha g\right)_{+}^2\Bigg]}\,,
\end{align}
where the expectation in the last part is taken with respect to $g\sim\normal(0,1)$.
\item[(b)] It holds in probability that
\begin{align}
\lim_{n\to\infty} \frac{1}{\twonorm{\vct{\mu}}}\<\vct{\mu},\tth^\eps\> &= \theta_*\,,\\
\lim_{n\to\infty} \twonorm{\tth^\eps} &= \alpha_*\,,\\
\lim_{n\to\infty} \pnorm{\vct{\mu}}\qnorm{\tth^\eps} &= \gamma_{0*}\,.
\end{align}
\item[(c)] Furthermore, part part (b) combined with Lemma~\ref{lem:SR-AR} imply the following limits hold in probability:
\begin{align}
\lim_{n\to\infty} \SR(\tth^\eps) &= \Phi\left(\sigma_{M,2} \frac{\theta_*}{\alpha_*}\right)\,,\\
\lim_{n\to\infty} \AR(\tth^\eps) &= \Phi\left(-\frac{\eps_0\gamma_{0*}}{\alpha_*} + \sigma_{M,2}\frac{\theta_*}{\alpha_*}\right)\,.\label{eq:AR-sep}
\end{align}
\end{itemize}
\end{theorem}
Theorem \ref{thm:isotropic-separable} above provides us with a precise characterization of SA and RA and allows us to rigorously quantify the effect of adversary’s manipulative power $\eps_0$, mean vector $\vct{\mu}$, and scaling of dimensions $\delta$ on SA and RA. In particular, this theorem precisely characterizes the performance of the max margin classifier (and in turn the classifier GD converges to) in terms of the optimal solutions to a low-dimensional optimization problem, namely \eqref{eq:sep-AO9}. It is worth noting that by part (b), $\theta_*$ is the asymptotic value of the projection of the estimator $\tth^\eps$ along the direction of the class averages $\bmu$, and $\alpha_*$ represents the asymptotic value of the $\ell_2$ norm of the estimator. Therefore, the $\theta_*/\alpha_*$ term appearing in the $\SR$ and $\AR$ formulae corresponds to the correlation coefficient between the estimator $\tth^\eps$ and the class averages $\bmu$.

While the optimization problem \eqref{eq:sep-AO9} may look quite complicated, we note that it is a convex-concave problem in a handful number of scalar variables and hence can be easily solved by a low-dimensional gradient descent/ascent rather fast and accurately.  In addition, this low-dimensional optimization problem significantly simplifies for special cases of $p$. We discuss some of these cases, which are also of particular practical interest, in Sections \ref{sectwo} and \ref{secinf}. 
%Note that in the separable regime, one can completely interpolate the data and achieve zero adversarial training loss. 

%================
\subsection{Precise characterization of SA and RA in non-separable regime}
\label{nonsepsec}
In this section we precisely characterize the SA and RA of the classifier obtained by running gradient descent on the loss \eqref{eq:inner} in the non-separable regime. Before we can state our main result we need the definition of the Moreau envelop.
\begin{definition}[Moreau envelope and expected Moreau envelope] The \emph{Moreau envelope}  or \emph{Moreau-Yosida regularization} of a function $\ell$ is given by
\begin{align}
e_\ell(x;\mu) := \min_{t} \frac{1}{2\mu} (x-t)^2 + \ell(t)\,. 
\end{align}
We also define the \emph{expected} Moreau envelope
\begin{align}
L(a,b,\mu) = \E[e_{\ell}(ag+b;\mu)]\,,
\end{align}
where the expectation is taken with respect to $g\sim\normal(0,1)$. 
\end{definition}
We this definition in place we are now ready to state our main result in the non-separable regime.
\begin{theorem}\label{thm:isotropic-nonseparable}
Consider a data set generated i.i.d.~according to an isotropic Gaussian mixture data model per Section \ref{sec:data} and suppose the mean vector $\vct{\mu}$ obeys Assumption~\ref{ass:alternative}. Also let $\hth^\eps$ be the solution to optimization~\eqref{eq:inner}. If $\delta>\delta_*$, with $\delta_*$ given by~\eqref{eq:threshold}, then in the asymptotic setting of Assumption \ref{ass:asymptotic} we have:
%Consider isotropic mixtures of Gaussian data model under Assumption~\ref{ass:asymptotic}(a,d) and Assumption~\ref{ass:alternative}, and let $\hth^\eps$ be the solution to optimization~\eqref{eq:inner}. If $\delta>\delta_*$, with $\delta_*$ given by~\eqref{eq:threshold}, then
\begin{itemize}
\item[(a)] The following convex-concave minimax scalar optimization has a bounded solution $(\theta_*,\alpha_*,\gamma_{0*},\tau_{g*},\beta_*,\tau_{h*})$ with the minimization components $(\alpha_*,\gamma_{0*},\theta_*)$ unique:
\begin{align}
&\min_{\theta, 0\le \alpha, \gamma_0, \tau_g}\;\;\max_{0\le\beta, \tau_h} \;\; D_{\rm ns}(\alpha, \gamma_0, \theta, \tau_g, \beta,\tau_h)\nn\\
& D_{\rm ns}(\alpha, \gamma_0, \theta, \tau_g, \beta,\tau_h) = \frac{\beta \tau_g}{2} +L\left(\sqrt{\alpha^2+\theta^2},\sigma_{M,2} \theta-\eps_0 \gamma_0,\frac{\tau_g}{\beta}\right) \nn \\
&\quad\quad\quad\quad\quad- \min_{\lambda_0\ge0,\nu} \left[\frac{\alpha}{\tau_h}\left\{\frac{\beta^2}{2\delta} + \lambda_0 \left(\frac{\gamma_0\tau_h}{\alpha}\right)^q  + \frac{\nu^2}{2}
-\EJ\left(\beta,\left(\frac{\tau_h\theta}{\alpha} + {\nu}\right) ;\lambda_0\right)\right\} +\frac{\alpha\tau_h}{2} \right]\,.\label{eq:main-thm}
\end{align}
\item[(b)] It holds in probability that
\begin{align}
\lim_{n\to\infty} \frac{1}{\twonorm{\vct{\mu}}}\<\vct{\mu},\hth^\eps\> &= \theta_*\,,\\
\lim_{n\to\infty} \twonorm{\pproj_{\vct{\mu}}  \hth^\eps} &= \alpha_*\,,\\
\lim_{n\to\infty} \pnorm{\vct{\mu}}\qnorm{\hth^\eps} &= \gamma_{0*}\,.
\end{align}
\item[(c)] As a corollary of part (b) and Lemma~\ref{lem:SR-AR}, the following limits hold in probability:
\begin{align}
\lim_{n\to\infty} \SR(\hth^\eps) &= \Phi\left(\frac{\sigma_{M,2}\theta_*}{\sqrt{\alpha_*^2+\theta_*^2}}\right)\,,\\
\lim_{n\to\infty} \AR(\hth^\eps) &= \Phi\left(\frac{-\eps_0\gamma_{0*} + \sigma_{M,2}\theta_*}{\sqrt{\alpha_*^2+\theta_*^2}}\right)\,. \label{eq:AR-nonsep}
\end{align}
\end{itemize}
\end{theorem}
It is worth noting that by part (b), $\theta_*$ is the asymptotic value of the projection of the estimator $\hth^\eps$ along the direction of the class averages $\bmu$. In addition, \[\lim_{n\to\infty} \twonorm{\pproj_{\vct{\mu}}  \hth^\eps}^2+ \twonorm{\proj_{\vct{\mu}}  \hth^\eps}^2= \alpha_*^2+\theta_*^2\] represents the asymptotic value of the squared $\ell_2$ norm of the estimator. Therefore, the $\theta_*/\sqrt{\alpha_*^2+\theta_*^2}$ term appearing in the $\SR$ and $\AR$ formulae corresponds to the correlation coefficient between the estimator $\hth^\eps$ and the class averages $\bmu$.

Theorem~\ref{thm:isotropic-nonseparable} complements the result of Theorem~\ref{thm:isotropic-separable} by providing a precise characterization of SA and RA measures in the non-separable regime.
In the remaining part of this section and also in the next section, we specialize our results to several specific choices of $p$ that are of particular practical interest.

\begin{remark}\label{rem:2eps}
In stating our results (Theorems~\ref{thm:isotropic-separable} and \ref{thm:isotropic-nonseparable}), we are implicitly assuming the same variable $\eps_0$ for both the perturbation level to the test data as well as the `perceived' perturbation level used in the robust minimax estimator $\hth^\eps$. In principle, we can use different variable for the test perturbation level, say $\eps_{0,{\rm test}}$. The same results applies to this setting with minimal modifications; only in the \AR formalue, cf. equations~\eqref{eq:AR-sep}, \eqref{eq:AR-nonsep} the variable $\eps_0$ should be replaced by $\eps_{0,\rm{test}}$. 
\end{remark}
%===========
\section{Results for special cases of $p$}
\label{specialcase}
In this section we discuss the implications of our main results for the special cases of  $\ell_p$ perturbations with $p=2$ in Section \ref{sectwo}, $p=\infty$ in Section \ref{secinf}, and $p=1$ in Section \ref{extentsec}.  We refer to Appendix~\ref{proof:sec5} for the proofs of theorems and corollaries stated in this section.
\subsection{Results for $\ell_2$ perturbation}
\label{sectwo}
%\aj{add practical example for $p=2$} We specialize our results in Section~\ref{sec:main-isotorpic}. 
%As we discussed in Example 2, the separability threshold in this case reduces to~\eqref{eq:thresh-L2} which we rewrite here for convenience:
%\begin{align}\label{eq:thresh-L2-repeat}
% \delta_* = \left(\min_{\theta}\;\;  {\E\left[\left((\eps_0\sqrt{1+\theta^2}-\theta)V+\sqrt{1+\theta^2}g\right)_{+}^2\right]}\right)^{-1}\,.
%\end{align} 

We begin with stating our results for $\ell_p$ perturbation with $p=2$. This result can be viewed as a corollary of Theorem \ref{thm:sep-thresh}, Theorem~\ref{thm:isotropic-separable}, and Theorem \ref{thm:isotropic-nonseparable} specializing our main result for $p=2$.

\begin{corollary}\label{cor:p2-sep}
Consider a data set generated i.i.d.~according to an isotropic Gaussian mixture data model per Section \ref{sec:data} and suppose the mean vector $\vct{\mu}$ obeys Assumptions~\ref{ass:alternative}.  Then in the asymptotic setting of Assumption \ref{ass:asymptotic} we have:
\begin{itemize}
\item [(a)] The separability threshold $\delta_*$ is given by
\begin{align}\label{eq:thresh-L2-repeat}
 \delta_* = \left(\min_{\theta}\;\;  {\E\left[\left((\eps_0\sqrt{1+\theta^2}-\theta)V+\sqrt{1+\theta^2}g\right)_{+}^2\right]}\right)^{-1}\,.
\end{align} 
\item [(b)] In the separable regime where $\delta<\delta_*$, the followings hold in probability for the max margin solution $\tth^\eps$  (see ~\eqref{eq:MM}): 
\begin{align}
\label{SRAR1}
\lim_{n\to\infty} \SR(\tth^\eps) = \Phi\left(\sigma_{M,2} \frac{\theta_*}{\alpha_*}\right)\,,\quad
\lim_{n\to\infty} \AR(\tth^\eps) = \Phi\left(\frac{\theta_*}{\alpha_*}\sigma_{M,2} -\eps_0\sigma_{M,2}\right)\,,
\end{align}
where 
\begin{align}
\alpha_* = \left(\tilde{\alpha}_*^{-1} - \eps_0 \sigma_{M,2}\right)^{-1}\,, \quad \theta_* = u_*\alpha_*\,.
\end{align}
Here, $(\tilde{\alpha}_*, u_*)$ the solution to the following problem:
\begin{align}
\label{coropt1}
&\min_{\tilde{\alpha} \ge 0,u} \;\; \tilde{\alpha}^2
\nn\\
&{\rm subject\;\; to}\quad 1\ge u^2+\delta\E\left[\left(\frac{1}{\tilde{\alpha}}- u\sigma_{M,2}+ g\right)_{+}^2\right]\,,
\end{align}
with expectation taken with respect to $g\sim\normal(0,1)$.
\item[(c)]  In the non-separable regime where $\delta>\delta_*$, the followings hold in probability for the optimal solution $\hth^\eps$ of~\eqref{eq:inner}: 
\begin{align}
\label{SRAR2}
\lim_{n\to\infty} \SR(\hth^\eps) &= \Phi\left(\frac{\sigma_{M,2}\theta_*}{\sqrt{\alpha_*^2+\theta_*^2}}\right)\,,\\
\lim_{n\to\infty} \AR(\hth^\eps) &= \Phi\left(\frac{\theta_*}{\sqrt{\alpha_*^2+\theta_*^2}} \sigma_{M,2} - \eps_0\sigma_{M,2}  \right)\,.
\end{align}
where $(\alpha_*,\theta_*,\beta_*)$ is the bounded solution of the following convex-concave minimax scalar optimization problem with the minimization components $(\alpha_*,\theta_*)$ unique:
\begin{align}
&\max_{0\le\beta}\;\; \min_{\theta, 0\le \alpha}\;\; D_{\rm ns}(\alpha, \theta, \beta)\nn\\
& D(\alpha, \theta, \beta) = L\left(\sqrt{\alpha^2+\theta^2},\sigma_{M,2} \theta - \eps_0 \sqrt{\alpha^2+\theta^2},\frac{\alpha}{\beta\sqrt{\delta}}\right) 
- \frac{\alpha\beta}{2\sqrt{\delta}}\,.\label{eq:main-thm-iso}
\end{align}
\end{itemize}

%Also let $\tth^\eps$ be the max margin solution per~\eqref{eq:MM}. If $\delta<\delta_*$, with $\delta_*$ given by~\eqref{eq:threshold},
\end{corollary}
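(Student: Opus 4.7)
\noindent\textbf{Proof plan for Corollary \ref{cor:p2-sep}.}

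The overall strategy is to specialize Theorem \ref{thm:sep-thresh}, Theorem \ref{thm:isotropic-separable}, and Theorem \ref{thm:isotropic-nonseparable} to the self-dual case $p=q=2$ and then carry out the resulting scalar reductions analytically. Part (a) is immediate: the derivation of the separability threshold for $p=2$ carried out in Example~2 of Section~\ref{PTsec} already yields exactly the stated formula \eqref{eq:thresh-L2-repeat}. (In that derivation we additionally use $\pnorm{\vct{\mu}}\to V=\sigma_{M,2}$ for $p=2$, which follows from Assumption~\ref{ass:alternative} via convergence of the second moment.) So the nontrivial content is to reduce the seven-variable program \eqref{eq:sep-AO9} (for part (b)) and the six-variable program \eqref{eq:main-thm} (for part (c)) into the cleaner formulations in \eqref{coropt1} and \eqref{eq:main-thm-iso}.

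The single algebraic identity driving both reductions is the closed form of $J_q$ when $q=2$. Indeed, a direct calculation shows
\[
J_2(x;\lambda)=\min_{u}\ \tfrac{1}{2}(x-u)^2+\lambda u^2=\frac{\lambda}{1+2\lambda}\,x^2,
\]
and therefore, using that $h\sim\normal(0,1)$ is independent of $M\sim\P_M$ (which has second moment $\sigma_{M,2}^2$), the function $\EJ$ defined in \eqref{eq:EJ} simplifies to
\[
\EJ(c_0,c_1;\lambda_0)=\frac{\lambda_0\sigma_{M,2}^2}{1+2\lambda_0\sigma_{M,2}^2}\Big(\frac{c_0^2}{\delta}+c_1^2\Big).
\]
Every term in the general minimax problems that involves $\EJ$ therefore becomes a simple rational function of the scalar variables, which is the key enabler of the reduction.

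For part (b), I would substitute the closed form above into $D_{\rm s}$ in \eqref{eq:sep-AO9}. The resulting objective is jointly concave in $(\beta,\tilde\eta,\lambda_0,\eta)$ and jointly convex in $(\alpha,\gamma_0,\theta)$, so one may eliminate variables analytically in any order permitted by the KKT/Sion conditions. Maximizing in $\beta$ and $\tilde\eta$ (both quadratic) and then in $\lambda_0$ and $\eta$ (each solvable in closed form in the quadratic setting) forces the relations $\gamma_{0*}=\sigma_{M,2}\alpha_*$ (the dual $\ell_2$ norm matches $\sigma_{M,2}\twonorm{\tth^\eps}$ in the limit, consistent with $\pnorm{\vct{\mu}}\qnorm{\tth^\eps}\to\gamma_{0*}$), and a rescaling $\tilde\alpha=(\alpha^{-1}+\eps_0\sigma_{M,2})^{-1}$ and $u=\theta/\alpha$ absorbs the $\eps_0$ contribution. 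What remains is a two-variable constrained program whose Lagrangian form is precisely \eqref{coropt1}. The formulas in \eqref{SRAR1} then follow from Theorem \ref{thm:isotropic-separable}(c) combined with these substitutions.

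For part (c), I would apply the same substitution of the closed form of $\EJ$ inside $D_{\rm ns}$ in \eqref{eq:main-thm}. The inner minimization over $(\lambda_0,\nu)$ is now a simple quadratic-plus-affine minimization and admits an explicit minimizer; in particular $\nu_*$ can be read off from a linear equation and $\lambda_{0*}$ from a quadratic one. Next, one optimizes out $\tau_h$ and $\tau_g$: the $\tau_h$-dependence collapses because the quantity in braces becomes linear in $\tau_h$ after substituting the optima of $(\lambda_0,\nu)$, and the $\tau_g$-dependence is absorbed into the Moreau envelope $L$ via the standard identity that the Moreau envelope is jointly convex in its smoothing parameter and argument. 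Combining these steps forces $\gamma_{0*}=\sigma_{M,2}\sqrt{\alpha_*^2+\theta_*^2}$, consistent with $q=2$ making $\qnorm{\hth^\eps}=\twonorm{\hth^\eps}\to\sqrt{\alpha_*^2+\theta_*^2}$. After substituting this relation back, the only remaining variables are $(\alpha,\theta,\beta)$ and the objective reduces to \eqref{eq:main-thm-iso}. The SA/RA formulas then follow by plugging the limiting values into Lemma \ref{lem:SR-AR} exactly as in Theorem \ref{thm:isotropic-nonseparable}(c).

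The main obstacle is bookkeeping: one must verify that each variable elimination respects the min/max order (justified by convex-concave structure and Sion's minimax theorem), that the Lagrangian form \eqref{coropt1} faithfully encodes the constraints coming from the eliminated dual variables, and that uniqueness/boundedness of the minimizing triple $(\alpha_*,\theta_*,\gamma_{0*})$ in the general theorems translates to uniqueness of $(\tilde\alpha_*,u_*)$ in the reduced problem. The algebra is routine once the closed form of $J_2$ is in hand, but the order of elimination must be chosen carefully so that each intermediate problem remains convex-concave and the inferred relations $\gamma_{0*}=\sigma_{M,2}\alpha_*$ (separable case) and $\gamma_{0*}=\sigma_{M,2}\sqrt{\alpha_*^2+\theta_*^2}$ (non-separable case) emerge cleanly.
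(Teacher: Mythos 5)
Your proposal follows essentially the same route as the paper's proof: part (a) is read off from Example~2, and parts (b) and (c) are driven by the closed form $J_2(x;\lambda)=\frac{\lambda}{1+2\lambda}x^2$, hence $\EJ(c_0,c_1;\lambda_0)=\frac{\lambda_0\sigma_{M,2}^2}{1+2\lambda_0\sigma_{M,2}^2}\bigl(\frac{c_0^2}{\delta}+c_1^2\bigr)$, followed by the same sequence of variable eliminations, the same inferred relations $\gamma_{0*}=\sigma_{M,2}\alpha_*$ and $\gamma_{0*}=\sigma_{M,2}\sqrt{\alpha_*^2+\theta_*^2}$, and the same change of variables $u=\theta/\alpha$, $\tilde\alpha=(\alpha^{-1}+\eps_0\sigma_{M,2})^{-1}$. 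The only cosmetic difference is that the paper eliminates $\eta$ and $\lambda_0$ by observing the maximum is $+\infty$ unless certain inequalities hold (yielding the constraints in~\eqref{coropt1}) rather than by closed-form stationarity, but you flag exactly this verification as part of your bookkeeping, so the plans coincide.
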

The corollary above precisely characterizes the behavior of the classifier that gradient descent converges to in terms of low-dimensional optimization problems (\eqref{coropt1} in the separable regime and \eqref{eq:main-thm-iso} in the non-separable regime). 

Recall that the term $\theta_*/\alpha_*$ in the separable regime and the term $\theta_*/\sqrt{\alpha_*^2+\theta_*^2}$ in the non-separable regime correspond to the correlation coefficient between the robust minimax estimator and the classes average $\bmu$. As we will see in Figure~\ref{fig:SA-RA-p2-f}, the standard accuracy is decreasing in $1/\delta$, for any fixed $\eps_0$, which equivalently indicates that the correlation between the estimator and $\bmu$ is monotone increasing in the sample-to-dimension ratio $\delta$.

As we will see in the coming sections, SA and RA curves have a highly non-trivial behavior which also strongly depend on the choice of $p$. This necessitate a rigorous theory (such as the above) that can precisely predict these curves. To better understand the implications and consequences of this result we focus on its various predictions. Specifically, we find the global optima of the two low-dimensional optimization problems via simple gradient descent/ascent and use it to calculate the corresponding $\SR$ and $\AR$ based on \eqref{SRAR1} and \eqref{SRAR2}. We also verify these theoretical predictions with the performance of gradient descent on the loss \eqref{eq:inner} with a polyak/approximate polyak step size in the separable/non-separable regimes.\footnote{Specifically we run gradient descent iterations of the form $\vct{\theta}_{\tau+1}=\vct{\theta}_{\tau}-\alpha_\tau\nabla \mathcal{L}(\vct{\theta}_\tau)$ on \eqref{eq:inner} with a Polyak step size $\alpha_\tau=\frac{\mathcal{L}(\vct{\theta}_\tau)}{\twonorm{\nabla \mathcal{L}(\vct{\theta}_\tau)}^2}$ in the separable regime and an approximate Polyak step size $\alpha_\tau=\frac{\mathcal{L}(\vct{\theta}_\tau)-\min_{0\le t\le \tau} \mathcal{L}(\vct{\theta}_t)+\frac{\gamma}{\tau}}{\twonorm{\nabla \mathcal{L}(\vct{\theta}_\tau)}^2}$ in the non-separable regime.} 

We plot the theoretically predicted standard and robust accuracy versus the adversary's power $\eps_0$ together with the corresponding empirical results in Figure \ref{fig:SA-RA-p21} (a) and (b). The solid lines depict theoretical predictions with the dots representing the empirical performance of gradient descent with the algorithmic settings discussed above. The data set is generated according to a Gaussian Mixture Model per Section \ref{sec:data} with $\vct{\mu}\in\R^d$ consisting of i.i.d.~$\mathcal{N}\left(0,\frac{1}{d}\right)$ entries with dimension $d=400$. Each dot represents the average of $100$ trials. These figures demonstrate that even for moderate dimension sizes our theoretical prediction is a near perfect match with the empirical performance of gradient descent. We note that when $\eps_0$ is sufficiently large then the adversarially trained model $\hth^\eps$ becomes zero due to the large regularization in the argument of loss function in~\eqref{eq:inner} and $\SR$ and $\AR$ measures are not defined. The curves are plotted up to that $\eps_0$.

An intriguing observation of Corollary~\ref{cor:p2-sep} is that in the separable regime in the case of $p=2$, the standard accuracy does not depend on $\eps_0$. In other words, adversarial training has no effect on the performance on benign unperturbed data. The robust accuracy, however is decreasing in $\eps_0$. Figure \ref{fig:SA-RA-p21} (a) and (b) also verify this predicted behavior and capture the effect of the adversary's power $\eps_0$ on standard and robust accuracy. In the separable regime, $\SR$ is flat which implies that adversarial training has no effect on standard accuracy (or the generalization error on unperturbed data). 
 However, adversarial training does affect $\AR$ because now the trained model is used to classify the adversarially perturbed test data. 
 
In the non-separable regime, we observe that adversarial training helps with improving the standard accuracy! Further, such positive impact is observed for all choice of $\delta$ with a rather robust trend. Note that this behavior is significantly different from a regression setting where adversarial training first improves with the standard accuracy but then there is a turning point beyond which the standard accuracy will decrease as $\eps_0$ grows. We refer to~\cite[Figure 3]{pmlr-v125-javanmard20a} and discussion therein for more details on a regression setting.  Moreover, as depicted in Figure~\ref{fig:SA-RA-p21}(b) we see that $\AR$ always declines as adversary gets more powerful (i.e., $\eps_0$ grows) as expected. 
%While this decline is linear in the separable regime, we observe a sublinear decline in the non-separable regime. 
 %Note that the test data are not necessarily separable, due to the randomness in data generation model, and an adversary can further mix the two classes. As we see from the plot, $\AR$ is increasing in $\eps_0$. 
%Indeed, in the separable regime, we see that adversarial training hurts the standard accuracy. In the non-separable regime, the standard accuracy starts increasing indicating that adversarial training is improving the standard accuracy. After some values of $\eps_0$, which interestingly shifts with $\delta$, the standard accuracy starts to go down as $\eps_0$ grows. %\footnote{Note that for $\delta = 0.5$, we are in the separable regime over the entire range $[0,\eps_0]$.

\begin{figure}[t]
\centering
\begin{minipage}{.485\textwidth}
  \centering
    \includegraphics[scale=0.7]{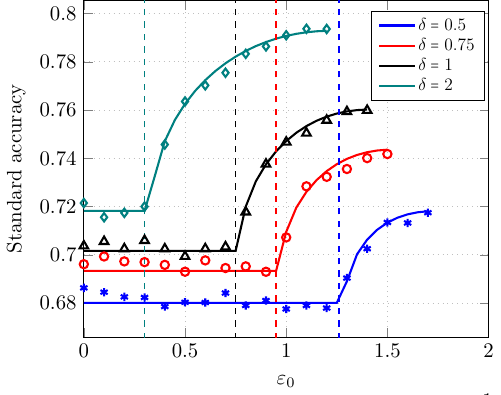}
   \caption*{(a) Standard accuracy}
\end{minipage}
\begin{minipage}{.485\textwidth}
  \centering
  \includegraphics[scale=0.7]{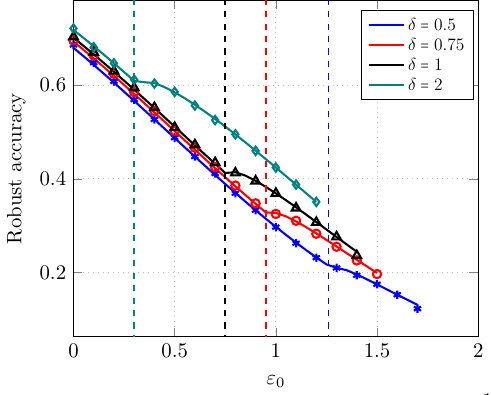}
   \caption*{(b) Robust accuracy}
\end{minipage}
\caption{Depiction of standard and robust accuracies as a function of the adversary's normalized power $\eps_0$ with $\ell_2$ ($p=2$) perturbation for different values of $\delta$. Solid curves are theoretical predictions and dots are results obtained based on gradient descent on the robust objective \eqref{eq:inner}. The dashed lines depict the separability threshold for that $\delta$. Each dot represents the average of $100$ trials. The data set is generated according to a Gaussian Mixture Model per Section \ref{sec:data} with $\vct{\mu}\in\R^d$ consisting of i.i.d.~$\mathcal{N}\left(0,\frac{1}{d}\right)$ entries with dimension $d=400$.} 
\label{fig:SA-RA-p21}
\end{figure}

\begin{figure}[t]
\centering
\begin{minipage}{.485\textwidth}
  \centering
    \includegraphics[scale=0.7]{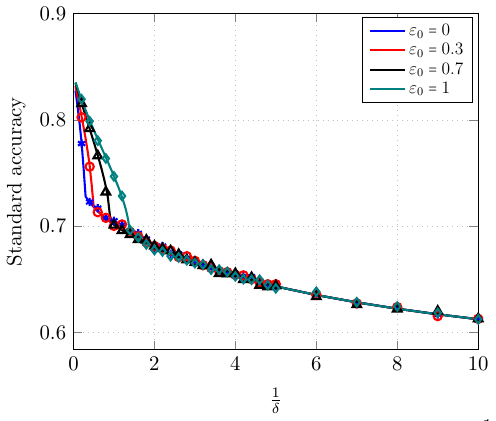}
   \caption*{(a) Standard accuracy}
\end{minipage}
\begin{minipage}{.485\textwidth}
  \centering
  \includegraphics[scale=0.7]{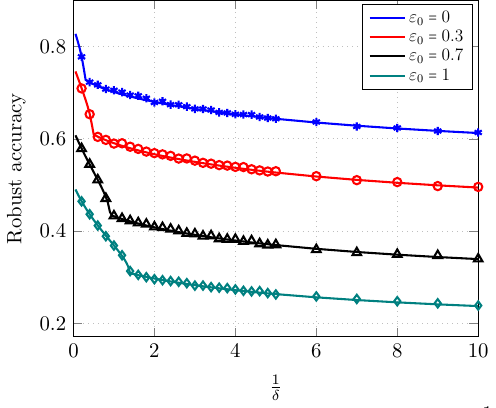}
   \caption*{(b) Robust accuracy}
\end{minipage}
\caption{Depiction of standard and robust accuracies as a function of dimension-to-sample ratio $\frac{1}{\delta} = \frac{d}{n}$, which is a measure of model complexity, for several values of $\eps_0$ with $\ell_2$ ($p=2$) perturbation, under a similar setting as in Figure~\ref{fig:SA-RA-p21}.}
%Solid curves are theoretical predictions and dots are results obtained based on gradient descent on the robust objective \eqref{eq:inner}. Each dot represents the average of $100$ trials. The data set is generated according to a Gaussian Mixture Model per Section \ref{sec:data} with $\vct{\mu}\in\R^d$ consisting of i.i.d.~$\mathcal{N}\left(0,\frac{1}{d}\right)$ entries with dimension $d=400$.} 
\label{fig:SA-RA-p2-f}
\end{figure}

Next in Figure \ref{fig:SA-RA-p2-f}, we plot $\SR$ and $\AR$ versus dimension-to-sample ratio $\frac{1}{\delta} = \frac{d}{n}$, which is a measure of model complexity, for several values of $\eps_0$. It has been shown that the standard risk (which amounts to $1-\SR$ in our setting) as a function of model complexity $\frac{1}{\delta}$ undergoes a double-descent behavior for various learning models~\cite{belkin2018understand,belkin2018reconciling,hastie2019surprises}.  Specifically, the risk depicts a U-shape before the interpolation threshold (separability threshold in binary classification) and then starts to decline afterwards. Interestingly, for the current setting of experiments here we do not observe such double descent behavior and the standard accuracy always decreases as $\frac{1}{\delta}$ grows, albeit at different rates in the separable and non-separable regimes.\footnote{It is worth noting that the double descent phenomenon has been observed for binary classification in a non-adversarial setting with model misspecification. In such a model the learner observes only a subset $S\subset [d]$ of size $p$ of the covariates with $d/n \to \zeta\ge1$ and $p/n\to \kappa \in (0,\zeta]$ (see ~\cite{deng2019model} for further details). Our theoretical analysis can in principle be used to analyze such a setting, however we do not pursue this direction in this paper.}

\subsection{Results for $\ell_\infty$ perturbation}
\label{secinf}
For the case of $\ell_\infty$ perturbation ($p=\infty$ and $q=1$), Theorem \ref{thm:sep-thresh}, Theorem~\ref{thm:isotropic-separable}, and Theorem \ref{thm:isotropic-nonseparable} do not substantially simplify. However, we can calculate the function $J_q$ defined by~\eqref{eq:Jq} in closed form. In this case $J_q$ becomes the Huber function given by
\[
J_1(x,\lambda) = \begin{cases}
\lambda |x| -\frac{\lambda^2}{2} & |x|\ge \lambda\\
\frac{x^2}{2}& |x|\le \lambda
\end{cases}
\]
Using Theorems \ref{thm:sep-thresh}, \ref{thm:isotropic-separable}, and \ref{thm:isotropic-nonseparable} with this closed form for $J_q$, in Figure \ref{fig:SA-RA-pinf}, we again depict our theoretical predictions for standard and robust accuracy as well as the empirical performance of gradient descent as a function of the adversary's normalized power for various values of $\delta$. As in the $p=2$ case our theoretical predictions is very accurate even for moderate dimensions $d$. 

\begin{figure}[t]
\centering
\begin{minipage}{.485\textwidth}
  \centering
    \includegraphics[scale=0.73]{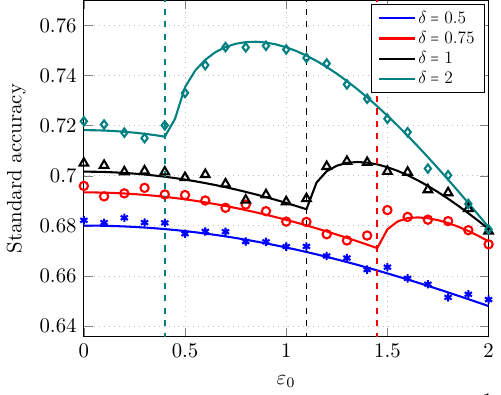}
   \caption*{(a) Standard accuracy}
\end{minipage}
\begin{minipage}{.485\textwidth}
  \centering
  \includegraphics[scale=0.73]{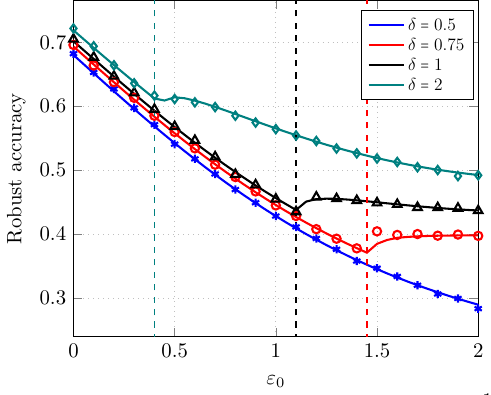}
   \caption*{(b) Robust accuracy}
\end{minipage}
\caption{Depiction of standard and robust accuracies as a function of $\eps_0$ with $\ell_\infty$ ($p=\infty$) perturbation for different values of $\delta$, and under a similar setting as in Figure~\ref{fig:SA-RA-p21}.} 
%Solid curves are theoretical predictions and dots are results obtained based on gradient descent on the robust objective \eqref{eq:inner}. The dashed lines depict the separability threshold for that $\delta$. Each dot represents the average of $100$ trials. The data set is generated according to a Gaussian Mixture Model per Section \ref{sec:data} with $\vct{\mu}\in\R^d$ consisting of i.i.d.~$\mathcal{N}\left(0,\frac{1}{d}\right)$ entries with dimension $d=400$.}
\label{fig:SA-RA-pinf}
\end{figure} 

More specifically, Figure \ref{fig:SA-RA-pinf}(a) depicts the standard accuracy ($\SR$) versus the adversary's normalized power. Similar to our $p=2$ results the data set is generated according to a Gaussian Mixture Model per Section \ref{sec:data} with $\vct{\mu}\in\R^d$ consisting of i.i.d.~$\mathcal{N}\left(0,\frac{1}{d}\right)$ entries with dimension $d=400$ and each data points represents the average of $100$ trials. In the case of $p=\infty$ however, we do not use the scaling $\eps=\eps_0\infnorm{\vct{\mu}}$ as $\infnorm{\sqrt{d}\vct{\mu}}$ grows with $\sqrt{\log d}$ and therefore violates Assumption \ref{ass:alternative}. Instead we shall use a slightly different scaling of $\eps=\frac{\eps_0}{\sqrt{d}}$. In the separable regime, we see that adversarial training hurts the standard accuracy. However, in the non-separable regime, the standard accuracy starts increasing indicating that adversarial training is improving the standard accuracy. Furthermore, after some value of $\eps_0$, which interestingly shifts with $\delta$, the standard accuracy starts to go down as $\eps_0$ grows.\footnote{Note that for $\delta = 0.5$, we are in the separable regime over the entire range $[0,\eps_0]$.} We note that this behavior is rather counterintuitive and very different from the $p=2$ case, further highlighting the need for a precise theory that can predict such nuanced behavior. Figure \ref{fig:SA-RA-pinf}(b) shows the robust accuracy $\AR$ versus  $\eps_0$ for various values of $\delta$. In the separable regime, we observe a similar trend for all $\delta$, namely $\AR$ decreases at an almost linear rate. In the non-separable regime though we have different trends depending on the value of $\delta$. 

\begin{figure}[t]
\centering
\begin{minipage}{.485\textwidth}
  \centering
    \includegraphics[scale=0.73]{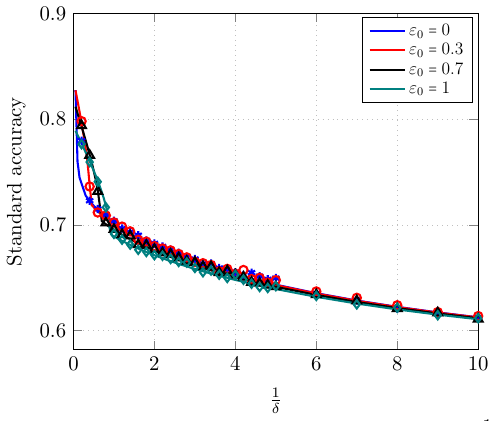}
   \caption*{(a) Standard accuracy}
\end{minipage}
\begin{minipage}{.485\textwidth}
  \centering
  \includegraphics[scale=0.73]{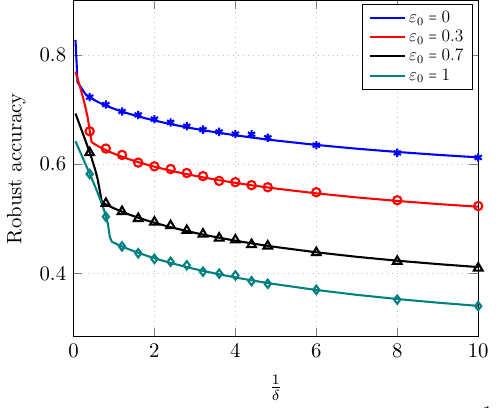}
   \caption*{(b) Robust accuracy}
\end{minipage}
\caption{Depiction of standard and robust accuracies as a function of dimension-to-sample ratio $\frac{1}{\delta} = \frac{d}{n}$, which is a measure of model complexity, for several values of $\eps_0$ with $\ell_\infty$ ($p=\infty$) perturbation, under a similar setting as in Figure~\ref{fig:SA-RA-p21}.}
%Solid curves are theoretical predictions and dots are results obtained based on gradient descent on the robust objective \eqref{eq:inner}. Each dot represents the average of $100$ trials. The data set is generated according to a Gaussian Mixture Model per Section \ref{sec:data} with $\vct{\mu}\in\R^d$ consisting of i.i.d.~$\mathcal{N}\left(0,\frac{1}{d}\right)$ entries with dimension $d=400$.}
\label{fig:SA-RA-pinf-f}
\end{figure} 

Finally, in Figure \ref{fig:SA-RA-pinf-f} we depict the effect of overparameterization $\frac{1}{\delta}$ on $\SR$ and $\AR$. We observe a similar pattern as in the case of $p=2$. In particular, we do not observe a double descent behavior and the standard accuracy always decreases as $\frac{1}{\delta}$ grows, albeit at different rates in the separable and non-separable regimes.

%\aj{add plots and discussion-- we need to say $\mu_i~\sim\normal(0,1/d)$.}
%\aj{talking points:
%\begin{itemize}
%%\item Similar experiment but this time with $p=\infty$ 
%%\item 
%
%%As we see this behavior is very different than case of $p=2$ (see Figure~\ref{fig:SA-RA-p2}(a)) which highlights the importance of a precise theory to predict such behaviors for different $p$. 
%\item 
%
%\item 
%\end{itemize}
%}

%\input{Simulations}
\subsection{Results for $\ell_1$ perturbation}
\label{extentsec}
Our characterization of $\SR$ and $\AR$ given by Theorem~\ref{thm:isotropic-separable}, for separable regime, and by Theorem~\ref{thm:isotropic-nonseparable}, for non-separable regime involve the function $\EJ$ defined by~\eqref{eq:EJ} which in turn depends on the function $J_q$ given by~\eqref{eq:Jq}. However, $J_q$ is only defined for finite $q$ and therefore the case of $p=1$, $q=\infty$ is not directly covered by our results in Section~\ref{sec:main-isotorpic}.  That said, a very similar analysis can be used to characterize $\SR$ and $\AR$ in this case. We formalize our results for this case in the next theorem.

%Similar to the scalar input function  $J_q$, defined by~\eqref{eq:Jq}, we define the function $J_\infty$, which is defined on vector domain:
%\begin{align}\label{eq:Jinf}
%J_q(\vct{x};\lambda) = \min_{u}\; \frac{1}{2}(\vct{x}-\vct{u})^2+\lambda \|\vct{u}\|_\infty  \,.
%\end{align}
%We also define $\EtJ$ as follows:
%\begin{align}\label{eq:EtJ}
%\EtJ(c_0,c_1;\lambda_0) = \min_{t_0\ge0} \left\{ \frac{1}{2}\E\left[\ST\left(\frac{c_0}{\sqrt{\delta}} h - c_1 \frac{M}{\sigma_{M,2}}; \frac{t_0}{\sigma_{M,1}} \right)^2\right] + \lambda_0 t_0\right\}\,.
%\end{align}
%Function $\EtJ$ is obtained by taking expectation of function $J_\infty$ with respect to specific random input $\vct{x}$, and
%in our analysis for $p=1$, it plays an analogous role to the function $\EJ$ given by~\eqref{eq:EtJ}. We refer to \aj{???} for a detailed derivation. It is worth noting that compared to $\EJ$, the function $\EtJ$ entails minimization over an additional input variable $t_0$. The reason is that despite $J_q$, the function $J_\infty$ is not a separable function over the entries of $\vct{x}$ since it is regularized by the $\ell_\infty$ norm. Due to this, we have to introduce the new variable $t_0$ to make the function separable in a higher dimensional space.
% 

\begin{theorem}\label{thm:isotropic-qinf}
Consider a data set generated i.i.d.~according to an isotropic Gaussian mixture data model per Section \ref{sec:data} and suppose the mean vector $\vct{\mu}$ obeys Assumptions~\ref{ass:alternative}.  Also define
\begin{align}\label{eq:f-def}
%f(a;t):= a^2 \left\{\left(\frac{t^2}{a^2}+1\right) \Phi\left(-\frac{t}{a}\right) -\frac{t}{a} \phi\left(\frac{t}{a}\right)\right\}\,.
f(c_0,c_1;t_0) = \frac{1}{2}\E\left[\ST\left(\frac{c_0}{\sqrt{\delta}} h - c_1 \frac{M}{\sigma_{M,2}}; \frac{t_0}{\sigma_{M,1}} \right)^2\right]\,,
\end{align}
where $\ST(x;a):=\sgn{x}\left(\abs{x}-a\right)_{+}$ is the soft-thresholding function. Then in the asymptotic setting of Assumption \ref{ass:asymptotic} we have:
\begin{itemize}
\item [(a)] {The separability threshold $\delta_*$ is given by
\begin{align}
&\delta_* :=  \min_{\alpha\ge0, \theta} \frac{ \omega\left(\alpha, \theta, \eps_0\right)^2}{\E\left[\left(1-V\theta+\sqrt{\alpha^2+\theta^2}g\right)_{+}^2\right]}\nn\\
&\text{with}\quad \omega\left(\alpha, \theta, \eps_0\right) := \min_{\eta\ge0, \nu}  \;\; \sqrt{\delta} \left\{\frac{\nu^2}{2\eta}+\frac{1}{2\eta \delta} + \frac{\eta}{2}\alpha^2 -\eta f\bigg(\frac{1}{\eta},
 \frac{\nu}{\eta}-\theta;\frac{1}{\eps_0}\bigg)\right\} \,.\label{eq:omega-p1}
\end{align}
}
\item [(b)] In the separable regime where $\delta<\delta_*$, the followings hold in probability for the max margin solution $\tth^\eps$  (see ~\eqref{eq:MM}): 
\begin{align}
\lim_{n\to\infty} \SR(\tth^\eps) &= \Phi\left(\sigma_{M,2} \frac{\theta_*}{\alpha_*}\right)\,,\\
\lim_{n\to\infty} \AR(\tth^\eps) &= \Phi\left(-\frac{\eps_0\gamma_{0*}}{\alpha_*} + \sigma_{M,2}\frac{\theta_*}{\alpha_*}\right)\,.
\end{align}
Here, $(\alpha_*,\gamma_{0*},\theta_*)$ are the unique minimization component of the following convex-concave minimax scalar optimization with bounded solution $(\alpha_*,\gamma_{0*},\theta_*,\beta_*,\eta_*,\tilde{\eta}_*)$.
\begin{align}
\min_{\alpha,\gamma_0\ge 0,\theta} \max_{\beta,\eta\ge0, \tilde{\eta}} \quad &D_{\rm s}(\alpha,\gamma_{0},\theta,\beta,\eta,\tilde{\eta}), \quad \text{where}\nn\\
D_{\rm s}(\alpha,\gamma_{0},\theta,\beta,\lambda_{0},\eta,\tilde{\eta}) \;=\; 
&\min_{\alpha,\gamma_0\ge 0,\theta} \max_{\beta,\eta\ge0, \tilde{\eta}} \quad
\frac{1}{2(1+\frac{\eta}{2\alpha})} f\left( \beta, \tilde{\eta}; 2\gamma_0\left(1+\frac{\eta}{2\alpha}\right)\right) \nn\\
&\quad\quad\quad\quad\quad\quad-\left(\frac{\beta^2}{\delta}+\tilde{\eta}^2\right) \frac{1}{4(1+\frac{\eta}{2\alpha})} -\frac{\eta\alpha}{2} -\tilde{\eta}\theta \nn\\
&\quad\quad\quad\quad\quad\quad  +\beta\sqrt{\E\Bigg[\left(\left(1+\eps_0\gamma_0-\theta\sigma_{M,2} \right)+\alpha g\right)_{+}^2\Bigg]}\,,
\end{align}
with expectation taken with respect to $g\sim\normal(0,1)$.
\item[(c)]  In the non-separable regime where $\delta>\delta_*$, the followings hold in probability the optimal solution $\hth^\eps$ of~\eqref{eq:inner}: 
\begin{align}
\lim_{n\to\infty} \SR(\hth^\eps) &= \Phi\left(\frac{\sigma_{M,2}\theta_*}{\sqrt{\alpha_*^2+\theta_*^2}}\right)\,,\\
\lim_{n\to\infty} \AR(\hth^\eps) &= \Phi\left(\frac{-\eps_0\gamma_{0*} + \sigma_{M,2}\theta_*}{\sqrt{\alpha_*^2+\theta_*^2}}\right)\,.
\end{align}
Here, $(\alpha_*,\gamma_{0*},\theta_*)$ are the unique minimization components of the following convex-concave minimax scalar optimization with bounded solution $(\theta_*,\alpha_*,\gamma_{0*},\tau_{g*},\beta_*,\tau_{h*})$.
\begin{align}
\min_{\theta, 0\le \alpha, \gamma_0, \tau_g}\;\;\max_{0\le\beta, \tau_h} \;\; &D_{\rm ns}(\alpha, \gamma_0, \theta, \tau_g, \beta,\tau_h)\nn\\
 D_{\rm ns}(\alpha, \gamma_0, \theta, \tau_g, \beta,\tau_h) &= \frac{\beta \tau_g}{2} +L\left(\sqrt{\alpha^2+\theta^2},\sigma_{M,2} \theta-\eps_0 \gamma_0,\frac{\tau_g}{\beta}\right) \nn \\
&- \min_{\nu} \left[ \frac{\alpha}{\tau_h}\left\{\frac{\beta^2}{2\delta}  + \frac{\nu^2}{2}
 - f\left(\beta , \frac{\tau_h\theta}{\alpha} 
+ {\nu}; \frac{\gamma_0\tau_h}{\alpha}\right) 
\right\} 
+\frac{\alpha\tau_h}{2} 
 \right]\,.
\end{align}

\end{itemize}

%Also let $\tth^\eps$ be the max margin solution per~\eqref{eq:MM}. If $\delta<\delta_*$, with $\delta_*$ given by~\eqref{eq:threshold},
\end{theorem}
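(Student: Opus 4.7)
The plan is to follow the CGMT-based strategy used to prove Theorems~\ref{thm:sep-thresh}, \ref{thm:isotropic-separable}, and~\ref{thm:isotropic-nonseparable}, substituting for $J_q$ (which only makes sense for finite $q$) a dual object built from the Fenchel representation $\infnorm{\bth} = \sup_{\onenorm{\vct{v}}\le 1} \vct{v}^\top \bth$. The soft-thresholding map $\ST$ arises naturally as the proximal operator of the $\ell_1$ norm on the dual side, and integrating $\tfrac{1}{2}\ST^2$ against the limiting law produces the function $f$ in~\eqref{eq:f-def}, which plays the role of $\EJ$ throughout.

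I would proceed in four steps. First, recast the adversarial loss~\eqref{eq:inner} and the max-margin problem~\eqref{eq:MM} by introducing the scalar auxiliary variable $\gamma_0 := \eps_0\,\pnorm{\vct{\mu}}\,\infnorm{\bth}$ together with the box constraint $\infnorm{\bth}\le \gamma_0/(\eps_0 \pnorm{\vct{\mu}})$, so that the non-smooth regularizer is decoupled from the data-fit term. Second, apply CGMT to the resulting Gaussian min--max to obtain the auxiliary (AO) problem and decompose $\bth$ into its component parallel to $\vct{\mu}$ and its orthogonal part; exactly as in the $q<\infty$ case, the outer variables $(\alpha,\gamma_0,\theta,\tau_g,\beta,\tau_h)$ arise as the natural scalar surrogates of $\twonorm{\bth}$, $\pnorm{\vct{\mu}}\qnorm{\bth}$, $\<\vct{\mu},\bth\>/\twonorm{\vct{\mu}}$, and the CGMT slack variables. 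Third, the inner problem then has the form $\min_{\bth_\perp:\,\infnorm{\bth_\perp}\le T}\{\tfrac{1}{2}\twonorm{\vct{u}-\bth_\perp}^2 + \text{linear in }\bth_\perp\}$ for an appropriate Gaussian $\vct{u}$ and radius $T$; I would dualize the box constraint by a Lagrange multiplier and exchange min with max (permissible by strong duality, since the feasible set is compact and the objective is convex-concave), which yields a closed-form inner value in terms of $\ST$ applied coordinate-wise to the residual. Fourth, take the asymptotic limit: combining Assumption~\ref{ass:alternative} (which gives $\pnorm{\vct{\mu}}=\onenorm{\vct{\mu}}\sim\sqrt{d}\,\sigma_{M,1}$) with the law of large numbers applied to the i.i.d.\ Gaussian entries of the CGMT vectors shows that the normalized coordinate sum converges to $f$, with the rescaling explaining precisely the threshold $t_0/\sigma_{M,1}$ inside $\ST$.

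The main technical hurdle is the non-smoothness of $\infnorm{\cdot}$: one has to verify that the min--max exchange is legitimate, that the convex-concave regularity conditions required by CGMT continue to hold so that the saddle point exists and its minimization components are unique, and that boundedness of the solution propagates through the soft-thresholding substitution. This is where the arguments of Theorems~\ref{thm:isotropic-separable} and~\ref{thm:isotropic-nonseparable} must be re-examined with $\EJ$ systematically replaced by $f$. Once these are in place, the separability threshold in part~(a) follows from examining when the AO attains a finite value (as in Theorem~\ref{thm:sep-thresh}), producing the width formula~\eqref{eq:omega-p1}; parts~(b) and~(c) then follow from the standard CGMT concentration of $\twonorm{\hth^\eps}$, $\<\vct{\mu},\hth^\eps\>$, and $\pnorm{\vct{\mu}}\qnorm{\hth^\eps}$ around the scalar minimizers, combined with Lemma~\ref{lem:SR-AR}.
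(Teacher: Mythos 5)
Your proposal is correct and follows essentially the same route as the paper: the paper likewise handles the non-separability of the $\ell_\infty$ regularizer by introducing the box radius as an extra scalar variable, so that the inner minimization becomes a Euclidean projection onto an $\ell_\infty$-ball whose residual is coordinate-wise soft-thresholding, after which the law of large numbers produces the function $f$ and the remaining CGMT scalarization proceeds exactly as in Theorems~\ref{thm:isotropic-separable} and~\ref{thm:isotropic-nonseparable} with $\EJ$ replaced by $f$. The only cosmetic difference is that the paper keeps the radius $t_0$ and its multiplier $\lambda_0$ as free variables and eliminates them at the end via a linearity-in-$\lambda_0$ argument, whereas you tie the box radius to $\gamma_0$ from the outset; both yield the same effective thresholds $1/\eps_0$, $2\gamma_0(1+\tfrac{\eta}{2\alpha})$, and $\gamma_0\tau_h/\alpha$ in parts (a)--(c).
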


\begin{figure}[]
\centering
\begin{minipage}{.485\textwidth}
  \centering
    \includegraphics[scale=0.73]{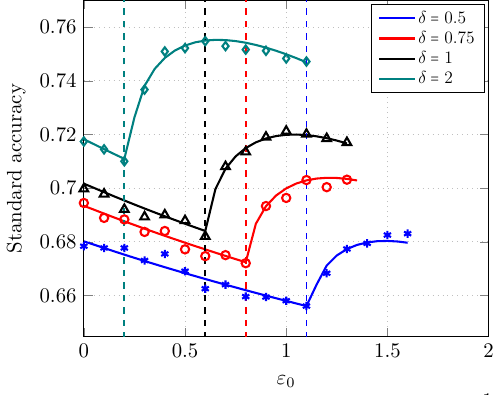}
   \caption*{(a) Standard accuracy}
\end{minipage}
\begin{minipage}{.485\textwidth}
  \centering
  \includegraphics[scale=0.73]{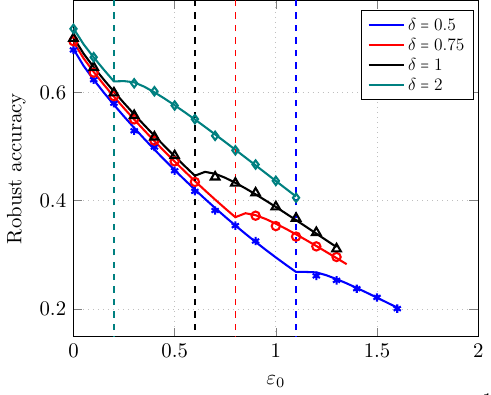}
   \caption*{(b) Robust accuracy}
\end{minipage}
\caption{Depiction of standard and robust accuracies as a function of $\eps_0$ with $\ell_1$ ($p=1$) perturbation for different values of $\delta$,
under a similar setting as in Figure~\ref{fig:SA-RA-p21}.}
% Solid curves are theoretical predictions and dots are results obtained based on gradient descent on the robust objective \eqref{eq:inner}. The dashed lines depict the separability threshold for that $\delta$. Each dot represents the average of $100$ trials. The data set is generated according to a Gaussian Mixture Model per Section \ref{sec:data} with $\vct{\mu}\in\R^d$ consisting of i.i.d.~$\mathcal{N}\left(0,\frac{1}{d}\right)$ entries with dimension $d=400$.} 
\label{fig:SA-RA-p1}
\end{figure}

In Figure \ref{fig:SA-RA-p1}, we again depict our theoretical predictions for standard and robust accuracy as well as the empirical performance of gradient descent as a function of the adversary's normalized power for various values of $\delta$. We note however that in this case we do not actually run gradient descent in our simulations as $p=1$ corresponds to $q=+\infty$ and GD convergence is extremely slow since the gradient only has one non-zero entry. Therefore, for our empirical simulations we use CVX, a package for specifying and solving convex programs~\cite{cvx}, in the non-separable regime which given the uniqueness of the global optima yields the same answer as GD. Similarly, in the separable regime we use \eqref{eq:MM} which based on Proposition \ref{pro:SVM} is the direction GD eventually converges to. We observe that as in the $p=2$ and $p=+\infty$ cases our theoretical predictions are very accurate even for moderate dimensions $d$. 

More specifically, Figure \ref{fig:SA-RA-p1}(a) depicts the standard accuracy ($\SR$) versus the adversary's normalized power. Similar to our $p=+\infty$ results the data set is generated according to a Gaussian Mixture Model per Section \ref{sec:data} with $\vct{\mu}\in\R^d$ consisting of i.i.d.~$\mathcal{N}\left(0,\frac{1}{d}\right)$ entries with dimension $d=400$ and each data points represents the average of $100$ trials. In the separable regime, we see that adversarial training hurts the standard accuracy. However, in the non-separable regime, the standard accuracy starts increasing indicating that adversarial training is improving the standard accuracy. Furthermore, after some value of $\eps_0$, which interestingly shifts with $\delta$, the standard accuracy starts to go down as $\eps_0$ grows.\footnote{Note that for $\delta = 0.5$, we are in the separable regime over the entire range $[0,\eps_0]$.} We note that this behavior is rather counterintuitive and very different from the $p=2$ case but somewhat similar to the $p=+\infty$ case. This again highlights the need for a precise theory that can predict such nuanced behavior. Figure \ref{fig:SA-RA-p1}(b) shows the robust accuracy $\AR$ versus  $\eps_0$ for various values of $\delta$. In the separable regime, we observe a similar trend for all $\delta$, namely $\AR$ decreases at an almost linear rate. In the non-separable regime though we have different trends depending on the value of $\delta$. 

%Finally, in Figure \ref{fig:SA-RA-pinf} we depict the effect of overparametrization $\frac{1}{\delta}$ on $\SR$ and $\AR$. We observe a similar pattern as in the case of $p=2$. In particular, we do not observe a double descent behavior and the standard accuracy always decreases as $\eps_0$ grow, albeit at different rates in the separable and non-separable regimes.

%========================
\section{Extension to anisotropic Gaussians}\label{sec:extension-anisotropic}
In this section we extend our results to Gaussian distributions with general covariance matrices that obey a certain spiked covariance assumption stated below.
\begin{assumption} \label{spikedcov}
(Spiked covariance) $\vct{\mu}$ is an eigenvector of $\mtx{\Sigma}$ with eigenvalue $a^2$, i.e, $\mtx{\Sigma} \vct{\mu} = a^2\vct{\mu}$.
\end{assumption}
Similar spiked covariance models have been used to model data in a number of statistical problems,  including matrix denoising and structured learning~\cite{johnstone2001distribution,donoho2018optimal}, sparse PCA \cite{deshpande2014sparse}, synchronization and clustering \cite{javanmard2016phase}.

To extend our results in Section~\ref{sec:main-isotorpic} to the anisotropic case we also need to generalize the definition of the set $\cS$ as follows:
\[
\cS(\alpha, \theta, \eps_0,\vct{\mu}) : = \left\{\vct{z}\in \reals^d :\quad  \bz^T\tmu =0,\; \twonorm{\vct{z}} =\alpha,\; \qnorm{\mtx{\Sigma}^{-1/2}\vct{z} +\theta \tmu} \le \frac{1}{\eps_0\pnorm{\vct{\mu}}} \right\}\,.
\]
We are now ready to state our main results in the anisotropic case. We start by the separability threshold which generalizes Theorem~\ref{thm:sep-thresh}.
\begin{theorem}\label{thm:sep-thresh-aniso}
Consider a data set generated i.i.d.~according to an anisotropic Gaussian mixture data model per Section \ref{sec:data} with a spiked covariance per Assumption \ref{spikedcov}. Also suppose the mean vector $\vct{\mu}$ and covariance matrix $\mtx{\Sigma}$ obey Assumptions~\ref{ass:norm-mu} and \ref{ass:omegas}. Also define
\begin{align}\label{eq:threshold2}
\delta_* :=  \min_{\alpha\ge0, \theta} \frac{ \omega\left(\alpha, \theta, \eps_0\right)^2}{\E\left[\left(1-V\theta+\sqrt{\alpha^2+a^2\theta^2}g\right)_{+}^2\right]}\,,
\end{align}
where the expectation is taken with respect to $g\sim \normal(0,1)$. Then, under the asymptotic setting of Assumption \ref{ass:asymptotic}, for $\delta< \delta_*$ the data are $(\eps,q)$- separable with high probability and for $\delta>\delta_*$, the data are non-separable, with high probability. Namely,
\begin{align*}
\delta < \delta_* &\Rightarrow  \lim_{n\to\infty} \prob(\text{data is $(\eps,q)$-separable}) =1\,,\nn\\
\delta > \delta_* &\Rightarrow  \lim_{n\to\infty} \prob(\text{data is $(\eps,q)$-separable}) = 0\,.\nn
\end{align*}
\end{theorem}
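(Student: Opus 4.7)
The plan is to adapt the CGMT-based argument behind the isotropic Theorem~\ref{thm:sep-thresh} to the anisotropic setting by using the spiked covariance hypothesis $\mtx{\Sigma}\vct{\mu}=a^2\vct{\mu}$ to perform a change of variables that effectively reduces the problem to an ``isotropic'' one whose noise variance along $\tmu:=\vct{\mu}/\twonorm{\vct{\mu}}$ is inflated by $a^2$. Concretely, I would first reformulate robust separability as the existence of $\vct{\theta}$ with $y_i\langle\vct{x}_i,\vct{\theta}\rangle-\eps\qnorm{\vct{\theta}}\ge 1$ for all $i$, equivalently the optimal value of
\begin{equation*}
\Phi_n \;:=\; \inf_{\vct{\theta}}\; \frac{1}{n}\sum_{i=1}^n \bigl(1- y_i\langle\vct{x}_i,\vct{\theta}\rangle+\eps\qnorm{\vct{\theta}}\bigr)_+^2
\end{equation*}
being zero. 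The first step is then to write $\vct{x}_i = y_i\vct{\mu}+\mtx{\Sigma}^{1/2}\vct{z}_i$ with $\vct{z}_i\sim\normal(0,\I_d)$, decompose $\vct{\theta}=\theta\,\tmu+\vct{\theta}_\perp$ with $\vct{\theta}_\perp\perp\vct{\mu}$, and use the spiked assumption (which gives $\mtx{\Sigma}^{1/2}\tmu=a\,\tmu$ and hence $\mtx{\Sigma}^{1/2}\vct{\theta}_\perp\perp\tmu$) so that
\begin{equation*}
y_i\langle\vct{x}_i,\vct{\theta}\rangle \;=\; \twonorm{\vct{\mu}}\theta + a\theta\, g_{i,1} + \langle\vct{z}_i,\vct{v}\rangle,\qquad \vct{v}:=\mtx{\Sigma}^{1/2}\vct{\theta}_\perp\in\vct{\mu}^\perp,
\end{equation*}
with $g_{i,1}:=\langle\vct{z}_i,\tmu\rangle$ independent of $\langle\vct{z}_i,\vct{v}\rangle$. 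The same substitution turns the regularizer into $\eps\qnorm{\theta\tmu+\mtx{\Sigma}^{-1/2}\vct{v}}$, which matches the defining constraint of the updated set $\cS(\alpha,\theta,\eps_0,\vct{\mu})$.

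The second stage is to cast $\Phi_n$ in a CGMT-amenable bilinear form by introducing slacks $u_i$ and dual multipliers $\beta_i$, producing a coupling $\vct{\beta}^T\vct{G}\vct{v}$ where $\vct{G}$ has i.i.d.\ Gaussian rows supported on $\vct{\mu}^\perp$. Applying the Convex Gaussian Minimax Theorem replaces this coupling by $\twonorm{\vct{v}}\,\vct{g}^T\vct{\beta}+\twonorm{\vct{\beta}}\,\vct{h}^T\vct{v}$ (with fresh Gaussians $\vct{g},\vct{h}$), and the $g_{i,1}$ Gaussians along $\tmu$ combine with $\vct{g}$ into one $\normal(0,\I_n)$ vector scaled by $\sqrt{\alpha^2+a^2\theta^2}$ when $\alpha=\twonorm{\vct{v}}$. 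This is precisely where the $\sqrt{\alpha^2+a^2\theta^2}$ factor in \eqref{eq:threshold2} is produced, in contrast to the $\sqrt{\alpha^2+\theta^2}$ of the isotropic case.

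The third stage is the scalar reduction. For fixed $(\alpha,\theta)$, the inner minimization over $\vct{v}\in\vct{\mu}^\perp$ subject to $\twonorm{\vct{v}}=\alpha$ and $\qnorm{\theta\tmu+\mtx{\Sigma}^{-1/2}\vct{v}}\le 1/\eps=(\eps_0\pnorm{\vct{\mu}})^{-1}$ feeds the term $\twonorm{\vct{\beta}}\,\vct{h}^T\vct{v}$, whose expected supremum is (by definition) the spherical width of $\cS(\alpha,\theta,\eps_0,\vct{\mu})$. Invoking Assumption~\ref{ass:omegas} and Assumption~\ref{ass:norm-mu} lets this converge to $\omega(\alpha,\theta,\eps_0)$ and $V$. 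Equating the resulting scalar ``complexity vs.\ noise'' balance---i.e., comparing $\omega(\alpha,\theta,\eps_0)^2$ against $\delta\cdot \E[(1-V\theta+\sqrt{\alpha^2+a^2\theta^2}\,g)_+^2]$---and optimizing over $(\alpha,\theta)$ yields a sign dichotomy: the AO's optimum is zero (separable) with high probability when $\delta\le\delta_*$ and strictly positive (non-separable) when $\delta>\delta_*$. Concentration of $\Phi_n$ around the AO value then transfers the dichotomy back to the primal, giving the theorem.

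The main technical obstacle I anticipate is the last step of keeping the $\ell_q$ geometry compatible with the Gaussian decomposition. In the isotropic proof, $\mtx{\Sigma}^{-1/2}=\I$, so the $\qnorm{\cdot}$ constraint splits cleanly along $\tmu$ and $\tmu^\perp$; in the anisotropic case $\mtx{\Sigma}^{-1/2}$ mixes coordinates inside the orthogonal complement and does not preserve $\ell_q$ balls. The spiked assumption rescues the $\tmu$ direction ($\mtx{\Sigma}^{-1/2}\tmu=a^{-1}\tmu$), which is exactly what makes the direct-sum decomposition of $\vct{\theta}$ survive the change of variables, but one still has to argue that the minimization over $\vct{v}$ with the constraint $\qnorm{\theta\tmu+\mtx{\Sigma}^{-1/2}\vct{v}}\le (\eps_0\pnorm{\vct{\mu}})^{-1}$ converges asymptotically to the spherical width $\omega(\alpha,\theta,\eps_0)$ in the sense of Assumption~\ref{ass:omegas}. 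Establishing this convergence rigorously---together with verifying the usual CGMT regularity conditions (boundedness of the feasible set, convex--concave structure, Lipschitz control) in the presence of $\mtx{\Sigma}^{-1/2}$ inside the $\qnorm{\cdot}$---is the technical heart of the argument.
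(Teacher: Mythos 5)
Your proposal is correct and follows essentially the same route as the paper's proof: a Lagrangian/dual reformulation of robust separability that is bilinear in the Gaussian part of the data, CGMT reduction, decomposition of $\vct{\theta}$ along $\tmu$ using the spiked assumption to produce the $\sqrt{\alpha^2+a^2\theta^2}$ noise term, and identification of the inner minimization over the orthogonal complement with the spherical width of $\cS(\alpha,\theta,\eps_0,\vct{\mu})$ via Assumptions~\ref{ass:norm-mu} and \ref{ass:omegas}. The only cosmetic difference is that you start from a squared-hinge penalty $\Phi_n$ while the paper keeps the explicit constraint $\qnorm{\bth}\le 1/\eps$ and tests finiteness of the dual value, but both lead to the same auxiliary problem and the same sign dichotomy.
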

%\subsection{Precise characterization of SR and AR in separable regime}
Our  next theorem precisely characterizes $\SR$ and $\AR$ in the separable regime and generalizes Theorem~\ref{thm:isotropic-separable} to the anisotropic case. Before proceeding to state the theorem we need to establish some definitions and assumptions.
\begin{definition}\label{def:WME}
For a given matrix $\mtx{A}\succeq 0$ and a function $f$, we define the \emph{weighted Moreau envelope} of $f$ as follows:
\[
e_{f,\mtx{A}}(\vct{x};\lambda) : = \min_{\vct{v}}  \frac{1}{2}\|\vct{x}-\vct{v}\|_{\mtx{A}}^2 + \lambda f(\vct{v})
\]
When $\mtx{A} = \mtx{I}$, we recover the (scaled) classical Moreau envelope. We denote by $e_{q,\mtx{\Sigma}}$ the weighted Moreau envelope corresponding to $\qnorm{\cdot}^q$ function.
\end{definition}

%%%%
\begin{assumption}\label{ass:converging2}
For the sequence of instances $\{\mtx{\Sigma}(n), \vct{\mu}(n), d(n)\}_{n\in\naturals}$ indexed by $n$, we assume that:
\begin{itemize}
 \item[(a)] The following (in probability) limit exists for any scalars $c_0, c_1,\lambda_0, \eta\in \reals_{+}$:
\begin{align*}
\sF(c_0,c_1;b_0,b_1): = \lim_{n\to\infty}  e_{q,\mtx{I}+b_0\mtx{\Sigma}}\left((\mtx{I}+ b_0\mtx{\Sigma})^{-1} \left\{\frac{c_0}{2\sqrt{n}} \mtx{\Sigma}^{1/2} \pproj_{\vct{\mu}}\vct{h} - 
 \frac{c_1}{2} \tmu\right\};b_1 \pnorm{\vct{\mu}}^q\right)\,.
\end{align*}
\item[(b)] The empirical distribution of eigenvalues of $\mtx{\Sigma}$ converges weakly to a distribution $\rho$ with Stieltjes transform $S_{\rho}(z): = \int \frac{\rho(t)}{z-t}\de t$.
\end{itemize}
%\aj{Natural choice:
%\begin{align}\label{eq:natural}
%\sE(c_0,c_1;\lambda_0): = \lim_{n\to\infty} \lambda\; e_{q,\mtx{\Sigma}}\left(\frac{c_0}{\sqrt{n}}\mtx{\Sigma}^{-1/2}\vct{h}- c_1 \frac{\vct{\mu}}{\twonorm{\vct{\mu}}}; \lambda\right)\,.
%\end{align}
%}
%where the expectation is taken with respect to the randomness of both $\vct{h}$ and $\vct{\mu}$.
\end{assumption}
With these definitions and assumptions in place we are ready to state our result in the separable regime.
\begin{theorem}\label{thm:isotropic-separable-ansio}
Consider a data set generated i.i.d.~according to an anisotropic Gaussian mixture data model per Section \ref{sec:data} with a spiked covariance per Assumption \ref{spikedcov}. Also suppose the mean vector $\vct{\mu}$ and covariance matrix $\mtx{\Sigma}$ obey Assumptions~\ref{ass:norm-mu}, \ref{ass:omegas}, and \ref{ass:converging2}. Also let $\tth^\eps$ be the max margin solution per~\eqref{eq:MM}. If $\delta<\delta_*$, with $\delta_*$ given by~\eqref{eq:threshold}, then in the asymptotic setting of Assumption \ref{ass:asymptotic} we have:
\begin{itemize}
\item[(a)] The following convex-concave minimax scalar optimization problem has bounded solution $(\alpha_*,\gamma_{0*},\theta_*,\beta_*,\lambda_{0*},\eta_*,\tilde{\eta}_*)$ with the minimization components $(\alpha_*,\gamma_{0*},\theta_*)$ unique:
\begin{align}\label{eq:sep-AO92}
\min_{\alpha,\gamma_0\ge 0,\theta} \max_{\beta,\lambda_0,\eta\ge0, \tilde{\eta}} \quad &D_{\rm s}(\alpha,\gamma_{0},\theta,\beta,\lambda_{0},\eta,\tilde{\eta}), \quad \text{where}\nn\\
D_{\rm s}(\alpha,\gamma_{0},\theta,\beta,\lambda_{0},\eta,\tilde{\eta}) \;=\; &2 \sF\left(\beta,\tilde{\eta};\frac{\eta}{2\alpha},\frac{\lambda_0}{q\gamma_0^{q-1}}\right)-\frac{\beta^2\alpha}{2\delta \eta} \left(1 +\frac{2\alpha}{\eta}S_{\rho}\left(-\frac{2\alpha}{\eta}\right)\right) \nn\\
&-\frac{2\lambda_0}{q}  \gamma_0 -\frac{\eta\alpha}{2} -\tilde{\eta}\theta \nn\\
&-\frac{\tilde{\eta}^2}{4(1+\frac{\eta}{2\alpha} a^2)}  +\beta\sqrt{\E\Bigg[\left(\left(1+\eps_0\gamma_0-\theta V \right)+\alpha g\right)_{+}^2\Bigg]}\,,
\end{align}
with expectation in last part taken with respect to $g\sim\normal(0,1)$.
\item[(b)] It holds in probability that
\begin{align}
\lim_{n\to\infty} \frac{1}{\twonorm{\vct{\mu}}}\<\vct{\mu},\tth^\eps\> &= \theta_*\,,\\
\lim_{n\to\infty} \twonorm{\tth^\eps} &= \alpha_*\,,\\
\lim_{n\to\infty} \pnorm{\vct{\mu}}\qnorm{\tth^\eps} &= \gamma_{0*}\,.
\end{align}
\item[(c)] As a corollary of part (b) and Lemma~\ref{lem:SR-AR}, the following limits hold in probability:
\begin{align}
\lim_{n\to\infty} \SR(\tth^\eps) &= \Phi\left(V \frac{\theta_*}{\alpha_*}\right)\,,\\
\lim_{n\to\infty} \AR(\tth^\eps) &= \Phi\left(-\frac{\eps_0\gamma_{0*}}{\alpha_*} + V\frac{\theta_*}{\alpha_*}\right)\,.
\end{align}
\end{itemize}
\end{theorem}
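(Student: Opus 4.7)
\textbf{Proof proposal for Theorem \ref{thm:isotropic-separable-ansio}.} The plan is to adapt the CGMT-based argument behind Theorem~\ref{thm:isotropic-separable} to the anisotropic setting, carefully tracking how the covariance $\mtx{\Sigma}$ couples to the $\ell_p$ geometry. By Proposition~\ref{pro:SVM} it suffices to analyze the max-margin problem \eqref{eq:MM}. Writing $\x_i = y_i\vct{\mu} + \mtx{\Sigma}^{1/2}\vct{z}_i$ with $\vct{z}_i \distas \normal(0,\mtx{I})$ and using that $y_i\vct{z}_i \stackrel{d}{=}\vct{z}_i$, the data matrix with rows $y_i\x_i$ becomes $\bX = \onebb \vct{\mu}^T + \mtx{Z}\mtx{\Sigma}^{1/2}$ with $\mtx{Z}$ having i.i.d.\ $\normal(0,1)$ entries. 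The constraint $\bX\bth \ge (1+\eps\qnorm{\bth})\onebb$ can then be dualized by introducing a Lagrange multiplier $\vct{u}\ge 0$, yielding a primary optimization of the form
\begin{equation*}
\min_{\bth}\max_{\vct{u}\ge 0} \tfrac{1}{2}\twonorm{\bth}^2 + \vct{u}^T\bigl((1+\eps\qnorm{\bth})\onebb - (\vct{\mu}^T\bth)\onebb - \mtx{Z}\mtx{\Sigma}^{1/2}\bth\bigr),
\end{equation*}
which is the bilinear Gaussian form demanded by the CGMT.

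Next I would decompose $\bth = \theta\tmu + \bth_\perp$ with $\bth_\perp\perp\vct{\mu}$. The spiked covariance Assumption~\ref{spikedcov} is crucial here: it guarantees $\mtx{\Sigma}^{1/2}\tmu = a\tmu$ and that $\mtx{\Sigma}^{1/2}$ preserves the orthogonal complement of $\vct{\mu}$, so $\mtx{Z}\mtx{\Sigma}^{1/2}\bth = a\theta\,\mtx{Z}\tmu + \mtx{Z}\pproj_{\vct{\mu}}\mtx{\Sigma}^{1/2}\bth_\perp$. The scalar term $a\theta\,\vct{u}^T\mtx{Z}\tmu$ is just an inner product with a standard Gaussian vector and will give rise to the $\tilde\eta$-variable. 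Applying CGMT to the remaining bilinear form $\vct{u}^T\mtx{Z}\pproj_{\vct{\mu}}\mtx{\Sigma}^{1/2}\bth_\perp$ produces the auxiliary optimization
\begin{equation*}
\min_{\bth}\max_{\vct{u}\ge 0}\ \tfrac{1}{2}\twonorm{\bth}^2 + \twonorm{\vct{u}}\,\vct{h}^T\mtx{\Sigma}^{1/2}\pproj_{\vct{\mu}}\bth_\perp/\sqrt{n} - \twonorm{\mtx{\Sigma}^{1/2}\pproj_{\vct{\mu}}\bth_\perp}\,\vct{g}^T\vct{u}/\sqrt{n} + (\text{scalar terms involving } \theta,\eps\qnorm{\bth}),
\end{equation*}
with $\vct{g},\vct{h}$ independent standard Gaussians. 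This is the form the CGMT reduces to, now with $\mtx{\Sigma}$-weights.

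I would then collapse the inner maximization over $\vct{u}$ by introducing a scalar $\beta = \twonorm{\vct{u}}/\sqrt{n}$ and computing $\max_{\twonorm{\vct{u}}=\beta\sqrt{n},\,\vct{u}\ge 0}\,\vct{u}^T(\onebb c - \vct{g}\,s)$ in closed form, which produces the $\E[((1+\eps_0\gamma_0 - \theta V) + \alpha g)_+^2]$ square-root term upon concentration. For the minimization over $\bth_\perp$, introducing a scalarization $\alpha = \twonorm{\bth_\perp}$ and Lagrange multipliers $\eta, \lambda_0$ for the norm constraints, the remaining optimization reduces to
\begin{equation*}
\min_{\bth_\perp\perp\vct{\mu}}\ \Bigl\{\tfrac{1}{2}\|\bth_\perp\|_{\mtx{I}+\frac{\eta}{2\alpha}\mtx{\Sigma}}^2 - \vct{h}^T\mtx{\Sigma}^{1/2}\bth_\perp\cdot(\text{coeff}) + \lambda_0\qnorm{\bth_\perp + \theta\tmu}^q + \dots\Bigr\},
\end{equation*}
which by Definition~\ref{def:WME} is precisely a weighted Moreau envelope in the metric $\mtx{I}+\frac{\eta}{2\alpha}\mtx{\Sigma}$, and under Assumption~\ref{ass:converging2}(a) converges to $2\sF(\beta,\tilde\eta;\tfrac{\eta}{2\alpha},\tfrac{\lambda_0}{q\gamma_0^{q-1}})$.

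The remaining piece is the quadratic term $-\tfrac{\beta^2\alpha}{2\delta\eta}\bigl(1+\tfrac{2\alpha}{\eta}S_\rho(-\tfrac{2\alpha}{\eta})\bigr)$. This arises from integrating out the part of $\bth_\perp$ that only sees the $\|\cdot\|_{\mtx{I}+(\eta/2\alpha)\mtx{\Sigma}}$ quadratic (and not the $\qnorm{\cdot}$ term), which requires computing $\tr\bigl((\mtx{I}+\tfrac{\eta}{2\alpha}\mtx{\Sigma})^{-1}\mtx{\Sigma}\bigr)/n$; by Assumption~\ref{ass:converging2}(b) this trace converges to a resolvent integral expressible through the Stieltjes transform $S_\rho$. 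A quick consistency check: for $\rho=\delta_1$ one has $1+\tfrac{2\alpha}{\eta}S_\rho(-\tfrac{2\alpha}{\eta})=\tfrac{\eta}{2\alpha+\eta}$, which indeed recovers the $\tfrac{\beta^2}{4\delta(1+\eta/(2\alpha))}$ coefficient of Theorem~\ref{thm:isotropic-separable}. Once the full AO reduces to $D_{\rm s}$ of~\eqref{eq:sep-AO92}, the assertions (b) and (c) follow from the standard CGMT deviation argument: uniform convergence and strong convexity of the scalar problem in $(\alpha,\gamma_0,\theta)$ give concentration of $\twonorm{\tth^\eps}, \pnorm{\vct\mu}\qnorm{\tth^\eps}, \<\vct\mu,\tth^\eps\>/\twonorm{\vct\mu}$ to $(\alpha_*,\gamma_{0*},\theta_*)$, and substituting into Lemma~\ref{lem:SR-AR} yields the formulas for $\SR$ and $\AR$.

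The main obstacle I expect is step involving the weighted Moreau envelope: in the isotropic case the $\ell_q$ penalty decouples entrywise (which is how $\EJ$ appears as a single expectation), but with $\mtx{\Sigma}\neq\mtx{I}$ the proximal operator is no longer separable, so one must phrase the limit abstractly via $\sF$ and rely on the existence-of-limit Assumption~\ref{ass:converging2}(a). Verifying strong convexity of the resulting scalar minimax (needed for uniqueness of $(\alpha_*,\gamma_{0*},\theta_*)$) and ruling out degeneracies where $\alpha_*=0$ at the separability boundary will also require care, but should follow the same convexity-of-Moreau-envelope arguments used in the isotropic proof.
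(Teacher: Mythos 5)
Your proposal follows essentially the same route as the paper's proof: dualize the max-margin problem into a bilinear Gaussian form, split off the $\vct{\mu}$-direction using the spiked-covariance assumption, apply CGMT, collapse the dual variable to its norm $\beta=\twonorm{\vct{u}}/\sqrt{n}$ to produce the $\E[(\cdot)_+^2]^{1/2}$ term, recognize the minimization over $\bth$ as a weighted Moreau envelope in the metric $\mtx{I}+\frac{\eta}{2\alpha}\mtx{\Sigma}$ whose limit is $\sF$ by Assumption~\ref{ass:converging2}(a), and obtain the Stieltjes-transform term from concentration of the quadratic form $\frac{1}{n}\vct{h}^T\pproj_{\vct{\mu}}\mtx{\Sigma}^{1/2}(\mtx{I}+\frac{\eta}{2\alpha}\mtx{\Sigma})^{-1}\mtx{\Sigma}^{1/2}\pproj_{\vct{\mu}}\vct{h}$. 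Your consistency check against the isotropic coefficient is correct, and the closing appeal to strict convexity of the expected scalar objective for uniqueness of $(\alpha_*,\gamma_{0*},\theta_*)$ is exactly what the paper does.

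One step would go wrong as written. You assign the term $a\theta\,\vct{u}^T\mtx{Z}\tmu$ the role of ``giving rise to the $\tilde\eta$-variable,'' but $\tilde\eta$ is the Lagrange multiplier for the constraint $\tmu^T\bth=\theta$; the term $a\theta\,\vct{u}^T\vct{z}$ (with $\vct{z}$ the fresh Gaussian independent of the CGMT pair $\vct{g},\vct{h}$) must instead be kept inside the norm when you maximize over $\vct{u}$, so that the Gaussian fluctuation in the hinge term has variance $a^2\theta^2+\twonorm{\pproj_{\vct{\mu}}\mtx{\Sigma}^{1/2}\bth}^2=\twonorm{\mtx{\Sigma}^{1/2}\bth}^2$. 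Correspondingly the scalar you call $\alpha$ must be $\twonorm{\mtx{\Sigma}^{1/2}\bth}$ rather than $\twonorm{\bth_\perp}$; with $\alpha=\twonorm{\bth_\perp}$ the coefficient of $g$ in $\E[((1+\eps_0\gamma_0-\theta V)+\alpha g)_+^2]$ is wrong for anisotropic $\mtx{\Sigma}$. Once those two bookkeeping items are fixed, the remaining details (the variational identities that let the auxiliary parameters $\tau$ and $t_0$ collapse onto $\alpha$ and $\gamma_0$ via the KKT conditions, and the compactness constraints needed to invoke CGMT) go through as in the paper.
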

Next we turn our attention to characterizing $\SR$ and $\AR$ on the non-separable regime.  To state result we need an additional assumption
%================================
%\subsection{Precise characterization of SR and AR in non-separable regime}
%%%%%
\begin{assumption}\label{ass:converging}
For the sequence of instances $\{\mtx{\Sigma}(n), \vct{\mu}(n), p(n)\}_{n\in\naturals}$ indexed by $n$, we assume that the following (in probability) limit exists for any scalars $c_0, c_1\in \reals_{+}$ and $\lambda_0 \in \reals$:
\begin{align}
\sE(c_0,c_1;\lambda_0): = \lim_{n\to\infty}  e_{q,\mtx{\Sigma}}\left(\frac{c_0}{\sqrt{n}}\mtx{\Sigma}^{-1/2}\vct{h}- c_1 \tmu; \lambda_0 \pnorm{\vct{\mu}}^q\right)\,,
\end{align}
where we recall $\tmu =\bmu/\twonorm{\bmu}$.
%\aj{Natural choice:
%\begin{align}\label{eq:natural}
%\sE(c_0,c_1;\lambda_0): = \lim_{n\to\infty} \lambda\; e_{q,\mtx{\Sigma}}\left(\frac{c_0}{\sqrt{n}}\mtx{\Sigma}^{-1/2}\vct{h}- c_1 \frac{\vct{\mu}}{\twonorm{\vct{\mu}}}; \lambda\right)\,.
%\end{align}
%}
%where the expectation is taken with respect to the randomness of both $\vct{h}$ and $\vct{\mu}$.
\end{assumption}
%===========
Our next theorem generalizes Theorem~\ref{thm:isotropic-nonseparable} to anisotropic case. 
\begin{theorem}\label{thm:isotropic-nonseparable-aniso}
Consider a data set generated i.i.d.~according to an anisotropic Gaussian mixture data model per Section \ref{sec:data} with a spiked covariance per Assumption \ref{spikedcov}. Also suppose the mean vector $\vct{\mu}$ and covariance matrix $\mtx{\Sigma}$ obey Assumptions~\ref{ass:norm-mu}, \ref{ass:omegas}, and \ref{ass:converging}. Also let $\hth^\eps$ be the solution to optimization~\eqref{eq:inner}. If $\delta>\delta_*$, with $\delta_*$ given by~\eqref{eq:threshold}, then in the asymptotic setting of Assumption \ref{ass:asymptotic} we have:
\begin{itemize}
\item[(a)] 
The following convex-concave minimax scalar optimization problem has bounded solution $(\theta_*,\alpha_*,\gamma_{0*},\tau_{g*},\beta_*,\tau_{h*})$ with the minimization components $(\alpha_*,\gamma_{0*},\theta_*)$ unique:
%The following convex-concave minimax scalar optimization has a unique bounded solution $(\theta_*,\alpha_*,\gamma_{0*},\tau_{g*},\beta_*,\tau_{h*})$:
%
\begin{align}
&\min_{\theta, 0\le \alpha, \gamma_0, \tau_g}\;\;\max_{0\le\beta, \tau_h} \;\; D_{\rm ns}(\alpha, \gamma_0, \theta, \tau_g, \beta,\tau_h)\nn\\
& D_{\rm ns}(\alpha, \gamma_0, \theta, \tau_g, \beta,\tau_h) = \frac{\beta \tau_g}{2} +L\left(\sqrt{\alpha^2+a^2\theta^2},V \theta-\eps_0 \gamma_0,\frac{\tau_g}{\beta}\right) \label{eq:AO-gen-ns} \nn \\
&\quad\quad\quad- \min_{\lambda_0\ge0,\nu} \left[\frac{\alpha}{\tau_h}\left\{\frac{\beta^2}{2\delta} + \lambda_0 \left(\frac{\gamma_0\tau_h}{\alpha}\right)^q  + \frac{\nu^2}{2}
-\sE\left(\beta,\left(\frac{\tau_h\theta}{\alpha} + \frac{\nu}{a}\right) ;\lambda_0\right)\right\} +\frac{\alpha\tau_h}{2} \right].\end{align}
\item[(b)] It holds in probability that
\begin{align}
\lim_{n\to\infty} \frac{1}{\twonorm{\vct{\mu}}}\<\vct{\mu},\hth^\eps\> &= \theta_*\,,\\
\lim_{n\to\infty} \twonorm{\pproj_{\vct{\mu}}  \hth^\eps} &= \alpha_*\,,\\
\lim_{n\to\infty} \pnorm{\vct{\mu}}\qnorm{\hth^\eps} &= \gamma_{0*}\,.
\end{align}
\item[(c)] As a corollary of part (b) and Lemma~\ref{lem:SR-AR}, the following limits hold in probability:
\begin{align}
\lim_{n\to\infty} \SR(\hth^\eps) &= \Phi\left(\frac{V\theta_*}{\sqrt{\alpha_*^2+ a^2\theta_*^2}}\right)\,,\\
\lim_{n\to\infty} \AR(\hth^\eps) &= \Phi\left(\frac{-\eps_0\gamma_{0*} + V\theta_*}{\sqrt{\alpha_*^2+a^2\theta_*^2}}\right)\,.
\end{align}
\end{itemize}
\end{theorem}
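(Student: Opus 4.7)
The strategy is to mirror the CGMT-based proof of the isotropic Theorem \ref{thm:isotropic-nonseparable}, with the anisotropy $\mtx{\Sigma}$ handled through a weighted Moreau envelope. First, I would write each sample as $\vct{x}_i = y_i\vct{\mu} + \mtx{\Sigma}^{1/2}\vct{z}_i$ with $\vct{z}_i$ i.i.d.\ standard Gaussian. Using the symmetry $y_i\vct{z}_i \stackrel{d}{=} \vct{z}_i$, the primal optimization \eqref{eq:inner} becomes
$$\min_{\bth}\;\; \frac{1}{n}\sum_{i=1}^n \ell\!\left(V\theta + \vct{z}_i^\sT \mtx{\Sigma}^{1/2}\bth - \eps_0\gamma_0\right),$$
with the bookkeeping $\theta = \<\tmu,\bth\>$, $\gamma_0 = \pnorm{\vct{\mu}}\qnorm{\bth}$, and $\tmu = \vct{\mu}/\twonorm{\vct{\mu}}$. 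Because $\mtx{\Sigma}\vct{\mu} = a^2\vct{\mu}$, one has $\mtx{\Sigma}^{1/2}\tmu = a\,\tmu$ and the subspace $\mu^\perp$ is $\mtx{\Sigma}^{1/2}$-invariant, which gives the clean split $\vct{z}_i^\sT\mtx{\Sigma}^{1/2}\bth = a\theta\, g_i + (\mtx{Z}_\perp \mtx{\Sigma}^{1/2}\pproj_{\vct{\mu}}\bth)_i$, where $\vct{g}\sim\normal(\zero,\I_n)$ and $\mtx{Z}_\perp$ are independent Gaussian objects obtained by splitting $\mtx{Z}$ along $\tmu$ and $\mu^\perp$.

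Next, I would dualize by introducing an auxiliary variable $\vct{v}=\mtx{Z}_\perp\mtx{\Sigma}^{1/2}\pproj_{\vct{\mu}}\bth$ together with a multiplier $\vct{u}$, exposing the bilinear Gaussian form $\vct{u}^\sT \mtx{Z}_\perp (\mtx{\Sigma}^{1/2}\pproj_{\vct{\mu}}\bth)$ that fits the template of the CGMT. The CGMT replaces this with the separable surrogate $\twonorm{\mtx{\Sigma}^{1/2}\pproj_{\vct{\mu}}\bth}\,\vct{g}'^\sT\vct{u} + \twonorm{\vct{u}}\,\vct{h}'^\sT(\mtx{\Sigma}^{1/2}\pproj_{\vct{\mu}}\bth)$ for fresh independent Gaussians $\vct{g}', \vct{h}'$. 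Scalarizing by setting $\beta = \twonorm{\vct{u}}$ and introducing the auxiliary scalars $\tau_g, \tau_h$ for the two moduli (along with $\alpha, \theta, \gamma_0$ for the primal geometry) and optimizing out the directions of $\vct{u}$ and $\vct{v}$ converts the loss term into the expected Moreau envelope $L\!\left(\sqrt{\alpha^2+a^2\theta^2},\,V\theta-\eps_0\gamma_0,\,\tau_g/\beta\right)$ appearing in the first line of \eqref{eq:AO-gen-ns}. The remaining inner minimization over $\pproj_{\vct{\mu}}\bth$ couples the $\ell_q^q$ penalty $\lambda_0\qnorm{\bth}^q$ (enforcing $\pnorm{\vct{\mu}}\qnorm{\bth}=\gamma_0$ via the Lagrange multiplier $\lambda_0$) with a $\mtx{\Sigma}$-weighted quadratic in $\pproj_{\vct{\mu}}\bth$ coming from $\vct{h}'^\sT\mtx{\Sigma}^{1/2}\pproj_{\vct{\mu}}\bth$; this is precisely the weighted Moreau envelope $e_{q,\mtx{\Sigma}}$ of Definition \ref{def:WME}, whose high-dimensional limit is $\sE$ by Assumption \ref{ass:converging}. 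Matching the scaling yields the second brace of \eqref{eq:AO-gen-ns}.

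To conclude, I would verify that $D_{\rm ns}$ is jointly convex-concave in $((\alpha,\gamma_0,\theta,\tau_g),(\beta,\tau_h))$, that it is coercive in the minimization block in the non-separable regime $\delta > \delta_*$ (using Theorem \ref{thm:sep-thresh-aniso} to rule out unbounded escape directions, so that the scalar minimax attains its value at an interior saddle point), and that the minimizer $(\alpha_*, \gamma_{0*}, \theta_*)$ is unique by strict convexity along these three directions. The standard CGMT transfer argument (closeness of minimizers of the AO and PO on high-probability events) then gives part (b): $\<\tmu,\hth^\eps\>\probto \theta_*$, $\twonorm{\pproj_{\vct{\mu}}\hth^\eps}\probto\alpha_*$, and $\pnorm{\vct{\mu}}\qnorm{\hth^\eps}\probto\gamma_{0*}$. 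Part (c) is then immediate from Lemma \ref{lem:SR-AR} together with the concentration $\twonorm{\mtx{\Sigma}^{1/2}\hth^\eps}^2 \probto \alpha_*^2+a^2\theta_*^2$ that falls out of the same CGMT analysis. The main obstacle, not present in the isotropic proof, is the non-separable structure of the weighted Moreau envelope: unlike $\mtx{\Sigma}=\I$, the inner minimization over $\pproj_{\vct{\mu}}\bth$ does not decouple coordinate-wise because the $\ell_q$ penalty and the $\mtx{\Sigma}$-weighted quadratic interact globally. Controlling its asymptotics via Assumption \ref{ass:converging} and identifying the correct scaling factors $\alpha/\tau_h$ and $\tau_h\theta/\alpha + \nu/a$ that appear in \eqref{eq:AO-gen-ns} is the delicate step of the argument.
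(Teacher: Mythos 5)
Your plan follows the same architecture as the paper's proof: substitute the Gaussian mixture model, split $\mtx{Z}\mtx{\Sigma}^{1/2}$ along $\tmu$ and $\vct{\mu}^\perp$ using the spiked-covariance assumption, apply CGMT, scalarize via the expected Moreau envelope $L$ of the loss and a weighted Moreau envelope for the $\ell_q$ part, pass to the limit via Assumption~\ref{ass:converging}, and finish with joint convexity--concavity, strict convexity of the expected Moreau envelope for uniqueness, and the CGMT transfer plus Lemma~\ref{lem:SR-AR}. The one place where your description diverges from the paper's mechanics is where the weighted Moreau envelope comes from. You propose to obtain $e_{q,\mtx{\Sigma}}$ from a direct (primal) minimization over $\pproj_{\vct{\mu}}\bth$, coupling the multiplier $\lambda_0\qnorm{\bth}^q$ with the $\mtx{\Sigma}$-weighted quadratic; that route produces envelopes of the form $e_{q,\mtx{I}+b_0\mtx{\Sigma}}$ evaluated at a $(\mtx{I}+b_0\mtx{\Sigma})^{-1}$-transformed Gaussian, which is what the paper's \emph{separable}-regime proof does and what Assumption~\ref{ass:converging2} is tailored to. The paper's non-separable proof instead conjugates the loss with respect to $\bth$ (Lemma~\ref{lem:conj-L}), introducing a dual vector $\vct{w}$ and the scalar $\gamma$; after Lemma~\ref{lem:tbth} eliminates $\pproj_{\vct{\mu}}\bth$, the remaining minimization over $\vct{w}$ is converted, via a projection identity (Lemma~\ref{lem:proj-main}), into exactly the object $e_{q,\mtx{\Sigma}}\bigl(\frac{c_0}{\sqrt{n}}\mtx{\Sigma}^{-1/2}\vct{h}-c_1\tmu;\lambda_0\pnorm{\vct{\mu}}^q\bigr)$ whose limit is postulated in Assumption~\ref{ass:converging}, and it is this dual route that generates the specific scalings $\alpha/\tau_h$, $\tau_h\theta/\alpha+\nu/a$ and the term $\lambda_0(\gamma_0\tau_h/\alpha)^q$ in \eqref{eq:AO-gen-ns}. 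Your primal variant is workable in principle, but if followed literally you would need a convergence assumption on a differently weighted envelope, so to land on the theorem as stated you should adopt the conjugate-then-project path; the rest of your outline, including the coercivity argument via $\delta>\delta_*$ and the identification $\twonorm{\mtx{\Sigma}^{1/2}\hth^\eps}^2\to\alpha_*^2+a^2\theta_*^2$ for part (c), matches the paper.
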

The results above generalize out results to the anisotropic case. The reader may of course be wondering when Assumptions~\ref{ass:converging2} and \ref{ass:converging} hold. This is the subject of the next Remark which we prove in Appendix~\ref{proof:rem:p-q-2-Sigma}.
\begin{remark}\label{rem:p-q-2-Sigma}
For the case of $\ell_2$ perturbation ($p=q=2$), the following two conditions are sufficient for  Assumption~\ref{ass:converging2} and \ref{ass:converging} to hold:
\begin{itemize}
\item[$(i)$]The empirical distribution of the entries of $\sqrt{d} \vct{\mu}$ converges weakly to a distribution $\prob_{M}$ on real line, with bounded second moment, i.e. $\int x^2 \de\prob_M(x) = \sigma_{M,2}^2<\infty$. 
\item[$(ii)$] The empirical distribution of eigenvalues of $\mtx{\Sigma}$ converges weakly to a distribution $\rho$ with Stieltjes transform $S_{\rho}(z): = \int \frac{\rho(t)}{z-t}\de t$.
\end{itemize}
\end{remark}
%The proof of Remark~\ref{rem:p-q-2-Sigma} is 
\section{Proof sketch and mathematical challenges}\label{sec:proofSketch}
%\noindent{\bf Background on convex Gaussian minimax framework.} 
Our theoretical results on adversarial training for binary classification fits in the rapidly growing recent literature on developing \emph{sharp} high-dimensional asymptotics of (possibly non-smooth) convex optimization-based estimators \cite{DMM,Sto,montanariLasso,TroppEdge,StoLasso, donoho2013information,OTH13,thrampoulidis2015regularized,karoui2013asymptotic,karoui15,donoho2016high,Master,miolane2018distribution,wang2019does,celentano2019fundamental,hu2019asymptotics,bu2019algorithmic}.  Most of this line of work focus on linear models and regression problems. It has been only recently that the literature witnessed a surge of interest in sharp analysis of a variety of methods tailored to binary classification models \cite{huang2017asymptotic,candes2020phase,sur2019modern,mai2019large,svm_abla,salehi2019impact,taheri2020sharp,deng2019model,montanari2019generalization,liang2020precise,mignacco2020role,taheri2020sharp,lolas2020regularization}. However, none of these papers study adversarial training and its impact on standard/robust accuracies.
%Nevertheless, none of these prior works have yet considered multiclass classification settings. Our paper unveils the salient features of the multiclass setting and shows that corresponding results from the binary setting do not directly lead to sharp formulae in the multiclass setting. We emphasize that this is the case even for seemingly simple one-vs-all (OVA) classifiers, such as minimizing the square-loss, that involve training a single binary classifier per class \cite{rifkin2004defense}. As mentioned earlier the main challenge arises from the fact  that total/class-wise test accuracies rely on characterizing certain cross-correleations between the trained weights of the classifier, which are not directly captured by merely applying existing analyses for binary classifiers. We take the first steps towards addressing some of the challenges in this direction.
% %and we hope that our work inspires more researchers to study this mathematically interesting and practically important setting.

  On a technical level, our sharp analysis relies on the Convex Gaussian Min-max Theorem (CGMT) \cite{thrampoulidis2015regularized} (see also \cite{StoLasso,OTH13,Master})), which is  a powerful extension of the Gordon's Gaussian comparison inequality~\cite{gordon1988milman}. We refer to Section~\ref{sec:proofSketch} for an overview of this framework and the mathematical challenges we encounter in applying it to our adversarial setting. We next present a proof sketch for deriving our main results which illustrates the key ideas.

To be able to provide a precise characterization of the various tradeoffs we need to develop a precise understanding of the adversarial training objective 
%\vspace{-0.3cm}
\begin{align}
\label{temploss}
\underset{\vct{\theta}\in\R^d}{\min}\text{ }\mathcal{L}(\vct{\theta}):=\frac{1}{n} \sum_{i=1}^n  \ell\left(y_i \<\x_i,\bth\> - \eps \qnorm{\bth}\right),
\end{align}
and its optimal solution $\hth^{\eps} \in \arg\min_{\bth\in \reals^d }\mathcal{L}(\vct{\theta})$. Given the classification nature of the problem, as discussed earlier, we have to study this loss in the two different regimes of separable and non-separable as well as characterize the threshold of separability. In this section we wish to provide a brief overview of the steps of our proofs and some of the challenges. We focus our exposition on the non-separable case. While the details of the derivations for the separable case and the calculation of the separability threshold differ from the non-separable case the general steps are similar and therefore the steps below also provides a general road map for the proof of these results as well. Specifically, our proofs in the non-separable regime consists of the following steps:

\smallskip

\noindent\textbf{Step I: Reformulation of the loss.}\\
\noindent The loss \eqref{temploss}, while significantly simplified due to the removal of the max function, is still rather complicated and precisely characterizing the behavior and the quality of its optimal solution is still challenging. In particular, the dependence on the random data matrix $\mtx{X}$ is still rather complex hindering statistical analysis even in an asymptotic setting. To bring the optimization problem into a form more amenable to precise asymptotic analysis we carry out a series of reformulations of the optimization problem. Combining these reformulation steps we arrive at the following equivalent Primal Optimization (PO) problem
\begin{align}
\label{mainPOps}
\min_{\bth, \vct{v}\in \reals^n} \max_{\mtx{\vct{u}\in \reals^n}}
\frac{1}{n} \Big\{\vct{u}^\sT \vct{1}\vct{\mu}^T \bth +\vct{u}^\sT \mtx{D_y} \mtx{Z} \mtx{\Sigma}^{1/2} \bth  -  \vct{u}^\sT \vct{v} \Big\}
+ \frac{1}{n} \sum_{i=1}^n  \ell \left(v_i - \eps \qnorm{\bth}\right)
\end{align}
%The maximization objective is equal to the optimal value of a maximization problem and hence characterizing its properties directly is challenging. In the first step of our proof we show that one can in-fact solve this maximization problem and derive an expression for the loss in closed form. Specifically, we show
%\vspace{-0.2cm}
%\begin{align}
%\label{simpobj}
%\mathcal{L}(\vct{\theta})=\frac{1}{2n} \sum_{i=1}^n \left(|y_i-\<\x_i,\bth\>|+ \eps \twonorm{\bth}\right)^2=\frac{1}{2n}\twonorm{\abs{\vct{y}-\mtx{X}\vct{\theta}}+\eps\twonorm{\vct{\theta}}}^2.
%\end{align}
%The main intuition behind this derivation is that one can think of the min-max optimization problem above as a game between a learner and an adversary where the learner first chooses a parameter $\vct{\theta}$ and then the adversary changes each feature $\vct{x}_i$ given the label $y_i$ and the learner's choice of $\vct{\theta}$. We show that the best choice for the adversary to maximize the error is to pick $\vct{\delta}_i$ in the direction of $\vct{\theta}$ with a magnitude of $\eps$ (maximum power of the adversary) and with the sign  of the misfit on the $i$ the training data point ($\sgn{\langle \vct{x}_i,\vct{\theta}\rangle-y_i}$). We formally prove this result by connecting it to the well-known trust region subproblem in optimization.
%\medskip

\noindent\textbf{Step II: Reduction to an Auxiliary Optimization (AO) problem.}\\
The equivalent form above may be counter-intuitive as we started by simplifying a different mini-max optimization problem and we have now again introduced a new maximization! The main advantage of this new form is that it is in fact affine in the data matrix $\bX$. This particular form allows us to use a powerful extension of a classical Gaussian process inequality due to \cite{gordon1988milman} known as Convex Gaussian Minimax Theorem (CGMT) \cite{thrampoulidis2015regularized} which focuses on characterizing the asymptotic behavior of mini-max optimization problems that are affine in a Gaussian matrix $\bX$. This result enables us to characterize the properties of \eqref{temploss} by studying the asymptotic behavior of the following, arguable simpler, \emph{Auxiliary Optimization (AO)} problem instead
\begin{align}
\label{mainAOps}
\min_{\bth, \vct{v}\in \reals^n } \max_{\vct{u}\in \R^n} \frac{1}{n} &\Big\{
 \twonorm{\pproj_{\vct{\mu}}\mtx{\Sigma}^{1/2}\bth} \vct{g}^T \mtx{D_y} \vct{u} +  \twonorm{\mtx{D_y}\vct{u}} \vct{h}^T\pproj_{\vct{\mu}} \mtx{\Sigma}^{1/2}\bth  \nn\\
 &+(\vct{u}^\sT \mtx{D_y}\vct{z})(\tmu^T\mtx{\Sigma}^{1/2}\vct{\theta})+  \vct{u}^\sT \vct{1}\vct{\mu}^T \bth -  \vct{u}^\sT \vct{v} \Big\}
+ \frac{1}{n} \sum_{i=1}^n  \ell \left(v_i - \eps \qnorm{\bth}\right)\,, 
\end{align}
 where $\vct{g}\sim\normal(0,\mtx{I}_n)$ and $\vct{h}\sim \normal(0,\mtx{I}_{d})$, $\pproj_{\vct{\mu}} := \mtx{I} - \tmu\tmu^T$, and $\proj_{\vct{\mu}} := \tmu\tmu^T$.
 
We emphasize that the relationship between the above PO problem \eqref{mainPOps} and how it is exactly related to the AO problem \eqref{mainAOps} is more intricate and technical compared with classical CGMT and related work in the context of classification \cite{salehi2019impact,taheri2020sharp}. In particular, prior work on binary classification such as \cite{salehi2019impact,taheri2020sharp} via CGMT (which corresponds to the non-robust case i.e.~$\eps=0$) utilize the fact that \eqref{mainPOps} is rotationally invariant and hence one can assume $\vct{\mu}=\vct{e}_1$ without loss of generality. However, in the robust version (unless $p=q=2$) the direction of $\vct{\mu}$ plays a crucial role  due to the regularization term $\frac{1}{n} \sum_{i=1}^n  \ell \left(v_i - \eps \qnorm{\bth}\right)$. %such a rotationally invariant property does not hold.
%we rotations matter which requires careful treatment of projections onto mu and mu^\perp and how they appear in the nonlinear term \ell(vi-\eps||theta||_q)
%Some challenges include
%The form in A.38 does not have an additive regularization  rather it has a term $\ell(vi_\eps ||theta||_q)$
%
%for nonlinear loss \ell
%10:05
%2. It is not separable in theta which takes the limit analysis challanging
%10:07
%3. For eps=0 Chris uses the observation that A.38 is rotational invariant and hence can take mu to be e1 without loss of generality. Here unless for p=q=2, we rotations matter which requires careful treatment of projections onto mu and mu^\perp and how they appear in the nonlinear term \ell(vi-\eps||theta||_q)
%10:08
%4. We handle general Sigma matrix which requires another layer of technical novelty.
%The CGMT framework has been recently used to derive precise characterization of the generalization error of the max-margin linear classifiers in overparametrized regime with separable data~\cite{deng2019model,montanari2019generalization,liang2020precise}. 
%Also \cite{liang2020precise} uses the CGMT framework to analyze max-$\ell_1$-margin classifiers. 
\smallskip

\noindent\textbf{Step III: Scalarization of the Auxiliary Optimization (AO) problem.}\\
In this step we further simplify the AO problem in \eqref{mainAOps}. In particular we show the asymptotic behavior of the AO can be characterized rather precisely via the  scalar optimization problem
\begin{align}\label{eq:AO-gen-nspf}
&\min_{\theta, 0\le \alpha, \gamma_0, \tau_g}\;\;\max_{0\le\beta, \tau_h} \;\; D_{\rm ns}(\alpha, \gamma_0, \theta, \tau_g, \beta,\tau_h):= \frac{\beta \tau_g}{2} +L\left(\sqrt{\alpha^2+a^2\theta^2},V \theta-\eps_0 \gamma_0,\frac{\tau_g}{\beta}\right) \nn \\
&\quad\quad\quad\quad\quad- \min_{\lambda_0\ge0,\nu} \left[\frac{\alpha}{\tau_h}\left\{\frac{\beta^2}{2\delta} + \lambda_0 \left(\frac{\gamma_0\tau_h}{\alpha}\right)^q  + \frac{\nu^2}{2}
-\sE\left(\beta,\left(\frac{\tau_h\theta}{\alpha} + \frac{\nu}{a}\right) ;\lambda_0\right)\right\} +\frac{\alpha\tau_h}{2} \right].\end{align}
 %
% \begin{align}\label{eq019-22}
% D(\alpha,\beta, \gamma,\tau_h,\tau_g):=&\frac{\delta\beta}{2(\tau_g+\beta)} \left(\alpha^2+\sigma^2\right)\nn\\
% &+\delta \mathbb{1}_{\big\{\gamma(\tau_g+\beta)>\sqrt{\frac{2}{\pi}}\delta\eps\beta\sqrt{\alpha^2+\sigma^2}\big\}}\frac{\beta^2(\alpha^2+\sigma^2)}{2\tau_g(\tau_g+\beta)}\left(\erf\left(\frac{\tau_*}{\sqrt{2}}\right)-\frac{\gamma(\tau_g+\beta)}{\delta\eps\beta\sqrt{\alpha^2+\sigma^2}}\tau_*\right)\nn\\
% &-\frac{\alpha}{2\tau_h}(\gamma^2 +\beta^2)+ \gamma\sqrt{\frac{\alpha^2\beta^2}{\tau_h^2}+V^2} -\frac{\alpha\tau_h}{2}+\frac{\beta\tau_g}{2}
%\end{align}
More specifically, a variety of conclusions can be derived based on the optimal solutions of the above optimization problem as we discuss in the next step. We note that while this expression may look complicated we prove that this optimization problem is in fact convex in the minimization parameters $(\theta,\alpha,\tau_g)$ and concave in the maximization parameters $(\beta,\tau_h)$ so that its optimal solutions can be easily derived via a simple low-dimensional gradient descent rather quickly and accurately. We also note that this proof is quite intricate and involved, so it is not possible to give an intuitive sketch of the arguments here. We refer to Section \ref{proofs} for details. However, we briefly state a few mathematical challenges that is unique to simplifying \eqref{mainAOps}. First, the AO \eqref{mainAOps} does not have a simple regularization whose scalarization reduces to a simple mean width calculation as in most simple CGMT uses. Instead the regularization has a complicated form $\frac{1}{n} \sum_{i=1}^n  \ell \left(v_i - \eps \qnorm{\bth}\right)$ which requires rather intricate and involved scalarization calculations. Second, this AO regularization term is not separable in $\vct{\theta}$ which significantly complicates the scalarization of the AO. Finally, we handle the case of more general covariance matrices where $\mtx{\Sigma}\neq\mtx{I}$.  
%It is not separable in theta which takes the limit analysis challanging

%3. For eps=0 Chris uses the observation that A.38 is rotational invariant and hence can take mu to be e1 without loss of generality. Here unless for p=q=2, we rotations matter which requires careful treatment of projections onto mu and mu^\perp and how they appear in the nonlinear term \ell(vi-\eps||theta||_q)
%10:08
%4. We handle general Sigma matrix which requires another layer of technical novelty.

\smallskip
\noindent\textbf{Step IV: Completing the proof of the theorems.}\\
Finally, we utilize the above scalar form to derive all of the different theorems and results. This is done by relating the quantities of interest in each theorem to the optimal solutions of \eqref{eq:AO-gen-nspf}. For instance, we show that
$\lim_{n\to\infty} \SR(\hth^\eps) = \Phi\left(\frac{V\theta_*}{\sqrt{\alpha_*^2+ a^2\theta_*^2}}\right)$ and $\lim_{n\to\infty} \AR(\hth^\eps) = \Phi\left(\frac{-\eps_0\gamma_{0*} + V\theta_*}{\sqrt{\alpha_*^2+a^2\theta_*^2}}\right)$ where $\alpha_*$  and $\theta_*$ are the optimal solutions over $\alpha$ and $\theta$. These calculations/proofs are carried out in detail in Section \ref{proofs}. Since each argument is different we do not provide a summary here and refer to the corresponding sections.

\section{Discussion}
We conclude the paper by discussing some of the potential extensions and applications of our theory as well as comparison with more classical approaches to binary classification.  
\subsection{Generalization to random features models} While our focus in this paper was on linear classifiers, these models are quite foundational and serve as the basis for more complex models. For instance, one potential generalization of our results is to the class of random features models given by
\begin{align*}
\mathcal{F}_{\rm RF} :=\left\{f(\bx;\bth, \bW) = \sign(\<\bth, \sigma(\bW \bx)\>):\quad \bth\in\reals^N \right\}\,,
\end{align*}
where $\bx\in \reals^d$ represents the feature vector, $\bW\in \reals^{N\times d}$ is a random matrix whose rows are chosen uniformly at random from the unit sphere in $d$-dimension, and $\sigma:\reals \mapsto \reals$ is a nonlinear function (for a vector $\bv$, $\sigma(\bv) = (\sigma(v_1), \dotsc, \sigma(v_m))$ is applied entry-wise). Random features model can also be described as a two-layer fully connected neural network with random first-layer weights fixed to $\bW$ and not optimized, while the second layer weights are represented by vector $\bth$ and are optimized over to minimize the loss of interest. The random features model was introduced by~\cite{rahimi2007random} for scaling kernel methods to large datasets and there has been a large body of work drawing connections between random features models, kernel methods and fully trained neural networks~\cite{daniely2016toward,daniely2017sgd,jacot2018neural,li2018learning}.  

An intriguing phenomenon, pointed out by \cite{mei2019generalization,montanari2019generalization} from the analysis of random features model in non-adversarial contexts, is that the random
features model has the same asymptotic behavior as a simpler 
noisy linear features model  whose second order statistics match the nonlinear random features model, namely a linear model with noisy features $\bu\in \reals^N$ given by $\bu = \eta_0 + \eta_1 \bW \bx + \eta_2 \bz$, 
where $\bz$ has i.i.d standard normal entries, independent of $\bW$ and $\bx$. Also, the constants $\eta_0,\eta_1,\eta_2$ depend on the activation function $\sigma(\cdot)$ and are chosen so that the two models have the same first and second moments. A promising direction is to establish a similar connection for an adversarial setting and use our theory (relied on CGMT framework) to analyze the equivalent noisy linear model, from which we obtain an asymptotic characterization for adversarial training under the random features model. 
%This is of course out of the scope of the current work and is left for a future work. 
Very recently and after this paper was posted,~\cite{hassani2022curse} has pursued a similar approach to precisely characterize the role of overparametrization on robust generalization of random features in a regression setting. 

\subsection{Optimal $\eps_0$ for the robust minimax estimator}
An interesting application of our theory is to derive the optimal value $\eps_0^{{\rm op}}$ (perceived perturbation level) in the robust minimax estimator~\eqref{eq:minimaxEst}, while fixing the adversary's (actual) perturbation level on test inputs to $\eps_{0,{\rm test}}$. (See Remark~\ref{rem:2eps} on how our theory applies to this setting.)
The optimality here is with respect to maximizing the robust accuracy. Somewhat surprisingly $\eps_0^{{\rm op}}$ is different than $\eps_{0,{\rm test}}$ in general and depends on $\delta$ and the choice of perturbation norm $\ell_p$ in a non-trivial way (There is no one-fit-all solution and this highlights the importance of having a precise theory to understand the effect of adversarial training which is the primary goal of the current work). For example, in the particular case of $\eps_{0,{\rm test}} = 0$, the question reduces to finding the value of $\eps_0$ which maximizes standard accuracy. As we already discussed, the answer very much depends on $\delta$ and $p$. For $p=2$, we observe that (cf. Figure~\ref{fig:SA-RA-p21}(a)) adversarial training helps with improving the standard accuracy. However for $p=\infty$, $\eps_0^{{\rm op}}$ should be large enough so that the problem becomes non-separable  and also its value decreases as $\delta$ increases (cf. Figure \ref{fig:SA-RA-pinf}(a)). As another example, we consider the case of $\eps_{0,{\rm test}} = 0.3$ with $\ell_\infty$ perturbations. In Figure~\ref{fig:eps_opt_pinf}  we plot the robust accuracy versus $\eps_0$, and the dashed vertical lines show the value of $\eps_0^{{\rm op}}$. As we see its value decreases by increasing $\delta$, however, its exact value requires a precise analysis.
\begin{figure}[]
\centering
    \includegraphics[scale=0.7]{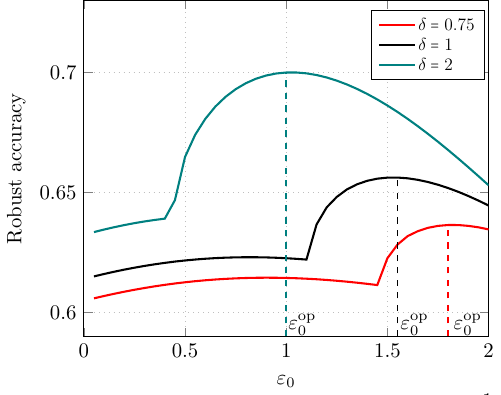}
\caption{Robust accuracy curves versus $\eps_0$ for different choices of $\delta$, and the perturbation norm $\ell_1$ ($p=1$). The optimal choice of $\eps_0$ for the robust minimax estimator decreases with $\delta$.}
\label{fig:eps_opt_pinf}
\end{figure}

\subsection{Comparison with Linear Discriminant Analysis (LDA)} A classical approach to binary classification under the Gaussian-mixture model is the Linear Discriminant Analysis. In comparing the robustness property of LDA and the robust minimax estimator studied in this paper, we cannot say one estimator always outperforms the others. To further discuss this point, we consider the Gaussian-mixture model with identity covariance $\bSigma = \Iden$ and balanced classes. In this case, the LDA estimator reduces to $\LDA = \frac{1}{n}\sum_{i=1}^n y_i \bx_i$ and the corresponding classification rule given by $\hat{y} = \sign(\<\bx, \LDA\>)$. 
In the supplementary~\cite{supp:displ} (Section A), we compare the robust accuracy of LDA estimator with that of the robust minimax estimator $\hth^\eps$ for some choices of $p$. As we will discuss, the depending on $p$ and the adversary's power $\eps_0$, one can outperform the other.

\section*{Acknowledgements}
A. Javanmard is supported in part by a Google Faculty Research Award, an Adobe Data Science Research Award and the NSF CAREER Award DMS-1844481.
M. Soltanolkotabi is supported by the Packard Fellowship in Science
and Engineering, a Sloan Research Fellowship in Mathematics, an NSF-CAREER under award
$\#1846369$, the Air Force Office of Scientific Research Young Investigator Program (AFOSR-YIP)
under award $\#$FA$9550-18-1-0078$, DARPA Learning with Less Labels (LwLL) and FastNICS programs, and NSF-CIF awards $\#1813877$ and $\#2008443$. 
\newpage

%\section{Problem formulation}
%\begin{itemize}
%\item Model: GM + logistic
%\item Definition of asymptotic regimes 
%\item Defining SR and AR risks
%\item (Robust) Separable data
%\end{itemize}
%
%\section{Main results: isotropic Gaussians}
%\subsection{Case of $\ell_2$ perturbation}
%
%\subsection{Case of $\ell_\infty$ perturbation}
%
%\section{Numerical experiments and discussion}
%
%
%\section{Extension to $\ell_1$ perturbation}
%
%\section{Extension to anisotropic Gaussians}

%\section{Proof sketch}

\bibliographystyle{amsalpha}
\bibliography{Bibfiles.bib,compbib.bib}

\newcommand{\etalchar}[1]{$^{#1}$}
\providecommand{\bysame}{\leavevmode\hbox to3em{\hrulefill}\thinspace}
\providecommand{\MR}{\relax\ifhmode\unskip\space\fi MR }
% \MRhref is called by the amsart/book/proc definition of \MR.
\providecommand{\MRhref}[2]{%
  \href{http://www.ams.org/mathscinet-getitem?mr=#1}{#2}
}
\providecommand{\href}[2]{#2}
\begin{thebibliography}{BHMM18}

\bibitem[ALMT13]{TroppEdge}
Dennis Amelunxen, Martin Lotz, Michael~B McCoy, and Joel~A Tropp, \emph{Living
  on the edge: A geometric theory of phase transitions in convex optimization},
  arXiv preprint arXiv:1303.6672 (2013).

\bibitem[BDOW20]{bartl2020robust}
Daniel Bartl, Samuel Drapeau, Jan Obloj, and Johannes Wiesel, \emph{Robust
  uncertainty sensitivity analysis}, arXiv preprint arXiv:2006.12022 (2020).

\bibitem[BHMM18]{belkin2018reconciling}
Mikhail Belkin, Daniel Hsu, Siyuan Ma, and Soumik Mandal, \emph{Reconciling
  modern machine learning and the bias-variance trade-off}, arXiv preprint
  arXiv:1812.11118 (2018).

\bibitem[BKRS19]{bu2019algorithmic}
Zhiqi Bu, Jason Klusowski, Cynthia Rush, and Weijie Su, \emph{Algorithmic
  analysis and statistical estimation of slope via approximate message
  passing}, Advances in Neural Information Processing Systems, 2019,
  pp.~9361--9371.

\bibitem[BM11]{bayati2011dynamics}
Mohsen Bayati and Andrea Montanari, \emph{The dynamics of message passing on
  dense graphs, with applications to compressed sensing}, IEEE Transactions on
  Information Theory \textbf{57} (2011), no.~2, 764--785.

\bibitem[BM12]{montanariLasso}
\bysame, \emph{The lasso risk for gaussian matrices}, Information Theory, IEEE
  Transactions on \textbf{58} (2012), no.~4, 1997--2017.

\bibitem[BMM18]{belkin2018understand}
Mikhail Belkin, Siyuan Ma, and Soumik Mandal, \emph{To understand deep learning
  we need to understand kernel learning}, International Conference on Machine
  Learning, 2018, pp.~541--549.

\bibitem[CM19]{celentano2019fundamental}
Michael Celentano and Andrea Montanari, \emph{Fundamental barriers to
  high-dimensional regression with convex penalties}, arXiv preprint
  arXiv:1903.10603 (2019).

\bibitem[CS20]{candes2020phase}
Emmanuel~J Cand{\`e}s and Pragya Sur, \emph{The phase transition for the
  existence of the maximum likelihood estimate in high-dimensional logistic
  regression}, The Annals of Statistics \textbf{48} (2020), no.~1, 27--42.

\bibitem[Dan17]{daniely2017sgd}
Amit Daniely, \emph{Sgd learns the conjugate kernel class of the network},
  Advances in Neural Information Processing Systems, 2017, pp.~2422--2430.

\bibitem[DFS16]{daniely2016toward}
Amit Daniely, Roy Frostig, and Yoram Singer, \emph{Toward deeper understanding
  of neural networks: The power of initialization and a dual view on
  expressivity}, Advances In Neural Information Processing Systems, 2016,
  pp.~2253--2261.

\bibitem[DGJ18]{donoho2018optimal}
David~L Donoho, Matan Gavish, and Iain~M Johnstone, \emph{Optimal shrinkage of
  eigenvalues in the spiked covariance model}, Annals of statistics \textbf{46}
  (2018), no.~4, 1742.

\bibitem[DHHR20]{dobriban2020provable}
Edgar Dobriban, Hamed Hassani, David Hong, and Alexander Robey, \emph{Provable
  tradeoffs in adversarially robust classification}, arXiv preprint
  arXiv:2006.05161 (2020).

\bibitem[DJM13]{donoho2013information}
David~L Donoho, Adel Javanmard, and Andrea Montanari,
  \emph{Information-theoretically optimal compressed sensing via spatial
  coupling and approximate message passing}, IEEE transactions on information
  theory \textbf{59} (2013), no.~11, 7434--7464.

\bibitem[DKT19]{deng2019model}
Zeyu Deng, Abla Kammoun, and Christos Thrampoulidis, \emph{A model of double
  descent for high-dimensional binary linear classification}, arXiv preprint
  arXiv:1911.05822 (2019).

\bibitem[DM14]{deshpande2014sparse}
Yash Deshpande and Andrea Montanari, \emph{Sparse pca via covariance
  thresholding}, Advances in Neural Information Processing Systems \textbf{27}
  (2014), 334--342.

\bibitem[DM16]{donoho2016high}
David Donoho and Andrea Montanari, \emph{High dimensional robust m-estimation:
  Asymptotic variance via approximate message passing}, Probability Theory and
  Related Fields \textbf{166} (2016), no.~3-4, 935--969.

\bibitem[DMM11]{DMM}
David~L Donoho, Arian Maleki, and Andrea Montanari, \emph{The noise-sensitivity
  phase transition in compressed sensing}, Information Theory, IEEE
  Transactions on \textbf{57} (2011), no.~10, 6920--6941.

\bibitem[EK18]{karoui15}
Noureddine El~Karoui, \emph{On the impact of predictor geometry on the
  performance on high-dimensional ridge-regularized generalized robust
  regression estimators}, Probability Theory and Related Fields \textbf{170}
  (2018), no.~1-2, 95--175.

\bibitem[GBY08]{cvx}
Michael Grant, Stephen Boyd, and Yinyu Ye, \emph{Cvx: Matlab software for
  disciplined convex programming}, 2008.

\bibitem[Gor88]{gordon1988milman}
Yehoram Gordon, \emph{On milman's inequality and random subspaces which escape
  through a mesh in $r^n$}, Geometric aspects of functional analysis, Springer,
  1988, pp.~84--106.

\bibitem[GSS15]{DBLP:journals/corr/GoodfellowSS14}
Ian~J. Goodfellow, Jonathon Shlens, and Christian Szegedy, \emph{Explaining and
  harnessing adversarial examples}, 3rd International Conference on Learning
  Representations, {ICLR} 2015, San Diego, CA, USA, May 7-9, 2015, Conference
  Track Proceedings, 2015.

\bibitem[HJ22]{hassani2022curse}
Hamed Hassani and Adel Javanmard, \emph{The curse of overparametrization in
  adversarial training: Precise analysis of robust generalization for random
  features regression}, arXiv preprint arXiv:2201.05149 (2022).

\bibitem[HL19]{hu2019asymptotics}
Hong Hu and Yue~M Lu, \emph{Asymptotics and optimal designs of slope for sparse
  linear regression}, arXiv preprint arXiv:1903.11582 (2019).

\bibitem[HMRT19]{hastie2019surprises}
Trevor Hastie, Andrea Montanari, Saharon Rosset, and Ryan~J Tibshirani,
  \emph{Surprises in high-dimensional ridgeless least squares interpolation},
  arXiv preprint arXiv:1903.08560 (2019).

\bibitem[Hua17]{huang2017asymptotic}
Hanwen Huang, \emph{Asymptotic behavior of support vector machine for spiked
  population model}, The Journal of Machine Learning Research \textbf{18}
  (2017), no.~1, 1472--1492.

\bibitem[JGH18]{jacot2018neural}
Arthur Jacot, Franck Gabriel, and Cl{\'e}ment Hongler, \emph{Neural tangent
  kernel: Convergence and generalization in neural networks}, Advances in
  neural information processing systems, 2018, pp.~8571--8580.

\bibitem[JMRT16]{javanmard2016phase}
Adel Javanmard, Andrea Montanari, and Federico Ricci-Tersenghi, \emph{Phase
  transitions in semidefinite relaxations}, Proceedings of the National Academy
  of Sciences \textbf{113} (2016), no.~16, E2218--E2223.

\bibitem[Joh01]{johnstone2001distribution}
Iain~M Johnstone, \emph{On the distribution of the largest eigenvalue in
  principal components analysis}, Annals of statistics (2001), 295--327.

\bibitem[JS20]{supp:displ}
Adel Javanmard and Mahdi Soltanolkotabi, \emph{Supplementary material to
  ``precise statistical analysis of classification accuracies for adversarial
  training''}, 2020.

\bibitem[JSH20]{pmlr-v125-javanmard20a}
Adel Javanmard, Mahdi Soltanolkotabi, and Hamed Hassani, \emph{Precise
  tradeoffs in adversarial training for linear regression}, Proceedings of
  Machine Learning Research, Conference of Learning Theory (COLT), vol. 125,
  PMLR, 09--12 Jul 2020, pp.~2034--2078.

\bibitem[JT18]{ji2018risk}
Ziwei Ji and Matus Telgarsky, \emph{Risk and parameter convergence of logistic
  regression}, arXiv preprint arXiv:1803.07300 (2018).

\bibitem[KA19]{svm_abla}
A.~Kammoun and M.-S. Alouini, \emph{{On the precise error analysis of support
  vector machines}}, {Submitted to IEEE Transactions on information theory}
  (2019).

\bibitem[Kar13]{karoui2013asymptotic}
Noureddine~El Karoui, \emph{Asymptotic behavior of unregularized and
  ridge-regularized high-dimensional robust regression estimators: rigorous
  results}, arXiv preprint arXiv:1311.2445 (2013).

\bibitem[KGB16]{kurakin2016adversarial}
Alexey Kurakin, Ian Goodfellow, and Samy Bengio, \emph{Adversarial machine
  learning at scale}, arXiv preprint arXiv:1611.01236 (2016).

\bibitem[LB20]{lai2020adversarial}
Lifeng Lai and Erhan Bayraktar, \emph{On the adversarial robustness of robust
  estimators}, IEEE Transactions on Information Theory \textbf{66} (2020),
  no.~8, 5097--5109.

\bibitem[LL18]{li2018learning}
Yuanzhi Li and Yingyu Liang, \emph{Learning overparameterized neural networks
  via stochastic gradient descent on structured data}, NeurIPS (2018).

\bibitem[LL19]{lyu2019gradient}
Kaifeng Lyu and Jian Li, \emph{Gradient descent maximizes the margin of
  homogeneous neural networks}, arXiv preprint arXiv:1906.05890 (2019).

\bibitem[LM08]{StatDecision}
Friedrich Liese and Klaus-J. Miescke, \emph{Statistical decision theory:
  Estimation, testing, and selection}, Springer Science \& Business Media,
  2008.

\bibitem[Lol20]{lolas2020regularization}
Panagiotis Lolas, \emph{Regularization in high-dimensional regression and
  classification via random matrix theory}, arXiv preprint arXiv:2003.13723
  (2020).

\bibitem[LS20]{liang2020precise}
Tengyuan Liang and Pragya Sur, \emph{A precise high-dimensional asymptotic
  theory for boosting and min-l1-norm interpolated classifiers}, arXiv preprint
  arXiv:2002.01586 (2020).

\bibitem[MCK20]{min2020curious}
Yifei Min, Lin Chen, and Amin Karbasi, \emph{The curious case of adversarially
  robust models: More data can help, double descend, or hurt generalization},
  arXiv preprint arXiv:2002.11080 (2020).

\bibitem[MJR{\etalchar{+}}21]{Mehrabi-fund}
Mohammad Mehrabi, Adel Javanmard, Ryan~A Rossi, Anup Rao, and Tung Mai,
  \emph{Fundamental tradeoffs in distributionally adversarial training},
  Proceedings of the 38th International Conference on Machine Learning, vol.
  139, PMLR, 2021, pp.~7544--7554.

\bibitem[MKLZ20]{mignacco2020role}
Francesca Mignacco, Florent Krzakala, Yue~M Lu, and Lenka Zdeborov{\'a},
  \emph{The role of regularization in classification of high-dimensional noisy
  gaussian mixture}, arXiv preprint arXiv:2002.11544 (2020).

\bibitem[MLC19]{mai2019large}
Xiaoyi Mai, Zhenyu Liao, and Romain Couillet, \emph{A large scale analysis of
  logistic regression: Asymptotic performance and new insights}, ICASSP
  2019-2019 IEEE International Conference on Acoustics, Speech and Signal
  Processing (ICASSP), IEEE, 2019, pp.~3357--3361.

\bibitem[MM18]{miolane2018distribution}
L{\'e}o Miolane and Andrea Montanari, \emph{The distribution of the lasso:
  Uniform control over sparse balls and adaptive parameter tuning}, arXiv
  preprint arXiv:1811.01212 (2018).

\bibitem[MM19]{mei2019generalization}
Song Mei and Andrea Montanari, \emph{The generalization error of random
  features regression: Precise asymptotics and double descent curve}, arXiv
  preprint arXiv:1908.05355 (2019).

\bibitem[MMS{\etalchar{+}}18]{DBLP:conf/iclr/MadryMSTV18}
Aleksander Madry, Aleksandar Makelov, Ludwig Schmidt, Dimitris Tsipras, and
  Adrian Vladu, \emph{Towards deep learning models resistant to adversarial
  attacks}, 6th International Conference on Learning Representations, {ICLR}
  2018, Vancouver, BC, Canada, April 30 - May 3, 2018, Conference Track
  Proceedings, 2018.

\bibitem[MRSY19]{montanari2019generalization}
Andrea Montanari, Feng Ruan, Youngtak Sohn, and Jun Yan, \emph{The
  generalization error of max-margin linear classifiers: High-dimensional
  asymptotics in the overparametrized regime}, arXiv preprint arXiv:1911.01544
  (2019).

\bibitem[ORS17]{Master}
Samet Oymak, Benjamin Recht, and Mahdi Soltanolkotabi, \emph{Sharp time--data
  tradeoffs for linear inverse problems}, IEEE Transactions on Information
  Theory \textbf{64} (2017), no.~6, 4129--4158.

\bibitem[OTH13]{OTH13}
Samet Oymak, Christos Thrampoulidis, and Babak Hassibi, \emph{The squared-error
  of generalized lasso: A precise analysis}, arXiv preprint arXiv:1311.0830
  (2013).

\bibitem[PJ20]{pydi2020adversarial}
Muni~Sreenivas Pydi and Varun Jog, \emph{Adversarial risk via optimal transport
  and optimal couplings}, International Conference on Machine Learning, PMLR,
  2020, pp.~7814--7823.

\bibitem[RR07]{rahimi2007random}
Ali Rahimi and Benjamin Recht, \emph{Random features for large-scale kernel
  machines}, Advances in neural information processing systems \textbf{20}
  (2007), 1177--1184.

\bibitem[RSL18]{DBLP:conf/iclr/RaghunathanSL18}
Aditi Raghunathan, Jacob Steinhardt, and Percy Liang, \emph{Certified defenses
  against adversarial examples}, 6th International Conference on Learning
  Representations, {ICLR} 2018, Vancouver, BC, Canada, April 30 - May 3, 2018,
  Conference Track Proceedings, 2018.

\bibitem[RXY{\etalchar{+}}19]{raghunathan2019adversarial}
Aditi Raghunathan, Sang~Michael Xie, Fanny Yang, John~C Duchi, and Percy Liang,
  \emph{Adversarial training can hurt generalization}, arXiv preprint
  arXiv:1906.06032 (2019).

\bibitem[SAH19]{salehi2019impact}
Fariborz Salehi, Ehsan Abbasi, and Babak Hassibi, \emph{The impact of
  regularization on high-dimensional logistic regression}, arXiv preprint
  arXiv:1906.03761 (2019).

\bibitem[SC19]{sur2019modern}
Pragya Sur and Emmanuel~J Cand{\`e}s, \emph{A modern maximum-likelihood theory
  for high-dimensional logistic regression}, Proceedings of the National
  Academy of Sciences \textbf{116} (2019), no.~29, 14516--14525.

\bibitem[SJ17]{staib2017distributionally}
Matthew Staib and Stefanie Jegelka, \emph{Distributionally robust deep learning
  as a generalization of adversarial training}, NIPS workshop on Machine
  Learning and Computer Security, 2017.

\bibitem[SST{\etalchar{+}}18]{DBLP:conf/nips/SchmidtSTTM18}
Ludwig Schmidt, Shibani Santurkar, Dimitris Tsipras, Kunal Talwar, and
  Aleksander Madry, \emph{Adversarially robust generalization requires more
  data}, Advances in Neural Information Processing Systems 31: Annual
  Conference on Neural Information Processing Systems 2018, NeurIPS 2018, 3-8
  December 2018, Montr{\'{e}}al, Canada, 2018, pp.~5019--5031.

\bibitem[Sto09]{Sto}
Mihailo Stojnic, \emph{Various thresholds for $\ell_1$-optimization in
  compressed sensing}, arXiv preprint arXiv:0907.3666 (2009).

\bibitem[Sto13]{StoLasso}
\bysame, \emph{A framework to characterize performance of lasso algorithms},
  arXiv preprint arXiv:1303.7291 (2013).

\bibitem[TAH18]{thrampoulidis2018precise}
Christos Thrampoulidis, Ehsan Abbasi, and Babak Hassibi, \emph{Precise error
  analysis of regularized $ m $-estimators in high dimensions}, IEEE
  Transactions on Information Theory \textbf{64} (2018), no.~8, 5592--5628.

\bibitem[TOH15]{thrampoulidis2015regularized}
Christos Thrampoulidis, Samet Oymak, and Babak Hassibi, \emph{Regularized
  linear regression: A precise analysis of the estimation error}, Proceedings
  of The 28th Conference on Learning Theory, 2015, pp.~1683--1709.

\bibitem[TPT20]{taheri2020sharp}
Hossein Taheri, Ramtin Pedarsani, and Christos Thrampoulidis, \emph{Sharp
  asymptotics and optimal performance for inference in binary models}, arXiv
  preprint arXiv:2002.07284 (2020).

\bibitem[TSE{\etalchar{+}}18]{tsipras2018robustness}
Dimitris Tsipras, Shibani Santurkar, Logan Engstrom, Alexander Turner, and
  Aleksander Madry, \emph{Robustness may be at odds with accuracy}, arXiv
  preprint arXiv:1805.12152 (2018).

\bibitem[Ver18]{vershynin2018high}
R.~Vershynin, \emph{High-dimensional probability: An introduction with
  applications in data science}, vol.~47, Cambridge University Press, 2018.

\bibitem[WK18]{DBLP:conf/icml/WongK18}
Eric Wong and J.~Zico Kolter, \emph{Provable defenses against adversarial
  examples via the convex outer adversarial polytope}, Proceedings of the 35th
  International Conference on Machine Learning, {ICML} 2018,
  Stockholmsm{\"{a}}ssan, Stockholm, Sweden, July 10-15, 2018, 2018,
  pp.~5283--5292.

\bibitem[WWM19]{wang2019does}
Shuaiwen Wang, Haolei Weng, and Arian Maleki, \emph{Does slope outperform
  bridge regression?}, arXiv preprint arXiv:1909.09345 (2019).

\bibitem[ZYJ{\etalchar{+}}19]{DBLP:conf/icml/ZhangYJXGJ19}
Hongyang Zhang, Yaodong Yu, Jiantao Jiao, Eric~P. Xing, Laurent~El Ghaoui, and
  Michael~I. Jordan, \emph{Theoretically principled trade-off between
  robustness and accuracy}, Proceedings of the 36th International Conference on
  Machine Learning, {ICML} 2019, 9-15 June 2019, Long Beach, California, {USA},
  2019, pp.~7472--7482.

\end{thebibliography}
\newpage
\appendix
\section{Comparison with Linear Discriminant Analysis (LDA)}
A classical approach to binary classification under the Gaussian-mixture model is the Linear Discriminant Analysis. In comparing the robustness property of LDA and the robust minimax estimator studied in this paper, we cannot say one estimator always outperforms the others. To further discuss this point, we consider the Gaussian-mixture model with identity covariance $\bSigma = \Iden$ and balanced classes. In this case, the LDA estimator reduces to $\LDA = \frac{1}{n}\sum_{i=1}^n y_i \bx_i$ and the corresponding classification rule given by $\hat{y} = \sign(\<\bx, \LDA\>)$. Under the Gaussian-mixture model we have
$\bx = y \bmu + \bz$ with $\bz\sim\normal(0,\Iden)$. Therefore,
\begin{align*}
\LDA &=  \frac{1}{n}\sum_{i=1}^n y_i (y_i \bmu + \bz_i) = \bmu + \frac{1}{n} \sum_{i=1}^n y_i\bz_i=  \bmu  + \tilde{\bz}\,, \quad \tilde{\bz}\sim\normal({\bf 0},\frac{1}{n}\Iden)
\end{align*}
For simplicity we assume that the class averages $\bmu$ is generated as $\bmu\sim({\bf 0}, \frac{1}{d}\Iden)$, similar to the setting considered in the numerical experiments. In asymptotic regime of $n\to\infty$ and $n/d\to\delta$, we have that in probability:
\begin{align*}
\lim_{n\to\infty} \<\bmu,\LDA\> &= \lim_{n\to\infty} \twonorm{\bmu}^2  = 1\,,\\
\lim_{n\to\infty} d^{1/2-1/q}\qnorm{\LDA} &=  \lim_{n\to\infty} d^{1/2-1/q} \qnorm{\bmu+\tilde{\bz}} \nn\\
&=  \lim_{n\to\infty} d^{1/2-1/q} \left(\frac{1}{d} + \frac{1}{n}\right)^{1/2} d^{1/q} C_q = \left(1+\frac{1}{\delta}\right)^{1/2} C_q\,,
\end{align*}
where in the first equation we used the fact that $\<\bmu, \tilde{\bz}\> \sim\normal(0,\frac{1}{n}\twonorm{\bmu}^2)$ has vanishing variance as $n\to\infty$. In the second inequality, $C_q$ is the $q$-th moment of standard normal distribution. Recall that $\eps = \eps_0 \pnorm{\bmu}$ with $1/p+1/q = 1$, and also $\pnorm{\bmu} \to d^{1/p-1/2} C_p = d^{1/2-1/q} C_p$. Using these identities along with the characterization of standard and robust accuracies given by Lemma \ref{lem:SR-AR} of the paper, we arrive at
\begin{align}
&\lim_{n\to\infty} \SR(\LDA) = \Phi\left(\sqrt{\frac{\delta}{1+\delta}} \right)\,,\nn\\
&\lim_{n\to\infty} \AR(\LDA) = \Phi\left(\sqrt{\frac{\delta}{1+\delta}} - \eps_0 C_q C_p\right)\,.\label{eq:LDA}
\end{align}

We next compare the robust accuracy of LDA estimator with that of the robust minimax estimator $\hth^\eps$ for some choices of $p$. As we will discuss, the depending on $p$ and the adversary's power $\eps_0$, one can outperform the other. 
\begin{itemize}
\item $(p=q=2)$. Figure~\ref{fig:LDA_RA_p2}(a) compares $\AR(\LDA)$ with $\AR(\hth^\eps)$ versus $\eps_0$ for several values of $\delta$. Here, the solid lines 
correspond to the robust minimax estimator and the dashed lines correspond to the LDA estimator.
Figure \ref{fig:LDA_RA_p2}(b) compares $\AR(\LDA)$ with $\AR(\hth^\eps)$ versus $1/\delta$ for various choices of $\eps_0$. As we see for the case of $p=2$, the LDA has better robust accuracy and it is mostly very close to that of the robust estimator.

\item ($p=\infty$, $q=1$). Similar to the setting of experiments in Section~\ref{secinf}, here we consider the scaling $\eps = \eps_0/\sqrt{d}$.  Figure~\ref{fig:LDA_RA_pinf} (a)  compares the robust accuracies versus $\eps_0$ for several values of $\delta$. As we see for any $\delta$, there exists $\eps_0^*(\delta)$ above which the robust minimax outperforms the LDA. 
Figure \ref{fig:LDA_RA_pinf}(b) compares the robust accuracies versus $1/\delta$ for several values of $\eps_0$. Rewording the above observation, for any $\eps_0$ there exists $\delta^*(\eps_0)$ below which the robust minimax outperforms the LDA estimator.

\item ($p=1$, $q=\infty$). Similar to the setting of experiments in Section \ref{extentsec}, we have $\eps = \eps_0 \pnorm{\bmu} = \sqrt{\frac{2}{\pi}} \frac{\eps_0}{\sqrt{d}}$. Invoking equations $\eqref{eq:LDA}$, we have $\lim_{n\to\infty}\AR(\LDA) = 0$ because $C_q  = \sqrt{2\log d} \to\infty$. However, as we see in Figure \ref{fig:SA-RA-p1}, the robust minimax estimator $\hth^\eps$ achieves non-trivial positive robust accuracies and hence outperforms LDA. 
\end{itemize}

\begin{figure}[]
\centering
\begin{minipage}{.485\textwidth}
  \centering
    \includegraphics[scale=0.73]{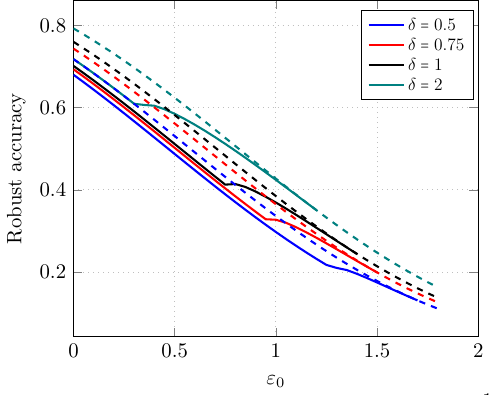}
  % \caption*{(a) Standard accuracy}
\end{minipage}
\begin{minipage}{.485\textwidth}
  \centering
  \includegraphics[scale=0.73]{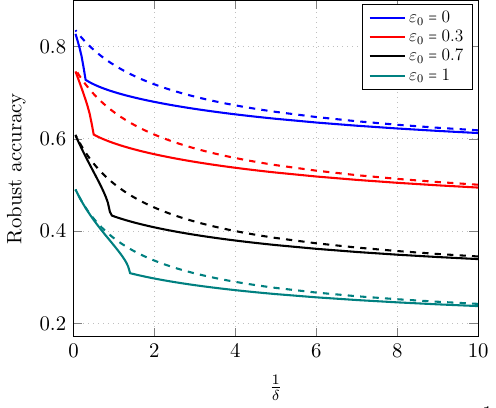}
  % \caption*{(b) Robust accuracy}
\end{minipage}
\caption{Robust accuracies for the LDA estimator and the robust minimax estimator versus the adversary's power with $\ell_2$ ($p=2$) perturbations for different values of $\delta$. Solid curves correspond to the robust minimax estimator and the dashed curves correspond to the LDA estimator. }
\label{fig:LDA_RA_p2}
\end{figure} 

\begin{figure}[]
\centering
\begin{minipage}{.485\textwidth}
  \centering
    \includegraphics[scale=0.73]{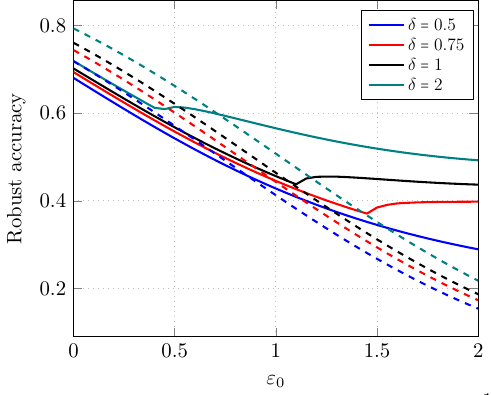}
  % \caption*{(a) Standard accuracy}
\end{minipage}
\begin{minipage}{.485\textwidth}
  \centering
  \includegraphics[scale=0.73]{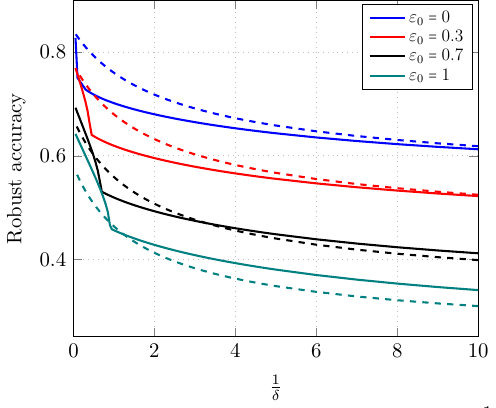}
  % \caption*{(b) Robust accuracy}
\end{minipage}
\caption{Robust accuracies for LDA estimator and the robust minimax estimator versus the adversary's power with $\ell_\infty$ ($p=\infty$) perturbations for different values of $\delta$. Solid curves correspond to the robust minimax estimator and the dashed curves correspond to the LDA estimator. }
\label{fig:LDA_RA_pinf}
\end{figure}

\section{Proofs for anisotropic Gaussian model (Section~6)}
\label{proofs}
\subsection{Proof of Theorem~\ref{thm:sep-thresh-aniso}}
%We start by proving Theorem~\ref{thm:sep-thresh-aniso}. Theorem~\ref{thm:sep-thresh} would then follow as an immediate corollary.

As discussed the $(\eps,q)$-separability condition can alternatively be written as~\eqref{eq:serp-def2}, which we repeat here:
\begin{align}
\exists \vct{\theta}, \;\;\qnorm{\bth} \le \frac{1}{\eps}:\quad \forall i\in[n],\;\; y_i\<\vct{x}_i,\vct{\theta}\> > 1. 
\end{align}
 To find the separability threshold we consider the following feasibility problem
 \begin{align}
 \min_{\bth \in \reals^d} 0 \quad \text{subject to } y_i\<\vct{x}_i,\bth\> >1,\quad \qnorm{\bth} \le \frac{1}{\eps}\,.
 \end{align}
Clearly this is a convex optimization problem since $q\ge 1$. Writing the partial Lagrangian for the above problem with $u_i/n$ as dual coefficients, this is equivalent to
\begin{align}
\min_{\vct{\theta}}\text{ }\max_{u_i\ge 0}\quad  \frac{1}{n} \sum_{i=1}^n u_i\left(1- y_i\<x_i,\bth\>\right) \quad\text{subject to}\quad \qnorm{\bth} \le \frac{1}{\eps}\,.
\end{align}
Under our Gaussian Mixture data model, we can substitute for $\mtx{X} = \vct{y}\vct{\mu}^T +\mtx{Z}\mtx{\Sigma}^{1/2}$, which results in
\begin{align}\label{eq:Fsep-AO0}
\min_{\vct{\theta}} \max_{u_i\ge 0}\quad\frac{1}{n} \vct{u}^T\left(\vct{1}\left(1-\vct{\mu}^T \vct{\theta}\right)
- \mtx{D_y} \mtx{Z}\mtx{\Sigma}^{1/2}\vct{\theta}\right) \quad\text{subject to}\quad \qnorm{\bth} \le \frac{1}{\eps}\,.
\end{align}
The above dual problem has a finite optimal value if and only if the data is $(\eps,q)$-separable.  So we aim at finding the largest $\delta$ such that the above problem has still a finite optimal value. (Recall that $\frac{n}{d} \to \delta$.)
\smallskip

\noindent{\bf Reduction to an auxiliary optimization problem via CGMT.}
Note that $y_i = \pm 1$ are independent of $\mtx{Z}$. In addition, the objective function in~\eqref{eq:Fsep-AO0}
is affine in the standard Gaussian matrix $\mtx{Z}$ and the rest of the terms form a convex-concave
function in $\bth$, $\vct{u}$. Due to this particular form we are able to apply a powerful extension of a classical Gaussian process inequality due to Gordon \cite{gordon1988milman} known as Convex Gaussian Minimax Theorem (CGMT) \cite{thrampoulidis2015regularized}, and is discussed in the proof sketch in Section~\ref{sec:proofSketch}. The CGMT framework provides a principled machinery to characterize the asymptotic behavior of certain minimax optimization problems that are affine in a Gaussian matrix $\bX$.

As discussed in the CGMT framework in Section~\ref{sec:proofSketch}, we require minimization/maximization to be over compact sets. The vector $\bth$ already lies in the $\ell_q$ ball of radius $1/\eps$ by constraint. In addition, since $u_i\ge0$, and we are focused on the regime that \eqref{eq:Fsep-AO0} has finite optimal value, the optimal values of $u_i$ should all be finite as well.

We are now ready to applying the CGMT framework. The corresponding Auxiliary Optimization (AO) reads as
\begin{align}
&\min_{\bth} \max_{\vct{u}\ge0}\quad
\frac{1}{n}\Big\{(\vct{u}^T\vct{1})\left(1-\vct{\mu}^T \vct{\theta}\right)+ \twonorm{ \mtx{\Sigma}^{1/2} \bth} \vct{g}^T\vct{u} + \twonorm{\vct{u}} \vct{h}^T \mtx{\Sigma}^{1/2} \bth\Big\}\,,\nn\\
&\text{subject to}\quad \qnorm{\bth} \le \frac{1}{\eps}\,,\label{eq:Fsep-0}
\end{align}
where $\vct{g}\sim\normal(0,\mtx{I}_n)$ and $\vct{h}\sim \normal(0,\mtx{I}_{d})$. Fixing $\beta:=\frac{\twonorm{\vct{u}}}{\sqrt{n}}$ and optimizing over $\vct{u}$ on the non-negative orthant we get
\begin{align}
&\min_{\bth} \max_{\beta\ge0}\quad
\frac{\beta}{\sqrt{n}}  \vct{h}^T  \mtx{\Sigma}^{1/2} \bth+\frac{\beta}{\sqrt{n}}\twonorm{\left(\left(1-\vct{\mu}^T\vct{\theta}\right)\vct{1}+\twonorm{\mtx{\Sigma}^{1/2} \bth}\vct{g}\right)_{+}}\,,\nn\\
&\text{subject to}\quad \qnorm{\bth} \le \frac{1}{\eps}\,.
\end{align}
%%The latter is equivalent to
%%\begin{align}
%%\min_{\bth, \gamma\ge 0, \tau\ge 0} \max_{\beta\ge0, \eta\ge0}
%%&\twonorm{\vct{\theta}}^2 +2\eta \left(\qnorm{\bth}^q - \gamma^q\right)+\frac{\beta}{\sqrt{n}}  \vct{h}^T  \pproj_{\vct{\mu}}\mtx{\Sigma}^{1/2} \bth\nn\\
%%&\quad+\frac{\beta}{\tau}\frac{1}{n}\twonorm{\left(\left(1+\eps\gamma-\vct{\mu}^T\vct{\theta}\right)\vct{1}+(\tmu^T\mtx{\Sigma}^{1/2}\bth)\vct{z}+\twonorm{ \pproj_{\vct{\mu}}\mtx{\Sigma}^{1/2} \bth}\vct{g}\right)_{+}}^2+\frac{\beta \tau}{2}
%%\end{align}
For data to be separable the above dual optimization should take finite optimal value and therefore the coefficient of $\beta$ should be non-positive. As such the problem is separable if and only if the optimal value of the following problem is non-positive:
\begin{align}\label{eq:sep-condition}
&\min_{\bth} \frac{1}{\sqrt{n}}  \vct{h}^T \mtx{\Sigma}^{1/2} \bth+
\frac{1}{\sqrt{n}}\twonorm{\left(\left(1-\vct{\mu}^T\vct{\theta}\right)\vct{1}+\twonorm{ \mtx{\Sigma}^{1/2} \bth}\vct{g}\right)_{+}}
\le 0\,,\nn\\
&\text{subject to}\quad \qnorm{\bth} \le \frac{1}{\eps}\,.
\end{align}

Consider the decomposition $\bth = \bth_\perp + \theta\tmu$ with $\bth_\perp = \pproj_{\vct{\mu}}\bth$. Note that
\begin{align*}
\frac{1}{\sqrt{n}}  \vct{h}^T \mtx{\Sigma}^{1/2} \bth &= \frac{1}{\sqrt{n}}  \vct{h}^T \mtx{\Sigma}^{1/2} \bth_\perp + \frac{1}{\sqrt{n}}  \theta \vct{h}^T \mtx{\Sigma}^{1/2} \tmu\nn\\
& = \frac{1}{\sqrt{n}}  \vct{h}^T \mtx{\Sigma}^{1/2} \bth_\perp + \frac{1}{\sqrt{n}}  a \theta \vct{h}^T  \tmu
\end{align*}
Since $\vct{h}^T\tmu \sim\normal(0,1)$ and $\theta$ is bounded the contribution of the second term is negligible in the large sample limit $n\to \infty$. This along with the symmetry of the distribution of $\vct{h}$ bring us to
\begin{align}\label{eq:sep1}
 &\min_{\alpha\ge0, \theta, \bth }\;\; -\frac{1}{\sqrt{n}}  \vct{h}^T \mtx{\Sigma}^{1/2} \bth_\perp +\frac{1}{\sqrt{n}}\twonorm{\left(\left(1-\twonorm{\vct{\mu}}\theta\right)\vct{1}+\sqrt{\alpha^2+a^2\theta^2}\; \vct{g}\right)_{+}}\nn\\
 &\text{subject to }\quad \qnorm{\bth} \le \frac{1}{\eps}, \quad \twonorm{\mtx{\Sigma}^{1/2}\bth_\perp} = \alpha\,, \quad \tmu^T\bth = \theta 
\end{align}
\smallskip

\noindent{\bf Scalarization of the auxiliary optimization problem.}
To continue recall the definition of set $\cS$ given by 
\[
\cS(\alpha, \theta, \eps_0,\vct{\mu}) : = \left\{\vct{z}\in \reals^d :\quad  \bz^T\tmu =0,\; \twonorm{\vct{z}} =\alpha,\; \qnorm{\mtx{\Sigma}^{-1/2}\vct{z} +\theta \tmu} \le \frac{1}{\eps_0\pnorm{\vct{\mu}}} \right\}\,.
\]
Recall that $\eps = \eps_0\pnorm{\vct{\mu}}$ and so the optimization problem~\eqref{eq:sep1} above can be rewritten in the form
\begin{align}\label{eq:Sep1-1}
 \min_{\alpha\ge0, \theta}\;\; \min_{\bz\in\cS(\alpha,\theta,\eps_0,\vct{\mu}) }\;\; -\frac{1}{\sqrt{n}}  \vct{h}^T \bz +\frac{1}{\sqrt{n}}\twonorm{\left(\left(1-\twonorm{\vct{\mu}}\theta\right)\vct{1}+\sqrt{\alpha^2+a^2\theta^2}\; \vct{g}\right)_{+}} 
\end{align}
Recall the spherical width of a set $\mathcal{S}\subset\R^d$ defined as 
$$\omega_s\left(\mathcal{S}\right)=\E\Big[\sup_{\vct{z}\in\mathcal{S}} \vct{z}^T\vct{u}\Big]\,,$$ 
where $\vct{u}\in\mathcal{S}^{d-1}$ is a vector chosen at random from the unit sphere. Using this definition and the fact that $\min z = - \max -z$ we have
\begin{align*}
\min_{\bz\in\cS(\alpha,\theta,\eps_0)}\;\; -\frac{1}{\sqrt{n}}  \vct{h}^T\bz=-\frac{1}{\sqrt{n/d}}\sup_{\bz\in\cS(\alpha,\theta,\eps_0) }\;\; \frac{1}{\sqrt{d}}  \vct{h}^T\bz\rightarrow -\frac{1}{\sqrt{\delta}} \omega(\alpha,\theta,\eps_0)\,,
\end{align*}
in probability, where in the last line we use the fact, for $\mathcal{S}\in\mathbb{S}^{d-1}$, the function $f(\vct{u})=\sup_{\vct{z}\in\mathcal{S}} \vct{z}^T\vct{u}$ is Lipschitz. Therefore, using the concentration of Lipschitz functions  of Gaussian random vectors (see e.g. \cite[Theorem 5.2.2]{vershynin2018high}), $f(\vct{u})$ concentrates around its mean $\E f(\vct{u}) = \omega_s(\cS(\alpha,\theta,\eps_0,\vct{\mu}))$. More precisely, 
\begin{align*}
\prob\left\{\Big|\sup_{\vct{z}\in\mathcal{S}} \frac{1}{\sqrt{d}}\vct{h}^T\bz - \omega_s(\cS(\alpha,\theta,\eps_0,\vct{\mu})) \Big| \right\}\le 2e^{-cdt^2}\,,
\end{align*}
for an absolute constant $c>0$ and for every $t\ge0$. Therefore, by invoking the assumption on the convergence of spherical width, cf. Assumption~\ref{ass:omegas}, we arrive at
\begin{align*}
\lim_{d\to\infty}\prob\left\{\Big|\sup_{\vct{z}\in\mathcal{S}} \frac{1}{\sqrt{d}}\vct{h}^T\bz - \omega(\alpha,\theta,\eps_0) \Big| \ge \eta \right\} =0\,,\quad \forall \eta>0\,.
\end{align*}
Therefore, $\sup_{\bz\in\cS(\alpha,\theta,\eps_0) }\;\; \frac{1}{\sqrt{d}}  \vct{h}^T\bz\rightarrow  \omega(\alpha,\theta,\eps_0)$, in probability.

%We then used the assumption on the convergence of spherical width, cf. Assumption~\ref{ass:omegas}.

Furthermore, $\twonorm{\vct{\mu}} \to V$ by Assumption~\ref{ass:norm-mu} and since $\vct{g}\sim\normal(0,\mtx{I}_n)$ by applying the Weak Law of Large Numbers we have
\begin{align*}
&\frac{1}{\sqrt{n}}\twonorm{\left(\left(1-\twonorm{\vct{\mu}}\theta\right)\vct{1}+\sqrt{\alpha^2+a^2\theta^2}\; \vct{g}\right)_{+}}
\rightarrow \sqrt{\E\left[\left(1-V \theta+\sqrt{\alpha^2+a^2\theta^2}\; g\right)_{+}^2\right]}
\end{align*}
%Thus we arrive at
%\begin{align}\label{eq:sep-AO0}
%\min_{\bth}\text{ }\max_{\beta\ge0}\quad
%&\frac{\beta}{\sqrt{n}}  \vct{h}^T\mtx{\Sigma}^{1/2} \bth+\beta\sqrt{\E\Bigg[\left(1-\vct{\mu}^T\vct{\theta}+\twonorm{\mtx{\Sigma}^{\frac{1}{2}}\vct{\theta}}g\right)_{+}^2\Bigg]}
%\quad\text{subject to}\quad \qnorm{\bth} \le \frac{1}{\eps}\,.
%\end{align}
Thus the objective function in the optimization problem~\eqref{eq:Sep1-1} converges pointwise to
\begin{align}\label{eq:sep-AO00}
 \min_{\alpha\ge0, \theta}\;\; -\frac{1}{\sqrt{\delta}} \omega(\alpha,\theta,\eps_0) +\sqrt{\E\left[\left(1-V\theta+\sqrt{\alpha^2+a^2\theta^2}g\right)_{+}^2\right]} 
\end{align}
Also the problem~\eqref{eq:Sep1-1} is convex as a function of $(\alpha,\theta,\vct{z})$ and since partial maximization preserves convexity, the objective of \eqref{eq:Sep1-1} (after minimization over $\vct{z}$) is a convex function of $(\alpha,\theta)$. We can thus apply the convexity lemma~\cite[Lemma B.2]{thrampoulidis2018precise} to conclude that the minimum value of \eqref{eq:Sep1-1} over $\alpha\ge0, \theta$ also converges to that of~\eqref{eq:sep-AO00}. Therefore, we conclude that data is $(\eps,q)$-separable if and only if the optimal value in~\eqref{eq:sep-AO00} is finite. Rearranging the terms gives us that \eqref{eq:sep-AO00} has a finite optimal value if and only if
\begin{align}
\delta < \delta_*, \quad \text{with} \;\; \delta_* :=  \min_{\alpha\ge0, \theta} \frac{ \omega\left(\alpha, \theta, \eps_0\right)^2}{\E\left[\left(1-V\theta+\sqrt{\alpha^2+\theta^2}g\right)_{+}^2\right]} \, \,.
\end{align} 
 This completes the proof of Theorem~\ref{thm:sep-thresh-aniso}.
 
% Theorem~\ref{thm:sep-thresh} follows as a special case of Theorem~\ref{thm:sep-thresh-aniso}, with $\mtx{\Sigma}=\mtx{I}_{p\times p}$ and $a = 1$.
 
 %====================
 \subsection{Proof of Theorem~\ref{thm:isotropic-separable-ansio}}
 \smallskip
 We prove Theorem~\ref{thm:isotropic-separable-ansio} using the Convex Gaussian Minimax Theorem (CGMT) as outlined in Section~\ref{sec:proofSketch}. The max-margin problem~\eqref{eq:MM} can be equivalently written as
\begin{align}\label{eq:MM2}
(\tth^\eps,\widehat{\gamma}) = &\arg\min_{\bth, \gamma\ge 0} \quad  \twonorm{\bth}^2  \\
&{\rm subject}\;{\rm to}\;\; y_i\<\vct{x}_i,\vct{\theta}\> - \eps \gamma \ge1,\quad \gamma \ge \qnorm{\bth}\nn
\end{align} 
Now note that writing the Lagrangian for the max-margin problem with $u_i/n$ and $2\lambda$ as dual coefficients, this is equivalent to
\begin{align}
\min_{\vct{\theta}, \gamma \ge 0}\text{ }\max_{u_i,\lambda\ge 0}\quad \twonorm{\vct{\theta}}^2 + \frac{1}{n} \sum_{i=1}^n u_i\left(1+ \eps\gamma - y_i\<x_i,\bth\>\right) + 2\lambda \left(\qnorm{\bth} - \gamma\right)\,.
\end{align}
We next substitute for $\mtx{X} = \vct{y}\vct{\mu}^T +\mtx{Z}\mtx{\Sigma}^{1/2}$ based on the Gaussian mixtures model to arrive at
\begin{align}\label{eq:Lag-sep-F}
\min_{\vct{\theta}, \gamma \ge 0} \max_{u_i\ge 0, \lambda \ge 0}\quad\twonorm{\vct{\theta}}^2 + \frac{1}{n} \left(\vct{u}^T\mathbf{1}+ \eps\gamma \vct{u}^T\mathbf{1}- \vct{u}^T\mtx{D_y} \mtx{Z}\mtx{\Sigma}^{1/2}\vct{\theta} - \vct{u}^T\mathbf{1} \vct{\mu}^T \vct{\theta}\right)+  2\lambda \left(\qnorm{\bth} - \gamma\right)\,.
\end{align}

The advantage of the Lagrangian form in~\eqref{eq:Lag-sep-F} is that it is a minimax problem and the objective is an affine function of the standard Gaussian matrix $\mtx{Z}$. Therefore, we can deploy the Convex Gaussian Minimax Theorem (CGMT) \cite{thrampoulidis2015regularized}, described in Section~\ref{sec:proofSketch}, to characterize asymptotic values of certain functions of this optimization solution, in a high probability sense. 

To recall, the CGMT framework shows that a problem of the form
 \begin{align}
 \label{generalPO1}
 \min_{\bth\in\mathcal{S}_{\bth}}\text{ }\max_{\vct{u}\in\mathcal{S}_{\vct{u}}}\quad \vct{u}^T\mtx{Z}\vct{\theta}+\psi(\vct{\theta},\vct{u})
 \end{align}
 with $\mtx{Z}$ a matrix with $\mathcal{N}(0,1)$ entries can be replaced asymptotically with
 \begin{align}
 \label{generalAO1}
 \min_{\vct{\theta}\in\mathcal{S}_{\vct{\theta}}}\text{ }\max_{\vct{u}\in\mathcal{S}_{\vct{u}}}\quad\twonorm{\vct{\theta}}\vct{g}^T\vct{u}+\twonorm{\vct{u}}\vct{h}^T\vct{\theta}+\psi(\vct{\theta},\vct{u})
 \end{align}
 where $\vct{g}$ and $\vct{h}$ are independent Gaussian vectors with i.i.d.~$\mathcal{N}(0,1)$ entries and $\psi(\vct{\theta},\vct{u})$ is convex in $\vct{\theta}$ and concave in $\vct{u}$. Specifically, the optimal value and corresponding solution  of \eqref{generalPO1} converge in probability to the optimal value and the corresponding solution of \eqref{generalAO1}. In the above $\mathcal{S}_{\vct{\theta}}$ and $\mathcal{S}_{\vct{u}}$ are compact sets. We refer to \cite[Theorem 3]{thrampoulidis2015regularized} for precise statements. As explained in the proof sketch in~\ref{sec:proofSketch}, we follow \cite{thrampoulidis2015regularized} in referring to problems of the form \eqref{generalPO1} and \eqref{generalAO1} as the Primal Problem (PO) and the Auxiliary Problem (AO).

Note that in order to apply CGMT, we need the minimization/maximization to be over compact sets. This technical issue can be avoided by introducing ``artificial'' boundedness constraints on the optimization variables that they do not change the optimal solution. Concretely, we can add constraints of the form $\cS_{\bth} = \{\bth:\;\; \qnorm{\vct{\theta}}\le K_{\bth}\}$ 
%$\cS_{\gamma} = \{\gamma\ge0:\;\; \gamma \le K_\gamma\}$, 
%$\cS_{\lambda} = \{\lambda\ge0:\;\; \lambda \le K_\lambda\}$
 and $\cS_{\vct{u}} = \{\vct{u}:\, 0\le u_i, \; \frac{1}{n}\mathbf{1}^T\bu \le K_{\bu} \}$ for sufficiently large constants $K_{\bth}$, $K_{\vct{u}}$ without changing the optimal solution of \eqref{eq:Lag-sep-F} in a precise asymptotic sense. We refer to Appendix \ref{setres} for precise statements and proofs. This allows us to replace \eqref{eq:Lag-sep-F} with
\begin{align}\label{eq:Lag-sep}
\min_{\bth\in\cS_{\bth}, \gamma \ge0} \max_{\bu\in \cS_{\bu}, \lambda \ge0}\quad\twonorm{\vct{\theta}}^2 + \frac{1}{n} \left(\vct{u}^T\mathbf{1}+ \eps\gamma \vct{u}^T\mathbf{1}- \vct{u}^T\mtx{D_y} \mtx{Z}\mtx{\Sigma}^{1/2}\vct{\theta} - \vct{u}^T\mathbf{1} \vct{\mu}^T \vct{\theta}\right)+  2\lambda \left(\qnorm{\bth} - \gamma\right)\,.
\end{align}

\smallskip

 \noindent{\bf Reduction to an auxiliary optimization problem via CGMT.}
With these compact constraints in place we can now apply the CGMT result to obtain the auxiliary optimization (AO) problem.
 
We proceed by defining the projection matrices 
\begin{align*}
\pproj_{\vct{\mu}} := \mtx{I} - \tmu\tmu^T,\quad 
\proj_{\vct{\mu}} := \tmu\tmu^T\,
\end{align*}
and rewrite $\mtx{Z} \mtx{\Sigma}^{1/2}=\mtx{Z}\left(\proj_{\vct{\mu}}+\pproj_{\vct{\mu}}\right) \mtx{\Sigma}^{1/2}$. Since $\mtx{Z}\proj_{\vct{\mu}}$ and $\mtx{Z}\pproj_{\vct{\mu}}$ are independent from each other the latter has the same distribution as 

\[
\mtx{Z} \mtx{\Sigma}^{1/2}\sim\vct{z}\left(\mtx{\Sigma}^{1/2}\tmu\right)^T+\mtx{Z}\pproj_{\vct{\mu}} \mtx{\Sigma}^{1/2}
\]
where $\vct{z}\sim\mathcal{N}(0,\mtx{I}_n)$ and is independent from the matrix $\mtx{Z}$. This brings us to the following representation
\begin{align*}
\min_{\bth, \gamma\ge 0} \max_{\vct{u}\ge0, \lambda\ge0}  &\twonorm{\vct{\theta}}^2+2\lambda \left(\qnorm{\bth} - \gamma\right) \nn\\
&+\frac{1}{n} \Big\{\vct{u}^T\mathbf{1}+ \eps\gamma \vct{u}^T\mathbf{1} -  \vct{u}^T\mathbf{1} \vct{\mu}^T \vct{\theta}- (\vct{u}^\sT \mtx{D_y}\vct{z})(\tmu^T\mtx{\Sigma}^{1/2}\vct{\theta}) -\vct{u}^\sT \mtx{D_y} \mtx{Z} \pproj_{\vct{\mu}}\mtx{\Sigma}^{1/2} \bth \Big\} 
\end{align*}
Since $y_i = \pm 1$ are independent of $\mtx{Z}$, by applying CGMT framework, the AO reads as
\begin{align}
\min_{\bth, \gamma\ge 0} \max_{\vct{u}\ge0, \lambda\ge0}\quad
\twonorm{\vct{\theta}}^2 +2\lambda \left(\qnorm{\bth} - \gamma\right)+ \frac{1}{n} \Big\{&\vct{u}^T\mathbf{1}+ \eps\gamma \vct{u}^T\mathbf{1} -  \vct{u}^T\mathbf{1} \vct{\mu}^T \vct{\theta}\nn\\
&+ (\vct{u}^T\vct{z}) (\tmu^T\mtx{\Sigma}^{1/2}\bth) + \twonorm{ \pproj_{\vct{\mu}}\mtx{\Sigma}^{1/2} \bth} \vct{g}^T\vct{u}\nn\\ &+ \twonorm{\vct{u}} \vct{h}^T  \pproj_{\vct{\mu}}\mtx{\Sigma}^{1/2} \bth\Big\}\nn
\end{align}
Fixing $\beta:=\frac{\twonorm{\vct{u}}}{\sqrt{n}}$ and optimizing over $\vct{u}$ on the non-negative orthant we get
\begin{align}
\min_{\bth, \gamma\ge 0} \max_{\beta\ge0, \lambda\ge0}\quad
&\twonorm{\vct{\theta}}^2 +2\lambda \left(\qnorm{\bth} - \gamma\right)+\frac{\beta}{\sqrt{n}}  \vct{h}^T  \pproj_{\vct{\mu}}\mtx{\Sigma}^{1/2} \bth\nn\\
&\quad+\frac{\beta}{\sqrt{n}}\twonorm{\left(\left(1+\eps\gamma-\vct{\mu}^T\vct{\theta}\right)\vct{1}+(\tmu^T\mtx{\Sigma}^{1/2}\bth)\vct{z}+\twonorm{ \pproj_{\vct{\mu}}\mtx{\Sigma}^{1/2} \bth}\vct{g}\right)_{+}}
\end{align}
%%The latter is equivalent to
%%\begin{align}
%%\min_{\bth, \gamma\ge 0, \tau\ge 0} \max_{\beta\ge0, \eta\ge0}
%%&\twonorm{\vct{\theta}}^2 +2\eta \left(\qnorm{\bth}^q - \gamma^q\right)+\frac{\beta}{\sqrt{n}}  \vct{h}^T  \pproj_{\vct{\mu}}\mtx{\Sigma}^{1/2} \bth\nn\\
%%&\quad+\frac{\beta}{\tau}\frac{1}{n}\twonorm{\left(\left(1+\eps\gamma-\vct{\mu}^T\vct{\theta}\right)\vct{1}+(\tmu^T\mtx{\Sigma}^{1/2}\bth)\vct{z}+\twonorm{ \pproj_{\vct{\mu}}\mtx{\Sigma}^{1/2} \bth}\vct{g}\right)_{+}}^2+\frac{\beta \tau}{2}
%%\end{align}
Since $\vct{z}, \vct{g}\sim\normal(0,\mtx{I}_n)$ are independent,  by applying the Weak Law of Large Numbers we have
\begin{align*}
&\frac{1}{\sqrt{n}}\twonorm{\left(\left(1+\eps\gamma-\vct{\mu}^T\vct{\theta}\right)\vct{1}+(\tmu^T\mtx{\Sigma}^{1/2}\bth)\vct{z}+\twonorm{ \pproj_{\vct{\mu}}\mtx{\Sigma}^{1/2} \bth}\vct{g}\right)_{+}}\\
&\quad\quad\quad\quad\rightarrow \left(\E\Big[\left(\left(1+\eps\gamma-\vct{\mu}^T\vct{\theta}\right)+\sqrt{(\tmu^T\mtx{\Sigma}^{1/2}\bth)^2+\twonorm{ \pproj_{\vct{\mu}}\mtx{\Sigma}^{1/2} \bth}^2}g\right)_{+}^2\Big]\right)^{\frac{1}{2}}\\
&\quad\quad\quad\quad\quad= \left(\E\Bigg[\left(\left(1+\eps\gamma-\vct{\mu}^T\vct{\theta}\right)+\twonorm{\mtx{\Sigma}^{\frac{1}{2}}\vct{\theta}}g\right)_{+}^2\Bigg]\right)^{\frac{1}{2}}
\end{align*}
Thus we arrive at
\begin{align}\label{eq:sep-AO0}
\min_{\bth,\gamma\ge 0}\text{ }\max_{\beta\ge0, \lambda\ge0}\quad
&\twonorm{\vct{\theta}}^2 +2\lambda \left(\qnorm{\bth} - \gamma\right)+\frac{\beta}{\sqrt{n}}  \vct{h}^T  \pproj_{\vct{\mu}}\mtx{\Sigma}^{1/2} \bth\nn\\
&+\beta\sqrt{\E\Bigg[\left(\left(1+\eps\gamma-\vct{\mu}^T\vct{\theta}\right)+\twonorm{\mtx{\Sigma}^{\frac{1}{2}}\vct{\theta}}g\right)_{+}^2\Bigg]}\,.
\end{align}

We note that for $a\ge0$,
\[
\E[ag+b]_+^2 = \frac{a^2+b^2}{2}\left(1+\erf\left(\frac{b}{\sqrt{2}a}\right)\right) + \frac{ab}{\sqrt{2\pi}} e^{-\frac{b^2}{2a^2}}\,.
\]
and its derivative with respect to $a$ is given by $2a(1+\erf(\frac{b}{\sqrt{2}a})) > 0$ which implies that the function is increasing in $a>0$. Therefore the optimization \eqref{eq:sep-AO0} can be equivalently written as
\begin{align}\label{eq:sep-AO1}
&\min_{\bth,  \gamma, \alpha\ge 0}\text{ }\max_{\beta\ge0, \lambda\ge0}\quad
\twonorm{\vct{\theta}}^2 +2\lambda \left(\qnorm{\bth} - \gamma\right)+\frac{\beta}{\sqrt{n}}  \vct{h}^T  \pproj_{\vct{\mu}}\mtx{\Sigma}^{1/2} \bth +\beta\sqrt{\E\Bigg[\left(\left(1+\eps\gamma-\vct{\mu}^T\vct{\theta}\right)+\alpha g\right)_{+}^2\Bigg]}\nn\\
&\quad \text{subject to } \quad \twonorm{\mtx{\Sigma}^{\frac{1}{2}}\vct{\theta}}\le \alpha.
\end{align}
Note that the above is trivially jointly convex in $(\vct{\theta}, \gamma, \alpha)$ and jointly concave in $(\beta,\lambda)$. We fix the parallel component of $\bth$ on $\vct{\mu}$ to $\theta$, namely $\theta = \tmu^T\bth$.  We next optimize over $\bth$ while fixing $\theta$.
\begin{align}\label{eq:sep-AO2}
&\min_{\theta,\bth, \gamma\ge 0, \alpha\ge 0}\text{ }\max_{\beta\ge0, \lambda\ge0}\quad
\twonorm{\vct{\theta}}^2 +2\lambda \left(\qnorm{\bth} - \gamma\right)+\frac{\beta}{\sqrt{n}}  \vct{h}^T  \pproj_{\vct{\mu}}\mtx{\Sigma}^{1/2} \bth +\beta\sqrt{\E\Bigg[\left(\left(1+\eps\gamma-\theta\twonorm{\vct{\mu}}\right)+\alpha g\right)_{+}^2\Bigg]}\nn\\
&\quad \text{subject to } \quad \twonorm{\mtx{\Sigma}^{\frac{1}{2}}\vct{\theta}}\le \alpha, \quad \tmu^T\bth = \theta
\end{align}
Bringing the constraints into the objective via Lagrange multipliers we obtain
\begin{align}\label{eq:sep-AO3}
&\min_{\theta,\bth, \gamma\ge 0, \alpha\ge 0}\text{ }\max_{\beta,\lambda,\eta\ge0, \widetilde{\eta}}\quad
\twonorm{\vct{\theta}}^2 +2\lambda \left(\qnorm{\bth}- \gamma\right)+\frac{\beta}{\sqrt{n}}  \vct{h}^T  \pproj_{\vct{\mu}}\mtx{\Sigma}^{1/2} \bth +\beta\sqrt{\E\Bigg[\left(\left(1+\eps\gamma-\theta\twonorm{\vct{\mu}}\right)+\alpha g\right)_{+}^2\Bigg]}\nn\\
&\quad \quad\quad\quad\quad\quad+\eta \left(\twonorm{\mtx{\Sigma}^{\frac{1}{2}}\vct{\theta}}- \alpha\right) +\widetilde{\eta} \left(\tmu^T\bth - \theta\right)
\end{align}
Next note that $\twonorm{\mtx{\Sigma}^{\frac{1}{2}}\vct{\theta}}=\min_{\tau\ge 0} \frac{\twonorm{\mtx{\Sigma}^{\frac{1}{2}}\vct{\theta}}^2}{2\tau}+\frac{\tau}{2}$ and $\qnorm{\bth}=\min_{t\ge 0} \frac{\qnorm{\bth}^q}{q t^{q-1}} + \frac{q-1}{q} t$

Thus, above reduces to
\begin{align}\label{eq:sep-AO33}
&\min_{\theta,\bth, \gamma\ge 0, \alpha\ge 0}\text{ }\max_{\beta,\lambda,\eta\ge0, \widetilde{\eta}}\text{ }\min_{\tau\ge 0, t\ge 0}\quad
\twonorm{\vct{\theta}}^2 +\frac{2\lambda}{qt^{q-1}} \qnorm{\bth}^q+2\lambda\frac{q-1}{q}t- 2\lambda\gamma+\frac{\beta}{\sqrt{n}}  \vct{h}^T  \pproj_{\vct{\mu}}\mtx{\Sigma}^{1/2} \bth \nn\\
&\quad \quad\quad\quad\quad\quad+\beta\sqrt{\E\Bigg[\left(\left(1+\eps\gamma-\theta\twonorm{\vct{\mu}}\right)+\alpha g\right)_{+}^2\Bigg]}\nn\\
&\quad \quad\quad\quad\quad\quad+ \frac{\eta}{2\tau}\twonorm{\mtx{\Sigma}^{\frac{1}{2}}\vct{\theta}}^2+\frac{\eta\tau}{2}- \eta\alpha+\widetilde{\eta} \left(\tmu^T\bth - \theta\right)
\end{align}
To continue note that $\frac{\qnorm{\bth}^q}{t^{q-1}}=t\qnorm{\frac{\bth}{t}}^q$ and $\frac{\twonorm{\mtx{\Sigma}^{\frac{1}{2}}\vct{\theta}}^2}{\tau}=\tau \twonorm{\mtx{\Sigma}^{\frac{1}{2}}\frac{\vct{\theta}}{\tau}}^2$ and thus using the fact that the perspective of a convex function is convex both are jointly convex with respect to $(\vct{\theta},t)$ and $(\vct{\theta},\tau)$. Thus the objective above is jointly convex in $(\vct{\theta},\theta,\gamma,\alpha,t,\tau)$ and jointly concave in $(\beta,\lambda,\eta,\widetilde{\eta})$. Due to this convexity/concavity with respect to the minimization/maximization parameters we can change the order of min and max. We thus proceed by optimizing over $\bth$. The optimization over $\bth$ takes the form
\begin{align}\label{eq:sep-AO4}
\min_{\bth} \quad \twonorm{\vct{\theta}}^2 +\frac{2\lambda}{qt^{q-1}} \qnorm{\bth}^q+ \frac{\eta}{2\tau}\twonorm{\mtx{\Sigma}^{\frac{1}{2}}\vct{\theta}}^2+\frac{\beta}{\sqrt{n}}  \vct{h}^T  \pproj_{\vct{\mu}}\mtx{\Sigma}^{1/2} \bth+\widetilde{\eta}\tmu^T\bth
%\twonorm{\vct{\theta}}^2 +2\lambda \left(\qnorm{\bth}^q - \gamma^q\right)+\frac{\beta}{\sqrt{n}}  \vct{h}^T  \pproj_{\vct{\mu}}\mtx{\Sigma}^{1/2} \bth 
%+\eta \left(\twonorm{\mtx{\Sigma}^{\frac{1}{2}}\vct{\theta}}^2- \alpha^2\right) +\tilde{\eta} \left(\tmu^T\bth - \theta\right)\nn\\
%&= \min_{\bth}  
%\twonorm{(\mtx{I}+\eta\mtx{\Sigma})^{1/2}\bth + \frac{\beta}{2\sqrt{n}}(\mtx{I}+ \eta\mtx{\Sigma})^{-1/2} \mtx{\Sigma}^{1/2} \pproj_{\vct{\mu}}\vct{h} + 
%(\mtx{I}+\eta\mtx{\Sigma})^{-1/2} \frac{\tilde{\eta}}{2} \tmu}^2\nn\\
%&+2 \lambda \left(\qnorm{\bth} - \gamma\right) -\eta \alpha^2 -\tilde{\eta}\theta- \frac{\beta^2}{4n} \tr((\mtx{I}+\eta\mtx{\Sigma})^{-1}\mtx{\Sigma} \pproj_{\vct{\mu}}\vct{h}\vct{h}^T\pproj_{\vct{\mu}})\nn\\
%&-\frac{\tilde{\eta}^2}{4}\tmu^T(\mtx{I}+\eta\mtx{\Sigma})^{-1}\tmu - \frac{\beta\tilde{\eta}}{2\sqrt{n}} \tmu^T (\mtx{I}+\eta\mtx{\Sigma})^{-1} \mtx{\Sigma}^{1/2} \pproj_{\vct{\mu}}\vct{h}
\end{align}
By completing the square the objective can be alternatively written as
\begin{align}\label{eq:sep-AO4}
&\bth^T\left(\mtx{I}+\frac{\eta}{2\tau}\mtx{\Sigma}\right)\vct{\theta} +\frac{2\lambda}{qt^{q-1}} \qnorm{\bth}^q+\frac{\beta}{\sqrt{n}}  \vct{h}^T  \pproj_{\vct{\mu}}\mtx{\Sigma}^{1/2} \bth+\widetilde{\eta}\tmu^T\bth\nn\\
&=\twonorm{\left(\mtx{I}+\frac{\eta}{2\tau}\mtx{\Sigma}\right)^{1/2}\bth + \frac{\beta}{2\sqrt{n}}\left(\mtx{I}+\frac{\eta}{2\tau}\mtx{\Sigma}\right)^{-1/2} \mtx{\Sigma}^{1/2} \pproj_{\vct{\mu}}\vct{h} + 
\left(\mtx{I}+\frac{\eta}{2\tau}\mtx{\Sigma}\right)^{-1/2} \frac{\tilde{\eta}}{2} \tmu}^2\nn\\
&+\frac{2\lambda}{qt^{q-1}} \qnorm{\bth}^q - \frac{\beta^2}{4n} \vct{h}^T\pproj_{\vct{\mu}}\mtx{\Sigma}^{1/2} \left(\mtx{I}+\frac{\eta}{2\tau}\mtx{\Sigma}\right)^{-1} \mtx{\Sigma}^{1/2} \pproj_{\vct{\mu}}\vct{h} \nn\\
&-\frac{\tilde{\eta}^2}{4}\tmu^T\left(\mtx{I}+\frac{\eta}{2\tau}\mtx{\Sigma}\right)^{-1}\tmu - \frac{\beta\tilde{\eta}}{2\sqrt{n}} \tmu^T \left(\mtx{I}+\frac{\eta}{2\tau}\mtx{\Sigma}\right)^{-1} \mtx{\Sigma}^{1/2} \pproj_{\vct{\mu}}\vct{h}\,.
\end{align}
Since $\mtx{\Sigma}\tmu = a^2 \tmu$ we have
\[
\tmu^T \left(\mtx{I}+\frac{\eta}{2\tau}\mtx{\Sigma}\right)^{-1} \mtx{\Sigma}^{1/2} \pproj_{\vct{\mu}}\vct{h} = 0, \quad \tmu^T\left(\mtx{I}+\frac{\eta}{2\tau}\mtx{\Sigma}\right)^{-1}\tmu = \frac{1}{(1+\tfrac{\eta}{2\tau} a^2)}\,.
\]
We consider a singular value decomposition $\mtx{\Sigma} = \mtx{U}\mtx{S}\mtx{U}^T$ with $\mtx{S} = \diag{s_1,\dotsc, s_d}$, and the first column of $\mtx{U}$ being $\tmu$ and $s_{1} = a^2$ (Recall that $\tmu$ is a singular value of $\mtx{\Sigma}$ with eigenvalue $a^2$.) Then,
\begin{align*}
\frac{1}{n} \vct{h}^T\pproj_{\vct{\mu}}\mtx{\Sigma}^{1/2} \left(\mtx{I}+\frac{\eta}{2\tau}\mtx{\Sigma}\right)^{-1} \mtx{\Sigma}^{1/2} \pproj_{\vct{\mu}}\vct{h}&= \frac{1}{n}\vct{h}^T \pproj_{\vct{\mu}}\mtx{U}\left(\mtx{I}+\frac{\eta}{2\tau}\mtx{S}\right)^{-1}\mtx{S}
\mtx{U}^T\pproj_{\vct{\mu}} \vct{h}\\
& = \frac{1}{\delta d}\sum_{i=2}^d \frac{s_i}{1+\frac{\eta}{2\tau} s_i} h_i^2\\
& \stackrel{P}{\Rightarrow} \frac{1}{\delta d}\sum_{i=1}^d \frac{s_i}{1+\frac{\eta}{2\tau} s_i}\\
&= \frac{2\tau}{\delta d\eta} \sum_{i=1}^d \left(1 - \frac{1}{\frac{\eta}{2\tau}(s_i+\frac{2\tau}{\eta} )} \right)\\
&= \frac{2\tau}{\delta \eta} \left(1 +\frac{2\tau}{\eta}S_{\rho}\left(-\frac{2\tau}{\eta}\right)\right) 
\end{align*}
 with $S_{\rho}(z): = \int \frac{\rho(t)}{z-t}\de t$ the Stieltjes transform of the spectrum of $\mtx{\Sigma}$. 
  
 Using the above identities \eqref{eq:sep-AO4} reduces to
 \begin{align*}
 &\twonorm{\left(\mtx{I}+\frac{\eta}{2\tau}\mtx{\Sigma}\right)^{1/2}\bth + \frac{\beta}{2\sqrt{n}}\left(\mtx{I}+\frac{\eta}{2\tau}\mtx{\Sigma}\right)^{-1/2} \mtx{\Sigma}^{1/2} \pproj_{\vct{\mu}}\vct{h} + 
\left(\mtx{I}+\frac{\eta}{2\tau}\mtx{\Sigma}\right)^{-1/2} \frac{\tilde{\eta}}{2} \tmu}^2\nn\\
&+\frac{2\lambda}{qt^{q-1}} \qnorm{\bth}^q -  \frac{\tau\beta^2}{2\delta \eta} \left(1 +\frac{2\tau}{\eta}S_{\rho}\left(-\frac{2\tau}{\eta}\right)\right)-\frac{\tilde{\eta}^2}{4(1+\tfrac{\eta}{2\tau} a^2)} \,.
 \end{align*}

We then write the minimum value over $\bth$ in terms of the weighted Moreau envelope, given by Definition~\ref{def:WME}. 
\begin{align}\label{eq:sep-AO5}
& \min_{\bth}  
\twonorm{\left(\mtx{I}+\frac{\eta}{2\tau}\mtx{\Sigma}\right)^{1/2}\bth + \frac{\beta}{2\sqrt{n}}\left(\mtx{I}+\frac{\eta}{2\tau}\mtx{\Sigma}\right)^{-1/2} \mtx{\Sigma}^{1/2} \pproj_{\vct{\mu}}\vct{h} + 
\left(\mtx{I}+\frac{\eta}{2\tau}\mtx{\Sigma}\right)^{-1/2} \frac{\tilde{\eta}}{2} \tmu}^2+\frac{2\lambda}{qt^{q-1}} \qnorm{\bth}^q\nn\\
&= 2 e_{q,\mtx{I}+\tfrac{\eta}{2\tau}\mtx{\Sigma}}\left(\left(\mtx{I}+ \frac{\eta}{2\tau}\mtx{\Sigma}\right)^{-1} \left\{\frac{\beta}{2\sqrt{n}} \mtx{\Sigma}^{1/2} \pproj_{\vct{\mu}}\vct{h} - 
 \frac{\tilde{\eta}}{2} \tmu\right\}; \frac{\lambda}{qt^{q-1}}\right)\,,
\end{align}
where we used symmetry of the distribution of $\vct{h}$.

Putting all pieces together in\eqref{eq:sep-AO4} we get
\begin{align}
&\min_{\bth} 
\twonorm{\vct{\theta}}^2 +\frac{2\lambda}{qt^{q-1}} \qnorm{\bth}^q+ \frac{\eta}{2\tau}\twonorm{\mtx{\Sigma}^{\frac{1}{2}}\vct{\theta}}^2+\frac{\beta}{\sqrt{n}}  \vct{h}^T  \pproj_{\vct{\mu}}\mtx{\Sigma}^{1/2} \bth+\widetilde{\eta}\tmu^T\bth\label{eq:sep-AO6}\\
&= 2 e_{q,\mtx{I}+\tfrac{\eta}{2\tau}\mtx{\Sigma}}\left(\left(\mtx{I}+ \frac{\eta}{2\tau}\mtx{\Sigma}\right)^{-1} \left\{\frac{\beta}{2\sqrt{n}} \mtx{\Sigma}^{1/2} \pproj_{\vct{\mu}}\vct{h} - 
 \frac{\tilde{\eta}}{2} \tmu\right\}; \frac{\lambda}{qt^{q-1}}\right)
-\frac{\beta^2\tau}{2\delta \eta} \left(1 +\frac{2\tau}{\eta}S_{\rho}\left(-\frac{2\tau}{\eta}\right)\right) \nn\\
&-\frac{\tilde{\eta}^2}{4(1+\frac{\eta}{2\tau} a^2)} \nn\,.
\end{align}
Using \eqref{eq:sep-AO6} in \eqref{eq:sep-AO33}, the AO problem reduces to
\begin{align}\label{eq:sep-AO7}
\min_{\gamma\ge 0,\theta} \max_{\beta,\lambda,\eta\ge0, \tilde{\eta}}\text{ }\min_{\tau\ge 0, t\ge 0}\quad
&2 e_{q,\mtx{I}+\tfrac{\eta}{2\tau}\mtx{\Sigma}}\left(\left(\mtx{I}+ \frac{\eta}{2\tau}\mtx{\Sigma}\right)^{-1} \left\{\frac{\beta}{2\sqrt{n}} \mtx{\Sigma}^{1/2} \pproj_{\vct{\mu}}\vct{h} - \frac{\tilde{\eta}}{2} \tmu\right\}; \frac{\lambda}{qt^{q-1}}\right)\nn\\
&-\frac{\beta^2\tau}{2\delta \eta} \left(1 +\frac{2\tau}{\eta}S_{\rho}\left(-\frac{2\tau}{\eta}\right)\right)  \nn\\
&-\frac{\tilde{\eta}^2}{4(1+\frac{\eta}{2\tau} a^2)}   +\beta\sqrt{\E\Bigg[\left(\left(1+\eps\gamma-\theta\twonorm{\vct{\mu}}\right)+\alpha g\right)_{+}^2\Bigg]}\nn\\
& +2\lambda\frac{q-1}{q}t- 2\lambda\gamma +\frac{\eta\tau}{2}- \eta\alpha -\tilde{\eta}\theta
\end{align}

\smallskip

\noindent{\bf Scalarization of the auxiliary optimization problem.}
We proceed by defining $\lambda_0 := \frac{\lambda}{\pnorm{\vct{\mu}}}$, $\gamma_0: = \gamma\pnorm{\vct{\mu}}$ and $t_0: = t\pnorm{\vct{\mu}}$. 
%We further make the following assumption which is analogous to Assumption~\ref{ass:converging}
%\begin{assumption}\label{ass:converging2}
%For the sequence of instances $\{\mtx{\Sigma}(n), \vct{\mu}(n), p(n)\}_{n\in\naturals}$ indexed by $n$, we assume that the following limit exists for any scalars $c_0, c_1,\lambda_0, \eta\in \reals_{+}$, with high probability:
%\begin{align}
%\sF(c_0,c_1;b_0,b_1): = \lim_{n\to\infty}  e_{q,\mtx{I}+b_0\mtx{\Sigma}}\left((\mtx{I}+ b_0\mtx{\Sigma})^{-1} \left\{\frac{c_0}{2\sqrt{n}} \mtx{\Sigma}^{1/2} \pproj_{\vct{\mu}}\vct{h} - 
% \frac{c_1}{2} \tmu\right\};b_1 \pnorm{\vct{\mu}}^q\right)\,.
%\end{align}
%\aj{Natural choice:
%\begin{align}\label{eq:natural}
%\sE(c_0,c_1;\lambda_0): = \lim_{n\to\infty} \lambda\; e_{q,\mtx{\Sigma}}\left(\frac{c_0}{\sqrt{n}}\mtx{\Sigma}^{-1/2}\vct{h}- c_1 \frac{\vct{\mu}}{\twonorm{\vct{\mu}}}; \lambda\right)\,.
%\end{align}
%}
%where the expectation is taken with respect to the randomness of both $\vct{h}$ and $\vct{\mu}$.
%\end{assumption}
Under Assumptions~\ref{ass:norm-mu} and~\ref{ass:converging2}, the asymptotic auxiliary optimization (AO) problem becomes
\begin{align}\label{eq:sep-AO8}
\min_{\alpha,\gamma_0\ge 0,\theta} \max_{\beta,\lambda_0,\eta\ge0, \tilde{\eta}} \text{ }\min_{\tau\ge 0, t_0\ge 0}\quad
&2 \sF\left(\beta,\tilde{\eta};\frac{\eta}{2\tau},\frac{\lambda_0}{qt_0^{q-1}}\right)-\frac{\beta^2\tau}{2\delta \eta} \left(1 +\frac{2\tau}{\eta}S_{\rho}\left(-\frac{2\tau}{\eta}\right)\right)\nn\\
&+2\lambda_0\frac{q-1}{q}t_0- 2\lambda_0\gamma_0 +\frac{\eta\tau}{2}- \eta\alpha -\tilde{\eta}\theta \nn\\
&-\frac{\tilde{\eta}^2}{4(1+\frac{\eta}{2\tau} a^2)}  +\beta\sqrt{\E\Bigg[\left(\left(1+\eps_0\gamma_0-\theta V\right)+\alpha g\right)_{+}^2\Bigg]}
\end{align}
Here we used the relation $\eps = \eps_0\pnorm{\vct{\mu}}$.

We next solve for some of the variables in the AO problem by writing the KKT conditions.
\begin{enumerate}
\item Define
\[
f\left(\frac{\eta}{\tau}\right) : = 2 \sF\left(\beta,\tilde{\eta};\frac{\eta}{2\tau},\frac{\lambda_0}{qt_0^{q-1}}\right)-\frac{\beta^2\tau}{2\delta \eta} \left(1 +\frac{2\tau}{\eta}S_{\rho}\left(-\frac{2\tau}{\eta}\right)\right)-\frac{\tilde{\eta}^2}{4(1+\frac{\eta}{2\tau} a^2)} \,,
\]
where we only made the dependence on $\frac{\eta}{\tau}$ explicit in the notation $f\left(\frac{\eta}{\tau}\right)$. Setting derivative with respect to $\eta$ to zero, we obtain
\begin{align}\label{eq:diff-eta}
\frac{1}{\tau} f'\left(\frac{\eta}{\tau}\right) + \frac{\tau}{2}-\alpha = 0\,.
\end{align}
Setting derivative with respect to $\tau$ to zero, we obtain
\begin{align}\label{eq:diff-tau}
-\frac{\eta}{\tau^2} f'\left(\frac{\eta}{\tau}\right) + \frac{\eta}{2} = 0\,.
\end{align}
Combining \eqref{eq:diff-eta} and \eqref{eq:diff-tau}, we get $\eta(1-\frac{\alpha}{\tau}) = 0$. So either $\alpha = \tau$ or $\eta = 0$. If $\eta=0$, then it is clear that the terms involving $\tau$ in the AO problem would vanish and therefore the value of $\tau$ does not matter. So in this case, we can as well assume $\tau = \alpha$. This simplifies the AO problem by replacing for $\tau$:
 \begin{align}\label{eq:sep-AO8-1}
\min_{\alpha,\gamma_0\ge 0,\theta} \max_{\beta,\lambda_0,\eta\ge0, \tilde{\eta}} \text{ }\min_{t_0\ge 0}\quad
&2 \sF\left(\beta,\tilde{\eta};\frac{\eta}{2\alpha},\frac{\lambda_0}{qt_0^{q-1}}\right)-\frac{\beta^2\alpha}{2\delta \eta} \left(1 +\frac{2\alpha}{\eta}S_{\rho}\left(-\frac{2\alpha}{\eta}\right)\right)\nn\\
&+2\lambda_0\frac{q-1}{q}t_0- 2\lambda_0\gamma_0 -\frac{\eta\alpha}{2} -\tilde{\eta}\theta \nn\\
&-\frac{\tilde{\eta}^2}{4(1+\frac{\eta}{2\alpha} a^2)}  +\beta\sqrt{\E\Bigg[\left(\left(1+\eps_0\gamma_0-\theta V\right)+\alpha g\right)_{+}^2\Bigg]}\,.
\end{align}
\item Setting derivative with respect to $\lambda_0$ to zero, we get
\begin{align}\label{eq:diff-lambda}
2 \sF_4'\left(\beta,\tilde{\eta};\frac{\eta}{2\alpha},\frac{\lambda_0}{qt_0^{q-1}}\right) \frac{1}{qt_0^{q-1}}+2\frac{q-1}{q}t_0- 2\gamma_0 = 0\,,
\end{align}
where $\sF_4'$ denotes the derivative of function $\sF$ with respect to its forth argument.  Also, by setting derivative with respect to $t_0$ to zero we get
\begin{align}\label{eq:diff-t}
2 \sF_4'\left(\beta,\tilde{\eta};\frac{\eta}{2\alpha},\frac{\lambda_0}{qt_0^{q-1}}\right) \lambda_0 \frac{1-q}{q}t_0^{-q}+2\lambda_0 \frac{q-1}{q} = 0\,.
\end{align}
Combining \eqref{eq:diff-lambda} and \eqref{eq:diff-t} implies that
\begin{align}
2\lambda_0(q-1) \left(\frac{\gamma_0}{t_0}-1\right)= 0\,.
\end{align}
Therefore either $\gamma_0 = t_0$ or $\lambda_0=0$ or $q=1$.
If $\lambda = 0$ or $q=1$ then the terms involving $t_0$ in \eqref{eq:sep-AO8-1} vanish and hence we can assume $t_0= \gamma_0$ in this cases as well. Replacing $t_0$ with $\gamma_0$ in \eqref{eq:sep-AO8-1} we obtain
 \begin{align}\label{eq:sep-AO8-2}
&\min_{\alpha,\gamma_0\ge 0,\theta} \max_{\beta,\lambda_0,\eta\ge0, \tilde{\eta}} \text{ }\quad
2 \sF\left(\beta,\tilde{\eta};\frac{\eta}{2\alpha},\frac{\lambda_0}{q\gamma_0^{q-1}}\right)-\frac{\beta^2\alpha}{2\delta \eta} \left(1 +\frac{2\alpha}{\eta}S_{\rho}\left(-\frac{2\alpha}{\eta}\right)\right) -\frac{2\lambda_0}{q}  \gamma_0 -\frac{\eta\alpha}{2} -\tilde{\eta}\theta \nn\\
&\quad\quad\quad\quad\quad\quad-\frac{\tilde{\eta}^2}{4(1+\frac{\eta}{2\alpha} a^2)}  +\beta\sqrt{\E\Bigg[\left(\left(1+\eps_0\gamma_0-\theta V\right)+\alpha g\right)_{+}^2\Bigg]}\,,
\end{align}
which is the expression for $D_{\rm s}(\alpha,\gamma_{0},\theta,\beta,\lambda_{0},\eta,\tilde{\eta})$ given by~\eqref{eq:sep-AO92}. 
 \end{enumerate}
\smallskip

\noindent{\bf Uniqueness and boundedness of the solution to AO problem.}
Note that since $\delta \le \delta_*$, by using Theorem~\ref{thm:sep-thresh-aniso}, we are in the separable regime and therefore optimization~\eqref{eq:MM} is feasible with high probability and admits a bounded solution. This implies that the PO problem~\eqref{eq:Lag-sep} has bounded solution and since AO and PO problems are asymptotically equivalent this implies that the AO problem~\eqref{eq:sep-AO8-2} has bounded solution. 

To show the uniqueness of the solution of~\eqref{eq:sep-AO8-2}, note that as we argued throughout the proof, its objective function $D_{\rm s}$ is jointly strictly convex in $(\alpha, \gamma_0, \theta)$ and jointly concave in $(\beta,\lambda_0,\eta,\tilde{\eta})$. 
Therefor, $\max_{\beta,\lambda_0,\eta,\tilde{\eta}} D_{\rm s}(\alpha,\gamma_{0},\theta,\beta,\lambda_{0},\eta,\tilde{\eta})$ is strictly convex in $(\alpha, \gamma_0, \theta)$. This follows from the fact that if a function $f(\vct{x},\vct{y})$ is strictly convex in $\vct{x}$, then $\max_{\vct{y}} f(\vct{x},\vct{y})$ is also strictly convex in $\vct{x}$ and therefore has a unique minimizer $(\alpha_*,\gamma_{0*},\theta_*)$.

Part (b) of the theorem follows readily from our definition of parameters $\alpha$, $\theta$ and  $\gamma$.

Part (c) also follows from combining Lemma~\ref{lem:SR-AR} with part (b) of the theorem.
%We also have
%\begin{align}
%\lim_{n\to\infty} \SR(\hth^\eps) &= \Phi\left(-\frac{\twonorm{\vct{\mu}}\theta_*}{\alpha_*}\right)\\
%\lim_{n\to\infty} \AR(\hth^\eps) &= \Phi\left(\frac{\eps_0\gamma_{0*}}{\alpha_*} -\frac{\twonorm{\vct{\mu}}\theta_*}{\alpha_*}\right)
%\end{align}
%=====================
\subsection{Proof of Theorem~\ref{thm:isotropic-nonseparable-aniso}}
The goal of this theorem is to derive precise asymptotic behavior for the adversarially trained model $\hth^\eps$ given by
\begin{align}
\hth^\eps = \arg\min_{\bth\in \reals^d} \frac{1}{n} \sum_{i=1}^n  \ell\left(y_i \<\x_i,\bth\> - \eps \qnorm{\bth}\right) \,. 
\end{align}
Letting $v_i:= y_i\<\x_i,\bth\>$, this optimization can be equivalently written as
\[
\min_{\bth,\vct{v}\in \reals^n} \frac{1}{2p} \sum_{i=1}^n  \ell \left(v_i - \eps \qnorm{\bth}\right)    \quad \text{subject to }
\vct{v} = \mtx{D_y} \vct{X} \bth\,,
\]
with $\mtx{D_y} = \diag{y_1,\dotsc, y_n}$. 
Therefore, by writing the Lagrangian by $\vct{u}/n$ as the dual variable for the equality constraint, we arrive at
\[
\min_{\bth, \vct{v}\in \reals^n} \max_{\mtx{\vct{u}\in \reals^n}}
\frac{1}{n} \Big\{\vct{u}^\sT \mtx{D_y X} \bth -  \vct{u}^\sT \vct{v} \Big\}
+ \frac{1}{n} \sum_{i=1}^n  \ell \left(v_i - \eps \qnorm{\bth}\right) 
 \]
We next substitute for $\mtx{X}  = \vct{y}\vct{\mu}^T + \mtx{Z} \mtx{\Sigma}^{1/2}$, under the Gaussian mixtures model, which gives us
 \begin{align}\label{eq:min-max-1}
\min_{\bth, \vct{v}\in \reals^n} \max_{\mtx{\vct{u}\in \reals^n}}
\frac{1}{n} \Big\{\vct{u}^\sT \vct{1}\vct{\mu}^T \bth +\vct{u}^\sT \mtx{D_y} \mtx{Z} \mtx{\Sigma}^{1/2} \bth  -  \vct{u}^\sT \vct{v} \Big\}
+ \frac{1}{n} \sum_{i=1}^n  \ell \left(v_i - \eps \qnorm{\bth}\right) 
 \end{align}
 Note that by the above Lagrangian is in a minimax problem in the form of $\min_{\bth} \max_{\bu} \; \bu^T\mtx{Z}\bth +\psi(\bth,\bu)$, with $\mtx{Z}$ standard Gaussian matrix and $\psi(\bth,\bu)$ is convex in the minimization variable $\bth$ and concave in the maximization variable $\bu$. This form allows us to apply the CGMT framework as outlined in Section~\ref{sec:proofSketch} and similar to the proof of Theorem~\ref{thm:isotropic-separable-ansio}. But in order to do that, we need  the minimization/maximization to be over compact sets. Similar to the proof of Theorem~\ref{thm:isotropic-separable-ansio} we cope with this technical issue by introducing artificial boundedness  constraints on the optimization variables that they do not change the optimal solution. Specifically, we can add constraints of the form $\cS_{\bth} = \{\bth:\; \qnorm{\bth}\le K_{\bth}\}$ and $\cS_{\bu} = \{\bu:\; \|\bu\|_\infty\le K_{\bu}\}$ for sufficiently large constants $K_{\bth}, K_{\bu}$, without changing the optimal solution of \eqref{eq:min-max-1}. We refer to Appendix \ref{setres} for precise statements and proofs. This allows us to replace~\eqref{eq:min-max-1} with 
 \begin{align}\label{eq:min-max-1-1}
\min_{\bth\in\cS_{\bth}, \vct{v}\in \reals^n} \max_{\mtx{\vct{u}\in \cS_{\bu}}}
\frac{1}{n} \Big\{\vct{u}^\sT \vct{1}\vct{\mu}^T \bth +\vct{u}^\sT \mtx{D_y} \mtx{Z} \mtx{\Sigma}^{1/2} \bth  -  \vct{u}^\sT \vct{v} \Big\}
+ \frac{1}{n} \sum_{i=1}^n  \ell \left(v_i - \eps \qnorm{\bth}\right) \,.
 \end{align}
\smallskip

\subsubsection{Reduction to an auxiliary optimization problem via CGMT}  
Next we define the projection matrices 
\begin{align*}
\pproj_{\vct{\mu}} := \mtx{I} - \tmu\tmu^T,\quad 
\proj_{\vct{\mu}} := \tmu\tmu^T\,
\end{align*}
and rewrite $\mtx{Z} \mtx{\Sigma}^{1/2}=\mtx{Z}\left(\proj_{\vct{\mu}}+\pproj_{\vct{\mu}}\right) \mtx{\Sigma}^{1/2}$. Since $\mtx{Z}\proj_{\vct{\mu}}$ and $\mtx{Z}\pproj_{\vct{\mu}}$ are independent from each other the latter has the same distribution as 
\begin{align}\label{eq:Z-decompose}
\mtx{Z} \mtx{\Sigma}^{1/2}\sim\vct{z}\left(\mtx{\Sigma}^{1/2}\tmu\right)^T+\mtx{Z}\pproj_{\vct{\mu}} \mtx{\Sigma}^{1/2}\,.
\end{align}
where $\vct{z}\sim\mathcal{N}(0,\mtx{I}_n)$ and is independent from the matrix $\mtx{Z}$. Thus the above optimization problem is equivalent to
\begin{align}
\min_{\bth\in \cS_{\bth}, \vct{v}} \max_{\vct{u}\in \cS_{\vct{u}}}
\frac{1}{n} \Big\{\vct{u}^\sT \vct{1}\vct{\mu}^T \bth +(\vct{u}^\sT \mtx{D_y}\vct{z})(\tmu^T\mtx{\Sigma}^{1/2}\vct{\theta}) +\vct{u}^\sT \mtx{D_y} \mtx{Z} \pproj_{\vct{\mu}}\mtx{\Sigma}^{1/2} \bth  -  \vct{u}^\sT \vct{v} \Big\}
+ \frac{1}{n} \sum_{i=1}^n  \ell \left(v_i - \eps \qnorm{\bth}\right).
\end{align}
%Recall that $\x_i = y_i \vct{\mu} + \mtx{\Sigma}^{1/2} \z = y_i \twonorm{\vct{\mu}} \vct{e_1} +\mtx{\Sigma}^{1/2} \z $. In addition $\x_i = [s_i,\tx_i]$. 
%Therefore,
%\begin{align}
%\mtx{X} = [\vct{s}| \tX] = \twonorm{\vct{\mu}} [\vct{y}| \mtx{0}] + \mtx{Z} \mtx{\Sigma}^{1/2}
%\end{align}
%If we let $\mtx{\Sigma}^{1/2} = [\vct{\lambda}| \mtx{\Lambda}]$, then
%$\tX = \mtx{Z\Lambda}$.

 Using CGMT and the corresponding AO takes the form
 \begin{align}
\min_{\bth \in \cS_{\bth},\vct{v} } \max_{\vct{u}\in \cS_{\vct{u}}} \frac{1}{n} &\Big\{
 \twonorm{\pproj_{\vct{\mu}}\mtx{\Sigma}^{1/2}\bth} \vct{g}^T \mtx{D_y} \vct{u} +  \twonorm{\mtx{D_y}\vct{u}} \vct{h}^T\pproj_{\vct{\mu}} \mtx{\Sigma}^{1/2}\bth  
 +(\vct{u}^\sT \mtx{D_y}\vct{z})(\tmu^T\mtx{\Sigma}^{1/2}\vct{\theta}) \nn\\
 &\;+  \vct{u}^\sT \vct{1}\vct{\mu}^T \bth -  \vct{u}^\sT \vct{v} \Big\}+ \frac{1}{n} \sum_{i=1}^n  \ell \left(v_i - \eps \qnorm{\bth}\right)\,, 
\end{align}
 where $\vct{g}\sim\normal(0,\mtx{I}_n)$ and $\vct{h}\sim \normal(0,\mtx{I}_{d})$. 

Given $y_i = \pm 1$ are independent of $\mtx{Z}$ and hence $\vct{g}$, we have $\mtx{D_y \vct{g}}, \mtx{D_y \vct{z}}\sim\normal(0,\mtx{I}_n)$ and $\twonorm{\mtx{D_y}\vct{u}} = \twonorm{\vct{u}}$. This results in
 \begin{align}
\min_{\bth \in \cS_{\bth},\vct{v} } \max_{\vct{u}\in \cS_{\vct{u}}} \frac{1}{n} &\Big\{
 \twonorm{\pproj_{\vct{\mu}}\mtx{\Sigma}^{1/2}\bth} \vct{g}^T  \vct{u} +  \twonorm{\vct{u}} \vct{h}^T\pproj_{\vct{\mu}} \mtx{\Sigma}^{1/2}\bth  
 +(\vct{u}^\sT \vct{z})(\tmu^T\mtx{\Sigma}^{1/2}\vct{\theta})+  \vct{u}^\sT \vct{1}\vct{\mu}^T \bth -  \vct{u}^\sT \vct{v} \Big\}\nn\\
&+ \frac{1}{n} \sum_{i=1}^n  \ell \left(v_i - \eps \qnorm{\bth}\right)\,. 
\end{align}
 Letting $\beta:= \tfrac{1}{\sqrt{n}}\twonorm{\vct{u}}$ and optimizing over direction of $\vct{u}$, we get
  \begin{align}
& \max_{\vct{u}\in \cS_{\vct{u}}} 
 \frac{1}{n} \left(
\twonorm{\pproj_{\vct{\mu}}\mtx{\Sigma}^{1/2}\bth} \vct{g}^T\vct{u} +  \twonorm{\vct{u}} \vct{h}^T\pproj_{\vct{\mu}}\mtx{\Sigma}^{1/2}\bth +(\vct{u}^\sT \vct{z})(\tmu^T\mtx{\Sigma}^{1/2}\vct{\theta}) +  \vct{u}^\sT \vct{1}\vct{\mu}^T \bth -  \vct{u}^\sT \vct{v} \right) \nonumber\\
 &= \max_{0\le \beta\le K} \frac{\beta}{\sqrt{n}} \twonorm{\twonorm{\pproj_{\vct{\mu}}\mtx{\Sigma}^{1/2}\bth} \vct{g} +\tmu^T\mtx{\Sigma}^{1/2}\vct{\theta}\vct{z}+ \vct{1}\vct{\mu}^T \bth -\vct{v}}
 +\frac{\beta}{\sqrt{n}} \vct{h}^T \pproj_{\vct{\mu}}\mtx{\Sigma}^{1/2} \bth \,,
 \end{align}
 where $K: = \max_{\vct{u}\in \cS_{\vct{u}}} \tfrac{1}{\sqrt{n}}\twonorm{\vct{u}} <K_{\bu}$ by definition of $\cS_{\vct{u}}$.
  
Plugging the latter into AO becomes
 \begin{align}
&\min_{\bth\in\cS_{\bth} ,\vct{v} }\; \max_{0\le\beta\le K}
\frac{\beta}{\sqrt{n}} \twonorm{\twonorm{\pproj_{\vct{\mu}}\mtx{\Sigma}^{1/2}\bth} \vct{g} +\tmu^T\mtx{\Sigma}^{1/2}\vct{\theta}\vct{z}+ \vct{1}\vct{\mu}^T \bth -\vct{v}} +\frac{\beta}{\sqrt{n}} \vct{h}^T \pproj_{\vct{\mu}}\mtx{\Sigma}^{1/2} \bth + \frac{1}{n} \sum_{i=1}^n  \ell \left(v_i - \eps \qnorm{\bth}\right). 
 \end{align}

We hereafter use the shorthand
\[
\ell(\bv,\bth) = \frac{1}{n} \sum_{i=1}^n  \ell \left(v_i - \eps \qnorm{\bth}\right)\,, 
\]
for simplicity of notation. For the minimization, with respect to $\bth$ and then $\vct{v}$, to become easier in our later calculation we proceed by writing $ \ell(\vct{v},\bth)$ in terms of its conjugate with respect to $\bth$. That is,
 \begin{align*}
 \ell(\vct{v},\bth)=\sup_{\vct{w}} \vct{w}^T\bth-\widetilde{\ell}(\vct{v},\vct{w})
 \end{align*}
 where $ \widetilde{\ell}(\vct{v},\vct{w})$ is the conjugate of $\ell$ with respect to $\bth$. The logic behind this is that AO will then simplify to
  \[
 \min_{\bth\in\cS_{\bth},\vct{v}}\max_{0\le\beta\le K, \vct{w}}\;\; \frac{\beta}{\sqrt{n}} \twonorm{\twonorm{\pproj_{\vct{\mu}}\mtx{\Sigma}^{1/2}\bth} \vct{g} +\tmu^T\mtx{\Sigma}^{1/2}\vct{\theta}\vct{z}+ \vct{1}\vct{\mu}^T \bth -\vct{v}} +\frac{\beta}{\sqrt{n}} \vct{h}^T \pproj_{\vct{\mu}}\mtx{\Sigma}^{1/2} \bth   +\vct{w}^T\bth-\widetilde{\ell}(\vct{v},\vct{w})
\]
which after flipping (allowed based on the correct form of convexity/concavity of PO) becomes
  \begin{align}
\max_{0\le\beta\le K, \vct{w}}  \min_{\bth\in\cS_{\bth},\vct{v}}\;\; \frac{\beta}{\sqrt{n}} \twonorm{\twonorm{\pproj_{\vct{\mu}}\mtx{\Sigma}^{1/2}\bth} \vct{g} +\tmu^T\mtx{\Sigma}^{1/2}\vct{\theta}\vct{z}+ \vct{1}\vct{\mu}^T \bth -\vct{v}} +\frac{\beta}{\sqrt{n}} \vct{h}^T \pproj_{\vct{\mu}}\mtx{\Sigma}^{1/2} \bth   +\vct{w}^T\bth-\widetilde{\ell}(\vct{v},\vct{w})\,.
\end{align}
We define the parallel and perpendicular components of $\bth$ along vector $\vct{\mu}$ as follows:
\begin{align}
%\pproj_{\vct{\mu}} := \mtx{I} - \frac{\vct{\mu}\vct{\mu}^T}{\twonorm{\vct{\mu}}^2},\quad 
%\proj_{\vct{\mu}} := \frac{\vct{\mu}\vct{\mu}^T}{\twonorm{\vct{\mu}}^2}\,\nonumber\\
\tbth =\pproj_{\vct{\mu}} \bth,\quad \theta := \tmu^T \bth,\quad \proj_{\vct{\mu}}\bth = \theta \tmu\,.
\end{align}
%We make the assumption that $\proj_{\vct{\mu}}\mtx{\Sigma}^{1/2}\pproj_{\vct{\mu}} =0$ which implies that 
Given that $\tmu$ is an eigenvector of $\mtx{\Sigma}$, cf. Assumption~\ref{spikedcov}, we have $\proj_{\vct{\mu}}\mtx{\Sigma}^{1/2}\pproj_{\vct{\mu}} =0$ and therefore
\begin{align*}
\pproj_{\vct{\mu}}\mtx{\Sigma}^{1/2}\bth=\pproj_{\vct{\mu}}\mtx{\Sigma}^{1/2}\left(\proj_{\vct{\mu}}+\pproj_{\vct{\mu}}\right)\bth=
\pproj_{\vct{\mu}}\mtx{\Sigma}^{1/2}\pproj_{\vct{\mu}}\bth
= \pproj_{\vct{\mu}}\mtx{\Sigma}^{1/2}\pproj_{\vct{\mu}}\tbth\,.
\end{align*}
Similarly, since $\mtx{\Sigma}^{1/2}\tmu = a\tmu$.
\begin{align*}
\tmu^T\mtx{\Sigma}^{1/2}\vct{\theta}
= a\tmu^T \bth = a \theta.
%\\
%=&\tmu^T\mtx{\Sigma}^{1/2}\proj_{\vct{\mu}}\bth\\
%=&\tmu^T\mtx{\Sigma}^{1/2}\tmu\theta = a\theta
\end{align*}

Rewriting the AO problem, we get
  \begin{align}
\max_{0\le\beta\le K, \vct{w}}  \min_{\bth\in\cS_{\bth},\vct{v}}\;\; &\frac{\beta}{\sqrt{n}} \twonorm{\twonorm{\pproj_{\vct{\mu}}\mtx{\Sigma}^{1/2}\pproj_{\vct{\mu}}\tbth} \vct{g} +a\theta\vct{z}+ \vct{1}\twonorm{\vct{\mu}} \theta -\vct{v}}\nn \\
&+\frac{\beta}{\sqrt{n}} \vct{h}^T \pproj_{\vct{\mu}}\mtx{\Sigma}^{1/2}\pproj_{\vct{\mu}}\tbth   +\vct{w}^T\pproj_{\vct{\mu}}\tbth
 +\vct{w}^T\tmu\theta-\widetilde{\ell}(\vct{v},\vct{w}).
\end{align}
We can rewrite this as
  \begin{align}
\max_{0\le\beta\le K, \vct{w}}  \min_{\bth\in\cS_{\bth},\vct{v}}\;\; &\frac{\beta}{\sqrt{n}} \twonorm{\twonorm{\pproj_{\vct{\mu}}\mtx{\Sigma}^{1/2}\pproj_{\vct{\mu}}\tbth} \vct{g} +a\theta\vct{z}+ \vct{1}\twonorm{\vct{\mu}} \theta -\vct{v}}\nn \\
&+\frac{\beta}{\sqrt{n}} \left(\pproj_{\vct{\mu}}\vct{h}\right)^T \pproj_{\vct{\mu}}\mtx{\Sigma}^{1/2}\pproj_{\vct{\mu}}\tbth   +\vct{w}^T\pproj_{\vct{\mu}}\mtx{\Sigma}^{-1/2}\pproj_{\vct{\mu}}\left(\pproj_{\vct{\mu}}\mtx{\Sigma}^{1/2}\pproj_{\vct{\mu}}\right)\tbth
 +\vct{w}^T\tmu\theta-\widetilde{\ell}(\vct{v},\vct{w}).
\end{align}
Here we used the assumption that $\tmu$ is an eigenvector of $\mtx{\Sigma}$ which in turn implies that
\begin{align}\label{eq:aux1}
\pproj_{\vct{\mu}}\mtx{\Sigma}^{-1/2}\pproj_{\vct{\mu}}\left(\pproj_{\vct{\mu}}\mtx{\Sigma}^{1/2}\pproj_{\vct{\mu}}\right) 
=\pproj_{\vct{\mu}}\mtx{\Sigma}^{-1/2} \pproj_{\vct{\mu}} \mtx{\Sigma}^{1/2}\pproj_{\vct{\mu}} 
=\pproj_{\vct{\mu}}\mtx{\Sigma}^{-1/2}\left(\proj_{\vct{\mu}}+\pproj_{\vct{\mu}}\right) \mtx{\Sigma}^{1/2}\pproj_{\vct{\mu}}
 = \pproj_{\vct{\mu}}.
\end{align}
We next optimize over $\bth$ using lemma below and its proof is deferred to Appendix~\ref{proof:lem:tbth}.
\begin{lemma}\label{lem:tbth}
For a given vector $\vct{r}$ and $\alpha\ge0$ consider the following optimization
\begin{align}
&\text{min}_{\bth\in \reals^p} \;\;\; \<\pproj_{\vct{\mu}}\vct{r},\left(\pproj_{\vct{\mu}}\mtx{\Sigma}^{1/2}\pproj_{\vct{\mu}}\right)\tbth\> \\  
&\text{subject to } \twonorm{\left(\pproj_{\vct{\mu}}\mtx{\Sigma}^{1/2}\pproj_{\vct{\mu}}\right) \tbth} = \alpha
\end{align}
Under the assumption that $\proj_{\vct{\mu}}\mtx{\Sigma}^{1/2}\pproj_{\vct{\mu}} =0$, the optimal value of this optimization is given by
$-\alpha\twonorm{\pproj_{\vct{\mu}}\vct{r}}$.
\end{lemma}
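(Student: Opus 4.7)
The plan is to reduce this to a simple Cauchy--Schwarz minimization on a sphere in the subspace $\vct{\mu}^{\perp}$, by showing that the nominally complicated operator $\pproj_{\vct{\mu}}\mtx{\Sigma}^{1/2}\pproj_{\vct{\mu}}$ is, on this subspace, invertible and essentially a change of variables. First I would observe that the objective depends on $\bth$ only through $\tbth = \pproj_{\vct{\mu}}\bth$, so I can treat $\tbth \in \vct{\mu}^{\perp}$ as the free variable. Using the hypothesis $\proj_{\vct{\mu}}\mtx{\Sigma}^{1/2}\pproj_{\vct{\mu}} = 0$ (which follows from Assumption~\ref{spikedcov} since $\tmu$ is an eigenvector of $\mtx{\Sigma}^{1/2}$), one has $\pproj_{\vct{\mu}}\mtx{\Sigma}^{1/2}\pproj_{\vct{\mu}} = (\mtx{I}-\proj_{\vct{\mu}})\mtx{\Sigma}^{1/2}\pproj_{\vct{\mu}} = \mtx{\Sigma}^{1/2}\pproj_{\vct{\mu}}$, and therefore for any $\tbth\in\vct{\mu}^{\perp}$,
\[
\left(\pproj_{\vct{\mu}}\mtx{\Sigma}^{1/2}\pproj_{\vct{\mu}}\right)\tbth \;=\; \mtx{\Sigma}^{1/2}\tbth.
\]

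Next, I would introduce the change of variable $\vct{w} := \mtx{\Sigma}^{1/2}\tbth$. Since $\mtx{\Sigma}^{1/2}$ is symmetric positive definite and maps $\vct{\mu}^{\perp}$ to $\vct{\mu}^{\perp}$ (again because $\tmu$ is an eigenvector), its restriction to $\vct{\mu}^{\perp}$ is a bijection from $\vct{\mu}^{\perp}$ onto itself. Hence as $\tbth$ ranges over $\vct{\mu}^{\perp}$, the variable $\vct{w}$ ranges freely over $\vct{\mu}^{\perp}$, and the constraint $\twonorm{\left(\pproj_{\vct{\mu}}\mtx{\Sigma}^{1/2}\pproj_{\vct{\mu}}\right)\tbth} = \alpha$ becomes simply $\twonorm{\vct{w}} = \alpha$ with $\vct{w}\in\vct{\mu}^{\perp}$.

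Finally, since $\vct{w}\in\vct{\mu}^{\perp}$ satisfies $\pproj_{\vct{\mu}}\vct{w} = \vct{w}$, the objective rewrites as
\[
\langle \pproj_{\vct{\mu}}\vct{r},\, \vct{w}\rangle \;=\; \langle \pproj_{\vct{\mu}}\vct{r},\, \pproj_{\vct{\mu}}\vct{w}\rangle.
\]
Cauchy--Schwarz on the sphere $\{\vct{w}\in\vct{\mu}^{\perp}:\twonorm{\vct{w}}=\alpha\}$ gives the minimum $-\alpha\twonorm{\pproj_{\vct{\mu}}\vct{r}}$, attained at $\vct{w}_* = -\alpha\,\pproj_{\vct{\mu}}\vct{r}/\twonorm{\pproj_{\vct{\mu}}\vct{r}}$ (which indeed lies in $\vct{\mu}^{\perp}$), concluding the proof.

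There is no real obstacle here: the only nontrivial point is invoking Assumption~\ref{spikedcov} to ensure that $\mtx{\Sigma}^{1/2}$ preserves $\vct{\mu}^{\perp}$ and that $\proj_{\vct{\mu}}\mtx{\Sigma}^{1/2}\pproj_{\vct{\mu}}=0$, which collapses the operator $\pproj_{\vct{\mu}}\mtx{\Sigma}^{1/2}\pproj_{\vct{\mu}}$ to an invertible map on the orthogonal complement. The rest is a one-line Cauchy--Schwarz argument.
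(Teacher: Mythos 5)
Your proposal is correct and is essentially the paper's argument: the paper applies Cauchy--Schwarz directly to $\<\pproj_{\vct{\mu}}\vct{r},(\pproj_{\vct{\mu}}\mtx{\Sigma}^{1/2}\pproj_{\vct{\mu}})\tbth\>$ using the norm constraint, then exhibits the achiever $\bth=\frac{\alpha}{\twonorm{\pproj_{\vct{\mu}}\vct{r}}}\mtx{\Sigma}^{-1/2}\pproj_{\vct{\mu}}\vct{r}$ via the identity $(\pproj_{\vct{\mu}}\mtx{\Sigma}^{1/2}\pproj_{\vct{\mu}})(\pproj_{\vct{\mu}}\mtx{\Sigma}^{-1/2}\pproj_{\vct{\mu}})\vct{r}=\pproj_{\vct{\mu}}\vct{r}$, which is exactly the surjectivity of $\mtx{\Sigma}^{1/2}$ on $\vct{\mu}^{\perp}$ that your change of variables invokes. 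The only cosmetic difference is that you package the attainment step as a bijective reparametrization onto the sphere in $\vct{\mu}^{\perp}$ rather than naming a single feasible point.
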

Now note that 
\begin{align*}
|\theta| &= |\tmu^T\bth| \le \twonorm{\bth}\,,\\
\alpha &= \twonorm{\left(\pproj_{\vct{\mu}}\mtx{\Sigma}^{1/2}\pproj_{\vct{\mu}}\right) \tbth}  = 
 \twonorm{\pproj_{\vct{\mu}}\mtx{\Sigma}^{1/2} \bth} \le  \twonorm{\mtx{\Sigma}^{1/2} \bth}
 \le C^{1/2}_{\max} \twonorm{\bth}
\end{align*}
where in the second line we used Assumption \ref{ass:asymptotic}$(b)$,$(d)$. Since $\bth\in\cS_{\bth}$ a bounded set, we can choose $K'>0$ large enough so that $0\le |\theta|, \alpha\le K'$ and hence so do the optimization over this bounded range. That said, we use Lemma~\ref{lem:tbth} with $\vct{r} = \frac{\beta}{\sqrt{n}} \vct{h}+\mtx{\Sigma}^{-1/2}\pproj_{\vct{\mu}}\vct{w}$, to simplify the AO problem as follows:  
  \begin{align}\label{eq:AO1}
\max_{0\le\beta\le K, \vct{w}}  \min_{0\le\alpha,|\theta| \le K',\vct{v}}\;\; \frac{\beta}{\sqrt{n}} &\twonorm{\alpha \vct{g} +a\theta\vct{z}+ \vct{1}\twonorm{\vct{\mu}} \theta -\vct{v}}\nn \\
&-\alpha\twonorm{\frac{\beta}{\sqrt{n}} \pproj_{\vct{\mu}}\vct{h}+\pproj_{\vct{\mu}}\mtx{\Sigma}^{-1/2}\pproj_{\vct{\mu}}\vct{w}}
 +\vct{w}^T\tmu \theta-\widetilde{\ell}(\vct{v},\vct{w})
\end{align}

To continue we shall calculate the conjugate function $\widetilde{\ell}$. This is the subject of the next lemma and we refer to Appendix~\ref{proof:lem:conj-L} for its proof.
\begin{lemma}\label{lem:conj-L} The conjugate of the function
\[
\ell(\bv,\bth) = \frac{1}{n} \sum_{i=1}^n  \ell \left(v_i - \eps \qnorm{\bth}\right) 
\]
with respect to $\bth$ is equal to
\begin{align*}
\widetilde{\ell}(\vct{v},\vct{w})=\sup_{\gamma\ge 0}\quad \gamma \pnorm{\vct{w}}-\frac{1}{n} \sum_{i=1}^n  \ell \left(v_i - \eps \gamma \right).
\end{align*}
\end{lemma}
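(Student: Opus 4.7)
The plan is to unfold the definition of the conjugate, separate out the scalar norm $\qnorm{\bth}$, and then appeal to Hölder's inequality. By definition,
\[
\widetilde{\ell}(\vct{v},\vct{w}) \;=\; \sup_{\bth\in\reals^d} \;\; \vct{w}^T \bth \;-\; \frac{1}{n}\sum_{i=1}^n \ell\bigl(v_i - \eps\qnorm{\bth}\bigr).
\]
The key observation is that $\ell(\vct{v},\bth)$ depends on $\bth$ only through the scalar $\qnorm{\bth}$. First, I would introduce a new variable $\gamma := \qnorm{\bth} \ge 0$ and split the supremum as $\sup_{\gamma\ge 0} \sup_{\bth:\, \qnorm{\bth}=\gamma} \,\cdot\,$. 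Inside the second supremum only the linear term $\vct{w}^T\bth$ depends on $\bth$, so it reduces to the scalar optimization $\sup\{\vct{w}^T\bth : \qnorm{\bth}=\gamma\}$.

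Second, I would invoke Hölder's inequality with conjugate exponents $1/p+1/q=1$: for every $\bth$ with $\qnorm{\bth}=\gamma$,
\[
\vct{w}^T \bth \;\le\; \pnorm{\vct{w}}\,\qnorm{\bth} \;=\; \gamma \pnorm{\vct{w}},
\]
and equality is attained by choosing $\bth = \gamma\, \vct{b}$ where $\vct{b}$ has $\qnorm{\vct{b}}=1$ and is the Hölder-dual direction to $\vct{w}$ (for $1<p,q<\infty$ this is the explicit normalized dual vector with entries $b_i \propto \sgn{w_i}\,|w_i|^{p-1}$; the boundary cases $p\in\{1,\infty\}$ are handled by the usual limiting constructions, as also recorded in our reduction $y_i\<\bdelta^*_i,\bth\>=-\eps\qnorm{\bth}$ just after~\eqref{eq:minimaxEst}). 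Thus the inner sup equals $\gamma\pnorm{\vct{w}}$ exactly.

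Substituting back gives
\[
\widetilde{\ell}(\vct{v},\vct{w}) \;=\; \sup_{\gamma\ge 0}\left\{\gamma\pnorm{\vct{w}} - \frac{1}{n}\sum_{i=1}^n \ell\bigl(v_i - \eps\gamma\bigr)\right\},
\]
which is the claimed identity. There is no real obstacle here; the only point warranting a sentence of care is the equality case of Hölder's inequality across the allowed range $p\ge 1$, which I would dispatch by treating $p\in(1,\infty)$ via the explicit dual direction and $p\in\{1,\infty\}$ by the standard extremal constructions (one-hot vector or sign vector, respectively). The supremum over $\gamma\ge 0$ is retained rather than evaluated in closed form because the subsequent analysis in the main proof (after~\eqref{eq:AO1}) exploits this representation directly when performing the joint optimization over $(\vct{v},\vct{w})$.
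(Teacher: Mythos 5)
Your proposal is correct and follows essentially the same route as the paper: the paper writes $\bth=\gamma\vct{u}$ with $\qnorm{\vct{u}}=1$, splits the supremum into $\sup_{\gamma\ge 0}\sup_{\qnorm{\vct{u}}=1}$, and uses the dual-norm identity $\sup_{\qnorm{\vct{u}}=1}\vct{w}^T\vct{u}=\pnorm{\vct{w}}$, which is exactly your Hölder step. Your extra care about the equality case of Hölder across $p\in[1,\infty]$ is a harmless (and slightly more complete) addition to what the paper records.
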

Using the above lemma we have
\begin{align*}
-\widetilde{\ell}(\vct{v},\vct{w})=-\left(\sup_{\gamma\ge 0}\quad \gamma \pnorm{\vct{w}}-\frac{1}{n} \sum_{i=1}^n  \ell \left(v_i - \eps \gamma \right)\right)=\inf_{\gamma\ge 0}\quad -\gamma \pnorm{\vct{w}}+\frac{1}{n} \sum_{i=1}^n  \ell \left(v_i - \eps \gamma \right).
\end{align*}
Plugging this into \eqref{eq:AO1} we arrive at
  \begin{align}
  \label{temp1}
\max_{0\le\beta\le K, \vct{w}}\;\; \min_{0\le \alpha, |\theta| \le K', \vct{v}, 0\le \gamma}\;\; 
 &\frac{\beta}{\sqrt{n}} \twonorm{\alpha \vct{g} +a\theta\vct{z}+ \vct{1}\twonorm{\vct{\mu}} \theta -\vct{v}}\nn \\
&-\alpha\twonorm{\frac{\beta}{\sqrt{n}} \pproj_{\vct{\mu}}\vct{h}+\pproj_{\vct{\mu}}\mtx{\Sigma}^{-1/2}\pproj_{\vct{\mu}}\vct{w}}
 +\vct{w}^T\tmu\theta
-\gamma \pnorm{\vct{w}}+\frac{1}{n} \sum_{i=1}^n  \ell \left(v_i - \eps \gamma \right)
\end{align}
Note that when $p\ge 1$ the objective is jointly concave in $(\vct{w},\beta)$ and jointly convex in $\alpha, \theta, \vct{v}$ and therefore we can switch the orders of min and max.

We next focus on optimization over $\vct{v}$. Using the observation that for all $x\in \reals$, $\min_{\tau \ge 0} \frac{\tau}{2} + \frac{x^2}{2\tau} = x$ we write
\begin{align}
&\min_{\vct{v}} \frac{\beta}{\sqrt{n}} \twonorm{\alpha \vct{g} +a\theta\vct{z}+ \vct{1}\twonorm{\vct{\mu}} \theta -\vct{v}} +\frac{1}{n} \sum_{i=1}^n  \ell \left(v_i - \eps \gamma \right)\nonumber\\
&=\min_{\vct{v}} \inf_{\tau_g\ge0}  \frac{\beta}{2\tau_g n} \twonorm{\alpha \vct{g} +a\theta\vct{z}+ \vct{1}\twonorm{\vct{\mu}} \theta -\vct{v}}^2 + \frac{\beta\tau_g}{2} +\frac{1}{n} \sum_{i=1}^n  \ell \left(v_i - \eps \gamma \right)\nonumber\\
&=\min_{\vct{v}} \inf_{\tau_g\ge0}  \frac{\beta}{2\tau_g n}\sum_{i=1}^n \left(\alpha g_i +a\theta z_i+ \twonorm{\vct{\mu}} \theta -v_i\right)^2 + \frac{\beta\tau_g}{2} +\frac{1}{n} \sum_{i=1}^n  \ell \left(v_i - \eps \gamma \right)\nn\\
&=\min_{\widetilde{v}_i} \inf_{\tau_g\ge0}  \frac{\beta}{2\tau_g n}\sum_{i=1}^n \left(\alpha g_i +a\theta z_i+ \twonorm{\vct{\mu}} \theta -\widetilde{v}_i-\eps\gamma \right)^2 + \frac{\beta\tau_g}{2} +\frac{1}{n} \sum_{i=1}^n  \ell \left(\widetilde{v}_i \right)
\label{eq:aux2}
\end{align} 
As a result \eqref{temp1} can be rewritten as
  \begin{align}
  \label{temp2}
\max_{0\le\beta\le K, \vct{w}}\;\; \min_{0\le \alpha, |\theta| \le K',\widetilde{\vct{v}},0\le\gamma}\;\; \inf_{\tau_g\ge0}\;\; 
 &\frac{\beta}{2\tau_g n} \twonorm{\alpha \vct{g} +a\theta\vct{z}+ \vct{1}\twonorm{\vct{\mu}} \theta -\widetilde{\vct{v}}-\eps\gamma\vct{1}}^2+\frac{\beta \tau_g}{2}\nn \\
&-\alpha\twonorm{\frac{\beta}{\sqrt{n}} \pproj_{\vct{\mu}}\vct{h}+\pproj_{\vct{\mu}}\mtx{\Sigma}^{-1/2}\pproj_{\vct{\mu}}\vct{w}}
 +\vct{w}^T\tmu\theta
-\gamma \pnorm{\vct{w}}+\frac{1}{n} \sum_{i=1}^n  \ell \left(\widetilde{v}_i \right)
\end{align}
We note that since the quadratic over linear function is jointly convex the above loss is jointly convex in the parameters $(\alpha, \gamma, \theta, \tau_g, \widetilde{\vct{v}})$. Also for $p\ge 1$ the $\pnorm{\cdot}$ is convex and thus the objective is also jointly concave in $(\beta, \vct{w})$. 

We recall the definition of the Moreau envelope of function $\ell$ at a point $x$ with parameter $\mu$, that is given by
\begin{align}
e_\ell(x;\mu) := \min_{t} \frac{1}{2\mu} (x-t)^2 + \ell(t)\,. 
\end{align}
We can now rewrite equation~\eqref{temp2} in terms of Moreau envelope of the loss function ${\ell}$.
\begin{align}
\min_{\widetilde{\vct{v}}} \;\; \inf_{\tau_g\ge0}\;\; 
 &\frac{\beta}{2\tau_g n} \twonorm{\alpha \vct{g} +a\theta\vct{z}+ \vct{1}\twonorm{\vct{\mu}} \theta -\widetilde{\vct{v}}-\eps\gamma\vct{1}}^2+\frac{\beta \tau_g}{2} +\frac{1}{n} \sum_{i=1}^n  \ell \left(\widetilde{v}_i \right)\nonumber\\
&\quad\quad=\inf_{\tau_g\ge0} \frac{\beta \tau_g}{2} + \frac{1}{n}\sum_{i=1}^n e_{\ell}\left(\alpha g_i +a\theta z_i+ \twonorm{\vct{\mu}} \theta - \eps\gamma;\frac{\tau_g}{\beta}\right) 
\end{align} 
Thus \eqref{temp2} can be rewritten in the form
\begin{align}
\label{temp3}
\max_{0\le\beta\le K, \vct{w}}\;\; \min_{0\le \alpha, |\theta| \le K',0\le\gamma}\;\; \inf_{\tau_g\ge0}\;\;  &\frac{\beta \tau_g}{2} + \frac{1}{n}\sum_{i=1}^n e_{\ell}\left(\alpha g_i +a\theta z_i+ \twonorm{\vct{\mu}} \theta - \eps\gamma;\frac{\tau_g}{\beta}\right) \nn\\
&-\alpha\twonorm{\frac{\beta}{\sqrt{n}} \pproj_{\vct{\mu}}\vct{h}+\pproj_{\vct{\mu}}\mtx{\Sigma}^{-1/2}\pproj_{\vct{\mu}}\vct{w}}
 +\vct{w}^T\tmu\theta
-\gamma \pnorm{\vct{w}}
\end{align}
Note that since \eqref{temp2} is jointly convex in $(\alpha, \gamma, \theta, \tau_g, \widetilde{\vct{v}})$ and jointly concave in $(\beta, \vct{w})$ and partial minimization preserves convexity thus \eqref{temp3} is jointly convex in $(\alpha, \gamma, \theta, \tau_g)$ and jointly concave in $(\beta, \vct{w})$.

\subsubsection{Scalarization of the auxiliary optimization problem} 
The auxiliary problem~\eqref{temp3} is in terms of high-dimensional vectors $\vct{g},\vct{z}, \vct{h}, \vct{w}, \vct{\mu}$. We turn this problem into a scalar optimization by taking the pointwise limit of its objective and then showing that such convergence indeed holds in a uniform sense and therefore the minimax value also converges to that of the limit objective.

Note that by definition of the Moreau envelope,  for all $x$ and $\mu$ we have
\[
e_{{\ell}}(x;\mu) \le \frac{1}{2\mu}(x-x)^2 + {\ell}(x) = {\ell}(x) = \log(1+e^{-x})
\le \log 2+ |x|\,. 
\] 
Hence,
\[
\E\left[e_{{\ell}}\left(\alpha g +a\theta z+\twonorm{\vct{\mu}} \theta-\eps\gamma;\frac{\tau_g}{\beta}\right)\right]
\le \log 2  + \E[|\alpha g +a\theta z+\twonorm{\vct{\mu}} \theta - \eps \gamma|]<\infty\,,
\]
for any finite value of $\alpha$, $\theta$ and $\gamma$. Therefore by an application of the Weak Law of Large Numbers, we have that 
\begin{align*}
\frac{1}{n}\sum_{i=1}^n e_{\ell}\left(\alpha g_i +a\theta z_i+ \twonorm{\vct{\mu}} \theta-\eps \gamma;\frac{\tau_g}{\beta}\right) &\to
\E\left[e_{{\ell}}\left(\alpha g +a\theta z+ \twonorm{\vct{\mu}} \theta-\eps \gamma;\frac{\tau_g}{\beta} \right) \right]\\
&=\E\left[e_{{\ell}}\left(\sqrt{\alpha^2+a^2\theta^2}g+ \twonorm{\vct{\mu}} \theta-\eps \gamma;\frac{\tau_g}{\beta} \right) \right]\,. %\quad \text{in probability}
\end{align*}
We define the expected Moreau envelope $L(a,b,\mu) = \E[e_{\ell}(ag+b;\mu)]$, where the expectation is taken with respect to independent standard normal variable $g$. 

This simplifies the AO problem as
\begin{align}
\label{temp4}
\max_{0\le\beta\le K, \vct{w}}\;\; \min_{0\le \alpha, |\theta| \le K',0\le\gamma,\tau_g}\;\; 
 &\frac{\beta \tau_g}{2} +L\left(\sqrt{\alpha^2+a^2\theta^2},\twonorm{\vct{\mu}} \theta-\eps \gamma,\frac{\tau_g}{\beta}\right) \nn \\
&-\alpha\twonorm{\frac{\beta}{\sqrt{n}} \pproj_{\vct{\mu}}\vct{h}+\pproj_{\vct{\mu}}\mtx{\Sigma}^{-1/2}\pproj_{\vct{\mu}}\vct{w}}
 +\vct{w}^T\tmu\theta
-\gamma \pnorm{\vct{w}}\,.
\end{align}

We note that since \eqref{temp3} is jointly convex in $(\alpha, \gamma, \theta, \tau_g)$ and jointly concave in $(\beta, \vct{w})$ and expectation preserves convexity thus the objective in \eqref{temp4} is also jointly convex in $(\alpha, \gamma, \theta, \tau_g)$ and jointly concave in $(\beta, \vct{w})$ and by Sinov's theorem we can flip the maximization over $\vct{w}$ and the minimization to arrive at
\begin{align}
\max_{0\le\beta\le K}\;\; \min_{0\le \alpha, |\theta| \le K',0<\gamma,\tau_g}\;\; 
 &\frac{\beta \tau_g}{2} +L\left(\sqrt{\alpha^2+a^2\theta^2},\twonorm{\vct{\mu}} \theta-\eps \gamma,\frac{\tau_g}{\beta}\right) \nn \\
&-\min_{\vct{w}} \Big\{ \alpha\twonorm{\frac{\beta}{\sqrt{n}} \pproj_{\vct{\mu}}\vct{h}+\pproj_{\vct{\mu}}\mtx{\Sigma}^{-1/2}\pproj_{\vct{\mu}}\vct{w}}
 - \vct{w}^T\tmu\theta
 +\gamma \pnorm{\vct{w}}\Big\}\,. \label{eq:AO-final0}
\end{align}
By our asymptotic setting (cf. Definition~\ref{ass:asymptotic}, part (c)), $\eps = \eps_0 \pnorm{\vct{\mu}}$ for a constant $\eps_0$. We let $\gamma_0 := \gamma \pnorm{\vct{\mu}}$ and rewriting \eqref{eq:AO-final0} in terms of $\gamma_0$ in lieu of $\gamma$ we arrive at
\begin{align}
\max_{0\le\beta\le K}\;\; \min_{0\le \alpha, |\theta| \le K',0<\gamma_0<K'',0<\tau_g}\;\; 
 &\frac{\beta \tau_g}{2} +L\left(\sqrt{\alpha^2+a^2\theta^2},\twonorm{\vct{\mu}} \theta-\eps_0 \gamma_0,\frac{\tau_g}{\beta}\right) \nn \\
&-\min_{\vct{w}} \Big\{ \alpha\twonorm{\frac{\beta}{\sqrt{n}} \pproj_{\vct{\mu}}\vct{h}+\pproj_{\vct{\mu}}\mtx{\Sigma}^{-1/2}\pproj_{\vct{\mu}}\vct{w}}
 - \vct{w}^T\tmu\theta
 +\frac{\gamma_0}{\pnorm{\vct{\mu}}} \pnorm{\vct{w}}\Big\}\,. \label{eq:AO-final00}
\end{align}

\begin{itemize}[leftmargin=*]
\item {\bf Optimization over $\vct{w}$.} Continuing with optimization over $\vct{w}$ we have
\begin{align}
&\min_{\vct{w}}\;\;\; \alpha\twonorm{\frac{\beta}{\sqrt{n}} \pproj_{\vct{\mu}}\vct{h}+\pproj_{\vct{\mu}}\mtx{\Sigma}^{-1/2}\pproj_{\vct{\mu}}\vct{w}} - \vct{w}^T\tmu\theta+\frac{\gamma_0}{\pnorm{\vct{\mu}}} \pnorm{\vct{w}} \nn\\
&=\min_{\vct{w},\tau_h\ge0}\;\;\; \frac{\alpha}{2\tau_h} \twonorm{\frac{\beta}{\sqrt{n}} \pproj_{\vct{\mu}}\vct{h}+\pproj_{\vct{\mu}}\mtx{\Sigma}^{-1/2}\pproj_{\vct{\mu}}\vct{w}}^2 + \frac{\alpha\tau_h}{2}  - \vct{w}^T\tmu\theta+\frac{\gamma_0}{\pnorm{\vct{\mu}}} \pnorm{\vct{w}}\nn\\
&=\min_{\vct{w},\tau_h\ge0}\;\;\; \frac{\alpha}{2\tau_h} \twonorm{\frac{\beta}{\sqrt{n}} \pproj_{\vct{\mu}}\vct{h}+\pproj_{\vct{\mu}}\mtx{\Sigma}^{-1/2}\vct{w}}^2 + \frac{\alpha\tau_h}{2}  - \vct{w}^T\tmu\theta+\frac{\gamma_0}{\pnorm{\vct{\mu}}} \pnorm{\vct{w}}\,,\label{eq:OPTw1}
\end{align}
where in the last step we used that $\pproj_{\vct{\mu}}\mtx{\Sigma}^{-1/2}\proj_{\vct{\mu}} =0$, which follows from Assumption~\ref{spikedcov}. Note that the above loss is jointly convex in $(\vct{w},\tau_h)$. So that continuing from \eqref{eq:AO-final0} the overall objective is jointly convex in $(\alpha, \gamma, \theta, \tau_g)$ and jointly concave in $(\beta, \vct{w},\tau_h)$.

Let $\tw: = \mtx{\Sigma}^{-1/2}\vct{w}$. The optimization over $\vct{w}$ can be written as
\begin{align}\label{eq:obj-w}
\min_{\tw}\;\;\; \frac{1}{2} \twonorm{\frac{\beta}{\sqrt{n}} \pproj_{\vct{\mu}}\vct{h}+\pproj_{\vct{\mu}}\tw}^2  +f(\tw)\,,
\end{align}
where
\[
f(\tw): = - \<\tw, \mtx{\Sigma}^{1/2}\tmu\> \frac{\theta \tau_h}{\alpha}+\frac{\tau_h}{\alpha}\frac{\gamma_0}{\pnorm{\vct{\mu}}} \pnorm{\mtx{\Sigma}^{1/2}\tw}\,.
\]
Let $\tw^*$ be the optimal solution. Then,
\begin{align}\label{eq:partial-f}
-\pproj_{\vct{\mu}} \left(\frac{\beta}{\sqrt{n}}\vct{h} + \tw^*\right) \in \partial f(\tw^*)\,.
\end{align}
By the conjugate subgradient theorem, this implies that
\[
\tw^* \in \partial f^*\left(-\pproj_{\vct{\mu}} \left(\frac{\beta}{\sqrt{n}}\vct{h} + \tw^*\right)\right)\,.
\]
Let $\vct{t}^*:= \frac{\beta}{\sqrt{n}}\vct{h} + \tw^*$, then writing the above equation in terms of $t$,
\begin{align}\label{eq:conj}
\vct{t}^*-\frac{\beta}{\sqrt{n}}\vct{h}  \in \partial f^*\left(-\pproj_{\vct{\mu}} \vct{t}^*\right)\,.
\end{align}
%\aj{I am a bit worried that $f^*$ may not have subgradient}. 
Therefore,
\begin{align}\label{eq:conj2}
-\pproj_{\vct{\mu}} \left(\vct{t}^*-\frac{\beta}{\sqrt{n}}\vct{h} \right) \in -\pproj_{\vct{\mu}} \partial f^*\left(-\pproj_{\vct{\mu}} \vct{t}^*\right)\,.
\end{align}
Equation~\eqref{eq:conj2} is equivalent to saying that
\begin{align}\label{eq:conj3}
\vct{t}^* \in \arg\min_{\vct{t}} \frac{1}{2} \twonorm{\pproj_{\vct{\mu}}\left(\frac{\beta}{\sqrt{n}}\vct{h}-\vct{t}\right)}^2 + f^*\left(-\pproj_{\vct{\mu}} \vct{t}\right)\,.
\end{align}
\begin{lemma}\label{lem:f-conjugate}
For function $f:\reals^p \mapsto \reals$ given by
\[
f(\tw): = - \<\tw, \mtx{\Sigma}^{1/2}\tmu\> \frac{\theta \tau_h}{\alpha}+\frac{\tau_h}{\alpha} \frac{\gamma_0}{\pnorm{\vct{\mu}}}  \pnorm{\mtx{\Sigma}^{1/2}\tw}\,,
\]
its convex conjugate reads as
\[
f^*(\vct{u}) = \ind_S(\vct{u}),\quad S:=\left\{\vct{u}:\quad\qnorm{\mtx{\Sigma}^{-1/2}\vct{u} +  \frac{\tau_h\theta}{\alpha}\tmu}\le \frac{\gamma\tau_h}{\alpha}\right\}\,,\quad
\ind_S(\vct{u}) = \begin{cases}
0& \text{ if } \vct{u}\in S\\
\infty &\text{ if }\vct{u}\notin S
\end{cases}
\]
\end{lemma}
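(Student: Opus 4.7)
The plan is to compute the convex conjugate $f^*(\vct{u}) = \sup_{\tw} \langle \vct{u}, \tw\rangle - f(\tw)$ by direct calculation, reducing it through a change of variables to the standard conjugate of a norm.

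First, I would substitute the definition of $f$ into the conjugate formula to obtain
\[
f^*(\vct{u}) = \sup_{\tw}\;\; \Big\langle \vct{u} + \tfrac{\theta\tau_h}{\alpha}\mtx{\Sigma}^{1/2}\tmu,\; \tw\Big\rangle \;-\; \tfrac{\tau_h \gamma_0}{\alpha\, \pnorm{\vct{\mu}}}\, \pnorm{\mtx{\Sigma}^{1/2}\tw}.
\]
The next step is the change of variables $\vct{v} = \mtx{\Sigma}^{1/2}\tw$, which under the standing invertibility assumption on $\mtx{\Sigma}$ is a bijection $\tw \mapsto \vct{v}$. Using $\langle \vct{a}, \mtx{\Sigma}^{-1/2}\vct{v}\rangle = \langle \mtx{\Sigma}^{-1/2}\vct{a}, \vct{v}\rangle$ and $\langle \mtx{\Sigma}^{1/2}\tmu, \mtx{\Sigma}^{-1/2}\vct{v}\rangle = \langle \tmu, \vct{v}\rangle$, this rewrites as
\[
f^*(\vct{u}) = \sup_{\vct{v}}\;\; \Big\langle \mtx{\Sigma}^{-1/2}\vct{u} + \tfrac{\theta\tau_h}{\alpha}\tmu,\; \vct{v}\Big\rangle \;-\; \tfrac{\tau_h \gamma_0}{\alpha\, \pnorm{\vct{\mu}}}\, \pnorm{\vct{v}}.
\]

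Then I would invoke the standard fact that for any $c>0$ and vector $\vct{a}$, the supremum $\sup_{\vct{v}} \langle \vct{a}, \vct{v}\rangle - c\pnorm{\vct{v}}$ equals $0$ when $\qnorm{\vct{a}}\le c$ (achieved at $\vct{v}=\zero$) and $+\infty$ otherwise (by rescaling any $\vct{v}$ that witnesses $\langle \vct{a}, \vct{v}\rangle > c\pnorm{\vct{v}}$). Applying this with $\vct{a} = \mtx{\Sigma}^{-1/2}\vct{u} + \tfrac{\theta\tau_h}{\alpha}\tmu$ and $c = \tfrac{\tau_h\gamma_0}{\alpha\pnorm{\vct{\mu}}}$ gives $f^*(\vct{u}) = \ind_S(\vct{u})$ with $S$ as stated (after using the identification $\gamma = \gamma_0/\pnorm{\vct{\mu}}$ introduced earlier in the proof).

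This is a routine computation, so there is no substantive obstacle — the only point meriting care is the change of variables, which is clean because $\mtx{\Sigma}\succ 0$; if one wanted to avoid this assumption, the same argument goes through with the Moore–Penrose pseudoinverse on the range of $\mtx{\Sigma}^{1/2}$ and an additional indicator enforcing $\vct{u}$ to lie in that range, but the proof as stated uses $\mtx{\Sigma}^{-1/2}$ throughout and can proceed without modification.
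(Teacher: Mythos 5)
Your proposal is correct and follows essentially the same route as the paper's proof: group the linear terms, rewrite the pairing as $\bigl\langle \mtx{\Sigma}^{-1/2}\vct{u} + \tfrac{\theta\tau_h}{\alpha}\tmu,\ \mtx{\Sigma}^{1/2}\tw\bigr\rangle$, and conclude via the conjugate of a scaled $\ell_p$ norm (the paper phrases this last step as a direct application of H\"older's inequality rather than an explicit change of variables, but the content is identical). No gaps.
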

The proof of Lemma~\ref{lem:f-conjugate} is delegated to Appendix~\ref{proof:lem:f-conjugate}.

Define $\cal{B} : = \{\vct{\mu}\}^\perp \cap -S$. Then \eqref{eq:conj3} implies that
\begin{align}\label{eq:conj4}
\pproj_{\vct{\mu}} \vct{t}^* = \proj_{\cal{B}}\left(\pproj_{\vct{\mu}} \left(\frac{\beta}{\sqrt{n}} \vct{h}\right)\right) \,.
\end{align}
\begin{lemma}\label{lem:proj}
For a convex set $\cS$ and $\cal{B} : = \{\vct{\mu}\}^\perp \cap S$, we have
$\proj_{\cal{B}} \pproj_{\vct{\mu}}  = \proj_{\cal{B}}$.
\end{lemma}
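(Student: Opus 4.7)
The plan is to prove the identity pointwise: for every $\vct{x} \in \reals^d$, I will show $\proj_{\cB}(\vct{x}) = \proj_{\cB}(\pproj_{\vct{\mu}} \vct{x})$ by a direct Pythagorean decomposition argument. The key observation is that $\cB = \{\vct{\mu}\}^\perp \cap S$ is, by construction, a subset of the hyperplane $\{\vct{\mu}\}^\perp$, so every candidate point in the projection lives in $\{\vct{\mu}\}^\perp$.

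First, I would decompose any $\vct{x}$ orthogonally as $\vct{x} = \pproj_{\vct{\mu}}\vct{x} + \proj_{\vct{\mu}}\vct{x}$, where $\proj_{\vct{\mu}}\vct{x} = (\tmu^T\vct{x})\,\tmu$ is parallel to $\vct{\mu}$ and $\pproj_{\vct{\mu}}\vct{x}$ is orthogonal to $\vct{\mu}$. For any $\vct{z} \in \cB$, since $\vct{z} \perp \vct{\mu}$, the vector $\pproj_{\vct{\mu}}\vct{x} - \vct{z}$ lies in $\{\vct{\mu}\}^\perp$ and is therefore orthogonal to $\proj_{\vct{\mu}}\vct{x}$. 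By the Pythagorean theorem,
\begin{align*}
\twonorm{\vct{x}-\vct{z}}^2 \;=\; \twonorm{\pproj_{\vct{\mu}}\vct{x}-\vct{z}}^2 \;+\; \twonorm{\proj_{\vct{\mu}}\vct{x}}^2.
\end{align*}

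The second term is independent of $\vct{z}$, so minimizing $\twonorm{\vct{x}-\vct{z}}^2$ over $\vct{z} \in \cB$ is equivalent to minimizing $\twonorm{\pproj_{\vct{\mu}}\vct{x}-\vct{z}}^2$ over $\vct{z}\in \cB$. Since $\cB$ is the intersection of a closed convex set and a closed subspace, it is closed and convex, so both projections exist and are unique, and the two minimizers coincide. This gives $\proj_{\cB}(\vct{x}) = \proj_{\cB}(\pproj_{\vct{\mu}}\vct{x})$, which is the asserted operator identity $\proj_{\cB}\pproj_{\vct{\mu}} = \proj_{\cB}$.

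There is no real obstacle here; the lemma is essentially a statement about projecting onto a set inside a subspace, and the only subtlety is checking that $\cB$ is nonempty/closed convex so that $\proj_{\cB}$ is well defined (which follows from $S$ being convex and $\{\vct{\mu}\}^\perp$ being a closed subspace, assuming $\cB \neq \emptyset$ in the context of the paper's use).
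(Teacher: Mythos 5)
Your proof is correct and uses exactly the same argument as the paper: the orthogonal (Pythagorean) decomposition $\twonorm{\vct{x}-\vct{z}}^2 = \twonorm{\pproj_{\vct{\mu}}\vct{x}-\vct{z}}^2 + \twonorm{\proj_{\vct{\mu}}\vct{x}}^2$ for $\vct{z}\in\cal{B}$, followed by the observation that the second term does not depend on $\vct{z}$. Your added remark on closedness/nonemptiness of $\cal{B}$ is a reasonable (if minor) extra care the paper leaves implicit.
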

We refer to Appendix~\ref{proof:lem:proj} for the proof of Lemma~\ref{lem:proj}.

Using Lemma~\ref{lem:proj} and~\eqref{eq:conj4} we obtain
\[
\pproj_{\vct{\mu}} \vct{t}^* = \proj_{\cal{B}}\left(\frac{\beta}{\sqrt{n}} \vct{h}\right) \,.
\]
Recalling definition of $\vct{t}^*$ this implies
\begin{align}
\pproj_{\vct{\mu}} \tw^* =  \proj_{\cal{B}}\left(\frac{\beta}{\sqrt{n}} \vct{h}\right) - \frac{\beta}{\sqrt{n}} \pproj_{\vct{\mu}} \vct{h}\,.
\end{align}
Now note that for $p> 1$, $\nabla \pnorm{\vct{w}}= \frac{1}{\pnorm{\vct{w}}^{p-1}} [|w_1|^{p-1} \sign(w_1), \dotsc, |w_p|^{p-1} \sign(w_p)]^\sT$. Therefore in this case $\<\nabla \pnorm{\vct{w}},\vct{w}\> = \pnorm{\vct{w}}$. Similarly, for $p=1$ for any $\vct{s}\in\partial \pnorm{\vct{w}}$ we have $\langle \vct{s},\vct{w}\rangle=\pnorm{\vct{w}}$. Therefore, for all $p\ge 1$ for any $\vct{s}\in\partial \pnorm{\vct{w}}$ we have $\langle \vct{s},\vct{w}\rangle=\pnorm{\vct{w}}$.

 Therefore, for the defined function $f$ and any $\vct{s}\in \partial f(\tw)$ there is a vector $\widetilde{\vct{s}}\in\partial \pnorm{\vct{x}}\Big|_{\vct{x}=\mtx{\Sigma}^{1/2}\tw}$ such that
\begin{align}
\<\vct{s}, \tw\> &= \left\<- \mtx{\Sigma}^{1/2}\tmu \frac{\theta \tau_h}{\alpha}+\frac{\tau_h}{\alpha}\frac{\gamma_0}{\pnorm{\vct{\mu}}}\mtx{\Sigma}^{1/2} \widetilde{\vct{s}},\tw\right\>\nn\\
&=  - \<\tw, \mtx{\Sigma}^{1/2}\tmu\> \frac{\theta \tau_h}{\alpha} + \frac{\tau_h}{\alpha} \frac{\gamma_0}{\pnorm{\vct{\mu}}}\left\<\widetilde{\vct{s}},\mtx{\Sigma}^{1/2}\tw \right\>\nn\\
&=  - \<\tw, \mtx{\Sigma}^{1/2}\tmu\> \frac{\theta \tau_h}{\alpha} + \frac{\tau_h}{\alpha} \frac{\gamma_0}{\pnorm{\vct{\mu}}}\pnorm{\mtx{\Sigma}^{1/2}\tw} \nn\\
&= f(\tw)\,.
\end{align}
Therefore by invoking~\eqref{eq:partial-f}
\begin{align}
f(\tw^*) &= \left\<-\pproj_{\vct{\mu}} \left(\frac{\beta}{\sqrt{n}}\vct{h} + \tw^*\right), \tw^* \right\>\nn\\
&= \left\<-\frac{\beta}{\sqrt{n}} \pproj_{\vct{\mu}} \vct{h} -\pproj_{\vct{\mu}} \tw^*, \pproj_{\vct{\mu}} \tw^* \right\>\nn\\
&=\left\<-\proj_{\cal{B}}\left(\frac{\beta}{\sqrt{n}} \vct{h}\right), \proj_{\cal{B}}\left(\frac{\beta}{\sqrt{n}} \vct{h}\right) - \frac{\beta}{\sqrt{n}} \pproj_{\vct{\mu}} \vct{h} \right\>\nn\\
&=- \twonorm{\proj_{\cal{B}}\left(\frac{\beta}{\sqrt{n}} \vct{h}\right)}^2
+\frac{\beta}{\sqrt{n}} \left\<\proj_{\cal{B}}\left(\frac{\beta}{\sqrt{n}} \vct{h}\right), \pproj_{\vct{\mu}} \vct{h} \right\>\,.
\end{align}

Putting things together, the optimal value of objective \eqref{eq:obj-w} over $\vct{w}$ is given by
\begin{align}
&\min_{\tw}\;\;\; \frac{1}{2} \twonorm{\frac{\beta}{\sqrt{n}} \pproj_{\vct{\mu}}\vct{h}+\pproj_{\vct{\mu}}\tw}^2  +f(\tw)\nn\\
&=\frac{1}{2} \twonorm{\frac{\beta}{\sqrt{n}} \pproj_{\vct{\mu}}\vct{h}+\pproj_{\vct{\mu}}\tw^*}^2  +f(\tw^*)\nn\\
&=\frac{1}{2}\twonorm{\proj_{\cal{B}}\left(\frac{\beta}{\sqrt{n}} \vct{h}\right)}^2- \twonorm{\proj_{\cal{B}}\left(\frac{\beta}{\sqrt{n}} \vct{h}\right)}^2
+\frac{\beta}{\sqrt{n}} \left\<\proj_{\cal{B}}\left(\frac{\beta}{\sqrt{n}} \vct{h}\right), \pproj_{\vct{\mu}} \vct{h} \right\>\nn\\
&=-\frac{1}{2} \twonorm{\proj_{\cal{B}}\left(\frac{\beta}{\sqrt{n}} \vct{h}\right)}^2
+\frac{\beta}{\sqrt{n}} \left\<\proj_{\cal{B}}\left(\frac{\beta}{\sqrt{n}} \vct{h}\right), \pproj_{\vct{\mu}} \vct{h} \right\>\nn\\
&= \frac{\beta^2}{2n} \twonorm{\pproj_{\vct{\mu}} \vct{h}}^2 - \frac{1}{2} \twonorm{\proj_{\cal{B}}\left(\frac{\beta}{\sqrt{n}} \vct{h}\right) - \frac{\beta}{\sqrt{n}} \pproj_{\vct{\mu}} \vct{h}}^2\nn\\
&= \frac{\beta^2}{2n} \twonorm{\pproj_{\vct{\mu}} \vct{h}}^2 - \frac{1}{2} \twonorm{\proj_{\cal{B}}\left(\frac{\beta}{\sqrt{n}} \vct{h}\right) - \frac{\beta}{\sqrt{n}}\vct{h}}^2 + \frac{\beta^2}{2n} \twonorm{\proj_{\vct{\mu}}\vct{h}}^2\nn\\
&= \frac{\beta^2}{2n} \twonorm{\vct{h}}^2 - \frac{1}{2} \twonorm{\proj_{\cal{B}}\left(\frac{\beta}{\sqrt{n}} \vct{h}\right) - \frac{\beta}{\sqrt{n}}\vct{h}}^2\,. \label{eq:opt-w-final}
\end{align}
Following the argument after \eqref{eq:OPTw1} since the objective is jointly convex in $(\alpha, \gamma, \theta, \tau_g)$ and jointly concave in $(\beta, \vct{w},\tau_h)$ and partial maximization preserves concavity after plugging the above the objective is jointly convex in $(\alpha, \gamma, \theta, \tau_g)$ and jointly concave in $(\beta, \tau_h)$.

\item {\bf On projection $\proj_{\cal{B}}$}. As part of our scalarization process of the auxiliary optimization problem, in the next lemma we provide an alternative characterization of the distance $\twonorm{\proj_{\mathcal{B}}(\vct{h}) - \vct{h}}$, and refer to Appendix~\ref{proof:lem:proj-main} for its proof.
\begin{lemma}\label{lem:proj-main}
Recall the set $\mathcal{B}: = \{\vct{\mu}\}^\perp\cap -\cal{S}$, where $\cal{S}$ is given by
\[
S:=\left\{\vct{u}:\quad\qnorm{\mtx{\Sigma}^{-1/2}\vct{u} +  \frac{\tau_h\theta}{\alpha}\tmu}\le \frac{\tau_h}{\alpha}\frac{\gamma_0}{\pnorm{\vct{\mu}}}\right\}\,.
\]
Also, suppose that $\mtx{\Sigma}^{1/2}\tmu = a\tmu$. Then, for any vector $\vct{h}$ the following holds:
\begin{align}
\frac{1}{2}\twonorm{\proj_{\cal{B}}\left(\vct{h}\right) - \vct{h}}^2 =
\sup_{\lambda\ge 0,\nu}\text{ } \;  e_{q,\mtx{\Sigma}}\left(\mtx{\Sigma}^{-1/2}\vct{h}- \left(\frac{\tau_h\theta}{\alpha}+\frac{\nu}{a}\right)\tmu;\lambda\right)
- \lambda \left(\frac{\gamma_0}{\pnorm{\vct{\mu}}}\frac{\tau_h}{\alpha}\right)^q +\nu \tmu^T\vct{h} -  \frac{\nu^2}{2}\end{align}
\end{lemma}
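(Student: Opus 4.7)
The plan is to reformulate $\tfrac{1}{2}\twonorm{\proj_{\mathcal{B}}(\vct{h})-\vct{h}}^2$ as a constrained convex program, dualize both the $\ell_q$-ball and the hyperplane constraints, and then recognize the resulting unconstrained inner minimization as the weighted Moreau envelope $e_{q,\mtx{\Sigma}}$ via a change of variable and completion of the $\mtx{\Sigma}$-weighted square, relying throughout on the spiked identity $\mtx{\Sigma}^{1/2}\tmu=a\tmu$.

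First, I would write the projection distance as
\begin{equation*}
\tfrac{1}{2}\twonorm{\proj_{\mathcal{B}}(\vct{h})-\vct{h}}^2=\min_{\vct{u}}\Big\{\tfrac{1}{2}\twonorm{\vct{u}-\vct{h}}^2:\;\tmu^T\vct{u}=0,\;\qnorm{\mtx{\Sigma}^{-1/2}\vct{u}-c\tmu}\le R\Big\},
\end{equation*}
where $c:=\tau_h\theta/\alpha$ and $R:=\tau_h\gamma_0/(\alpha\pnorm{\vct{\mu}})$; the sign on $c$ arises from $\vct{u}\in\mathcal{B}\Leftrightarrow-\vct{u}\in S$ together with the symmetry $\qnorm{-\vct{x}}=\qnorm{\vct{x}}$. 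Changing variable to $\vct{w}=\mtx{\Sigma}^{-1/2}\vct{u}$ and using $\tmu^T\mtx{\Sigma}^{1/2}=a\tmu^T$, the orthogonality constraint becomes $\tmu^T\vct{w}=0$, the $\ell_q$ constraint becomes $\qnorm{\vct{w}-c\tmu}\le R$, and the objective becomes $\tfrac{1}{2}\twonorm{\mtx{\Sigma}^{1/2}\vct{w}-\vct{h}}^2$.

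Next, I would introduce multipliers $\lambda\ge 0$ for $\qnorm{\vct{w}-c\tmu}^q\le R^q$ and $\nu$ for the equality constraint (which, after the substitution, reads $a\tmu^T\vct{w}=0$ and so contributes the Lagrangian term $\nu a\tmu^T\vct{w}$). Since the problem is a convex minimization whose Slater condition holds (any $\vct{w}$ in the relative interior of $\{\tmu\}^\perp$ is feasible when $R>0$), standard convex duality lets me swap the outer minimization over $\vct{w}$ with the inner maximization over $(\lambda,\nu)$. Translating by $\vct{w}'=\vct{w}-c\tmu$ and using $\mtx{\Sigma}\tmu=a^2\tmu$ to rewrite the cross terms in an $\mtx{\Sigma}$-weighted form, the inner minimization becomes
\begin{equation*}
\min_{\vct{w}'}\;\tfrac{1}{2}(\vct{w}'-\vct{p})^T\mtx{\Sigma}(\vct{w}'-\vct{p})+\lambda\qnorm{\vct{w}'}^q\;-\;\tfrac{1}{2}\vct{p}^T\mtx{\Sigma}\vct{p}+K(\lambda,\nu),
\end{equation*}
with $\vct{p}:=\mtx{\Sigma}^{-1/2}\vct{h}-\bigl(c+\nu/a\bigr)\tmu$ and $K(\lambda,\nu)$ collecting the $\vct{w}'$-independent terms. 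The first two summands are exactly $e_{q,\mtx{\Sigma}}(\vct{p};\lambda)$ by Definition~\ref{def:WME}, which is precisely the expression appearing inside the supremum in the claim.

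Finally, I would collect all constants. Expanding $\tfrac{1}{2}\twonorm{ca\tmu-\vct{h}}^2$ together with $\nu ac$ and $-\lambda R^q$ yields $\tfrac{1}{2}c^2a^2-ca\tmu^T\vct{h}+\tfrac{1}{2}\twonorm{\vct{h}}^2+\nu ac-\lambda R^q$, while $\tmu^T\mtx{\Sigma}\tmu=a^2$ and $\tmu^T\mtx{\Sigma}^{1/2}=a\tmu^T$ give $\tfrac{1}{2}\vct{p}^T\mtx{\Sigma}\vct{p}=\tfrac{1}{2}\twonorm{\vct{h}}^2-(ac+\nu)\tmu^T\vct{h}+\tfrac{1}{2}(ac+\nu)^2$. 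Subtracting the latter from the former, the $\twonorm{\vct{h}}^2$ and all $c$-dependent pieces cancel, leaving exactly $\nu\tmu^T\vct{h}-\tfrac{1}{2}\nu^2-\lambda R^q$, matching the claimed identity after substituting $R^q=\bigl(\tfrac{\gamma_0}{\pnorm{\vct{\mu}}}\tfrac{\tau_h}{\alpha}\bigr)^q$. The main obstacle is this bookkeeping of constants — keeping straight the various factors of $a$ arising from $\mtx{\Sigma}^{1/2}\tmu=a\tmu$ versus $\mtx{\Sigma}\tmu=a^2\tmu$ in both the completion-of-squares and the dualization steps — and verifying the cancellation that eliminates $c$ entirely from the constants.
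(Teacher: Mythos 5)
Your proof is correct and follows essentially the same route as the paper's: both formulate the projection as a constrained convex program, dualize the $\ell_q$-ball constraint (raised to the $q$-th power) with multiplier $\lambda$ and the hyperplane constraint with multiplier $\nu$, and complete the $\mtx{\Sigma}$-weighted square to expose $e_{q,\mtx{\Sigma}}$ evaluated at $\mtx{\Sigma}^{-1/2}\vct{h}-(\tau_h\theta/\alpha+\nu/a)\tmu$. The only cosmetic difference is that the paper completes the square in the Euclidean norm before converting to the $\mtx{\Sigma}$-norm, whereas you expand $\vct{p}^T\mtx{\Sigma}\vct{p}$ directly; your bookkeeping of the constants checks out and yields the same cancellation.
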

Using equation~\eqref{eq:opt-w-final} along with Lemma~\ref{lem:proj-main} we have
\begin{align}
&\min_{\tw}\;\;\; \frac{1}{2} \twonorm{\frac{\beta}{\sqrt{n}} \pproj_{\vct{\mu}}\vct{h}+\pproj_{\vct{\mu}}\tw}^2  +f(\tw)\nn\\
&= \inf_{\lambda\ge 0,\nu}\text{ } \; \frac{\beta^2}{2n} \twonorm{\vct{h}}^2+ {\lambda}\left(\frac{\tau_h}{\alpha}\frac{\gamma_0}{\pnorm{\vct{\mu}}}\right)^q  - \frac{\nu\beta}{\sqrt{n}} \tmu^T\vct{h} + \frac{\nu^2}{2}
- e_{q,\mtx{\Sigma}}\left(\frac{\beta}{\sqrt{n}}\mtx{\Sigma}^{-1/2}\vct{h}- \left(\frac{\tau_h\theta}{\alpha}+\frac{\nu}{a}\right)\tmu;\lambda\right) 
\end{align}

Recalling equation~\eqref{eq:OPTw1} we have
\begin{align}\label{eq:OPTw5}
&\min_{\vct{w}}\;\;\; \alpha\twonorm{\frac{\beta}{\sqrt{n}} \pproj_{\vct{\mu}}\vct{h}+\pproj_{\vct{\mu}}\mtx{\Sigma}^{-1/2}\pproj_{\vct{\mu}}\vct{w}} - \vct{w}^T\tmu\theta+ \frac{\gamma_0}{\pnorm{\vct{\mu}}} \pnorm{\vct{w}} \nn\\
&= \min_{\tau_h,\lambda\ge0,\nu}
\frac{\alpha}{\tau_h}\left\{\frac{\beta^2}{2n} \twonorm{\vct{h}}^2+ \lambda \left(\frac{\tau_h}{\alpha}\frac{\gamma_0}{\pnorm{\vct{\mu}}}\right)^q  - \frac{\nu\beta}{\sqrt{n}} \tmu^T\vct{h} + \frac{\nu^2}{2}
- e_{q,\mtx{\Sigma}}\left(\frac{\beta}{\sqrt{n}}\mtx{\Sigma}^{-1/2}\vct{h}- \left(\frac{\tau_h\theta}{\alpha}+\frac{\nu}{a}\right)\tmu;\lambda\right) \right\} \nn\\
&\quad\quad\quad\quad +\frac{\alpha\tau_h}{2}\nn\\
&= \min_{\tau_h,\lambda\ge0,\nu}
\frac{\alpha}{\tau_h}\left\{\frac{\beta^2}{2n} \twonorm{\vct{h}}^2+ \lambda_0 \left(\frac{\tau_h\gamma_0}{\alpha}\right)^q  - \frac{\nu\beta}{\sqrt{n}} \tmu^T\vct{h} + \frac{\nu^2}{2}
- e_{q,\mtx{\Sigma}}\left(\frac{\beta}{\sqrt{n}}\mtx{\Sigma}^{-1/2}\vct{h}- \left(\frac{\tau_h\theta}{\alpha}+\frac{\nu}{a}\right)\tmu;\lambda_0 \pnorm{\vct{\mu}}^q\right) \right\}\nn\\
&\quad\quad\quad\quad +\frac{\alpha\tau_h}{2}
\end{align}
where we used the reparameterization  $\lambda_0 := \frac{\lambda}{\pnorm{\vct{\mu}}^q}$. Next we use Assumption~\ref{ass:converging} to take the limit of the above expression as $n\to \infty$. By definition of function $\sE$ we have
\[
\lim_{n\to\infty} e_{q,\mtx{\Sigma}}\left(\frac{\beta}{\sqrt{n}}\mtx{\Sigma}^{-1/2}\vct{h}- \left(\frac{\tau_h\theta}{\alpha}+\frac{\nu}{a}\right)\tmu;\lambda_0 \pnorm{\vct{\mu}}^q\right) = \sE\left(\beta,\left(\frac{\tau_h\theta}{\alpha} + \frac{\nu}{a}\right) ;\lambda_0\right)\,.
\]
 Also, since $\vct{h}\sim\normal(0,\mtx{I}_d)$ we have
 \[
 \lim_{n\to\infty} \frac{1}{n}\twonorm{\vct{h}}^2 = \frac{1}{\delta} \,,\quad  \lim_{n\to\infty}\frac{1}{\sqrt{n}} \tmu^T\vct{h}= 0\,.
 \] 
Using the above two equations in \eqref{eq:OPTw5} we have
\begin{align}
&\lim_{n\to\infty} \min_{\bw}\;\left\{\alpha\twonorm{\frac{\beta}{\sqrt{n}} \pproj_{\vct{\mu}}\vct{h}+\pproj_{\vct{\mu}}\mtx{\Sigma}^{-1/2}\pproj_{\vct{\mu}}\vct{w}} - \vct{w}^T\tmu \theta+\frac{\gamma_0}{\pnorm{\vct{\mu}}} \pnorm{\vct{w}} \right\}\nn\\
&= \frac{\alpha}{\tau_h}\left\{\frac{\beta^2}{2\delta} + \lambda_0 \left(\frac{\gamma_0\tau_h}{\alpha}\right)^q + \frac{\nu^2}{2}
- \sE\left(\beta,\left(\frac{\tau_h\theta}{\alpha} + \frac{\nu}{a}\right) ;\lambda_0\right)\right\} +\frac{\alpha\tau_h}{2}\,.
\end{align}
Finally, incorporating the above equation in \eqref{eq:AO-final00} and using Assumption~\ref{ass:norm-mu}, the AO problem simplifies to:
\begin{align}
\max_{0\le\beta\le K}\;\; \min_{0\le \alpha, |\theta| \le K',0<\gamma_0<K'',0<\tau_g}\;\; 
 &\frac{\beta \tau_g}{2} +L\left(\sqrt{\alpha^2+a^2\theta^2},V \theta-\eps_0 \gamma_0,\frac{\tau_g}{\beta}\right) \nn \\
&- \min_{\tau_h,\lambda_0\ge0,\nu} \left[\frac{\alpha}{\tau_h}\left\{\frac{\beta^2}{2\delta} + \lambda_0 \left(\frac{\gamma_0\tau_h}{\alpha}\right)^q  + \frac{\nu^2}{2}
-\sE\left(\beta,\left(\frac{\tau_h\theta}{\alpha} + \frac{\nu}{a}\right) ;\lambda_0\right)\right\} +\frac{\alpha\tau_h}{2}\right]\label{eq:AO-final-02}
\end{align}

Now recall the argument after \eqref{eq:opt-w-final} that the objective is jointly convex in $(\alpha, \gamma, \theta, \tau_g)$ and jointly concave in $(\beta,\tau_h)$. We used Lemma \ref{lem:proj-main} to provide alternative characterization for quantity $\twonorm{\proj_{\cal{B}}\left(\vct{h}\right) - \vct{h}}^2$, which led into introducing the new variables $\lambda_0, \nu$. Therefore, the objective~\eqref{eq:AO-final-02}, after maximization over $\lambda_0,\nu$, is jointly convex in $(\alpha, \gamma_0, \theta, \tau_g)$ and jointly concave in $(\beta,\tau_h)$. Because of that we can interchange the order of minimization and minimization over using Sion's minimax theorem to get the following.
\begin{align}
&\min_{\theta, 0\le \alpha, \gamma_0, \tau_g}\;\;\max_{0\le\beta, \tau_h} \;\; D_{\rm ns}(\alpha, \gamma_0, \theta, \tau_g, \beta,\tau_h)\nn\\
& D_{\rm ns}(\alpha, \gamma_0, \theta, \tau_g, \beta,\tau_h) = \frac{\beta \tau_g}{2} +L\left(\sqrt{\alpha^2+a^2\theta^2},V \theta-\eps_0 \gamma_0,\frac{\tau_g}{\beta}\right) \nn \\
&\quad\quad\quad\quad\quad\quad\quad\quad\quad- \min_{\lambda_0\ge0,\nu} \left[\frac{\alpha}{\tau_h}\left\{\frac{\beta^2}{2\delta} + \lambda_0 \left(\frac{\gamma_0\tau_h}{\alpha}\right)^q  + \frac{\nu^2}{2}
-\sE\left(\beta,\left(\frac{\tau_h\theta}{\alpha} + \frac{\nu}{a}\right) ;\lambda_0\right)\right\} +\frac{\alpha\tau_h}{2} \right]\,.
\label{eq:AO-final2}
\end{align}
\end{itemize}
%========
\subsubsection{Uniform convergence of the auxiliary problem to its scalarized version}
We showed that the auxiliary optimization objective converges pointwise to the function $D_{\rm ns}$ given by \eqref{eq:AO-final2}. However, we are interested in the minimax optimal solution of the auxiliary problem and need to have convergence of optimal points to the minimax solution of $D$. What is required for this aim is (local) uniform convergence of the auxiliary objective to function $D$. This can be shown by following
similar arguments as in \cite[Lemma A.5]{thrampoulidis2018precise} that is essentially based on a result known as “convexity
lemma” in the literature (see e.g. \cite[Lemma 7.75]{StatDecision}) by which pointwise convergence of convex
functions implies uniform convergence in compact subsets.
\subsubsection{Uniqueness of the solution of the AO problem}
First note that since the loss $\ell(t)$ is a convex function and $\frac{1}{2\mu}(x-t)^2$ is jointly convex in $(x,t,\mu)$, then $\frac{1}{2\mu}(x-t)^2 + \ell(t)$ is jointly convex in $(x,t,\mu)$. Given that partial minimization preserves convexity, the Moreau envelope $e_\ell(x;\mu)$ is jointly convex in $(x,\mu)$. In addition, by using the result of \cite[Lemma 4.4]{thrampoulidis2018precise} the expected Moreau envelope of a convex function is jointly ``strictly'' convex (indeed this holds without requiring any strong
or strict convexity assumption on the function itself). An application of this result to our case implies that $L\left(a,b,\mu\right)$ is jointly strictly convex in $\reals_{\ge0}\times \reals\times \reals_{\ge0}$. 

In addition, as we argued before the function $D_{\rm ns}$ given by \eqref{eq:AO-final2} is  jointly convex in $(\alpha, \gamma_0, \theta, \tau_g)$ and jointly concave in $(\beta,\tau_h)$. Hence, using strict convexity of $L\left(a,b,\mu\right)$, the function $D_{\rm ns}$ is indeed jointly ``strictly'' convex in $(\alpha, \gamma_0, \theta, \tau_g)$ and jointly concave in $(\beta,\tau_h)$. 

As the next step, we note that $\max_{\beta,\tau_h} D(\alpha,\gamma_0,\theta,\tau_g,\beta,\tau_h)$ is strictly convex in $(\alpha, \gamma_0, \theta, \tau_g)$. This follows from the fact that if a function $f(\vct{x},\vct{y})$ is strictly convex in $\vct{x}$, then $\max_{\vct{y}} f(\vct{x},\vct{y})$ is also strictly convex in $\vct{x}$. Moreover, by using the result of \cite[Lemma C.5]{thrampoulidis2018precise} we have that $\inf_{\tau_g>0} \max_{\beta,\tau_h} D(\alpha,\gamma_0,\theta,\tau_g,\beta,\tau_h)$ is strictly convex in $(\alpha,\gamma_0,\theta)$ and therefore has a unique minimizer $(\alpha_*,\gamma_{0*},\theta_*)$.
This concludes the part (a) of the theorem and the given scalar minimax optimization to characterize the limiting behavior of parameter of interest $\alpha,\gamma_0,\theta$.

Part (b) of the theorem follows readily from our definition of parameters $\alpha$, $\theta$ and  $\gamma$. Part (c) of the theorem also follows from combining Lemma~\ref{lem:SR-AR} with part (b) of the theorem.

This completes the proof of Theorem~\ref{thm:isotropic-nonseparable-aniso}.

%=========================
\subsection{Proof of Remark \ref{rem:p-q-2-Sigma}}\label{proof:rem:p-q-2-Sigma}
We start by establishing an explicit expression for the weighted Moreau envelope $e_{q,\mtx{\Sigma}}$ for 
case of $p=q=2$.
 \begin{lemma}\label{lem:q=2}
We have
\[
e_{2,\mtx{\Sigma}} (\vct{x};\lambda) = \lambda \twonorm{(\mtx{\Sigma} +2\lambda\mtx{I})^{-1/2}\mtx{\Sigma}^{1/2}\vct{x}}^2 
\]
\end{lemma}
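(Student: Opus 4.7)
By the definition of the weighted Moreau envelope (Definition~\ref{def:WME}) specialized to $f(\vct{v}) = \twonorm{\vct{v}}^2$, I would start by writing
\begin{align*}
e_{2,\mtx{\Sigma}}(\vct{x};\lambda) = \min_{\vct{v}} \;\tfrac{1}{2}(\vct{x}-\vct{v})^T \mtx{\Sigma} (\vct{x}-\vct{v}) + \lambda \twonorm{\vct{v}}^2.
\end{align*}
The objective is a strictly convex quadratic in $\vct{v}$ (assuming $\lambda>0$), so the unique minimizer is obtained from the first-order condition $-\mtx{\Sigma}(\vct{x}-\vct{v}) + 2\lambda \vct{v} = 0$, yielding $\vct{v}^\star = (\mtx{\Sigma}+2\lambda \mtx{I})^{-1}\mtx{\Sigma}\vct{x}$.

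Next I would substitute $\vct{v}^\star$ back into the objective. A short calculation gives $\vct{x}-\vct{v}^\star = 2\lambda (\mtx{\Sigma}+2\lambda\mtx{I})^{-1}\vct{x}$, so
\begin{align*}
e_{2,\mtx{\Sigma}}(\vct{x};\lambda)
= 2\lambda^2 \vct{x}^T (\mtx{\Sigma}+2\lambda \mtx{I})^{-1} \mtx{\Sigma} (\mtx{\Sigma}+2\lambda \mtx{I})^{-1} \vct{x}
+ \lambda\, \vct{x}^T \mtx{\Sigma} (\mtx{\Sigma}+2\lambda \mtx{I})^{-2} \mtx{\Sigma} \vct{x}.
\end{align*}

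Finally, since $\mtx{\Sigma}$ and $\mtx{\Sigma}+2\lambda\mtx{I}$ commute and are simultaneously diagonalizable, I can combine the two terms cleanly by factoring out $\lambda (\mtx{\Sigma}+2\lambda\mtx{I})^{-1}\mtx{\Sigma}(\mtx{\Sigma}+2\lambda\mtx{I})^{-1}$ and noting that the remaining factor is $2\lambda \mtx{I} + \mtx{\Sigma} = \mtx{\Sigma}+2\lambda\mtx{I}$. This collapses the expression to $\lambda \vct{x}^T \mtx{\Sigma}(\mtx{\Sigma}+2\lambda\mtx{I})^{-1}\vct{x} = \lambda \twonorm{(\mtx{\Sigma}+2\lambda\mtx{I})^{-1/2}\mtx{\Sigma}^{1/2}\vct{x}}^2$, which is the claim. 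One can verify the identity painlessly in the eigenbasis of $\mtx{\Sigma}$: if $s$ is an eigenvalue of $\mtx{\Sigma}$ then $\tfrac{2\lambda^2 s}{(s+2\lambda)^2} + \tfrac{\lambda s^2}{(s+2\lambda)^2} = \tfrac{\lambda s}{s+2\lambda}$, matching coordinate-wise.

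There is no substantial obstacle here; the argument is a direct computation using the closed-form minimizer and commutativity. The only minor care required is handling $\lambda=0$ (where the envelope is trivially zero, matching the formula under the convention $\mtx{\Sigma}(\mtx{\Sigma}+0)^{-1}=\mtx{I}$ on the range of $\mtx{\Sigma}$) and invertibility of $\mtx{\Sigma}+2\lambda\mtx{I}$, which holds whenever $\lambda>0$ since $\mtx{\Sigma}\succeq 0$.
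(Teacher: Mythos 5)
Your proof is correct and follows essentially the same route as the paper's: both derive the minimizer $\vct{v}^\star = (\mtx{\Sigma}+2\lambda\mtx{I})^{-1}\mtx{\Sigma}\vct{x}$ from the first-order condition and then simplify the substituted objective using the fact that $\mtx{\Sigma}$ and $\mtx{\Sigma}+2\lambda\mtx{I}$ are simultaneously diagonalizable (the paper writes this out via the SVD of $\mtx{\Sigma}$, you invoke commutativity directly). Your coordinate-wise check $\tfrac{2\lambda^2 s}{(s+2\lambda)^2}+\tfrac{\lambda s^2}{(s+2\lambda)^2}=\tfrac{\lambda s}{s+2\lambda}$ and the remarks about $\lambda=0$ and invertibility are fine but not needed beyond what the paper does.
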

The proof of Lemma~\ref{lem:q=2} is given in Appendix~\ref{proof:lem:q=2}.

Suppose that items $(i), (ii)$ in the statement of the remark are satisfied. We then prove that Assumption~\ref{ass:converging2} and \ref{ass:converging} hold.

\begin{proof}[Verification of Assumption~\ref{ass:converging2}] To check Assumption~\ref{ass:converging2} for $p=q=2$, we use Lemma~\ref{lem:q=2} to get
 \begin{align}
&\lim_{n\to\infty}  e_{2,\mtx{I}+b_0\mtx{\Sigma}}\left((\mtx{I}+ b_0\mtx{\Sigma})^{-1} \left\{\frac{c_0}{2\sqrt{n}} \mtx{\Sigma}^{1/2} \pproj_{\vct{\mu}}\vct{h} - 
 \frac{c_1}{2} \tmu\right\};b_1 \twonorm{\vct{\mu}}^2\right)\nn\\
 &= \lim_{n\to\infty} b_1\twonorm{\vct{\mu}}^2 \twonorm{\left((1+2b_1\twonorm{\vct{\mu}}^2)\mtx{I} +b_0\mtx{\Sigma}\right)^{-1/2} (\mtx{I}+b_0\mtx{\Sigma})^{-1/2} \left(\frac{c_0}{2\sqrt{n}} \mtx{\Sigma}^{1/2} \pproj_{\vct{\mu}}\vct{h} - 
 \frac{c_1}{2} \tmu\right)}^2\label{eq:ass-q02}
 \end{align}
 Consider a singular value decomposition $\mtx{\Sigma} = \mtx{U}\mtx{S}\mtx{U}^T$ with $\mtx{S} = \diag{s_1,\dotsc, s_d}$, and the first column of $\mtx{U}$ being $\tmu$ and $s_{1} = a^2$ (Recall that $\tmu$ is a singular value of $\mtx{\Sigma}$ with eigenvalue $a^2$.) Also let $\tilde{\vct{h}}:= \mtx{U}^T\vct{h}\sim\normal(0,\mtx{I}_d)$. Continuing from~\eqref{eq:ass-q02} we write
 \begin{align}
 &\lim_{n\to\infty}  e_{2,\mtx{I}+b_0\mtx{\Sigma}}\left((\mtx{I}+ b_0\mtx{\Sigma})^{-1} \left\{\frac{c_0}{2\sqrt{n}} \mtx{\Sigma}^{1/2} \pproj_{\vct{\mu}}\vct{h} - 
 \frac{c_1}{2} \tmu\right\};b_1 \twonorm{\vct{\mu}}^2\right)\nn\\
 &=\lim_{n\to\infty} \frac{1}{n} b_1\twonorm{\vct{\mu}}^2
 \twonorm{\mtx{U}\left((1+2b_1\twonorm{\vct{\mu}}^2)\mtx{I} +b_0\mtx{S}\right)^{-1/2}(\mtx{I}+b_0\mtx{S})^{-1/2} 
 \mtx{U}^T \left(\frac{c_0}{2} \mtx{\Sigma}^{1/2} \pproj_{\vct{\mu}}\vct{h} - 
 \frac{c_1}{2} \sqrt{n}\tmu\right) }^2\label{eq:ass-equil0}
 \end{align}
 Write $\mtx{U} = [\tmu, \tilde{\mtx{U}}]$ and $\tilde{\mtx{S}} = \diag{s_2,\dotsc, s_d}$. In addition, define $\tilde{\vct{h}}:= \tilde{\mtx{U}}^T\vct{h}\sim\normal(0,\mtx{I}_{d-1})$. We then have
 \[
\mtx{U}^T \mtx{\Sigma}^{1/2} \pproj_{\vct{\mu}}\vct{h} = \begin{pmatrix} 0 \\ \tilde{\mtx{S}^{1/2}} \tilde{\vct{h}}\end{pmatrix},
\quad \mtx{U}^T\tmu = \vct{e}_1\,,\quad \lim_{n\to\infty} \twonorm{\vct{\mu}} = \sigma_{M,2}\,,
 \]
 in probability, with the last limit following from item $(i)$ in the statement Remark~\ref{rem:p-q-2-Sigma}. Using the above identities in~\eqref{eq:ass-equil0} we get
  \begin{align}
 &\lim_{n\to\infty}  e_{2,\mtx{I}+b_0\mtx{\Sigma}}\left((\mtx{I}+ b_0\mtx{\Sigma})^{-1} \left\{\frac{c_0}{2\sqrt{n}} \mtx{\Sigma}^{1/2} \pproj_{\vct{\mu}}\vct{h} - 
 \frac{c_1}{2} \tmu\right\};b_1 \twonorm{\vct{\mu}}^2\right)\nn\\
  &= \lim_{n\to\infty} \frac{b_1\sigma_{M,2}^2}{n} \left\{\frac{c_1^2 n}{4} \cdot \frac{1}{(1+2b_1\sigma_{M,2}^2+b_0 a^2)(1+b_0a^2)} +
  \frac{c_0^2}{4}\sum_{i=2}^d \frac{ s_i \tilde{h}_i^2}{(1+b_0s_i)(1+2b_1\sigma_{M,2}^2+b_0 s_i)} \right\}\nn\\
  &=  {b_1\sigma_{M,2}^2} \left\{  \frac{c_1^2}{4(1+b_0a^2)(1+2b_1\sigma_{M,2}^2+b_0 a^2)} + \frac{c_0^2}{4\delta} \lim_{d\to\infty}
  \frac{1}{d}\sum_{i=2}^d \frac{s_i \tilde{h}_i^2}{(1+b_0s_i) (1+2b_1\sigma_{M,2}^2+b_0 s_i)} \right\}\label{eq:ass-equil1}
 \end{align}
 Define $\nu_i: = s_i(1+b_0s_i)^{-1} (1+2b_1\sigma_{M,2}^2+b_0 s_i)^{-1}$. Then the last sum reads as $\frac{1}{d}\sum_{i=2}^d \nu_i \tilde{h}_i^2$. Recall that $\tilde{\vct{h}}\sim\normal(0,\mtx{I}_{d-1})$. Therefore, by applying the Kolmogorov’s criterion of SLLN the above limit exists (almost surely and so in probability as well) provided that $\frac{1}{d^2}\sum_{i=2}^d \nu_i^2\Var(\tilde{h}_i^2) <\infty$. We note that $Var(\tilde{h}_i^2) = 2$ and since $\nu_i\ge0$, we have
 \[
 \frac{1}{d^2}\sum_{i=2}^d \nu_i^2 \le \left(\frac{1}{d} \sum_{i=2}^d \nu_i\right)^2\,.
 \]
 Hence it suffices to show that $\frac{1}{d} \sum_{i=2}^d \nu_i<\infty$.
 Now by item $(ii)$ of Remark~\ref{rem:p-q-2-Sigma}, the empirical distribution of eigenvalues of $\mtx{\Sigma}$ converges weakly to a distribution $\rho$ with Stieltjes transform $S_{\rho}(z): = \int \frac{\rho(t)}{z-t}\de t$. We write
 \begin{align}
 & \frac{ s_i }{(1+b_0s_i)(1+2b_1\sigma_{M,2}^2+b_0 s_i)}\nn\\ 
%&= \frac{1}{1+2b_1\sigma_{M,2}^2} \left\{\frac{s_0}{1+b_0 s_i} - \frac{s_i}{1+2b_1\sigma_{M,2}^2+b_0 s_i}\right\}\nn\\
 %&= \frac{1}{b_0(1+2b_1\sigma_{M,2}^2)} \left\{\frac{b_0s_0}{1+b_0 s_i} - \frac{b_0s_i}{1+2b_1\sigma_{M,2}^2+b_0 s_i}\right\}\nn\\
&=\frac{1}{2b_0b_1\sigma_{M,2}^2} \left\{-\frac{1}{1+b_0 s_i} + \frac{1+2b_1\sigma_{M,2}^2}{1+2b_1\sigma_{M,2}^2+b_0 s_i}\right\}\nn\\
 &=\frac{1}{2b_0^2b_1\sigma_{M,2}^2} \left\{-\frac{1}{\frac{1}{b_0}+ s_i} + \frac{1+2b_1\sigma_{M,2}^2}{\frac{1+2b_1\sigma_{M,2}^2}{b_0}+ s_i}\right\}
 \end{align}
 Therefore, 
  \begin{align}
  \lim_{d\to\infty} \frac{1}{d}\sum_{i=2}^d \nu_i&= \lim_{d\to\infty} \frac{1}{d}\sum_{i=2}^d \frac{ s_i }{(1+b_0s_i)(1+2b_1\sigma_{M,2}^2+b_0 s_i)}\nn\\
  &=\frac{1}{2b_0^2b_1\sigma_{M,2}^2} \lim_{d\to\infty} \frac{1}{d}\sum_{i=2}^d \left\{-\frac{1}{\frac{1}{b_0}+ s_i} + \frac{1+2b_1\sigma_{M,2}^2}{\frac{1+2b_1\sigma_{M,2}^2}{b_0}+ s_i}\right\}\nn\\
  &= \frac{1}{2b_0^2b_1\sigma_{M,2}^2} \left\{S_{\rho}\left(-\frac{1}{b_0}\right) - (1+2b_1\sigma_{M,2}^2)
  S_\rho\left(-\frac{1+2b_1\sigma_{M,2}^2}{b_0} \right)\right\}\,.\label{eq:ass-equil2}
   \end{align}
   It is worth noting that although the sum is over $2\le i\le d$, the term for $i=1$ is $O(1/d)$ and is negligible in the limit. Therefore, we can include that in our calculation above. By using Equation~\eqref{eq:ass-equil2} in~\eqref{eq:ass-equil1} we get that Assumption~\ref{ass:converging2} holds with
   \begin{align}
   \sF(c_0,c_1;b_0,b_1) =&\frac{b_1\sigma_{M,2}^2 c_1^2}{4(1+b_0a^2)(1+2b_1\sigma_{M,2}^2+b_0 a^2)} \nn\\
   &+
   \frac{b_1\sigma_{M,2}^2 c_0^2}{8\delta b_0^2b_1\sigma_{M,2}^2 }  \left\{S_{\rho}\left(-\frac{1}{b_0}\right) - (1+2b_1\sigma_{M,2}^2)
  S_\rho\left(-\frac{1+2b_1\sigma_{M,2}^2}{b_0} \right)\right\}\,.
   \end{align}
\end{proof} 

\begin{proof}[Verification of Assumption~\ref{ass:converging}]
To check Assumption~\ref{ass:converging} we use Lemma~\ref{lem:q=2} and write
\begin{align}
&\lim_{n\to\infty}  e_{2,\mtx{\Sigma}}\left(\frac{c_0}{\sqrt{n}}\mtx{\Sigma}^{-1/2}\vct{h}- c_1 \tmu; \lambda_0 \twonorm{\vct{\mu}}^2\right)\nn\\
&=\lim_{n\to\infty} \lambda_0 \twonorm{\vct{\mu}}^2 \twonorm{(\mtx{\Sigma} +2\lambda_0 \twonorm{\vct{\mu}}^2 \mtx{I})^{-1/2}\mtx{\Sigma}^{1/2}\left(\frac{c_0}{\sqrt{n}}\mtx{\Sigma}^{-1/2}\vct{h}- c_1 \tmu\right)}^2\nn\\
&=\lim_{n\to\infty} \lambda_0 \twonorm{\vct{\mu}}^2 \twonorm{(\mtx{\Sigma} +2\lambda_0 \twonorm{\vct{\mu}}^2 \mtx{I})^{-1/2}\left(\frac{c_0}{\sqrt{n}}\vct{h}- c_1a \tmu\right)}^2\label{eq:ass-q2}
\end{align}
Consider a singular value decomposition $\mtx{\Sigma} = \mtx{U}\mtx{S}\mtx{U}^T$ with $\mtx{S} = \diag{s_1,\dotsc, s_d}$, and the first column of $\mtx{U}$ being $\tmu$ and $s_{1} = a^2$ (Recall that $\tmu$ is a singular value of $\mtx{\Sigma}$ with eigenvalue $a^2$.) Also let $\tilde{\vct{h}}:= \mtx{U}^T\vct{h}\sim\normal(0,\mtx{I}_d)$. Continuing from~\eqref{eq:ass-q2} we write
\begin{align}
&\lim_{n\to\infty}  e_{2,\mtx{\Sigma}}\left(\frac{c_0}{\sqrt{n}}\mtx{\Sigma}^{-1/2}\vct{h}- c_1 \tmu; \lambda_0 \twonorm{\vct{\mu}}^2\right)\nn\\
&=\lim_{n\to\infty} \frac{1}{n} \lambda_0 \twonorm{\vct{\mu}}^2 \twonorm{\mtx{U}(\mtx{S} +2\lambda_0 \twonorm{\vct{\mu}}^2\mtx{I})^{-1/2} \left(c_0\tilde{\vct{h}}- c_1\sqrt{n}a\vct{e}_1\right)}^2\nn\\
&=\lim_{n\to\infty} \frac{\lambda_0\sigma^2_{M,2}}{n} \left\{\frac{(c_0\tilde{h}_1-c_1\sqrt{n}a)^2}{a^2+2\lambda_0 \sigma^2_{M,2}} 
+\sum_{i=2}^d \frac{c_0^2\tilde{h}_i^2}{s_i+2\lambda_0\sigma^2_{M,2}} \right\}\nn\\
&=\lambda_0\sigma^2_{M,2} \left\{\frac{ c_1^2a^2}{a^2+2\lambda_0\sigma^2_{M,2}} 
+ \frac{1}{\delta}\lim_{d\to\infty} \frac{1}{d}\sum_{i=2}^d \frac{c_0^2\tilde{h}_i^2}{s_i+2\lambda_0\sigma^2_{M,2}}\right\} \,.\label{eq:FFFF}
\end{align}
By applying the Kolmogorov’s criterion of SLLN the above limit exists (almost surely and so in probability as well) provided that $ \frac{1}{d^2}\sum_{i=2}^d \frac{1}{(s_i+2\lambda_0\sigma^2_{M,2})^2}<\infty$. Note that since $\lambda_0, s_i\ge0$, we have
\[
\frac{1}{d^2}\sum_{i=2}^d \frac{1}{(s_i+2\lambda_0\sigma^2_{M,2})^2}
\le \frac{1}{d^2}\sum_{i=2}^d \frac{1}{4\lambda_0^2\sigma^4_{M,2}} \to 0\,. 
\]
By using the LLN we obtain that the summation in~\eqref{eq:FFFF} converges (almost surely) to its expectation.
Now recalling item $(ii)$ in Remark~\ref{rem:p-q-2-Sigma}, we know that the empirical distribution of eigenvalues of $\mtx{\Sigma}$ converges weakly to a distribution $\rho$ with Stieltjes transform $S_{\rho}(z): = \int \frac{\rho(t)}{z-t}\de t$, and therefore we have
\begin{align}
\sE(c_0,c_1;\lambda_0)&:= \lim_{n\to\infty} e_{2,\mtx{\Sigma}}\left(\frac{c_0}{\sqrt{n}}\mtx{\Sigma}^{-1/2}\vct{h}- c_1 \tmu; \lambda_0 \twonorm{\vct{\mu}}^2\right)\nn\\
& = \lambda_0\sigma^2_{M,2}\left\{ \frac{c_1^2a^2}{a^2+2\lambda_0\sigma^2_{M,2}} 
- \frac{c_0^2}{\delta} S_{\rho}(-2\lambda_0 \sigma^2_{M,2})\right\}\,.
\end{align}
\end{proof} 

%***********************************************************************************************
%\newpage
 \section{Proofs for isotropic Gaussian model (Section~4)}
This section is devoted to the proof of our theorems for the isotropic Gaussian model. We discuss how these theorems can be derived as special cases of our results for the anisotropic model, after some algebraic simplifications.

The claim of Theorem~\ref{thm:sep-thresh} on the separability threshold is an immediate corollary of Theorem~\ref{thm:sep-thresh-aniso}, with $\mtx{\Sigma}=\mtx{I}_{p\times p}$ and $a = 1$. We next move to the two other theorems on precise characterization of standard and robust accuracy in the separable and non-separable regimes. 
  
\subsection{Proof of Theorem~\ref{thm:isotropic-separable}}  
 Suppose that Assumption~\ref{ass:alternative} in the statement of Theorem~\ref{thm:isotropic-separable} holds. We first show that this assumption implies Assumption~\ref{ass:converging2}, required by Theorem~\ref{thm:isotropic-separable-ansio}, in case of $\mtx{\Sigma} =\mtx{I}$ and then show how Theorem~\ref{thm:isotropic-separable} can be derived as a special case of Theorem~\ref{thm:isotropic-separable-ansio}.
 
 To prove Assumption~\ref{ass:converging2}(b) for isotropic case, we use the following two properties of the weighted Moreau envelop that holds for all $q\ge0$:
 \begin{eqnarray}
 e_{q,\alpha\mtx{I}}(\vct{x};\lambda) &=& \alpha e_{q,\mtx{I}}\left(\vct{x},\frac{\lambda}{\alpha}\right)\,,\\
 \frac{1}{b^2}e_{q,\mtx{I}}\left(b\vct{x};\frac{\lambda}{b^{q-2}} \right)&=& e_{q,\mtx{I}}(\vct{x};\lambda)\,.\label{eq:evn-prop2}
 \end{eqnarray}
 Combining the above two identities we get
 \[
 e_{q,\alpha\mtx{I}}(\alpha^{-1}\vct{x};\lambda) = \alpha e_{q,\mtx{I}}\left(\frac{\vct{x}}{\alpha};\frac{\lambda}{\alpha}\right)
 =  \frac{\alpha}{b^2} e_{q,\mtx{I}}\left(b\frac{\vct{x}}{\alpha};\frac{\lambda}{\alpha b^{q-2}} \right)\,.
 \]
Using the above identity with $\alpha = 1+b_0$ and $b = \alpha\sqrt{d}$ we have
 \begin{align}
 &e_{q,(1+b_0)\mtx{I}}\left((1+ b_0)^{-1} \left\{\frac{c_0}{2\sqrt{n}}  \pproj_{\vct{\mu}}\vct{h} - 
 \frac{c_1}{2} \tmu\right\};b_1 \pnorm{\vct{\mu}}^q\right)\nn\\
 &=\frac{1}{(1+ b_0) d} e_{q,\mtx{I}}\left( \frac{\sqrt{d}c_0}{2\sqrt{n}}  \pproj_{\vct{\mu}}\vct{h} - 
 \frac{c_1\sqrt{d}}{2} \tmu;\frac{b_1 \pnorm{\vct{\mu}}^q}{(1+b_0)^{q-1} d^{\frac{q}{2}-1}}\right)
 \end{align}
We next proceed to take the limit of the above expression as $n\to\infty$. By Assumption~\ref{ass:alternative} we have 
\begin{align}\label{eq:2-p-moment-2}
\pnorm{\vct{\mu}}^q \to \sigma_{M,p}^q  d^{\frac{q}{2}-1}\,,\quad
\twonorm{\vct{\mu}} \to \sigma_{M,2} \,,
\end{align}
with high probability. Also by Assumption~\ref{ass:asymptotic}, we have $n/d\to\delta$. Therefore,
 \begin{align}
 &\lim_{n\to\infty} e_{q,(1+b_0)\mtx{I}}\left((1+ b_0)^{-1} \left\{\frac{c_0}{2\sqrt{n}}  \pproj_{\vct{\mu}}\vct{h} - 
 \frac{c_1}{2} \tmu\right\};b_1 \pnorm{\vct{\mu}}^q\right)\nn\\
 &=\lim_{n\to\infty} \frac{1}{(1+ b_0) d} e_{q,\mtx{I}}\left( \frac{\sqrt{d}c_0}{2\sqrt{n}}  \pproj_{\vct{\mu}}\vct{h} - 
 \frac{c_1\sqrt{d}}{2} \tmu;\frac{b_1 \pnorm{\vct{\mu}}^q}{(1+b_0)^{q-1} d^{\frac{q}{2}-1}}\right)\nn\\
  &=\lim_{n\to\infty} \frac{1}{(1+ b_0) d} e_{q,\mtx{I}}\left( \frac{\sqrt{d}c_0}{2\sqrt{n}}  \pproj_{\vct{\mu}}\vct{h} - 
 \frac{c_1}{2} \frac{\sqrt{d}\vct{\mu}}{\twonorm{\vct{\mu}}};\frac{b_1 \pnorm{\vct{\mu}}^q}{(1+b_0)^{q-1} d^{\frac{q}{2}-1}}\right)\nn\\
 &=\lim_{n\to\infty} \frac{1}{(1+ b_0) d} e_{q,\mtx{I}}\left( \frac{c_0}{2\sqrt{\delta}}  \pproj_{\vct{\mu}}\vct{h} - 
 \frac{c_1}{2} \frac{\sqrt{d}\vct{\mu}}{\sigma_{M,2}};b_1 (1+b_0)^{1-q} \sigma_{M,p}^q\right)\label{eq:ass-ass1-1}
 \end{align}
We next note that by definition of the weighted Moreau envelop we have $e_{q,\mtx{I}}(\vct{x};\lambda) = \sum_{i=1}^d J_q(x_i;\lambda)$. Also,  by Assumption~\ref{ass:converging} the empirical distribution of entries of $\sqrt{d}\vct{\mu}$ converges weakly to distribution $\prob_M$.  Therefore, continuing from~\eqref{eq:ass-ass1-1} we can write
\begin{align}
&\lim_{n\to\infty} e_{q,(1+b_0)\mtx{I}}\left((1+ b_0)^{-1} \left\{\frac{c_0}{2\sqrt{n}}  \pproj_{\vct{\mu}}\vct{h} - 
 \frac{c_1}{2} \tmu\right\};b_1 \pnorm{\vct{\mu}}^q\right)\nn\\
 &\stackrel{(a)}{=} \lim_{n\to\infty} e_{q,(1+b_0)\mtx{I}}\left((1+ b_0)^{-1} \left\{\frac{c_0}{2\sqrt{n}} \vct{h} - 
 \frac{c_1}{2} \tmu\right\};b_1 \pnorm{\vct{\mu}}^q\right)\nn\\
 &=\lim_{n\to\infty} \frac{1}{(1+ b_0) d} \sum_{i=1}^d J_q\left( \frac{c_0 h_i}{2\sqrt{\delta}} - 
 \frac{c_1}{2} \frac{\sqrt{d}\mu_i}{\sigma_{M,2}};b_1 (1+b_0)^{1-q} \sigma_{M,p}^q\right)\nn\\
 &\stackrel{(b)}{=} \frac{1}{1+b_0}\E\left[J_q\left( \frac{c_0 h}{2\sqrt{\delta}} - 
 \frac{c_1M}{2\sigma_{M,2}};b_1 (1+b_0)^{1-q} \sigma_{M,p}^q\right) \right] = (1+b_0)^{-1}\EJ\left(\frac{c_0}{2},\frac{c_1}{2}; b_1 (1+b_0)^{1-q}\right)\,,\label{eq:ass-ass1-2}
\end{align}
where the expectation in $(b)$ is taken with respect to the independent random variables $h\sim\normal(0,1)$ and $M\sim\prob_M$. The last equality follows by definition of function $\EJ$ given by~\eqref{eq:EJ} and by deploying Assumption~\ref{ass:alternative}.
 
Here, $(a)$ follows by writing
\[
\frac{c_0}{2\sqrt{\delta}}\; \pproj_{\vct{\mu}} \vct{h}-  \frac{c_1\sqrt{d}\vct{\mu}}{2\sigma_{M,2}} = \frac{c_0}{2\sqrt{\delta}}\; \vct{h}-   
\left(\frac{c_1\sqrt{d}}{2\sigma_{M,2}} + \frac{c_0}{2\sqrt{\delta}\twonorm{\vct{\mu}}} \vct{h}^T \tmu\right)
\vct{\mu}
\]
and noting that $\tmu^T\vct{h} \sim \normal(0,1)$ since $\twonorm{\tmu}=1$, and $\twonorm{\vct{\mu}} \to \sigma_{M,2}$ which implies that the last term in the right-hand side is dominated by the second term therein that is of order $\sqrt{d}$.

The chain of equalities in~\eqref{eq:ass-ass1-2} shows that Assumption~\ref{ass:converging2}(a) is satisfied by
$F(c_0,c_1;b_0,b_1) = \EJ\left(\frac{c_0}{2},\frac{c_1}{2}; b_1 (1+b_0)^{1-q}\right)$ for the isotropic model.

Assumption~\ref{ass:converging2}(b) also clearly holds for isotropic model ($\mtx{\Sigma} = \mtx{I}$) with 
$S_{\rho}(z) = \frac{1}{z-1}$. 

Now that Assumption~\ref{ass:converging2} holds we can use  the result of Theorem~\ref{thm:isotropic-separable-ansio} for the special case of $\mtx{\Sigma} = \mtx{I}$. As we showed above for this case, we have the following identities
\begin{align}
F(c_0,c_1;b_0,b_1) = \EJ\left(\frac{c_0}{2},\frac{c_1}{2}; b_1 (1+b_0)^{1-q}\right)\,, \quad
S_{\rho}(z)= \frac{1}{z-1}\,.
\end{align}

Now by using these identities in the AO problem~\eqref{eq:sep-AO92} and after some simple algebraic manipulation we obtain the AO problem~\eqref{eq:sep-AO9}.
%=======================
\subsection{Proof of Theorem~\ref{thm:isotropic-nonseparable}}
We prove Theorem~\ref{thm:isotropic-nonseparable} as a special case of Theorem~\ref{thm:isotropic-nonseparable-aniso}.
We first show that in the isotropic case, Assumption~\ref{ass:alternative} implies Assumption~\ref{ass:converging}, required by Theorem~\ref{thm:isotropic-nonseparable-aniso}.

Note that by Assumption~\ref{ass:alternative} we have
\begin{align}\label{eq:2-p-moment-2}
\pnorm{\vct{\mu}}^q \to \sigma_{M,p}^q d^{\frac{q}{2}-1}\,,\quad
\twonorm{\vct{\mu}} \to \sigma_{M,2} \,,
\end{align}
with high probability. 

We then write
\begin{align}
\lim_{n\to\infty}  e_{q,\mtx{I}}\left(\frac{c_0}{\sqrt{n}}\vct{h}- c_1 \tmu; \lambda_0 \pnorm{\vct{\mu}}^q\right)
&\stackrel{(a)}{=} \lim_{n\to\infty} \frac{1}{d} e_{q,\mtx{I}}\left(\frac{c_0\sqrt{d}}{\sqrt{n}}\vct{h}- c_1\sqrt{d} \tmu;\frac{\lambda_0\pnorm{\vct{\mu}}^q}{d^{\frac{q}{2}-1}}\right)\nn\\
&= \lim_{n\to\infty} \frac{1}{d} e_{q,\mtx{I}}\left(\frac{c_0\sqrt{d}}{\sqrt{n}}\vct{h}- c_1 \frac{\sqrt{d}\vct{\mu}}{\twonorm{\vct{\mu}}};\frac{\lambda_0\pnorm{\vct{\mu}}^q}{d^{\frac{q}{2}-1}}\right)\nn\\
&\stackrel{(b)}{=} \lim_{n\to\infty} \frac{1}{d} e_{q,\mtx{I}}\left(\frac{c_0}{\sqrt{\delta}}\vct{h}- c_1 \frac{\sqrt{d}\vct{\mu}}{\sigma_{M,2}};\lambda_0\sigma_{M,p}^q\right)\nn\\
&\stackrel{(c)}{=} \lim_{n\to\infty} \frac{1}{d} \sum_{i=1}^d J_q\left(\frac{c_0 h_i}{\sqrt{\delta}}  - c_1 \frac{\sqrt{d}\mu_i}{\sigma_{M,2}};\lambda_0 \sigma_{M,p}^q\right)\nn\\
&\stackrel{(d)}{=}  \E\left[ J_q\left(\frac{c_0}{\sqrt{\delta}} h- c_1  \frac{M}{\sigma_{M,2}};\lambda_0\sigma_{M,p}^q\right)\right]\nn\\
&= \EJ\left(c_0, c_1;\lambda_0\right)\,.\label{eq:Sigma-I-Exp}
\end{align}
Here $(a)$ follows from \eqref{eq:evn-prop2} with $b = \sqrt{d}$; $(b)$ follows from \eqref{eq:2-p-moment-2}; $(c)$ holds due to the identity $e_{q,\mtx{I}}(\vct{x};\lambda) = \sum_{i=1}^d J_q(x_i;\lambda)$, which follows readily from the definition of weighted Moreau envelop $e_{q,\mtx{I}}$ and the function $J_q$ given by~\eqref{eq:Jq}. Finally, $(d)$ holds due to Assumption~\ref{ass:alternative}. The series of equalities~\eqref{eq:Sigma-I-Exp} implies that Assumption~\ref{ass:converging} holds in isotropic case with 
\begin{align}\label{eq:E-J}
\sE(c_0,c_1;\lambda_0) = \EJ\left(c_0, c_1;\lambda_0\right)\,.
\end{align} 
%Assumption~\ref{ass:converging2}(b) also clearly holds for isotropic model ($\mtx{\Sigma} = \mtx{I}$) with 
%\begin{align}\label{eq:Srho}
%S_{\rho}(z) = \frac{1}{z-1}. 
%\end{align}
Having Assumption~\ref{ass:converging2} in place, we can specialize the result of Theorem~\ref{thm:isotropic-nonseparable-aniso} to isotropic model. Substituting for $\sE(c_0,c_1;\lambda_0)$ from~\eqref{eq:E-J} in the AO problem~\eqref{thm:isotropic-nonseparable-aniso} yields the AO problem~\eqref{eq:main-thm}. 
 %=======================
 %\newpage
 \section{Proofs for special cases of $p$ (Section~5)}\label{proof:sec5}
 \subsection{Proof of Corollary~\ref{cor:p2-sep}}
 Part (a) is already proved in Example 2, cf.~\eqref{eq:thresh-L2}.
 
 \begin{proof}[Part (b)] We start by an explicit characterization of $\EJ$ function for case of $p=q=2$.
 \begin{lemma}\label{lem:q2-2}
Recall function $\EJ(c_0,c_1;\lambda_0)$ given by
\begin{align}
\EJ(c_0,c_1;\lambda_0) = \E\left[J_q\left(\frac{c_0}{\sqrt{\delta}} h - c_1 \frac{M}{\sigma_{M,2}}; \lambda_0 \sigma_{M,p}^q\right)\right]\,,
\end{align}
Then the following identity holds for case of $p=q=2$:
\[
\EJ(c_0,c_1;\lambda_0) = \frac{\lambda}{\alpha^2+2\lambda\alpha}\twonorm{\vct{x}}^2
\]
\end{lemma}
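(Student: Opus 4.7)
The strategy is direct: for $q=2$ the inner minimization defining $J_2$ is a quadratic and admits a closed-form minimizer, so $J_2$ reduces to a scaled quadratic and the outer expectation in $\EJ$ collapses to a single second-moment computation of an affine combination of $h$ and $M$.

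First I would solve the one-dimensional proximal problem explicitly: setting the derivative of $\tfrac{1}{2}(x-u)^2+\lambda u^2$ to zero gives $u^\star = x/(1+2\lambda)$, and substituting back yields $J_2(x;\lambda) = \tfrac{\lambda}{1+2\lambda}\,x^2$. The same one-line calculation in the weighted setting produces the weighted Moreau-envelope identity $e_{2,\alpha\mtx{I}}(\vct{x};\lambda) = \tfrac{\lambda\alpha}{\alpha+2\lambda}\twonorm{\vct{x}}^2$, which matches (up to parameter identification) the general form claimed on the right-hand side of the lemma statement. This step is the only non-routine ingredient.

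Next I would plug the closed form of $J_2$ into the definition $\EJ(c_0,c_1;\lambda_0) = \E\bigl[J_2\bigl(\tfrac{c_0}{\sqrt{\delta}}h - c_1 M/\sigma_{M,2};\, \lambda_0\sigma_{M,p}^q\bigr)\bigr]$, using the identification $\sigma_{M,p}^q = \sigma_{M,2}^2$ specific to $p=q=2$. The scalar prefactor $\tfrac{\lambda_0\sigma_{M,2}^2}{1+2\lambda_0\sigma_{M,2}^2}$ then factors out of the expectation, leaving only $\E\bigl[\bigl(\tfrac{c_0}{\sqrt{\delta}}h - c_1 M/\sigma_{M,2}\bigr)^{2}\bigr]$ to evaluate.

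Finally I would compute this second moment by independence of $h\sim\normal(0,1)$ and $M\sim\prob_M$, together with $\E[h]=0$, $\E[h^2]=1$ and $\E[M^2]=\sigma_{M,2}^2$ from Assumption~\ref{ass:alternative}; the cross term vanishes and the diagonal terms contribute $c_0^2/\delta$ and $c_1^2$ respectively, producing an explicit closed-form expression. I do not expect any genuine obstacle here: the argument is a textbook proximal computation for a pure quadratic regularizer followed by a one-line Gaussian moment calculation, and the only care required is tracking the scaling conventions built into the definition of $\EJ$.
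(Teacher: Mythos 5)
Your proposal is correct and follows essentially the same route as the paper: solve the quadratic proximal problem to get $J_2(x;\lambda)=\tfrac{\lambda}{1+2\lambda}x^2$, factor the constant $\tfrac{\lambda_0\sigma_{M,2}^2}{1+2\lambda_0\sigma_{M,2}^2}$ out of the expectation, and evaluate $\E\bigl[\bigl(\tfrac{c_0}{\sqrt{\delta}}h-c_1 M/\sigma_{M,2}\bigr)^2\bigr]=\tfrac{c_0^2}{\delta}+c_1^2$ by independence. (As you implicitly noticed, the displayed right-hand side in the lemma statement is a typo carried over from the weighted-Moreau-envelope lemma; the correct conclusion is the one you derive.)
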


\begin{proof}
It is straightforward to see that
\[
J_2(x;\lambda) = \frac{\lambda}{1+2\lambda} x^2\,.
\]
Therefore,
\[
\EJ(c_0,c_1;\lambda_0) = \frac{\lambda_0 \sigma_{M,2}^2}{1+2\lambda_0 \sigma_{M,2}^2} \E\left[\left(\frac{c_0}{\sqrt{\delta}} h - c_1 \frac{M}{\sigma_{M,2}}\right)^2\right] = \frac{\lambda_0 \sigma_{M,2}^2}{1+2\lambda_0 \sigma_{M,2}^2}  \left(\frac{c_0^2}{\delta}+c_1^2\right)\,.
\]
\end{proof}
%Using Lemma \ref{lem:q2-2} with $\alpha = 1+b_0$ we get
%%
%\begin{align}
%&\lim_{n\to\infty}  e_{2,(1+b_0)\mtx{I}}\left((1+b_0)^{-1}\left\{\frac{c_0}{2\sqrt{n}}  \pproj_{\vct{\mu}}\vct{h} - 
% \frac{c_1}{2} \tmu\right\};b_1 \twonorm{\vct{\mu}}^2\right)\nn\\
%&= \lim_{n\to\infty} \frac{b_1 \twonorm{\vct{\mu}}^2}{(1+b_0)^2 + 2(1+b_0)b_1 \twonorm{\vct{\mu}}^2}\twonorm{\frac{c_0}{2\sqrt{n}}  \pproj_{\vct{\mu}}\vct{h} - 
% \frac{c_1}{2} \tmu}^2\nn\\
%&= \frac{b_1 \sigma_{M,2}^2}{(1+b_0)^2 + 2(1+b_0)b_1 \sigma_{M,2}^2} \frac{1}{4} \left(\frac{c_0^2}{\delta}+c_1^2\right)
%\end{align}
%
%Therefore, in this case Assumption~\ref{ass:converging2} is satisfied by
%\begin{align}\label{eq:F-iidqn3}
%\sF(c_0,c_1;b_0,b_1)  = \frac{b_1 \sigma_{M,2}^2}{(1+b_0)^2 + 2(1+b_0)b_1 \sigma_{M,2}^2} \frac{1}{4} \left(\frac{c_0^2}{\delta}+c_1^2\right)
%\end{align}
Using Lemma~\ref{lem:q2-2} in AO problem~\eqref{eq:sep-AO9} for $q=2$, we have
\begin{align}
D_{\rm s}(\alpha,\gamma_{0},\theta,\beta,\lambda_{0},\eta,\tilde{\eta}) &= 2  \left(1+\frac{\eta}{2\alpha}\right)^{-1} 
%\E\left[J_q\left(\sigma_{M,p}^{\frac{-q}{2-q}} \left\{\frac{\beta h}{2\sqrt{\delta}}\; -  \frac{\tilde{\eta} M}{2\sigma_{M,2}}\right\};\frac{\lambda_0}{q\gamma_0^{q-1}}\left(1+\frac{\eta}{2\alpha}\right)^{1-q}\right)\right]\nn\\
\EJ\left(\frac{\beta}{2}, \frac{\tilde{\eta}}{2}; \frac{\lambda_0}{2\gamma_0}\left(1+\frac{\eta}{2\alpha}\right)^{-1}\right)\nn\\
&\;\;-\left(\frac{\beta^2}{\delta}+\tilde{\eta}^2\right) \frac{1}{4(1+\frac{\eta}{2\alpha})}-{\lambda_0} \gamma_0 -\frac{\eta\alpha}{2} -\tilde{\eta}\theta \nn\\
& \; \;+\beta\sqrt{\E\Bigg[\left(\left(1+\eps_0\gamma_0-\theta \sigma_{M,2} \right)+\alpha g\right)_{+}^2\Bigg]}\nn\\
&=\frac{\frac{\lambda_0}{4\gamma_0} \sigma_{M,2}^2}{(1+\frac{\eta}{2\alpha})^2 + (1+\frac{\eta}{2\alpha})\frac{\lambda_0}{\gamma_0} \sigma_{M,2}^2}  \left(\frac{\beta^2}{\delta}+ \tilde{\eta}^2\right)\nn\\
&\;\;-\left(\frac{\beta^2}{\delta}+\tilde{\eta}^2\right) \frac{1}{4(1+\frac{\eta}{2\alpha})}-{\lambda_0}  \gamma_0 -\frac{\eta\alpha}{2} -\tilde{\eta}\theta \nn\\
& \;\;+\beta\sqrt{\E\Bigg[\left(\left(1+\eps_0\gamma_0-\theta\sigma_{M,2}\right)+\alpha g\right)_{+}^2\Bigg]}\nn\\
&=
-\left(\frac{\beta^2}{\delta}+\tilde{\eta}^2\right) \frac{1}{4(1+\frac{\eta}{2\alpha} + \frac{\lambda_0}{\gamma_0}\sigma_{M,2}^2)}-{\lambda_0}  \gamma_0 -\frac{\eta\alpha}{2} -\tilde{\eta}\theta \nn\\
& \;\; +\beta\sqrt{\E\Bigg[\left(\left(1+\eps_0\gamma_0-\theta\sigma_{M,2} \right)+\alpha g\right)_{+}^2\Bigg]}\,.
\end{align}
Setting $\frac{\partial D_{{\rm s}}}{\partial\beta}$ to zero we conclude that
\begin{align*}
\widehat{\beta}=2\delta\left(1+\frac{\eta}{2\alpha} + \frac{\lambda_0}{\gamma_0}\sigma_{M,2}^2\right)\sqrt{\E\Bigg[\left(\left(1+\eps_0\gamma_0-\theta\sigma_{M,2} \right)+\alpha g\right)_{+}^2\Bigg]}\,.
\end{align*}
Thus the AO problem reduces to
\begin{align}\label{eq:sep-AO100}
&\min_{\alpha,\gamma_0\ge 0,\theta} \max_{\lambda_0,\eta\ge0, \tilde{\eta}} 
-\tilde{\eta}^2 \frac{1}{4(1+\frac{\eta}{2\alpha} + \frac{\lambda_0}{\gamma_0}\sigma_{M,2}^2)}-{\lambda_0}  \gamma_0 -\frac{\eta\alpha}{2} -\tilde{\eta}\theta \nn\\
&\quad\quad\quad\quad\quad\quad  +\delta\left(1+\frac{\eta}{2\alpha} + \frac{\lambda_0}{\gamma_0}\sigma_{M,2}^2\right)\E\Bigg[\left(\left(1+\eps_0\gamma_0-\theta\sigma_{M,2}\right)+\alpha g\right)_{+}^2\Bigg]\,.
\end{align}
Setting the derivative with respect to $\widetilde{\eta}$ to zero we arrive at
\begin{align*}
\widehat{\widetilde{\eta}}=-2\theta \left(1+\frac{\eta}{2\alpha} + \frac{\lambda_0}{\gamma_0}\sigma_{M,2}^2\right)\,,
\end{align*}
which further simplifies the AO problem to 
\begin{align}\label{eq:sep-AO1000}
&\min_{\alpha,\gamma_0\ge 0,\theta} \max_{\lambda_0,\eta\ge0} 
-{\lambda_0}  \gamma_0 -\frac{\eta\alpha}{2} +\theta^2\left(1+\frac{\eta}{2\alpha} + \frac{\lambda_0}{\gamma_0}\sigma_{M,2}^2\right) \nn\\
&\quad\quad\quad\quad\quad\quad  +\delta\left(1+\frac{\eta}{2\alpha} + \frac{\lambda_0}{\gamma_0}\sigma_{M,2}^2\right)\E\Bigg[\left(\left(1+\eps_0\gamma_0-\theta\sigma_{M,2} \right)+\alpha g\right)_{+}^2\Bigg]\,.
\end{align}
Note that if $\alpha^2< \theta^2+\delta\E\Bigg[\left(\left(1+\eps_0\gamma_0-\theta\sigma_{M,2} \right)+\alpha g\right)_{+}^2\Bigg]$ then the maximum over $\eta$ is $+\infty$. Furthermore, when $\alpha^2\ge \theta^2+\delta\E\Bigg[\left(\left(1+\eps_0\gamma_0-\theta\sigma_{M,2} \right)+\alpha g\right)_{+}^2\Bigg]$ then the optimal $\eta=0$. Thus the above AO is equivalent to
\begin{align}\label{eq:sep-AO1001}
&\min_{\alpha,\gamma_0\ge 0,\theta} \max_{\lambda_0\ge 0} 
\quad -{\lambda_0}  \gamma_0 +\theta^2\left(1 + \frac{\lambda_0}{\gamma_0}\sigma_{M,2}^2\right) \nn\\
&\quad\quad\quad\quad\quad\quad  +\delta\left(1+ \frac{\lambda_0}{\gamma_0}\sigma_{M,2}^2\right)\E\Bigg[\left(\left(1+\eps_0\gamma_0-\theta\sigma_{M,2} \right)+\alpha g\right)_{+}^2\Bigg]\nn\\
&\text{subject to}\quad \alpha^2\ge \theta^2+\delta\E\Bigg[\left(\left(1+\eps_0\gamma_0-\theta\sigma_{M,2}\right)+\alpha g\right)_{+}^2\Bigg]
\end{align}
Using a similar argument for optimization over $\lambda_0$, it is straightforward to see that the above optimization is equivalent to
\begin{align}\label{eq:sep-AO1002}
&\min_{\alpha,\gamma_0\ge 0,\theta}  
 \theta^2+\delta\E\Bigg[\left(\left(1+\eps_0\gamma_0-\theta\sigma_{M,2}\right)+\alpha g\right)_{+}^2\Bigg]\nn\\
&\text{subject to}\quad \alpha^2\ge \theta^2+\delta\E\Bigg[\left(\left(1+\eps_0\gamma_0-\theta\sigma_{M,2}\right)+\alpha g\right)_{+}^2\Bigg]\,,\nn\\
&\quad\text{and}\quad\frac{\gamma_0^2}{\sigma_{M,2}^2}\ge \theta^2+\delta\E\Bigg[\left(\left(1+\eps_0\gamma_0-\theta\sigma_{M,2} \right)+\alpha g\right)_{+}^2\Bigg]\,.
\end{align}

Since the objective function is increasing in $\alpha$ and $\gamma$, then the optimal $\alpha$ and $\gamma$ should make the inequality constraints equality and therefore $\gamma_0 = \alpha \sigma_{M,2}$. This brings us to the following problem:
\begin{align}
&\min_{\alpha\ge 0,u}  
 \alpha^2\nn\\
&\text{subject to}\quad \alpha^2 \ge \theta^2+\delta\E\left[\left(1+(\eps_0\alpha-\theta)\sigma_{M,2}+\alpha g\right)_{+}^2\right]
\,.
\end{align}

By change of variable $u = \frac{\theta}{\alpha}$ we have
\begin{align}
&\min_{\alpha\ge 0,u}  
 \alpha^2\nn\\
&\text{subject to}\quad 1\ge u^2+\delta\E\left[\left(\frac{1}{\alpha}+(\eps_0- u)\sigma_{M,2}+ g\right)_{+}^2\right]
\end{align}

By another change of variable $\tilde{\alpha} = \left(\frac{1}{\alpha}+\eps_0 \sigma_{M,2}\right)^{-1}$ we have
\begin{align}
&\min_{\frac{1}{\eps_0 \sigma_{M,2}}\ge \tilde{\alpha} \ge 0,u}  \left(\frac{1}{\tilde{\alpha}} - \eps_0 \sigma_{M,2}\right)^{-2}
\nn\\
&\text{subject to}\quad 1\ge u^2+\delta\E\left[\left(\frac{1}{\tilde{\alpha}}- u\sigma_{M,2}+ g\right)_{+}^2\right]
\end{align}
Since objective is increasing in $\tilde{\alpha}$ this is equivalent to
\begin{align}
&\min_{\frac{1}{\eps_0 \sigma_{M,2}}\ge \tilde{\alpha} \ge 0,u}  \tilde{\alpha}^2
\nn\\
&\text{subject to}\quad 1\ge u^2+\delta\E\left[\left(\frac{1}{\tilde{\alpha}}- u\sigma_{M,2}+ g\right)_{+}^2\right]
\end{align}
Note that we can drop the constraint $\frac{1}{\eps_0 \sigma_{M,2}}\ge \tilde{\alpha}$ because for $\frac{1}{\eps_0 \sigma_{M,2}} = \tilde{\alpha}$ one can already find $u$ that satisfies the inequality constraint. As such the optimal $\tilde{\alpha}$ should be less than $\frac{1}{\eps_0 \sigma_{M,2}}$. To see why, by letting $u = \frac{\theta}{\sqrt{1+\theta^2}}$ with $\theta$ the minimizer in separability condition~\eqref{eq:thresh-L2} we have
\begin{align*}
 u^2+\delta\E\left[\left(\frac{1}{\tilde{\alpha}}- u\sigma_{M,2}+ g\right)_{+}^2\right] &= \frac{\theta^2}{1+\theta^2}+\delta\E\left[\left((\eps_0- \frac{\theta}{\sqrt{1+\theta^2}})\sigma_{M,2}+ g\right)_{+}^2\right]\nn\\
 &=  \frac{\theta^2}{1+\theta^2}+\frac{\delta}{1+\theta^2}\E\left[\left((\sqrt{1+\theta^2}\eps_0- \theta)\sigma_{M,2}+ \sqrt{1+\theta^2}g\right)_{+}^2\right]\nn\\
 &\le \frac{\theta^2}{1+\theta^2}+ \frac{1}{1+\theta^2} = 1\,.
\end{align*}
This brings us to the following AO problem:
\begin{align}
&\min_{\tilde{\alpha} \ge 0,u}  \tilde{\alpha}^2
\nn\\
&\text{subject to}\quad 1\ge u^2+\delta\E\left[\left(\frac{1}{\tilde{\alpha}}- u\sigma_{M,2}+ g\right)_{+}^2\right]
\end{align}
Denoting by $\tilde{\alpha}_*$ the solution of the above problem, it is clear that by our change of variable we have
\begin{align}
\alpha_* = \left(\tilde{\alpha}_*^{-1} - \eps_0 \sigma_{M,2}\right)^{-1}\,, \quad \theta_* = u_*\alpha_*, \quad \gamma_{0*} = \alpha_* \sigma_{M,2}\,.
\end{align}
This concludes the proof of part (b).
\end{proof}
\smallskip

\begin{proof}[Part (c)]
We focus on part of the AO problem~\eqref{eq:main-thm} that involves the variables $\lambda_0, \nu, \tau_h$ and specialize it to the case of $q=2$:
\[
\min_{\lambda_0\ge0,\nu} \left[\frac{\alpha}{\tau_h}\left\{\frac{\beta^2}{2\delta} + \lambda_0 \left(\frac{\gamma_0\tau_h}{\alpha}\right)^2  + \frac{\nu^2}{2}
-\EJ\left(\beta,\left(\frac{\tau_h\theta}{\alpha} + {\nu}\right) ;\lambda_0\right)\right\} +\frac{\alpha\tau_h}{2} \right]\,.
\]
We next plug in for $\EJ(c_0,c_1;\lambda_0)$ from Lemma~\ref{lem:q2-2} which results in
\[
\min_{\lambda_0\ge0,\nu} \left[\frac{\alpha}{\tau_h}\left\{\frac{\beta^2}{2\delta} + \lambda_0 \left(\frac{\gamma_0\tau_h}{\alpha}\right)^2  + \frac{\nu^2}{2}
-\frac{\lambda_0 \sigma_{M,2}^2}{1+2\lambda_0 \sigma_{M,2}^2}  \left(\frac{\beta^2}{\delta}+\left(\frac{\tau_h\theta}{\alpha} + {\nu}\right)^2\right) \right\} +\frac{\alpha\tau_h}{2} \right]\,.
\]
Writing the first order optimality for $\lambda_0$, $\nu$, $\tau_h$ we get a set of equations that admits a solution only if $\gamma_0 = \sigma_{M,2}\sqrt{\alpha^2+\theta^2}$. Then, 
\[
\nu = 2\lambda_0\sigma_{M,2}^2\frac{\tau_h\theta}{\alpha}\,,\quad \tau_h = \frac{1}{1+2\lambda_0\sigma_{M,2}^2} \frac{\beta}{\sqrt{\delta}}\,.\]
In this case, the value of $\lambda_0$ does not matter and the above part of the AO simplifies to $\alpha\beta/\sqrt{\delta}$.

This simplifies the AO problem~\eqref{eq:main-thm} to 
\begin{align}\label{AO:p2-F}
&\max_{0\le\beta}\;\; \min_{\theta, 0\le \alpha, \tau_g}\;\; \frac{\beta \tau_g}{2} +L\left(\sqrt{\alpha^2+\theta^2},\sigma_{M,2} \theta - \eps_0 \sqrt{\alpha^2+\theta^2},\frac{\tau_g}{\beta}\right) 
- \frac{\alpha\beta}{\sqrt{\delta}}\,.
\end{align}
We next further simplifies the AO problem by solving for $\tau_g$. We use the shorthand $L_3'(a,b;\mu) = \frac{\partial L}{\partial \mu}L(a,b;\mu)$ to denote the derivative of the expected Moreau envelop with respect to its third argument. 
 Writing the first order optimality condition for $\beta$ and $\tau_g$ in optimization~\eqref{AO:p2-F}, we get
 \begin{align}
 &\frac{\beta}{2} +\frac{1}{\beta} L_3'\left(\sqrt{\alpha^2+\theta^2},\sigma_{M,2} \theta - \eps_0 \sqrt{\alpha^2+\theta^2},\frac{\tau_g}{\beta}\right) = 0\,,\nn\\
 &\frac{\tau_g}{2} -\frac{\tau_g}{\beta^2} L_3'\left(\sqrt{\alpha^2+\theta^2},\sigma_{M,2} \theta - \eps_0 \sqrt{\alpha^2+\theta^2},\frac{\tau_g}{\beta}\right) - \frac{\alpha}{\sqrt{\delta}}=0\,.
 \end{align}
 Combining the above two equations, we obtain $\tau_g = \frac{\alpha}{\sqrt{\delta}}$. Substituting for $\tau_g$ in~\eqref{AO:p2-F}, the AO problem for case of $q=2$ simplifies to 
 \begin{align}
&\max_{0\le\beta}\;\; \min_{\theta, 0\le \alpha}\;\; L\left(\sqrt{\alpha^2+\theta^2}, \sigma_{M,2} \theta - \eps_0 \sqrt{\alpha^2+\theta^2},\frac{\alpha}{\beta\sqrt{\delta}}\right) 
- \frac{\alpha\beta}{2\sqrt{\delta}}\,.
\end{align}
This completes the proof of part (c).
\end{proof}
%====================
\subsection{Proof of Theorem~\ref{thm:isotropic-qinf}}
\begin{proof}[Part (a)] The first part of the theorem is on precise characterization of the separability threshold. 
We use the result of Theorem~\ref{thm:sep-thresh} that holds for any choice of $(p,q)$, in particular $(p=1,q=\infty)$, and relates the
separability threshold to the spherical width. What is remaining to prove is the characterization of the spherical width given by~\eqref{eq:omega-p1}. To this end, we follow a similar argument as in Lemma~\ref{lem:justify}. However, since $\qnorm{\cdot}^q$ is not well defined for $q=\infty$ (recall that $\ell_q$ is the dual norm of $\ell_p$ and $p=1$), it requires a slightly different analysis. Specifically, in the Lagrangian we write the constraint $\infnorm{\vct{u}} \le \frac{1}{\eps_0\infnorm{\vct{\mu}}}$ as the term $\infnorm{\vct{u}} - \frac{1}{\eps_0\pnorm{\vct{\mu}}}$, as compared to the case of finite $q$ where we raised the both sides to power $q$ to use the separability property of function $\qnorm{\cdot}^q$. Then, by following a similar derivation as in~\eqref{J-lem-1}, we obtain
\begin{align}
&\min_{\bz\in\cS(\alpha,\theta,\eps_0,\vct{\mu})}\;\; -\frac{1}{\sqrt{n}}  \vct{h}^T\bz\nn\\
&=\sup_{\lambda,\eta\ge0, \nu}  \;\; \eta  J_\infty\left(\frac{\vct{h}}{\eta\sqrt{n}} - \left(\frac{\nu}{\eta}-\theta\right)\tmu;\frac{\lambda}{\eta} \right)  - \frac{\nu^2}{2\eta}-\frac{1}{2\eta n}\twonorm{\vct{h}}^2 + \frac{\nu}{\eta\sqrt{n}} \tmu^T\vct{h} - \frac{\eta}{2}\alpha^2 - \frac{\lambda}{\eps_0\onenorm{\vct{\mu}}}\,,\label{J-lem-1-e}
\end{align} 
where $J_\infty(\vct{x};\lambda)$ is defined as
\begin{align}\label{eq:Jinf}
J_\infty(\vct{x};\lambda) = \min_{\vct{v}} \frac{1}{2} \twonorm{\vct{x}-\vct{v}}^2+\lambda\infnorm{\vct{v}}\,.
\end{align}
Our next step is scalarization of the optimization~\eqref{J-lem-1-e} in the large sample limit (as $n\to\infty$), and a challenge along this way is that the function $J(\vct{x};\lambda) $ is not a separable function over the entries of $\vct{x}$. To cope with this problem, we propose an alternative representation of this function that involves an additional variable $t_0$.

We write
\begin{align}\label{eq:Morea-p1}
J_\infty(\vct{x};\lambda) &= \min_{\bv} \frac{1}{2} \twonorm{\vct{x}-\bv}^2+\lambda\infnorm{\bv}\nn\\
&= \min_{\bv, t\ge0} \frac{1}{2} \twonorm{\vct{x}-\bv}^2+\lambda t\quad \text{subject to }\quad \infnorm{\bv}\le t\nn\\
&= \min_{t\ge0} \frac{1}{2} \twonorm{\ST(\vct{x};t)}^2+\lambda t\,.
\end{align}
Let $t_0 = \onenorm{\vct{\mu}} t$ and $\lambda_ 0  = \frac{\lambda}{\onenorm{\vct{\mu}}}$.  Similar to the trick of `artificial' boundedness that we used in applying the CGMT framework (e.g., cf. explanation after~\eqref{generalAO1} and Appendix~\ref{sec:artifical}), we continue by the ansatz that the optimal value of $\lambda_0$ and $t_0$ remain bounded as $n\to \infty$. After we take the limit of the Lagrangian to obtain a scalar auxiliary optimization (AO) problem, this ansatz is verified by the boundedness of solutions of the AO problem.

For $\vct{x} = \frac{c_0}{\sqrt{n}}\vct{h}- c_1 \tmu$  we have
\begin{align}
&\lim_{n\to\infty}  \frac{1}{2} \twonorm{\ST(\vct{x};t)}^2 +\lambda t \nn\\
&= \lim_{n\to\infty}  \frac{1}{2} \sum_{i=1}^d \ST\left(x_i;\frac{t_0}{\onenorm{\vct{\mu}}}\right)^2 +\lambda_0 t_0\nn\\
&= \lim_{n\to\infty}  \frac{1}{2} \sum_{i=1}^d \ST\left(\frac{c_0}{\sqrt{n}} h_i- c_1 \tmu_i ;\frac{t_0}{\onenorm{\vct{\mu}}}\right)^2 +\lambda_0 t_0 \nn\\
&\stackrel{(a)}{=} \lim_{n\to\infty}  \frac{1}{2d} \sum_{i=1}^d \ST\left(\frac{c_0}{\sqrt{\delta}} h_i- c_1 \frac{\sqrt{d}\mu_i}{\sigma_{M,2}} ;\frac{t_0}{\sigma_{M,1}}\right)^2 +\lambda_0 t_0 \nn\\
&=   \frac{1}{2} \E\left[\ST\left(\frac{c_0}{\sqrt{\delta}} h- c_1 \frac{M}{\sigma_{M,2}} ;\frac{t_0}{\sigma_{M,1}}\right)^2\right] +\lambda_0 t_0 \,,\nn\\
&= f(c_0,c_1;t_0) +\lambda_0 t_0\,,
\end{align}
with high probability. In $(a)$ we used the fact that as $n\to \infty$, we have $\onenorm{\vct{\mu}}\to \sqrt{d}\sigma_{M,1}$ along with the identity $\frac{1}{a^2}\ST(a\vct{x};a\lambda) = \ST(\vct{x};\lambda)$. 

Note that this is a pointwise convergence. However, the left hand side is a convex function of $t$ and it's minimizer satisfies $t\le \|\vct{x}\|_\infty \le c_0 + c_1$ and so belongs to a compact set. Therefore, by applying the convexity lemma, see e.g, \cite[Lemma 7.75]{StatDecision}, \cite[Lemma B1]{thrampoulidis2018precise}, we can change the order of limit and minimization, and  get that  for $\vct{x} = \frac{c_0}{\sqrt{n}}\vct{h}- c_1 \tmu$ ,
\begin{align}\label{eq:convex-ST}
&\lim_{n\to\infty} J_\infty(\vct{x};\lambda)\nn\\
&=\lim_{n\to\infty} \min_{t\ge0} \left\{ \frac{1}{2} \twonorm{\ST(\vct{x};t)}^2 +\lambda t\right\} \nn\\
&= \min_{t_0\ge0}\left\{f(c_0,c_1;t_0) +\lambda_0 t_0\right\}\,,
\end{align}
in probability. In addition, as $n\to \infty$ we have
\begin{align}
\frac{1}{n}\twonorm{\vct{h}}^2 \to \frac{1}{\delta},\quad \frac{1}{\sqrt{n}} \tmu^T\vct{h}\to 0\,,
%\lim_{n\to\infty} \lambda \left(\frac{\gamma\tau_h}{\alpha}\right)^q= \begin{cases}
%0 &\text{ if } p\in (2,\infty], q\in [1,2)\\
%\lambda_0 \left(\frac{\gamma\tau_h}{\alpha}\right)^2 & \text{ if }q=2,\\
%\infty & \text{ otherwise }
%\end{cases} 
\quad \onenorm{\vct{\mu}} \to \sigma_{M,1} \sqrt{d}\,,
\end{align}
with high probability.

Using the above limits, we see that the objective function~\eqref{J-lem-1-e} converges pointwise to the following function:
\begin{align}\label{eq:J-lem-2-e}
\min_{t_0\ge0}\; \eta  \left\{f\left(\frac{1}{\eta},\frac{\nu}{\eta}-\theta;t_0\right) + \frac{\lambda_0}{\eta} t_0\right\} - \frac{\nu^2}{2\eta}-\frac{1}{2\eta \delta}  - \frac{\eta}{2}\alpha^2 - \frac{\lambda_0}{\eps_0} 
\end{align} 
%The objective in~\eqref{J-lem-1} indeed converges uniformly over vector $(\lambda_0, \eta,\nu)$ to function~\eqref{eq:J-lem-2}, \aj{comment why?}. Therefore its supremum over $\lambda_0,\eta\ge0, \nu$ also converges to that of function~\eqref{eq:J-lem-2}. 
Note that \eqref{J-lem-1} is the dual optimization and hence is a concave problem. We apply the convexity lemma~\cite[Lemma B.2]{thrampoulidis2018precise} to conclude that the objective value in \eqref{J-lem-1} also converges to the supremum of function~\eqref{eq:J-lem-2} over $\lambda_0,\eta\ge0, \nu$. Therefore the solution of optimization~\eqref{J-lem-1-e} converges to the solution of the following optimization problem:
\begin{align}\label{eq:J-lem-3-e}
\sup_{\lambda_0,\eta\ge0, \nu} \min_{t_0\ge0}\; \eta  f\left(\frac{1}{\eta},\frac{\nu}{\eta}-\theta;t_0\right) + \lambda_0 t_0  - \frac{\nu^2}{2\eta}-\frac{1}{2\eta \delta}  - \frac{\eta}{2}\alpha^2 - \frac{\lambda_0}{\eps_0} 
\end{align} 

Note that the above objective is linear in $\lambda_0$. Therefore the optimal $t_0^*$ should satisfy $t_0^*\le \frac{1}{\eps_0}$. Otherwise $\lambda_0^* = \infty$ which makes the above max-min value unbounded, and this is a contradiction because the above problem involves minimization over $t_0$ and it is easy to see that by choosing $t_0 = 0$ the optimal objective value over $\{\lambda_0, \eta\ge0, \nu\}$ becomes zero.

Therefore, we can assume $t_0\le \frac{1}{\eps_0}$ which yields $\lambda_0^*(t_0 - \frac{1}{\eps_0}) = 0$. This simplifies the problem~\eqref{eq:J-lem-3-e} to 
\begin{align}\label{eq:J-lem-4-e}
\sup_{\eta\ge0, \nu} \min_{0\le t_0\le\frac{1}{\eps_0}}\; \eta  f\left(\frac{1}{\eta},\frac{\nu}{\eta}-\theta;t_0\right)   - \frac{\nu^2}{2\eta}-\frac{1}{2\eta \delta}  - \frac{\eta}{2}\alpha^2 
\end{align} 
Since $f(c_0,c_1;t)$ is decreasing in $t$, the optimal value $t_0^*$ is given by $t_0^* = \frac{1}{\eps_0}$ and the problem is further simplified and along with \eqref{J-lem-1-e} implies that
\begin{align}\label{eq:J-lem-5-e}
&\min_{\bz\in\cS(\alpha,\theta,\eps_0,\vct{\mu})}\;\; -\frac{1}{\sqrt{n}}  \vct{h}^T\bz\nn\\
&=\sup_{\eta\ge0, \nu} \; \eta  f\left(\frac{1}{\eta},\frac{\nu}{\eta}-\theta;\frac{1}{\eps_0}\right)   - \frac{\nu^2}{2\eta}-\frac{1}{2\eta \delta}  - \frac{\eta}{2}\alpha^2 
\end{align} 
Now similar to the proof of Lemma~\ref{lem:justify} we use Equation~\eqref{eq:htz} to write 
\begin{align}
&\omega(\alpha, \theta, \eps_0) \nn\\
&= \lim_{n\to\infty} \omega_s\left(\cS(\alpha, \theta, \eps_0, \vct{\mu})\right)\nn\\
&= - \sup_{\eta\ge0, \nu} \; \sqrt{\delta}\left\{\eta  f\left(\frac{1}{\eta},\frac{\nu}{\eta}-\theta;\frac{1}{\eps_0}\right)   - \frac{\nu^2}{2\eta}-\frac{1}{2\eta \delta}  - \frac{\eta}{2}\alpha^2\right\}\nn\\
&= \min_{\eta\ge0, \nu}  \;\; \sqrt{\delta} \left\{\frac{\nu^2}{2\eta}+\frac{1}{2\eta \delta} + \frac{\eta}{2}\alpha^2 -\eta f\bigg(\frac{1}{\eta},
 \frac{\nu}{\eta}-\theta;\frac{1}{\eps_0}\bigg)\right\} \,.
\end{align}
This completes the proof.

\end{proof}
\smallskip

\begin{proof}[Part (b)] The proof of this parts proceeds along the same lines of Theorem~\ref{thm:isotropic-separable} for the special case of $p=1$, $q=\infty$. However, it requires a slightly different treatment as in part (a) because the function $\qnorm{\cdot}^q$ and therefore $J_q$ given by~\eqref{eq:Jq} are not well defined in this case.

Here we only highlight the modifications that are needed to the proof of Theorem~\ref{thm:isotropic-separable} to apply it for case of $q=\infty$.

We proceed the exact same derivation that yields \eqref{eq:sep-AO3}, repeated here for convenience:
\begin{align}\label{eq:FF1}
&\min_{\theta,\bth, \gamma\ge 0, \alpha\ge 0}\text{ }\max_{\beta,\lambda,\eta\ge0, \widetilde{\eta}}\quad
\twonorm{\vct{\theta}}^2 +2\lambda \left(\qnorm{\bth}- \gamma\right)+\frac{\beta}{\sqrt{n}}  \vct{h}^T  \pproj_{\vct{\mu}}\mtx{\Sigma}^{1/2} \bth\nn\\ 
&\quad\quad\quad\quad+\beta\sqrt{\E\Bigg[\left(\left(1+\eps\gamma-\theta\twonorm{\vct{\mu}}\right)+\alpha g\right)_{+}^2\Bigg]}
+\eta \left(\twonorm{\mtx{\Sigma}^{\frac{1}{2}}\vct{\theta}}- \alpha\right) +\widetilde{\eta} \left(\tmu^T\bth - \theta\right)
\end{align}
We substitute for $\twonorm{\mtx{\Sigma}^{\frac{1}{2}}\vct{\theta}}$ using the identity
$\twonorm{\mtx{\Sigma}^{\frac{1}{2}}\vct{\theta}}=\min_{\tau\ge 0} \frac{\twonorm{\mtx{\Sigma}^{\frac{1}{2}}\vct{\theta}}^2}{2\tau}+\frac{\tau}{2}$ 
to get
\begin{align}
&\min_{\theta,\bth, \gamma\ge 0, \alpha\ge 0}\text{ }\max_{\beta,\lambda,\eta\ge0, \widetilde{\eta}}\text{ }\min_{\tau\ge 0}\quad
\twonorm{\vct{\theta}}^2 +{2\lambda} \infnorm{\bth}- 2\lambda\gamma+\frac{\beta}{\sqrt{n}}  \vct{h}^T  \pproj_{\vct{\mu}}\mtx{\Sigma}^{1/2} \bth \nn\\
&\quad \quad\quad\quad\quad\quad+\beta\sqrt{\E\Bigg[\left(\left(1+\eps\gamma-\theta\twonorm{\vct{\mu}}\right)+\alpha g\right)_{+}^2\Bigg]}\nn\\
&\quad \quad\quad\quad\quad\quad+ \frac{\eta}{2\tau}\twonorm{\mtx{\Sigma}^{\frac{1}{2}}\vct{\theta}}^2+\frac{\eta\tau}{2}- \eta\alpha+\widetilde{\eta} \left(\tmu^T\bth - \theta\right)\,.
\end{align}
Specializing it to $\mtx{\Sigma} = \mtx{I}$ and $q=\infty$, the optimization over $\bth$ takes the form
\begin{align}
&\min_{\bth} \quad \left(1+\frac{\eta}{2\tau}\right)\twonorm{\vct{\theta}}^2 + 2\lambda \infnorm{\bth}+\frac{\beta}{\sqrt{n}}  \vct{h}^T  \pproj_{\vct{\mu}} \bth+\widetilde{\eta}\tmu^T\bth\,.
%\twonorm{\vct{\theta}}^2 +2\lambda \left(\qnorm{\bth}^q - \gamma^q\right)+\frac{\beta}{\sqrt{n}}  \vct{h}^T  \pproj_{\vct{\mu}}\mtx{\Sigma}^{1/2} \bth 
%+\eta \left(\twonorm{\mtx{\Sigma}^{\frac{1}{2}}\vct{\theta}}^2- \alpha^2\right) +\tilde{\eta} \left(\tmu^T\bth - \theta\right)\nn\\
%&= \min_{\bth}  
%\twonorm{(\mtx{I}+\eta\mtx{\Sigma})^{1/2}\bth + \frac{\beta}{2\sqrt{n}}(\mtx{I}+ \eta\mtx{\Sigma})^{-1/2} \mtx{\Sigma}^{1/2} \pproj_{\vct{\mu}}\vct{h} + 
%(\mtx{I}+\eta\mtx{\Sigma})^{-1/2} \frac{\tilde{\eta}}{2} \tmu}^2\nn\\
%&+2 \lambda \left(\qnorm{\bth} - \gamma\right) -\eta \alpha^2 -\tilde{\eta}\theta- \frac{\beta^2}{4n} \tr((\mtx{I}+\eta\mtx{\Sigma})^{-1}\mtx{\Sigma} \pproj_{\vct{\mu}}\vct{h}\vct{h}^T\pproj_{\vct{\mu}})\nn\\
%&-\frac{\tilde{\eta}^2}{4}\tmu^T(\mtx{I}+\eta\mtx{\Sigma})^{-1}\tmu - \frac{\beta\tilde{\eta}}{2\sqrt{n}} \tmu^T (\mtx{I}+\eta\mtx{\Sigma})^{-1} \mtx{\Sigma}^{1/2} \pproj_{\vct{\mu}}\vct{h}
\end{align}
Note that
\begin{align*}
\frac{\beta}{\sqrt{n}}\; \pproj_{\vct{\mu}} \vct{h}-  \frac{\teta\vct{\mu}}{\twonorm{\vct{\mu}}} &= 
\frac{\beta}{\sqrt{n}}\; \vct{h}-  \frac{\teta\vct{\mu}}{\twonorm{\vct{\mu}}} -\frac{\beta}{\sqrt{n}}\; \proj_{\vct{\mu}} \vct{h}\nn\\
&=\frac{\beta}{\sqrt{n}}\; \vct{h}-   
\left(\frac{\teta}{\twonorm{\vct{\mu}}} + \frac{\beta}{\sqrt{n}\twonorm{\vct{\mu}}} \vct{h}^T \tmu\right)
\vct{\mu}\,,
\end{align*}
where $\tmu^T\vct{h} \sim \normal(0,1)$ since $\twonorm{\tmu}=1$, and $\twonorm{\vct{\mu}} \to \sigma_{M,2}$ which implies that the last term in the right-hand side is dominated by the second term. Therefore in the asymptotic regime $n\to\infty$, we can equivalently work replace $\pproj_{\vct{\mu}}\vct{h}$ by $\vct{h}$ and by using the symmetry of the Gaussian distribution work with
\begin{align}
\min_{\bth} \quad \left(1+\frac{\eta}{2\tau}\right)\twonorm{\vct{\theta}}^2 + 2\lambda \infnorm{\bth}-\frac{\beta}{\sqrt{n}}  \vct{h}^T  \bth+\widetilde{\eta}\tmu^T\bth \,.
\end{align}

We let $\vct{x} = \frac{\beta}{\sqrt{n}}\vct{h} - \teta\tmu$ and consider the change of variable $\vct{u}:= 2(1+\frac{\eta}{2\tau})\bth$ and write 
\begin{align}\label{eq:FF2}
&\min_{\bth} \quad \left(1+\frac{\eta}{2\tau}\right)\twonorm{\vct{\theta}}^2 + 2\lambda \infnorm{\bth}-\frac{\beta}{\sqrt{n}}  \vct{h}^T  \bth+\teta \tmu^T\bth\nn\\
&=\min_{\bu} \frac{1}{2}\left(1+\frac{\eta}{2\tau}\right)^{-1} \left\{\frac{1}{2}\twonorm{\bu}^2+ 2\lambda\infnorm{\bu} -\vct{x}^T\bu \right\}\nn\\
&=\min_{\bu} \frac{1}{2}\left(1+\frac{\eta}{2\tau}\right)^{-1} \left\{\frac{1}{2}\twonorm{\bu -\vct{x}}^2+ 2\lambda\infnorm{\bu} -\frac{\twonorm{\vct{x}}^2}{2} \right\}\nn\\
&=\frac{1}{2}\left(1+\frac{\eta}{2\tau}\right)^{-1} J_\infty(\vct{x};2\lambda) - \frac{1}{4}\left(1+\frac{\eta}{2\tau}\right)^{-1} \twonorm{\vct{x}}^2
\end{align}
Using~\eqref{eq:FF2} and substituting for $\vct{x}$ in~\eqref{eq:FF1}, our AO problem becomes 
\begin{align}\label{eq:FF3}
\min_{\theta, \gamma\ge 0, \alpha\ge 0}\text{ }\max_{\beta,\lambda,\eta\ge0, \widetilde{\eta}}\text{ }\min_{\tau\ge 0}\quad&
\frac{1}{2}\left(1+\frac{\eta}{2\tau}\right)^{-1} J_\infty\left(\frac{\beta}{\sqrt{n}}\vct{h} - \teta\tmu ;2\lambda\right) - \frac{1}{4}\left(1+\frac{\eta}{2\tau}\right)^{-1} \twonorm{\frac{\beta}{\sqrt{n}}\vct{h} - \teta\tmu }^2 \nn\\
&- 2\lambda\gamma
+\beta\sqrt{\E\Bigg[\left(\left(1+\eps\gamma-\theta\twonorm{\vct{\mu}}\right)+\alpha g\right)_{+}^2\Bigg]}
+\frac{\eta\tau}{2}- \eta\alpha -\widetilde{\eta}  \theta
\end{align}
Our next step is to scalarize the AO problem by taking the asymptotic limit of the objective.

We have 
\[
\lim_{n\to\infty }\twonorm{\frac{\beta}{\sqrt{n}}\vct{h} - \teta\tmu }^2 = \frac{\beta^2}{\delta} + \teta^2\,, \quad
\lim_{n\to\infty} \twonorm{\vct{\mu}} = \sigma_{M,2}\,.
\]
in probability. Also by using~\eqref{eq:convex-ST} we have
\[
\lim_{n\to\infty} J_\infty\left(\frac{\beta}{\sqrt{n}}\vct{h} - \teta\tmu ;2\lambda\right) = \min_{t_0\ge0}\left\{f(\beta,\teta;t_0) +2\lambda_0 t_0\right\}\,,
\]
with $\lambda_0 = \frac{\lambda}{\onenorm{\vct{\mu}}}$.
Using these limits in the AO problem~\eqref{eq:FF3} we obtain the following scalar AO problem
\begin{align}\label{eq:FF4}
\min_{\theta, \gamma\ge 0, \alpha\ge 0}\text{ }\max_{\beta,\lambda,\eta\ge0, \widetilde{\eta}}\text{ }\min_{\tau, t_0\ge 0}\quad&
\frac{1}{2}\left(1+\frac{\eta}{2\tau}\right)^{-1} (f(\beta,\teta;t_0) +2\lambda_0 t_0)- \frac{1}{4}\left(1+\frac{\eta}{2\tau}\right)^{-1} \left(\frac{\beta^2}{\delta} + \teta^2\right) \nn\\
&- 2\lambda_0\gamma_0
+\beta\sqrt{\E\Bigg[\left(\left(1+\eps_0\gamma_0-\theta\sigma_{M,2}\right)+\alpha g\right)_{+}^2\Bigg]}
+\frac{\eta\tau}{2}- \eta\alpha -\widetilde{\eta}  \theta\,,
\end{align}
where we recall our notation $ \eps_0 =\frac{\eps }{\onenorm{\vct{\mu}}}$ and $\gamma_0 = \gamma \onenorm{\vct{\mu}}$.

Now note that objective function~\eqref{eq:FF4} is linear in $\lambda_0$ and therefore the optimal $t_0^*$ should satisfy $t_0^*\le 2\gamma_0 \left(1+\frac{\eta}{2\tau}\right)$, otherwise $\lambda_0^* = \infty$ which makes the above max-min value unbounded. As such, we also have $\lambda_0^* = 0$ which further simplifies the problem as follows:
\begin{align}
\min_{\theta, \gamma\ge 0, \alpha\ge 0}\text{ }\max_{\beta,\eta\ge0, \widetilde{\eta}}\text{ }\min_{0\le\tau, 0\le t_0\le 2\gamma_0 (1+\frac{\eta}{2\tau})}\quad&
\frac{1}{2}\left(1+\frac{\eta}{2\tau}\right)^{-1} f(\beta,\teta;t_0) - \frac{1}{4}\left(1+\frac{\eta}{2\tau}\right)^{-1} \left(\frac{\beta^2}{\delta} + \teta^2\right) \nn\\
&+\beta\sqrt{\E\Bigg[\left(\left(1+\eps_0\gamma_0-\theta\sigma_{M,2} \right)+\alpha g\right)_{+}^2\Bigg]}
+\frac{\eta\tau}{2}- \eta\alpha -\widetilde{\eta}  \theta\,,
\end{align}
Since $f(c_0,c_1;t_0)$ is decreasing in $t_0$, the optimal value of $t_0$ is given by $t_0^* = 2\gamma_0 (1+\frac{\eta}{2\tau})$ which results in the following AO problem:
\begin{align}
\min_{\theta, \gamma\ge 0, \alpha\ge 0}\text{ }\max_{\beta,\eta\ge0, \widetilde{\eta}}\text{ }\min_{\tau\ge 0}\quad&
\frac{1}{2}\left(1+\frac{\eta}{2\tau}\right)^{-1} f\left(\beta,\teta;2\gamma_0 (1+\frac{\eta}{2\tau})\right) - \frac{1}{4}\left(1+\frac{\eta}{2\tau}\right)^{-1} \left(\frac{\beta^2}{\delta} + \teta^2\right) \nn\\
&+\beta\sqrt{\E\Bigg[\left(\left(1+\eps_0\gamma_0-\theta\sigma_{M,2} \right)+\alpha g\right)_{+}^2\Bigg]}
+\frac{\eta\tau}{2}- \eta\alpha -\widetilde{\eta}  \theta\,.
\end{align}
Our final step of simplification is to solve for $\tau$. To this end, we define the function
\[
R\left(\frac{\eta}{\tau}\right):=\frac{1}{2}\left(1+\frac{\eta}{2\tau}\right)^{-1} f\left(\beta,\teta;2\gamma_0 (1+\frac{\eta}{2\tau})\right) - \frac{1}{4}\left(1+\frac{\eta}{2\tau}\right)^{-1} \left(\frac{\beta^2}{\delta} + \teta^2\right)\,,
\]
where we make the dependence on $\frac{\eta}{\tau}$ explicit in the notation. Setting derivative of the AO objective with respect to $\eta$, to zero we obtain
\[
\frac{1}{\tau} R'\left(\frac{\eta}{\tau}\right) + \frac{\tau}{2} -\alpha = 0\,.
\]
Setting derivative with respect to $\tau$ to zero gives
\[
-\frac{\eta}{\tau^2} R'\left(\frac{\eta}{\tau}\right) +\frac{\eta}{2} = 0\,.
\]
Combining the above two optimality condition implies that $\eta(1-\frac{\alpha}{\tau}) = 0$. So either $\alpha = \tau$ or $\eta = 0$. If $\eta=0$, then it is clear that the terms involving $\tau$ in the AO problem would vanish and therefore the value of $\tau$ does not matter. So in this case, we can as well assume $\tau = \alpha$. Substituting for $\tau$ the AO problem further simplifies to 
\begin{align}
\min_{\theta, \gamma\ge 0, \alpha\ge 0}\text{ }\max_{\beta,\eta\ge0, \widetilde{\eta}}\text{ }\quad&
\frac{1}{2}\left(1+\frac{\eta}{2\alpha}\right)^{-1} f\left(\beta,\teta;2\gamma_0 (1+\frac{\eta}{2\alpha})\right) - \frac{1}{4}\left(1+\frac{\eta}{2\alpha}\right)^{-1} \left(\frac{\beta^2}{\delta} + \teta^2\right) \nn\\
&+\beta\sqrt{\E\Bigg[\left(\left(1+\eps_0\gamma_0-\theta\sigma_{M,2}\right)+\alpha g\right)_{+}^2\Bigg]}
-\frac{\eta\tau}{2} -\widetilde{\eta}  \theta\,.
\end{align}
This concludes the proof of part (b).
\end{proof}

%=============
\begin{proof}[Part (c)] 
The proof of part (c) follows along the same lines of the proof of Theorem~\ref{thm:isotropic-nonseparable-aniso} (and Theorem~\ref{thm:isotropic-nonseparable} for isotropic case). But similar to previous parts, we need to make slight modifications to the proof.

Note that in our derivation of the AO problem~\eqref{eq:AO-gen-ns}, we replaced the constraint $\qnorm{\vct{u}}\le \frac{\tau_h}{\alpha}\frac{\gamma_0}{\pnorm{\vct{\mu}}}$ with the equivalent constraint $\qnorm{\vct{u}}^q\le (\frac{\tau_h}{\alpha}\frac{\gamma_0}{\pnorm{\vct{\mu}}})^q$, see~\eqref{eq:OPT1} for more details. The benefit of this alternative representation is that it results in the Moreau-envelope $e_{q,\mtx{\Sigma}}$ of the $\qnorm{\cdot}^q$ function, see~\eqref{eq:OPT4}, which is separable over the samples. As a result, in the isotropic case the expected Moreau envelope reduces to the expected of the one-dimensional function $J_q$, given by~\eqref{eq:Jq}, \eqref{eq:Sigma-I-Exp}. 

However, for $q=\infty$ the function $\qnorm{\cdot}^q$ is not well-defined and requires a slightly different treatment. In this case we stay with the original constraint $\qnorm{\vct{u}}\le \frac{\tau_h}{\alpha}\frac{\gamma_0}{\pnorm{\vct{\mu}}}$. Proceeding along the same derivations of AO problem~\eqref{eq:main-thm}, it is straightforward to see that this results in the following AO problem for the non-separable regime: 
\begin{align}\label{eq:FF5}
&\max_{0\le\beta, \tau_h}\;\; \min_{\theta, 0\le \alpha, \gamma_0, \tau_g}\;\; D_{{\rm ns}}(\alpha, \gamma_0, \theta, \tau_g, \beta,\tau_h)\nn\\
& D_{{\rm ns}}(\alpha, \gamma_0, \theta, \tau_g, \beta,\tau_h) = \frac{\beta \tau_g}{2} +L\left(\sqrt{\alpha^2+\theta^2},\sigma_{M,2} \theta-\eps_0 \gamma_0,\frac{\tau_g}{\beta}\right) \nn \\
&\quad\quad\quad\quad\quad\quad\quad\quad\quad- \min_{\lambda_0\ge0,\nu} \left[\frac{\alpha}{\tau_h}\left\{\frac{\beta^2}{2\delta} + \lambda_0 \frac{\gamma_0\tau_h}{\alpha}  + \frac{\nu^2}{2}
-\tilde{\sE}\left(\beta,\frac{\tau_h\theta}{\alpha} + {\nu} ;\lambda_0\right)\right\} +\frac{\alpha\tau_h}{2} \right]
\end{align}
with
\begin{align}\label{eq:E-p1}
\tilde{\sE}(c_0,c_1;\lambda_0): = \lim_{n\to\infty}  J_\infty\left(\frac{c_0}{\sqrt{n}}\vct{h}- c_1 \tmu; \lambda_0 \onenorm{\vct{\mu}}\right)\,,
\end{align}
and $J_{\infty}(\vct{x};\lambda)$ given by~\eqref{eq:Jinf}. Using \eqref{eq:convex-ST}, we have
\begin{align}
\tilde{\sE}(c_0,c_1;\lambda_0)
= \min_{t_0\ge0}\left\{f(c_0,c_1;t_0) +\lambda_0 t_0\right\}\,,
\end{align}
Substituting for $\tilde{\sE}$ function in the AO problem~\eqref{eq:FF5} results in
\begin{align}
&\max_{0\le\beta, \tau_h}\;\; \min_{\theta, 0\le \alpha, \gamma_0, \tau_g}\;\; D_{{\rm ns}}(\alpha, \gamma_0, \theta, \tau_g, \beta,\tau_h)\nn\\
& D_{{\rm ns}}(\alpha, \gamma_0, \theta, \tau_g, \beta,\tau_h) = \frac{\beta \tau_g}{2} +L\left(\sqrt{\alpha^2+\theta^2},\sigma_{M,2} \theta-\eps_0 \gamma_0,\frac{\tau_g}{\beta}\right) \nn \\
&\quad\quad\quad\quad\quad\quad\quad\quad\quad- \min_{\lambda_0\ge0,\nu} \sup_{t_0\ge0} \left[\frac{\alpha}{\tau_h}\left\{\frac{\beta^2}{2\delta} + \lambda_0 \left(\frac{\gamma_0\tau_h}{\alpha} -t_0\right)  + \frac{\nu^2}{2}
-f\left(\beta,\frac{\tau_h\theta}{\alpha} + {\nu};t_0\right)  \right\} +\frac{\alpha\tau_h}{2} \right]
\end{align}
Note that the above objective is linear in $\lambda_0$. Clearly, the optimal value $t_0^*$ should satisfy $t_0^*\le \frac{\gamma_0\tau_h}{\alpha}$; otherwise $\lambda_0^* = \infty$ which makes the objective value unbounded. For $t_0\le \frac{\gamma_0\tau_h}{\alpha}$, we have $\lambda_0^* = 0$. Therefore, $t_0$ only appears in the term $f(\beta,\frac{\tau_h\theta}{\alpha} + {\nu};t_0)$.  Given that $f(c_0,c_1;t_0)$ is decreasing in $t_0$, we have $t_0^* = \frac{\gamma_0\tau_h}{\alpha}$. 

We substitute for $t_0^*$ in the AO problem to obtain
\begin{align}
&\max_{0\le\beta, \tau_h}\;\; \min_{\theta, 0\le \alpha, \gamma_0, \tau_g}\;\; D_{{\rm ns}}(\alpha, \gamma_0, \theta, \tau_g, \beta,\tau_h)\nn\\
& D_{{\rm ns}}(\alpha, \gamma_0, \theta, \tau_g, \beta,\tau_h) = \frac{\beta \tau_g}{2} +L\left(\sqrt{\alpha^2+\theta^2},\sigma_{M,2} \theta-\eps_0 \gamma_0,\frac{\tau_g}{\beta}\right) \nn \\
&\quad\quad\quad\quad\quad\quad\quad\quad\quad- \min_{\nu}  \left[\frac{\alpha}{\tau_h}\left\{\frac{\beta^2}{2\delta}  + \frac{\nu^2}{2}
-f\left(\beta,\frac{\tau_h\theta}{\alpha} + {\nu};\frac{\gamma_0\tau_h}{\alpha}\right)  \right\} +\frac{\alpha\tau_h}{2} \right]\,.
\end{align}
This completes the proof of part (c).
\end{proof}
%====================
%\newpage
\section{Proof of technical lemmas}
\subsection{Proof of Lemma~\ref{lem:SR-AR}}\label{proof:SR-AR}
By definition, we have
\begin{align}
\SR(\hth) &:= \E[\ind(\hat{y} =  y)]  = \prob(y \<\x,\hth\> > 0)\nn\\
&= \prob\left(y\<y\vct{\mu}+ \Sigma^{1/2} \bz, \hth\> > 0 \right)\nn\\
&= \prob\left(\<\vct{\mu}+ \Sigma^{1/2} \bz, \hth\> > 0 \right)\nn\\
&=  \prob\left(\<\vct{\mu},\hth\>+ \twonorm{\Sigma^{1/2}\hth} Z > 0 \right)\nn\\
&= \Phi\left(\frac{\<\vct{\mu},\hth\>}{\twonorm{\mtx{\Sigma}^{1/2}\hth}}\right)\,,\label{eq:SR-char}
\end{align}
where $\z\sim\normal(0,\mtx{I}_d)$ and $Z\sim\normal(0,1)$. 

Likewise for the adversarial risk we have
\begin{align}
\AR(\hth) &: = \E\Bigg[\min_{\pnorm{\bdelta}\le \eps} \ind(y \<\x+\bdelta,\hth\> \ge 0) \Bigg]\nn\\
&\stackrel{(a)}{=} \E\Bigg[\ind(y \<\x,\hth\> - \eps \qnorm{\hth} \ge 0)\Bigg]\nn\\
&= \prob\left(y \<\x,\hth\> - \eps \qnorm{\hth} \ge 0\right)\nn\\
&= \prob\left(y\<y\vct{\mu}+ \mtx{\Sigma}^{1/2} \bz, \hth\> - \eps \qnorm{\hth} \ge 0 \right)\nn\\
&\stackrel{(b)}{=}  \prob\left(\<\vct{\mu},\hth\>+ \twonorm{\mtx{\Sigma}^{1/2}\hth} Z -\eps \qnorm{\hth} \ge 0 \right)\nn\\
&=  \Phi\left(\frac{\<\vct{\mu},\hth\>- \eps \qnorm{\hth}}{\twonorm{\mtx{\Sigma}^{1/2}\hth}}\right)\,, \label{eq:AR-char}
\end{align}
where $(a)$ we used that $\<\delta,\hth\> \ge -\pnorm{\bdelta} \qnorm{\hth} \ge - \eps\qnorm{\hth}$, using H\"older inequality (with $\frac{1}{p} + \frac{1}{q} = 1$) and that $\pnorm{\bdelta}\le \eps$, with equality achieving for some $\bdelta$ in this set. In $(b)$, we used the symmetry of Gaussian distribution.

%===============================
%\section{Proof of Lemma~\ref{lem:sep-B}}\label{proof:lem:sep-B}
%First suppose that data is $(\eps,q)$-separable with the separator $\bth$. We then have 
%\[\lim_{c\to\infty} \ell(y_i\<\x_i,c\bth\>- \eps \qnorm{c\bth}) = 0\,.\]
%Since the loss is non-negative, hence it is optimal value is achieved at zero at unbounded $\bth$. Conversely, suppose that $\hth^\eps$ given by~\eqref{eq:inner} is unbounded. We claim that data are separable with separator at the direction of $\hth^\eps$. Otherwise, there exists $k\in[in]$ such that $y_k\<\x_k,c\bth\>- \eps \qnorm{c\bth} < 0$. We then have
%\begin{align}
%\frac{1}{n} \sum_{i=1}^n  \ell\left(y_i \<\x_i,\hth^\eps\> - \eps \qnorm{\hth^\eps}\right) \ge \frac{1}{n}   \ell\left(y_k \<\x_k,\hth^\eps\> - \eps \qnorm{\hth^\eps}\right) = \infty\,,
%\end{align} 
%because $y_k \<\x_k,\hth^\eps\> - \eps \qnorm{\hth^\eps} \to -\infty$ given the choice of index $k$ and the unboundedness of $\hth^\eps$. But this is contradiction with the fact that $\hth^\eps$ is the minimizer of the adversarial training loss of ~\eqref{eq:inner} ( e.g., $\bth=\mathbf{0}$ achieves the finite loss of $\ell(0) = \log(2)$.)  

%===================================
\subsection{Proof of Lemma~\ref{lem:justify}}\label{proof:lem:justify}
We first note that
\begin{align}\label{eq:htz}
\min_{\bz\in\cS(\alpha,\theta,\eps_0,\vct{\mu})}\;\; -\frac{1}{\sqrt{n}}  \vct{h}^T\bz=-\frac{1}{\sqrt{n/d}}\sup_{\bz\in\cS(\alpha,\theta,\eps_0,\vct{\mu}) }\;\; \frac{1}{\sqrt{d}}  \vct{h}^T\bz\rightarrow -\frac{1}{\sqrt{\delta}} \omega(\alpha,\theta,\eps_0)\,, 
\end{align}
in probability, using the fact that $\vct{h}/\sqrt{d}$ is asymptotically uniform on the unit sphere, and for $\mathcal{S}\in\mathbb{S}^{d-1}$ the function $f(\vct{u})=\sup_{\vct{z}\in\mathcal{S}} \vct{z}^T\vct{u}$ is Lipschitz. Therefore, using the concentration of Lipschitz functions on the sphere (see e.g. \cite[Theorem 5.2.2]{vershynin2018high}), $f(\vct{u})$ concentrates around its mean $\E f(\vct{u})= \omega_s(\cS(\alpha,\theta,\eps_0,\vct{\mu}))$. More precisely, 
\begin{align*}
\prob\left\{\Big|\sup_{\vct{z}\in\mathcal{S}} \frac{1}{\sqrt{d}}\vct{h}^T\bz - \omega_s(\cS(\alpha,\theta,\eps_0,\vct{\mu})) \Big| \right\}\le 2e^{-cdt^2}\,,
\end{align*}
for an absolute constant $c>0$ and for every $t\ge0$. Therefore, by invoking the assumption on the convergence of spherical width, cf. Assumption~\ref{ass:omegas}, we arrive at
\begin{align*}
\lim_{d\to\infty}\prob\left\{\Big|\sup_{\vct{z}\in\mathcal{S}} \frac{1}{\sqrt{d}}\vct{h}^T\bz - \omega(\alpha,\theta,\eps_0) \Big| \ge \eta \right\} =0\,,\quad \forall \eta>0\,.
\end{align*}
Therefore, $\sup_{\bz\in\cS(\alpha,\theta,\eps_0) }\;\; \frac{1}{\sqrt{d}}  \vct{h}^T\bz\rightarrow  \omega(\alpha,\theta,\eps_0)$, in probability.

%We then used the assumption on the convergence of spherical width, cf. Assumption~\ref{ass:omegas}. (See the explanations after equation~\eqref{eq:Sep1-1} for further details)

To evaluate the left hand side, we form the Lagrangian corresponding to the set $\cS$. Let $\tmu: = \frac{\vct{\mu}}{\twonorm{\vct{\mu}}}$ and consider the change of variable $\vct{u}:= \vct{z} +\theta \tmu$. We then have
\begin{align}
&\min_{\bz\in\cS(\alpha,\theta,\eps_0,\vct{\mu})}\;\; -\frac{1}{\sqrt{n}}  \vct{h}^T\bz\nn\\
&= \sup_{\lambda,\eta\ge0, \nu} \min_{\bu} \;\; -\frac{1}{\sqrt{n}}  \vct{h}^T (\bu -\theta \tmu )  +\lambda\left(\qnorm{\vct{u}}^q-\left(\frac{1}{\eps_0 \pnorm{\vct{\mu}}}\right)^q\right)+\frac{\eta}{2}\left(\twonorm{\bu- \theta\tmu}^2- \alpha^2 \right) +\nu \tmu^T \left(\vct{u} - \theta \tmu\right)\nn\\
&=\sup_{\lambda,\eta\ge0, \nu} \min_{\bu} \;\; \frac{\eta}{2}\twonorm{\bu -\theta \tmu+ \frac{\nu}{\eta}\tmu -\frac{\vct{h}}{\eta\sqrt{n}} }^2 + \lambda \qnorm{\vct{u}}^q - \frac{1}{2\eta}\twonorm{\nu\tmu - \frac{\vct{h}}{\sqrt{n}}}^2 - \frac{\eta}{2}\alpha^2 - \frac{\lambda}{(\eps_0\pnorm{\vct{\mu}})^q}\nn\\
&=\sup_{\lambda,\eta\ge0, \nu} \min_{\bu} \;\; \eta\left[\frac{1}{2}\twonorm{\bu + \left(\frac{\nu}{\eta}-\theta\right)\tmu -\frac{\vct{h}}{\eta\sqrt{n}} }^2 + \frac{\lambda}{\eta} \qnorm{\vct{u}}^q\right] - \frac{1}{2\eta}\twonorm{\nu\tmu - \frac{\vct{h}}{\sqrt{n}}}^2 - \frac{\eta}{2}\alpha^2 - \frac{\lambda}{(\eps_0\pnorm{\vct{\mu}})^q}\nn\\
&=\sup_{\lambda,\eta\ge0, \nu}  \;\; \eta \sum_{i=1}^d J_q\left(\frac{{h_i}}{\eta\sqrt{n}} - \left(\frac{\nu}{\eta}-\theta\right)\tilde{\mu}_i;\frac{\lambda}{\eta} \right)  - \frac{1}{2\eta}\twonorm{\nu\tmu - \frac{\vct{h}}{\sqrt{n}}}^2 - \frac{\eta}{2}\alpha^2 - \frac{\lambda}{(\eps_0\pnorm{\vct{\mu}})^q}\nn\\
&=\sup_{\lambda,\eta\ge0, \nu}  \;\; \eta  \sum_{i=1}^d J_q\left(\frac{{h_i}}{\eta\sqrt{n}} - \left(\frac{\nu}{\eta}-\theta\right)\tilde{\mu}_i;\frac{\lambda}{\eta} \right)  - \frac{\nu^2}{2\eta}-\frac{1}{2\eta n}\twonorm{\vct{h}}^2 + \frac{\nu}{\eta\sqrt{n}} \tmu^T\vct{h} - \frac{\eta}{2}\alpha^2 - \frac{\lambda}{(\eps_0\pnorm{\vct{\mu}})^q}\label{J-lem-1}
\end{align}

Recall that $\vct{h}\sim\normal(0,\mtx{I}_d)$. As $n\to \infty$ and $n/d\to\delta$, we have
\begin{align}
\frac{1}{n}\twonorm{\vct{h}}^2 \to \frac{1}{\delta},\quad \frac{1}{\sqrt{n}} \tmu^T\vct{h}\to 0\,,
%\lim_{n\to\infty} \lambda \left(\frac{\gamma\tau_h}{\alpha}\right)^q= \begin{cases}
%0 &\text{ if } p\in (2,\infty], q\in [1,2)\\
%\lambda_0 \left(\frac{\gamma\tau_h}{\alpha}\right)^2 & \text{ if }q=2,\\
%\infty & \text{ otherwise }
%\end{cases} 
\end{align}
in probability. In addition, 
\begin{align}
\pnorm{\vct{\mu}} \to \sigma_{M,p} d^{\frac{1}{p}-\frac{1}{2}} = \sigma_{M,p} d^{\frac{1}{2}-\frac{1}{q}}\,,
\end{align}
in probability. Using the identity $J_q(x;\lambda) = c^2 J_q(x/c;\lambda c^{q-2})$ and letting $\lambda_0:= \lambda d^{1-\frac{q}{2}}$ we have
\begin{align}
\sum_{i=1}^d J_q\left(\frac{{h_i}}{\eta\sqrt{n}} - \left(\frac{\nu}{\eta}-\theta\right)\tilde{\mu}_i;\frac{\lambda}{\eta} \right)
&= \frac{1}{d} \sum_{i=1}^d J_q\left(\frac{\sqrt{d}{h_i}}{\eta\sqrt{n}} - \left(\frac{\nu}{\eta}-\theta\right)\frac{\sqrt{d}\mu_i}{\sigma_{M,2}};\frac{\lambda}{\eta} d^{1-q/2} \right)\nn\\
&= \frac{1}{d} \sum_{i=1}^d J_q\left(\frac{h_i}{\eta\sqrt{\delta}} - \left(\frac{\nu}{\eta}-\theta\right)\frac{\sqrt{d}\mu_i}{\sigma_{M,2}};\frac{\lambda_0}{\eta}\right)\,.\nn
\end{align} 
Since $J_q(x;\lambda) \le \frac{1}{2} x^2$, the function $J_q$ is pseudo-lipschitz of order 2 and by an application of \cite[Lemma 5]{bayati2011dynamics}, we have
\begin{align}
\lim_{n\to \infty} \frac{1}{d} \sum_{i=1}^d J_q\left(\frac{h_i}{\eta\sqrt{\delta}} - \left(\frac{\nu}{\eta}-\theta\right)\frac{\sqrt{d}\mu_i}{\sigma_{M,2}};\frac{\lambda_0}{\eta}\right) = \E\left[J_q\left(\frac{h}{\eta\sqrt{\delta}} - \left(\frac{\nu}{\eta}-\theta\right)\frac{M}{\sigma_{M,2}};\frac{\lambda_0}{\eta}\right) \right]\,,
\end{align}
almost surely, where the expectation in the last line is taken with respect to the independent random variables $h\sim\normal(0,1)$ and $M\sim\prob_M$.

Using the above limits, we see that the objective function~\eqref{J-lem-1} converges pointwise to the following function:
\begin{align}\label{eq:J-lem-2}
\eta \E\left[J_q\left(\frac{h}{\eta\sqrt{\delta}} - \left(\frac{\nu}{\eta}-\theta\right)\frac{M}{\sigma_{M,2}};\frac{\lambda_0}{\eta}\right) \right]  - \frac{\nu^2}{2\eta}-\frac{1}{2\eta \delta} - \frac{\eta}{2}\alpha^2 - {\lambda_0}(\eps_0\sigma_{M,p})^{-q} 
\end{align} 
%The objective in~\eqref{J-lem-1} indeed converges uniformly over vector $(\lambda_0, \eta,\nu)$ to function~\eqref{eq:J-lem-2}, \aj{comment why?}. Therefore its supremum over $\lambda_0,\eta\ge0, \nu$ also converges to that of function~\eqref{eq:J-lem-2}. 
Note that \eqref{J-lem-1} is the dual optimization and hence is a concave problem. We apply the convexity lemma~\cite[Lemma B.2]{thrampoulidis2018precise} to conclude that the objective value in \eqref{J-lem-1} also converges to the supremum of function~\eqref{eq:J-lem-2} over $\lambda_0,\eta\ge0, \nu$. 

Using this observation along with Equation~\eqref{eq:htz} and \eqref{J-lem-1} we obtain
\begin{align}
&\omega(\alpha, \theta, \eps_0) \nn\\
&= \lim_{n\to\infty} \omega_s\left(\cS(\alpha, \theta, \eps_0, \vct{\mu})\right)\nn\\
&= - \sup_{\lambda_0,\eta\ge0, \nu}  \;\; \eta\sqrt{\delta} \E\left[J_q\left(\frac{h}{\eta\sqrt{\delta}} - \left(\frac{\nu}{\eta}-\theta\right)\frac{M}{\sigma_{M,2}};\frac{\lambda_0}{\eta}\right) \right]  - \sqrt{\delta} \left\{\frac{\nu^2}{2\eta}+\frac{1}{2\eta \delta} + \frac{\eta}{2}\alpha^2 + {\lambda_0}(\eps_0\sigma_{M,p})^{-q}\right\}\nn\\
&= \min_{\lambda_0,\eta\ge0, \nu}  \;\; \sqrt{\delta} \left\{\frac{\nu^2}{2\eta}+\frac{1}{2\eta \delta} + \frac{\eta}{2}\alpha^2 + {\lambda_0}(\eps_0\sigma_{M,p})^{-q}\right\}-\eta\sqrt{\delta} \E\left[J_q\left(\frac{h}{\eta\sqrt{\delta}} - \left(\frac{\nu}{\eta}-\theta\right)\frac{M}{\sigma_{M,2}};\frac{\lambda_0}{\eta}\right) \right] \,, 
\end{align}
which completes the proof.
%=======================
\subsection{Proofs that the minimization and maximization primal problems can be restricted to a compact set} \label{sec:artifical}
In this section we demonstrate how the minimization and maximization problems can be restricted to compacts sets. 
\subsubsection{Bounded domains in optimization~\eqref{eq:Lag-sep}}\label{setres}
We start with the restriction on $\bth$. Note that one of the claims of Theorem~\ref{thm:isotropic-separable-ansio}, part (b), is to show that $\twonorm{\tth^\eps} \to \alpha_*$ as $n\to \infty$, in probability, for some $\alpha_*$ by the solution of minimax problem~\eqref{eq:sep-AO92}. We define $\cS_{\bth} = \{\bth:\; \twonorm{\bth}\le K_{\alpha}\}$ with $K_{\alpha} = \alpha_* +\xi$ for a constant $\xi>0$. We start by the ansatz that $\bth\in \cS_{\bth}$ and add this `artificial constraint' in the minimax optimization. In addition, by the stationary condition for $\gamma$ in optimization~\eqref{eq:Lag-sep} we have
\[
\frac{1}{n} \eps \bu^T\mathbf{1} = 2\lambda\,.
\]
Therefore 
$$\frac{1}{n} \bu^T\mathbf{1} = \frac{2\lambda}{\eps} = \frac{2\lambda}{\eps_0 \pnorm{\vct{\mu}}}\,.$$
Let $\lambda_0:= \frac{\lambda}{\pnorm{\vct{\mu}}}$. Assuming the ansatz that $\lambda_0 = O(1)$, we also use this `artificial constraint' in the minimax optimization.

With these compact constraints in place, we then deploy the CGMT framework to prove Theorem~\ref{thm:isotropic-separable-ansio}. This theorem implies that $\twonorm{\tth^\eps} \to \alpha_*$ as $n\to \infty$ and so our initial ansatz on the boundedness of $\bth$ is verified. Further, as it can be seen from the proof of Theorem~\ref{thm:isotropic-separable-ansio} (see the line following Equation~\eqref{eq:sep-AO7}), we have $\lambda_0=\frac{\lambda}{\pnorm{\vct{\mu}}} \to \lambda_{0*}$ as $n\to \infty$, in probability, for some $\lambda_{0*}$ that is determined  by the solution of minimax problem~\eqref{eq:sep-AO92}. This also verifies our ansatz that $\lambda_0 = O(1)$, which in turn implies that $\bu\in \cS_{\bu} = \{\bu:\; 0\le u_i, \; \frac{1}{n}\mathbf{1}^T\bu \le K_{\bu}\}$ for some sufficiently large constant $K_{\bu}>0$.

\subsubsection{Bounded domains in optimization~\eqref{eq:min-max-1}}
Similar to previous subsection, we start by the ansatz that $\bth\in \cS_{\bth}$ where $\cS_{\bth} = \{\bth:\; \twonorm{\bth}\le K_{\alpha}\}$ with $K_{\alpha} = \alpha_* +\xi$ for a constant $\xi>0$, and add this `artificial constraint' in the minimax optimization~\eqref{eq:min-max-1}. Also by stationarity condition for $\bv$ in \eqref{eq:min-max-1} we have
$u_i =  \ell' \left(v_i - \eps \qnorm{\bth}\right)$
and hence $|u_i| \le K_{\bu}$
for some large enough constant $K_{\bu} > 0$, using our assumption on the loss function $\ell$.

%=================
\subsection{Proof of Lemma~\ref{lem:tbth}}\label{proof:lem:tbth}
First note that by Cauchy–Schwarz inequality we have
\[
 \<\pproj_{\vct{\mu}}\vct{r},\left(\pproj_{\vct{\mu}}\mtx{\Sigma}^{1/2}\pproj_{\vct{\mu}}\right)\tbth\> \ge - \twonorm{\pproj_{\vct{\mu}}\vct{r}}
 \twonorm{\left(\pproj_{\vct{\mu}}\mtx{\Sigma}^{1/2}\pproj_{\vct{\mu}}\right)\tbth} = -\alpha \twonorm{\pproj_{\vct{\mu}}\vct{r}}.
\]
To achieve equality, note that similar to~\eqref{eq:aux1} we have
\[
\left(\pproj_{\vct{\mu}}\mtx{\Sigma}^{1/2}\pproj_{\vct{\mu}}\right) \left(\pproj_{\vct{\mu}}\mtx{\Sigma}^{-1/2}\pproj_{\vct{\mu}}\right) \vct{r} = \pproj_{\vct{\mu}} \vct{r}\,.
\]  
Therefore equality is achieved by choosing 
$\bth = \lambda \mtx{\Sigma}^{-1/2} \pproj_{\vct{\mu}} \vct{r}$ with $\lambda = \frac{\alpha}{\twonorm{\pproj_{\vct{\mu}} \vct{r}}}$.

%====================
\subsection{Proof of Lemma~\ref{lem:conj-L}}\label{proof:lem:conj-L}
By definition of the conjugate function we have
\begin{align*}
\widetilde{\ell}(\vct{v},\vct{w})=&\sup_{\vct{\theta}}\text{ }\vct{w}^T\bth-\ell(\vct{v},\vct{\theta})\\
=&\sup_{\vct{\theta}}\text{ }\vct{w}^T\bth-\frac{1}{n} \sum_{i=1}^n  \ell \left(v_i - \eps \qnorm{\bth}\right) 
\end{align*}
Now assume $\vct{\theta}=\gamma \vct{u}$ with $\qnorm{\vct{u}}=1$. We thus have,
\begin{align*}
\widetilde{\ell}(\vct{v},\vct{w})=&\sup_{\vct{u}: \qnorm{\vct{u}}=1,\gamma}\text{ }\gamma\vct{w}^T\vct{u}-\frac{1}{n} \sum_{i=1}^n  \ell \left(v_i - \eps \gamma \right) \\
=&\sup_{\gamma\ge 0}\quad \gamma\left(\sup_{\vct{u}: \qnorm{\vct{u}}=1}\text{ }\vct{w}^T\vct{u}\right)-\frac{1}{n} \sum_{i=1}^n  \ell \left(v_i - \eps \gamma \right)\\
=&\sup_{\gamma\ge 0}\quad \gamma \pnorm{\vct{w}}-\frac{1}{n} \sum_{i=1}^n  \ell \left(v_i - \eps \gamma \right)\,.
\end{align*}

%======================
\subsection{Proof of Lemma~\ref{lem:f-conjugate}}\label{proof:lem:f-conjugate}
By definition
\begin{align*}
f^*(\vct{u})&:= \sup_{\tw}\;\; \<\vct{u},\tw\>  - f(\tw)\nn\\
&= \sup_{\tw}\;\; \<\vct{u},\tw\>  + \<\tw, \mtx{\Sigma}^{1/2}\tmu\> \frac{\theta \tau_h}{\alpha} - \frac{\tau_h}{\alpha}\frac{\gamma_0}{\pnorm{\vct{\mu}}} \pnorm{\mtx{\Sigma}^{1/2}\tw} \nn\\
&= \sup_{\tw}\;\; \left\<\vct{u}+ \mtx{\Sigma}^{1/2}\tmu\frac{\theta \tau_h}{\alpha},\tw\right\>  - \frac{\tau_h}{\alpha} \frac{\gamma_0}{\pnorm{\vct{\mu}}}\pnorm{\mtx{\Sigma}^{1/2}\tw} \nn\\
&= \sup_{\tw}\;\; \left\<\mtx{\Sigma}^{-1/2}\vct{u}+ \tmu\frac{\theta \tau_h}{\alpha},\mtx{\Sigma}^{1/2}\tw\right\>  - \frac{\tau_h}{\alpha} \frac{\gamma_0}{\pnorm{\vct{\mu}}}\pnorm{\mtx{\Sigma}^{1/2}\tw} 
\end{align*}
By H\"older's inequality, 
\[
\left\<\mtx{\Sigma}^{-1/2}\vct{u}+ \tmu\frac{\theta \tau_h}{\alpha},\mtx{\Sigma}^{1/2}\tw\right\> \le \qnorm{\mtx{\Sigma}^{-1/2}\vct{u}+ \tmu\frac{\theta \tau_h}{\alpha}} \pnorm{\mtx{\Sigma}^{1/2}\tw}
\]
Therefore, if $\vct{u}\in S$ then the supremum is achieved by choosing $\tw = 0$. If $\vct{u}\notin S$, by scaling $\tw$ the supremum would be $+\infty$.
%============================
\subsection{Proof of Lemma~\ref{lem:proj}}\label{proof:lem:proj}
Fix arbitrary $\vct{u}$. By definition,
\begin{align}
\proj_{\cal{B}}(\vct{u}) &:= \arg\min_{\vct{z}\in \cal{B}} \twonorm{\vct{u} -\vct{z}} \nn\\
&= \arg\min_{\vct{z}\in \cal{B}} \twonorm{\pproj_{\vct{\mu}}(\vct{u} -\vct{z})}^2 + \twonorm{\proj_{\vct{\mu}}(\vct{u} -\vct{z})}^2 \nn\\
&\stackrel{(a)}{=} \arg\min_{\vct{z}\in \cal{B}} \twonorm{\pproj_{\vct{\mu}}\vct{u} -\vct{z}}^2 + \twonorm{\proj_{\vct{\mu}}\vct{u}}^2\nn\\
&= \arg\min_{\vct{z}\in \cal{B}} \twonorm{\pproj_{\vct{\mu}}\vct{u} -\vct{z}}^2 \nn\\
&= \proj_{\cal{B}}\pproj_{\vct{\mu}}\vct{u}\,,
\end{align}
where step $(a)$ follows from that fact that $\vct{z}\in \cal{B}$ and hence $\vct{z} = \pproj_{\vct{\mu}} \vct{z}$.
%===============================
\subsection{Proof of Lemma~\ref{lem:proj-main}}\label{proof:lem:proj-main}
By definition of the set $\cal{B}$, the value of $\twonorm{\proj_{\cal{B}}\left(\vct{h}\right) - \vct{h}}^2$ is given by the optimal objective value of the following optimization:
\begin{align}
&\text{minimize}_{\vct{z}} \;\;\;\;\twonorm{\vct{z}-\vct{h}} \nn\\
&\text{subject to} \;\;\;\; \qnorm{\mtx{\Sigma}^{-1/2}\vct{z} -  \frac{\tau_h\theta}{\alpha}\tmu}\le \frac{\tau_h}{\alpha}\frac{\gamma_0}{\pnorm{\vct{\mu}}}\,, \quad \tmu^T\vct{z} = 0\,.
\label{eq:OPT1}
\end{align}
By the change of variable $\vct{u}:= \mtx{\Sigma}^{-1/2}\vct{z} -  \frac{\tau_h\theta}{\alpha}  \tmu$ and forming the Lagrangian, the optimal value of \eqref{eq:OPT1} is equal to
the optimal value of the following problem:
\begin{align}\label{eq:OPT2}
\sup_{\lambda\ge 0,\nu}\text{ }\min_{\vct{u}} \;\;\frac{1}{2}\twonorm{\mtx{\Sigma}^{1/2}\left(\vct{u}+ \frac{\tau_h\theta}{\alpha} \tmu \right) -\vct{h}}^2+\lambda\left(\qnorm{\vct{u}}^q-\left(\frac{\tau_h}{\alpha}\frac{\gamma_0}{\pnorm{\vct{\mu}}}\right)^q\right)+\nu \tmu^T \mtx{\Sigma}^{1/2}\left(\vct{u} + \frac{\tau_h \theta}{\alpha}\tmu\right)\,.
\end{align}
%where in the last term we used the fact that $\vct{\mu}$ is an eigenvector of $\mtx{\Sigma}$.
Rearranging the terms we get the next alternative representation 
\begin{align}\label{eq:OPT3}
\sup_{\lambda\ge 0,\nu}\text{ }\min_{\vct{u}} \;\;\frac{1}{2}\twonorm{\mtx{\Sigma}^{1/2}\left(\vct{u}+ \frac{\tau_h\theta}{\alpha} \tmu \right) -\vct{h}+\nu \frac{\vct{\mu}}{\twonorm{\vct{\mu}}}}^2+\lambda\left(\qnorm{\vct{u}}^q-\left(\frac{\tau_h}{\alpha}\frac{\gamma_0}{\pnorm{\vct{\mu}}}\right)^q\right)+\nu \tmu^T\vct{h} -  \frac{\nu^2}{2} \,,
\end{align}
Now adopting the notation $\Sigmanorm{\vct{v}}^2:=\vct{v}^T\mtx{\Sigma}\vct{v}$ and invoking the assumption $\mtx{\Sigma}^{1/2}\tmu = a\tmu$, we rewrite the optimization as follows:
\begin{align}\label{eq:OPT4}
\sup_{\lambda\ge 0,\nu}\text{ }\min_{\vct{u}} \;\;\frac{1}{2}\Sigmanorm{\vct{u}+ \left(\frac{\tau_h\theta}{\alpha} +\frac{\nu}{a} \right) \tmu -\mtx{\Sigma}^{-1/2}\vct{h}}^2+\lambda\left(\qnorm{\vct{u}}^q-\left(\frac{\tau_h}{\alpha}\frac{\gamma_0}{\pnorm{\vct{\mu}}}\right)^q\right)+\nu \tmu^T\vct{h} -  \frac{\nu^2}{2} \,,
\end{align}
Rearranging the terms further we obtain
\begin{align}\label{eq:OPT4}
&\sup_{\lambda\ge 0,\nu}\text{ } \;  \min_{\vct{u}} \left[\frac{1}{2}\Sigmanorm{\vct{u}+ \left(\frac{\tau_h\theta}{\alpha} +\frac{\nu}{a} \right) \tmu -\mtx{\Sigma}^{-1/2}\vct{h}}^2+ \lambda\qnorm{\vct{u}}^q\right] - \lambda \left(\frac{\tau_h}{\alpha} \frac{\gamma_0}{\pnorm{\vct{\mu}}}\right)^q +\nu \tmu^T\vct{h} - \frac{\nu^2}{2} \nn\\
&= \sup_{\lambda\ge 0,\nu}\text{ } \; e_{q,\mtx{\Sigma}}\left(\mtx{\Sigma}^{-1/2}\vct{h}- \left(\frac{\tau_h\theta}{\alpha}+\frac{\nu}{a}\right)\tmu;\lambda\right)
- \lambda \left(\frac{\tau_h}{\alpha} \frac{\gamma_0}{\pnorm{\vct{\mu}}}\right)^q +\nu \tmu^T\vct{h} -  \frac{\nu^2}{2} \,.
\end{align}
This concludes the proof.

%====================
\subsection{Proof of Lemma~\ref{lem:q=2}}\label{proof:lem:q=2}
We recall the definition of weighted Moreau envelope
\begin{align}\label{eq:e2-Sig}
e_{2,\mtx{\Sigma}}(\vct{x};\lambda) = \min_{\vct{v}} \text{} \frac{1}{2} \Sigmanorm{\vct{x}-\vct{v}}^2
+ \lambda \twonorm{\vct{v}}^2\,.
\end{align}
Setting derivative to zero we get
\[
- \mtx{\Sigma}(\vct{x}-\vct{v}^*) + 2\lambda \vct{v}^* = 0\,,
\]
which implies that $\vct{v}_* = (\mtx{\Sigma} +2\lambda\mtx{I})^{-1}\mtx{\Sigma}\vct{x}$. Now consider a singular value decomposition $\mtx{\Sigma} = \mtx{U}\mtx{S}\mtx{U}^T$. Then,
$\vct{v}_* = \mtx{U}(\mtx{S}+2\lambda\mtx{I})^{-1}\mtx{S} \mtx{U}^T\vct{x}$. Substituting for $\vct{v}_*$ in \eqref{eq:e2-Sig} we obtain
\begin{align*}
e_{2,\mtx{\Sigma}}(\vct{x};\lambda) &= 2\lambda^2 \twonorm{\mtx{U} (\mtx{S}+2\lambda\mtx{I})^{-1}
\mtx{S}^{1/2} \mtx{U}^T\vct{x}}^2 +\lambda \twonorm{\mtx{U}(\mtx{S}+2\lambda\mtx{I})^{-1}\mtx{S} \mtx{U}^T\vct{x}}^2\nn\\
&= \lambda\vct{x}^T\mtx{U}^T (\mtx{S}+2\lambda\mtx{I})^{-1}\mtx{S}\mtx{U}\vct{x}\nn\\
&= \lambda \twonorm{\mtx{U}(\mtx{S}+2\lambda\mtx{I})^{-1/2}\mtx{S}^{1/2}\mtx{U}\vct{x}}^2\nn\\
&= \lambda\twonorm{(\mtx{\Sigma} +2\lambda\mtx{I})^{-1/2}\mtx{\Sigma}^{1/2}\vct{x}}^2
\end{align*}
which yields the desired result.

%===========================
\section{Proposition~3.2 (an extended statement)}\label{app:pro:SVM}
Consider the adversarial training loss
\[
\cL(\bth) := \frac{1}{n} \sum_{i=1}^n  \ell\left(y_i \<\x_i,\bth\> - \eps \qnorm{\bth}\right)\,,
\]
where the loss $\ell(t)$ can be expressed as $\ell(t) = e^{-f(q)}$ obeying the following technical assumptions:
\begin{itemize}
\item $f:\reals\to\reals$ is $C^2$-smooth.
\item $f'(q)>0$ for all $q\in\reals$.
\item There exists $b_f\ge0$ such that $qf'(q)$ is non-decreasing for $q\in (b_f,\infty)$ and $qf'(q)\to\infty$ as $q\to\infty$.
\item Let $g: [f(b_f),\infty) \to [b_f,\infty)$ be the inverse function of $f$ on the domain $[b_f,\infty)$. There exists $p\ge0$ such that
for all $x>f(b_f)$, $y>b_f$,
\[
\left|\frac{g''(x)}{g'(x)}\right|\le \frac{p}{x}, \quad \left|\frac{f''(y)}{f'(y)}\right| \le \frac{p}{y} \,.
\]
\end{itemize}
(It can be verified that the above assumptions are satisfied by exponential loss and logistic loss.) Then, the gradient descent iterates 
\begin{align*}
\bth_{\tau+1}=\bth_{\tau}-\mu\nabla\mathcal{L}(\bth_{\tau})
\end{align*}
with a sufficiently small step size $\mu$ obey
%with an initialization such that $\cL(\bth_0)<1$ (e.g., $\bth_0 = \mathbf{0}$) and step size $\eta(t)$ of at most $O(\cL(\bth_t)^{-1} {\rm polylog} \frac{1}{\cL(\bth_t)})$, we have 
\begin{align}
\lim_{t\to\infty} \twonorm{\frac{\bth_t}{\twonorm{\bth_t}} - \frac{\tth^\eps}{\twonorm{\tth^\eps}}} = 0\,,
\end{align}
where $\tth^\eps$ is the solution to the following max-margin problem
\begin{align}
\tth^\eps = &\arg\min_{\bth\in\reals^d} \quad  \twonorm{\bth}^2\nn  \\
&{\rm subject}\;{\rm to}\;\; y_i\<\vct{x}_i,\vct{\theta}\> - \eps \qnorm{\bth} \ge1\,.\label{eq:MM}
\end{align}  

%We then use the CGMT framework to prove Theorem~\ref{thm:isotropic-separable-ansio}, which implies that $\twonorm{\tth^\eps} \to \alpha_*$ as $n\to \infty$ and so our initial ansatz has been trued and verified. We refer to \cite[Theorem A.1]{thrampoulidis2018precise} for a formal argument of this type.
%
%%While this justifies the compact constraint on $\bth$, we still need to justify the compact constraint on the dual variable $\bu$. 
%%For this end, note that 
%by the stationary condition for $\gamma$ in optimization~\eqref{eq:Lag-sep} we have
%\[
%\frac{1}{n} \eps \bu^T\mathbf{1} = 2\lambda\,.
%\]
%Therefore 
%$$\frac{1}{n} \bu^T\mathbf{1} = \frac{2\lambda}{\eps} = \frac{2\lambda}{\eps_0 \pnorm{\vct{\mu}}}\,.$$
%Let $\lambda_0:= \frac{\lambda}{\pnorm{\vct{\mu}}}$. We start by the ansatz that $\lambda_0 = O(1)$ and use this `artificial constraint' in the minimax optimization. With the the CGMT framework to prove Theorem~\ref{thm:isotropic-separable-ansio} and as w

\end{document}